\documentclass[twoside,11pt]{article}

\usepackage{blindtext}

%

%
%
%
\usepackage[preprint]{jmlr2e}


\usepackage{times}
\usepackage[utf8]{inputenc}
\usepackage{babel}
\usepackage{upgreek}
\usepackage{enumitem}
\usepackage{mathrsfs}
\usepackage{bbm}
\usepackage{amssymb}
\usepackage{amsmath}
\usepackage{multirow}
\usepackage{bbold}
\usepackage{graphicx}
\graphicspath{ {./} }
\usepackage{mathtools}
\usepackage{algorithm2e}
\usepackage{xcolor}
\usepackage{placeins}
\usepackage{caption}
\usepackage{thmtools,thm-restate}
\usepackage{csquotes}
\usepackage{appendix}
\usepackage{dsfont}
\usepackage{newfloat}


\newcommand{\mcA}{\mathcal{A}}
\newcommand{\mcB}{\mathcal{B}}
\newcommand{\mcC}{\mathcal{C}}
\newcommand{\mcD}{\mathcal{D}}
\newcommand{\mcE}{\mathcal{E}}

\newcommand{\mcG}{\mathcal{G}}

\newcommand{\mcI}{\mathcal{I}}
\newcommand{\mcJ}{\mathcal{J}}

\newcommand{\mcM}{\mathcal{M}}

\newcommand{\mcS}{\mathcal{S}}

\newcommand{\mcX}{\mathcal{X}}

\newcommand{\bigO}[1]{\mathcal{O}\left(#1\right)}

\newcommand{\supp}[1]{\mathbf{supp}\left(#1\right)}

\newcommand{\HSig}{{\widehat{\Sigma}}}

\newcommand{\Sig}{\Sigma}
\newcommand{\hmcS}{{\widehat{\mathcal{S}}}}

\newcommand{\bfx}{\mathbf{x}}

\newcommand{\bfw}{\mathbf{w}}
\newcommand{\bfN}{\mathbf{N}}
\newcommand{\bfzo}{\mathbf{0}}
\newcommand{\bfA}{\mathbf{A}}

\newcommand{\bfV}{\mathbf{V}}
\newcommand{\tbfV}{\widetilde{\bfV}}
\newcommand{\pbfV}{\tbfV^{\text{proj}}}
\newcommand{\bfW}{\mathbf{W}}

\newcommand{\bfX}{\mathbf{X}}
\newcommand{\bfY}{\mathbf{Y}}
\newcommand{\bfz}{\mathbf{z}}
\newcommand{\bfD}{\mathbf{D}}
\newcommand{\hbfD}{\widehat{\mathbf{D}}}
\newcommand{\tbfD}{\widetilde{\mathbf{D}}}
\newcommand{\bbfD}{\bar{\bfD}}
\newcommand{\bfB}{\mathbf{B}}

\newcommand{\bfI}{\mathbf{I}}
\newcommand{\bfP}{\mathbf{P}}

\newcommand{\bfF}{\mathbf{F}}

\newcommand{\bfE}{\mathbf{E}}

\newcommand{\bfu}{\mathbf{u}}

\newcommand{\bfe}{\mathbf{e}}

\newcommand{\bfy}{\mathbf{y}}

\newcommand{\bfd}{\mathbf{d}}
\newcommand{\hbfd}{\hat{\mathbf{d}}}
\newcommand{\tbfd}{\tilde{\mathbf{d}}}
\newcommand{\bbfd}{\bar{\bfd}}

\newcommand{\bfv}{\mathbf{v}}
\newcommand{\bbi}{\mathds{1}}

\newcommand{\tbfy}{{\tilde{\bfy}}}

\newcommand{\epsi}{\varepsilon}

\newcommand{\R}{\mathbb{R}}

\newcommand{\BBP}[1]{{\mathbb{P}}\left(#1\right)}

\newcommand{\inner}[2]{\left\langle #1, #2 \right\rangle}
\newcommand{\sign}[1]{\mathbf{sgn}\left(#1\right)}

\newcommand{\vspan}{\mathbf{span}}
\newcommand{\proj}[2]{{\mathbf{proj}_{#1}(#2)}}

\newcommand{\OZ}{{\Omega_0}}

\newcommand{\tA}{\widetilde{A}}
\newcommand{\tO}{\widetilde{\Omega}}

\declaretheorem[name=Theorem,numberwithin=section]{thm}
\declaretheorem[name=Lemma,numberwithin=section]{lma}

 \hypersetup{
    hidelinks
     }


\usepackage{lastpage}
\jmlrheading{23}{2023}{1-\pageref{LastPage}}{3/27; Revised TBD}{TBD}{21-0000}{Alexei Novikov and Stephen White}


\ShortHeadings{Overcomplete Dictionary Learning for the Almost-Linear Sparsity Regime}{Novikov and White}
\firstpageno{1}

\begin{document}

\title{Dictionary Learning for the Almost-Linear Sparsity Regime}

\author{\name Alexei Novikov \email novikov@psu.edu \\
       \addr Department of Mathematics\\
       Penn State University\\
       University Park, PA 16802 USA
       \AND
       \name Stephen White \email sew347@psu.edu \\
       \addr Department of Mathematics\\
       Penn State University\\
       University Park, PA 16802 USA}

\editor{Francis Bach and David Blei}

\maketitle

\begin{abstract}
\textit{Dictionary learning}, the problem of recovering a sparsely used matrix $\mathbf{D} \in \mathbb{R}^{M \times K}$ and $N$ $s$-sparse vectors $\mathbf{x}_i \in \mathbb{R}^{K}$ from samples of the form $\mathbf{y}_i = \mathbf{D}\mathbf{x}_i$, is of increasing importance to applications in signal processing and data science. When the dictionary is known, recovery of $\mathbf{x}_i$ is possible even for sparsity linear in dimension $M$, yet to date, the only algorithms which provably succeed in the linear sparsity regime are Riemannian trust-region methods, which are limited to orthogonal dictionaries, and methods based on the sum-of-squares hierarchy, which requires super-polynomial time in order to obtain an error which decays in $M$. In this work, we introduce \textbf{SPORADIC} (\textbf{Sp}ectral \textbf{Ora}cle \textbf{Dic}tionary Learning), an efficient spectral method on family of reweighted covariance matrices. We prove that in high enough dimensions, \textbf{SPORADIC} can recover overcomplete ($K > M$) dictionaries satisfying the well-known restricted isometry property (RIP) even when sparsity is linear in dimension up to logarithmic factors. Moreover, these accuracy guarantees have an ``oracle property" that the support and signs of the unknown sparse vectors $\mathbf{x}_i$ can be recovered exactly with high probability, allowing for arbitrarily close estimation of $\mathbf{D}$ with enough samples in polynomial time.  To the author's knowledge, \textbf{SPORADIC} is the first polynomial-time algorithm which provably enjoys such convergence guarantees for overcomplete RIP matrices in the near-linear sparsity regime.
\end{abstract}

\begin{keywords}
    Compressed Sensing, Dictionary Learning, High-dimensional Statistics, Sparsity, Sparse Coding%
\end{keywords}

\section{Introduction}

In data science and machine learning, the task of discovering sparse representations for large datasets is of central importance. Sparse representations offer distinct advantages for storing and processing data, as well as providing valuable insights into the underlying structure of a dataset. This problem of sparse recovery frequently involves the recovery of a sparse vector $\bfx \in \R^K$ from a dense sample $\bfy = \bfD\bfx$, where D is a sparsely-used matrix referred to as the "dictionary.'' Depending on applications, this dictionary can be known from physics or hand-designed, such as the wavelet bases used in image processing \citep[e.g.,][]{wavelets}. Beyond numerous further applications in signal and image processing (see \citet{sparse_img_book} for a summary of developments), sparse representations have been fruitfully applied in areas including computational neuroscience \citep{neuro_intro_1, neuro_intro_2, neuro_intro_3} and machine learning \citep{ML_intro_1, ML_intro_2}.

In today's technology-driven world, interpreting and compressing increasingly varied types of data have become paramount tasks. This has sparked a growing interest in techniques that not only learn the fundamental sparse codes from data but also the dictionary itself: this is the \textit{dictionary learning problem}. Specifically, we aim to recover a sparsely-used matrix $\bfD \in \R^{M \times K}$ from $N$ samples of the form $\bfy = \bfD\bfx$, where $\bfx \in \R^K$ is sufficiently sparse (that is, $\bfx$ has significantly fewer than $K$ nonzero entries). Written in matrix form, we seek to recover $\bfD$ from a matrix $\bfY = \bfD\bfX \in \R^{M\times N}$ with the prior knowledge that rows of $\bfX$ are sparse. In many applications, practitioners are particularly interested in recovering \textit{overcomplete} dictionaries which satisfy $K > M$, as such dictionaries offer greater flexibility in selecting a basis and allow for a sparser representation \citep[e.g.][]{ChenOvercomplete, DonohoOvercomplete}.

\begin{figure}
\centering
\includegraphics[width=0.7\linewidth]{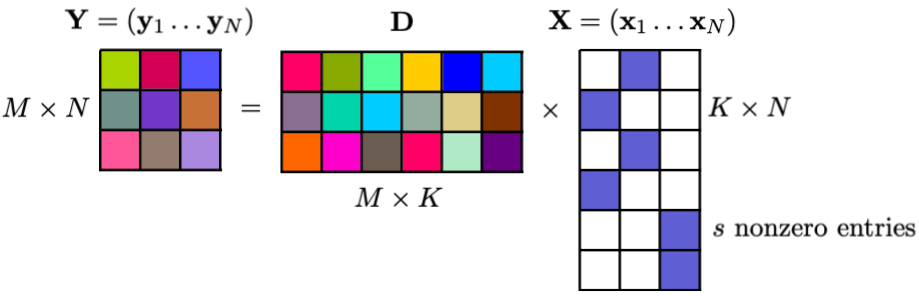}
\caption{A sample $\bfY$ as encountered in dictionary learning. $\bfD$ is an unknown and typically dense matrix, and each $\bfx_i$ is an unknown vector with at most $s$ nonzero entries.}
\end{figure}

\subsection{Prior Work}

In computer science literature, dictionary learning is typically formulated as a nonconvex optimization problem of finding a dictionary $\bfD$ and matrix with sparse columns $\bfX$ satisfying $\bfY = \bfD\bfX$ such that $\bfX$ is as sparse as possible, namely:
\[
\text{Find } \bfD, \bfX \text{ minimizing } \|\bfX\|_0 \text{ subject to } \bfY = \bfD\bfX \text{.}
\]
As a nonconvex optimization, this problem is computationally challenging to solve. The most popular heuristic for solving the dictionary learning problem is \textit{alternating minimization}. Alternating minimization algorithms rely on the fact that when $\bfD$ or $\bfX$ is known, the other can be solved using known methods, most frequently based on $L^1$ relaxations that make the problem convex \citep{CandesCS}. Alternating minimization techniques thus alternate between a ``sparse coding'' step in which a guess for $\bfD$ is fixed and the algorithm solves for $\bfX$, and a ``dictionary update'' step in which $\bfX$ is fixed and the dictionary is updated. This process is repeated until a convergence criterion is met. Though theoretical guarantees for convergence of such methods exists, they often demand extremely sparse representations and precise initialization \citep{CH_Alternating, am1}.

In this paper, we are interested in dictionary learning algorithms with provable guarantees. Initial theoretical study of provable dictionary learning focused on the case when $s$ is no greater than $\sqrt{M}$; this is a well-known recovery boundary even when the dictionary $\bfD$ is known. \citet{Spielman} developed an algorithm that accurately recovers the dictionary in this sparsity regime, but their algorithm does not generalize to recovery of overcomplete dictionaries. \citet{Arora13} and \citet{Agarwal13} then independently introduced similar correlation-and-clustering methods, which enjoy similar theoretical guarantees for the $s \sim \sqrt{M}$ regime for overcomplete dictionaries. 

\citet{CandesCS} showed that when the dictionary $\bfD$ is known and satisfies a restricted isometry property (RIP; see Definition \ref{RIP}) \citep[][]{RIP}, it is possible efficiently to recover $\bfx_i$ from $\bfy_i = \bfD\bfx_i$ when $\bfx_i$ has linearly-many (in $M$) nonzero entries. Accordingly, there was tremendous interest in determining whether recovery in this scaling regime remained possible when $\bfD$ is unknown. In \citep{Arora14}, the authors develop provable methods for recovering dictionaries with sparsity linear in $M$ up to logarithmic factors, but their method requires quasipolynomial running time. In a pair of papers, Sun et al. develop a polynomial-time method which can provably recover invertible dictionaries with $s = \bigO{M}$ \citep{Sun1, Sun2}. However, this algorithm depends intimately on properties of orthogonal and invertible matrices and thus is limited to the case of complete dictionaries ($M = K$); moreover, their theoretical guarantees demand a very high sample complexity of $K \gg M^8$. More recently, \citet{ZhaiL4} introduced a method based on $L^4$ norm optimization. Despite many nice properties of this approach further elaborated by \citet{ZhaiUnderstanding}, it remains limited to the complete case and the authors prove only the accuracy of a global optimum without a guarantee of convergence in arbitrary dimension.

In the overcomplete setting, \citet{Barak} developed a tensor decomposition method based on the sum-of-squares hierarchy that can recover overcomplete dictionaries with sparsity up to $s = \bigO{M^{1-\gamma}}$ for any $\gamma>0$ in polynomial time, but this time tends to super-polynomial as $\gamma \to 0$ and requires a constant error as $M \to \infty$. This and related methods have generally enjoyed the best theoretical guarantees for efficient dictionary learning in the overcomplete linear sparsity regime due to their impressive generality, especially after their runtime was improved to polynomial time by \citet{tensor_polynomial} provided that the target error remains constant. Yet the requirement of constant error is strict---with these methods, even in sublinear sparsity regimes such as $s \sim M^{0.99}$, an inverse logarithmic decay in error requires superpolynomial time.

\subsection{Our Contribution}
 
For \textit{known} overcomplete RIP matrices $\bfD$, $\bfX$ can be recovered from $\bfY$ for sparsity linear in $M$ up to log factors by methods such as $L^1$-minimization \citep{RIP} or lasso \citep{lasso}. While results such as those of \citet{Barak} provide very general guarantees for wide families of matrices in this regime, these methods are slow owing to sum-of-squares techniques requiring multiple levels of convex relaxation. Until now, there are no results extending dictionary learning specifically to RIP matrices, where we might expect much better recovery properties owing to the success of such methods for proving guarantees in the context of compressed sensing. 

In this work, we answer this question in the affirmative, showing that matrices with good RIP constants along with a ``uniform variance'' assumption can be recovered efficiently even with sparsity linear in $M$ up to logarithmic factors. These results are global: we require no initialization for recovery to work; moreover, our guarantees have the ``oracle property'' that both the support and signs of the sparsity pattern matrix $\bfX$ are recovered exactly with high probability. This allows for arbitrarily close recovery given enough samples, and even for truly exact recovery of $\bfD$ if one is willing to make stronger assumptions on the distribution of $\bfX$. This is a significant improvement over the best known alternatives for the overcomplete near-linear sparsity regime, which only achieved constant \citep{tensor_polynomial} error bounds in polynomial time. In addition to the benefits outlined above, our proposed algorithm \textbf{SPORADIC} (\textbf{SP}arse \textbf{ORA}cle \textbf{DIC}tionary Learning) is conceptually simple, easy to implement, and easily parallelized.



\subsection{Intuition}

The correlation-based clustering method of \citet{Arora14} offers an appealing intuition in the $s \sim \sqrt{M}$ regime: that pairs of samples $\bfy_i,\bfy_j$ which ``look similar" in the sense of being highly correlated ($|\inner{\bfy_i}{\bfy_j}|$ is in some sense large) are likely to share support. More technically, if the dictionary $\bfD$ is incoherent (that is, $|\inner{\bfd_{k_1}}{\bfd_{k_2}}| \leq c/\sqrt{M}$ for $k_1 \neq k_2$) and the coefficients $\bfx_i,\bfx_j$ are symmetric, then if $s \ll \sqrt{M}$ then as $M \to \infty$, the \textit{correlation}\footnote{We note this is a slight abuse of notation as the vectors $\bfy_i, \bfy_j$ may not be unit vectors.} $\inner{\bfy_i}{\bfy_j}$ will concentrate near zero if $\bfy_i$ and $\bfy_j$ share no support, but concentrate around $\pm 1$ if they do. As a result, thresholding $|\inner{\bfy_i}{\bfy_j}|$ becomes a reliable indicator of whether $\bfy_i$ and $\bfy_j$ share a common dictionary element in their support. 

By constructing a graph with an edge between $i$ and $j$ whenever $|\inner{\bfy_i}{\bfy_j}| \geq \uptau$ for some threshold $\uptau$ (say 1/2), one can determine which groups of $\bfy_i$'s share \textit{the same} common support element by applying overlapping clustering methods. Yet correlation-based clustering cannot be performed with accuracy once sparsity exceeds $\sqrt{M}$, as above this threshold correlation no longer reliably indicates whether two samples share a common dictionary element. As a result, methods based on the correlation $\inner{\bfy_i}{\bfy_j}$ have not been widely employed in subsequent attempts to solve dictionary learning in the linear sparsity regime.


We adopt a different approach which sidesteps the key challenges of these previous correlation methods, allowing us to apply these methods successfully even in the linear sparsity regime.  Our proposed algorithm \textbf{SPORADIC} works by a natural adaptation of the correlation-based approach of \citet{Arora13} to the linear sparsity regime. Instead of immediately attempting to recover dictionary elements, we first pursue an intermediate step of recovering \textit{spanning subspaces}, the subspaces $\mcS_i$ spanned by the supporting dictionary elements of each sample $\bfy_i$. Once these subspaces are recovered, the individual dictionary elements can be recovered through pairwise comparison of subspaces to find their intersection. 


To recover these subspaces, we employ a spectral method based on extracting the eigenvectors of a modified covariance matrix of the samples $\bfY$. Specifically, for each $j$ we examine the \textit{correlation-weighted covariance matrix} $\HSig_j$, defined as the sample covariance of the correlation-weighted samples $\inner{\bfy_i}{\bfy_j}\bfy_i$. By design, these reweighted samples will have greater variance in the directions of the support elements of $\bfy_j$, meaning $\HSig_j$ will have a rank-$s$ ``spike'' in the directions spanned by support elements of $\bfy_j$, while a random-matrix assumption on $\bfD$ guarantees that, with high probability, there will be no comparable spikes in other directions. As a result, the $s$ leading eigenvectors of $\HSig_j$ (that is, the $s$ eigenvectors corresponding to the $s$ largest eigenvalues) will reliably span a subspace close to that spanned by the support elements of $\bfy_j$. We provide theoretical guarantees that this method accurately recovers spanning subspaces with sparsity $s \sim M\log^{-(6+\eta)} (M)$ for any $\eta > 0$, which allows for recovery of the individual dictionary elements by a further subspace intersection process (see Algorithm \ref{subspace_int_approx}). 

Although the resulting estimate enjoys only weak convergence guarantees---$\bigO{1/\text{polylog}(M)}$---it is accurate enough that subsequent post-processing steps can improve the estimate drastically. After obtaining a nontrivial estimate of the dictionary which is close column-wise, we show how these estimated columns can be used as ``approximate oracles'' to find a refined dictionary estimate with substantially better error bounds. These bounds are tight enough to allow recovery of the support \textit{and} signs of $\bfX$, from which an even better estimate can be gained by simple averaging. Most significantly, the error bounds for this final estimate tend to zero in the number of samples $N$, and truly exact recovery is even possible in certain cases.

The sample complexity required for \textbf{SPORADIC} to recover subspaces accurately depends on the particular sparsity regime. In the most challenging linear-sparsity regime that is our main focus, up to $\log$ factors subspace recovery requires a sample complexity of at most $M^4$ for the initial dictionary estimate to be accurate, but in the ``easier" regime $s \approx \sqrt{M}$, our results suggest the required sample complexity eases to $N \sim M$ (see Theorem \ref{master_thm}). In the linear-sparsity case, the bottleneck is caused by approximation of a covariance matrix in Frobenius norm, which is known to require a factor of $M$ additional samples than does estimation in the operator norm. We believe that in future work this step can be replaced by an approach requiring approximation only in the $L^2$ operator norm, in which case the worst-case sample complexity would be lowered to $M^3$.



\section{Conventions and Data Model}\label{conventions}
We begin by stating the sparse dictionary learning problem explicitly:

\begin{definition}[Sparse Dictionary Learning]
Let $\bfD = \begin{pmatrix} \bfd_1 & \bfd_2 & \ldots & \bfd_K \end{pmatrix}$ be an (unknown) $M \times K$ matrix with unit vector columns, called the \emph{dictionary}. Let $\bfx$ be an $s$-sparse random vector, and define the random vector $\bfy = \bfD\bfx$. The \textit{sparse dictionary learning problem} is:
\[
\emph{Given } \bfY = \bfD\bfX \emph{ where } \bfX \emph{ is a } K \times N \emph{ matrix with columns } \{\bfx_i\}_{i=1}^N \emph{ i.i.d. copies of } \bfx \emph{, recover } \bfD \emph{.}
\]
\end{definition}

It is clear from the definition that $\bfD$ can only be recovered up to sign and permutation. Accordingly, we employ the following definition for comparing two dictionaries, due to \citet{Arora13}: we say that two dictionaries are \textit{column-wise $\epsi$-close} if their columns are close in Euclidean norm after an appropriate permutation and change of sign. In detail:

\begin{definition}[Column-wise $\epsi$-close \citep{Arora13}]
Two dictionaries $\bfD = \begin{pmatrix} \bfd_1 & \bfd_2 & \ldots & \bfd_{K}\end{pmatrix}$ and $\bfD' = \begin{pmatrix} \bfd'_1 & \bfd'_2 & \ldots & \bfd'_{K'}\end{pmatrix}$ are \textit{column-wise $\epsi$-close} if they have the same dimensions $M \times K$ and there exists a permutation $\pi$ of $\{1,\ldots,K\}$ and a $K$-element sequence $\theta_k \in \{-1,1\}$ such that for all $k = 1, \ldots, K$:
\[
\|\bfd_k - \theta_k\bfd'_{\pi(k)}\|_2 \leq \varepsilon
\]
\end{definition}

\subsection{Notation and conventions}
Vectors are represented by boldface lowercase letters, while matrices will be written as boldface uppercase letters. Roman letters (both upper- and lowercase) will be used for both scalars and random variables depending on context. We will use the notation $|\mcA|$ for the number of elements in a finite set $\mcA$, and $\mcA^c$ for its complement.

We use two matrix norms at different points in the text. The standard $L^2$ operator norm will be denoted by $\|\bullet\|_2$ while the Frobenius norm will be denoted $\|\bullet\|_F$. Vector norms always refer to the standard $L^2$ (Euclidean) norm, and will be denoted $\|\bullet\|_2$. We will use the notation $a \ll b$, where both $a$ and $b$ are scalars depending on $M$, to mean $\lim_{M\to\infty}|a|/|b| = 0$, where the norm in question may depend on context.

The index-free notation $\bfy = \bfD\bfx$ will refer to a generic independent copy drawn from the sampling distribution, used for index-independent properties of this distribution such as expectation, while we reserve the indexed notation $\bfy_i$ to refer to a particular random vector in the sample $\bfY$. Given a sample $\bfy_i$, its \textit{support}, denoted $\Omega_i$, is defined as the set of indices of the dictionary vectors in its construction with nonzero coefficients:
\[
\Omega_i := \supp{\bfx_i} = \{k \in \{1,\ldots,K\}: x_{ik} \neq 0\}
\]
The ``support vectors'' of $\bfy_i$ refer to the dictionary elements indexed by $\Omega_i$, the set $\{\bfd_k\}_{k\in \Omega_i}$. We use the notation $\mcA - \mcB$ for the relative complement of set $\mcB$ in set $\mcA$: $\mcA-\mcB = \{x: x \in \mcA, x \notin \mcB\}$. We denote the dimension of a vector subspace $\mcS$ with the shorthand $\dim(\mcS)$. We use $\bbi$ for the indicator function; for example, $\bbi_{k \in \Omega_i}$ equals one for $k \in \Omega_i$ and zero otherwise.

\subsection{Data Model}\label{data_model}

We begin by defining the following distribution for the sparsity pattern $\bfX$:


\begin{definition}[$\mcX(W)$ distribution]
Let $W$ be a symmetric random variable satisfying $|W| \in [c,C]$ almost surely for $0 < c \leq C$. A random vector $\bfX \in \R^{K \times N}$ follows a $\mcX(W)$ distribution if:
\begin{itemize}
    \item The supports $\Omega_i = \supp{\bfx_i}$ of each column $\bfx_i$ of $\bfX$ are independent, uniformly random $s$-element subsets of $\{1,\ldots,K\}$.
    \item Nonzero entries of $\bfX$ are i.i.d. copies of $W$.
\end{itemize}
\end{definition}

This definition implies columns of $\bfX$ are independent when distributed according to a $\mcX(W)$ distribution. In our theoretical results, we will assume that $\bfX \sim \mcX(W)$. As the extension to bounded symmetric random variables is trivial, we will prove our results for $W=\pm 1$ with equal probability. This choice of particular distribution for $\bfX$ is made for theoretical convenience and significantly simplifies the analysis, but we expect our results to hold with minimal modifications for the more commonly used Bernoulli-Gaussian model used by \citet{Spielman} and others. The requirement that nonzero entries $\bfX$ is symmetric is likely essential as it causes many terms to vanish in expectation; however, with i.i.d. one can always ensure a symmetric coefficient distribution by subtracting adjacent samples, i.e. $\bfy_1' = \bfy_1 - \bfy_2$, which will increase sparsity only by a factor of 2.

\subsubsection{Restricted Isometry Property and Recovery Conditions}

To describe our requirements on the dictionary $\bfD$, we now formally introduce the restricted isometry property, initially due to \cite{RIP}:

\begin{definition}[Restricted Isometry Constants]\label{RIP}
Let $\bfD$ be a $M \times K$ matrix. For every $s \in \{1,\ldots,K\}$ we define the $s$-restricted isometry constant of $\bfD$ as the smallest number $\delta_s$ such that
\[
(1-\delta_s)\|\bfx\|_2 \leq \|\bfD\bfx\|_2 \leq (1+\delta_s)\|\bfx\|_2
\]
for every vector $\bfx \in \R^K$ with at most $s$ nonzero entries.
\end{definition}

The following lemma is a consequence of the definition:
\begin{lma}[Restricted orthogonality] \label{lemma:rop}
Let $\bfx$ and $\bfy$ be at-most-$s$-sparse vectors with $\supp{\bfx} \cap \supp{\bfy} = \emptyset$. Then
\[
\inner{\bfD\bfx}{\bfD\bfy} \leq \delta_{2s}\|\bfx\|_2\|\bfy\|_2.
\]
\end{lma}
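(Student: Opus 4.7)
The plan is to prove this via a polarization identity applied to $\bfD\bfx$ and $\bfD\bfy$, which reduces the inner product to a difference of squared norms that can then be bounded directly by restricted isometry.

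First, I would reduce to the case $\|\bfx\|_2 = \|\bfy\|_2 = 1$ using homogeneity of both sides of the inequality in $\bfx$ and $\bfy$. Under that normalization, the target is $\inner{\bfD\bfx}{\bfD\bfy} \leq \delta_{2s}$. The key observation is the polarization identity
\[
4\inner{\bfD\bfx}{\bfD\bfy} = \|\bfD(\bfx+\bfy)\|_2^2 - \|\bfD(\bfx-\bfy)\|_2^2.
\]
The disjoint-support hypothesis $\supp{\bfx}\cap\supp{\bfy}=\emptyset$ now plays two roles at once: it ensures that $\bfx\pm\bfy$ are each at most $2s$-sparse, so that Definition \ref{RIP} applies to both with constant $\delta_{2s}$, and it gives the Pythagorean identity $\|\bfx\pm\bfy\|_2^2 = \|\bfx\|_2^2 + \|\bfy\|_2^2 = 2$.

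Next I would apply the RIP bounds termwise: the upper bound gives $\|\bfD(\bfx+\bfy)\|_2^2 \leq (1+\delta_{2s})^2\cdot 2$ and the lower bound gives $\|\bfD(\bfx-\bfy)\|_2^2 \geq (1-\delta_{2s})^2 \cdot 2$. Subtracting, expanding, and dividing by $4$ yields a bound of the form $C\,\delta_{2s}$ with $C$ a small absolute constant (the clean form $\delta_{2s}$ arising if one reads Definition \ref{RIP} with squared norms, as is the more standard convention in the compressed-sensing literature, so I would confirm the constant convention and restore the homogeneous form $\delta_{2s}\|\bfx\|_2\|\bfy\|_2$ in the final step).

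I do not anticipate a substantive obstacle here; the only real subtlety is the book-keeping on constants arising from the squared-versus-unsquared form of the RIP definition and the arithmetic in $(1+\delta_{2s})^2-(1-\delta_{2s})^2 = 4\delta_{2s}$. The disjoint-support hypothesis is essential for the Pythagorean identity; without it the cross terms $2\inner{\bfx}{\bfy}$ would not cancel and the two norms $\|\bfx\pm\bfy\|_2$ would differ, preventing the clean telescoping that produces a factor proportional to $\delta_{2s}$ rather than a constant. Finally, scaling back to general $\bfx,\bfy$ recovers the stated inequality.
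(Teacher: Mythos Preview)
Your polarization argument is the standard and correct approach; the paper itself does not supply a proof of this lemma, stating only that it ``is a consequence of the definition,'' so there is nothing substantively different to compare against. You have also correctly flagged the one genuine wrinkle: with the paper's \emph{unsquared} RIP convention $(1-\delta_s)\|\bfx\|_2 \leq \|\bfD\bfx\|_2 \leq (1+\delta_s)\|\bfx\|_2$, the polarization computation yields $4\inner{\bfD\bfx}{\bfD\bfy} \leq 2(1+\delta_{2s})^2 - 2(1-\delta_{2s})^2 = 8\delta_{2s}$, i.e.\ a bound of $2\delta_{2s}\|\bfx\|_2\|\bfy\|_2$ rather than $\delta_{2s}\|\bfx\|_2\|\bfy\|_2$, so the lemma as stated is off by an absolute constant unless one reads Definition~\ref{RIP} in the squared form---this is harmless for every downstream use in the paper, where such constants are absorbed into $C$.
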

This is sometimes called the restricted orthogonality property.

To prove recovery, we require two conditions on the dictionary $\bfD$: first, that $\bfD$ is uniform in the sense that it has ``covariance'' $\bfD\bfD^T$ close to the identity, and second that the dictionary has a good $2s$-RIP constant. We also require $K/M^{3/2} \leq \sqrt{K}/\sqrt{M}$ for technical convenience; this will be need to be true for recovery in the almost-linear sparsity regime of interest anyway.

\begin{definition}[Good dictionaries]\label{good_event}
Let $s < M < K$ with $K/M^{3/2} \leq \sqrt{K}/\sqrt{M}$. Then for any constant $B$, we denote by $\mcG(B)$ the class of good dictionaries, defined as the set of dictionaries $\bfD \in \R^{M\times K}$ such that:
\begin{enumerate}[label=$\mcG$.\arabic*]
    \item \label{G1:DDT_norm} $\left\|\bfD\bfD^T - \frac{K}{M}\bfI\right\|_2 \leq \frac{B\sqrt{K}}{\sqrt{M}}$.
    \item \label{G2:1_rip} For all $k \neq m$, $|\inner{\bfd_k}{\bfd_m}| \leq C\log M/\sqrt{M}$.
    \item \label{G2:s_rip} $\bfD$ has $2s$-restricted isometry constant satisfying $\delta_{2s} \leq \frac{B\sqrt{s}\log M}{\sqrt{M}} < \frac{1}{8}$
\end{enumerate}
\end{definition}

The following lemma proves existence of a reasonable class of dictionaries in $\mcG(B)$; proof is in in Appendix \ref{appdx:g0}:
\begin{restatable}{lma}{Glemma}\label{G0_lemma}
Let $\bfD \in \R^{M\times K}$ be a dictionary with columns $\bfd_k$ chosen independently and uniformly from the unit sphere in $\R^M$. Then for large enough $M$, there exists an absolute constant $B$ such that $\bfD \in \mcG(B)$ with high probability; that is, that for any $a>0$,
\[
\lim_{M \to \infty} M^a\BBP{\bfD \notin \mcG(B)} = 0
\]
\end{restatable}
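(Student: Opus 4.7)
The plan is to verify each of the three conditions in Definition~\ref{good_event} separately with failure probability $o(M^{-a})$ for every $a > 0$, then combine by a union bound over the three events. It will be convenient throughout to realize the uniform distribution on $S^{M-1}$ as $\bfd_k = \bfg_k/\|\bfg_k\|_2$ with $\bfg_k \sim N(0, \bfI_M)$ i.i.d.; this representation gives simultaneous access to sub-Gaussian concentration for inner products, Gaussian singular-value concentration for the matrix $\bfG = [\bfg_1 \mid \cdots \mid \bfg_K]$, and $\chi^2$ concentration for the normalizations $\|\bfg_k\|_2^2$.

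For $\mcG_1$, the approach is to write $\bfD\bfD^T - (K/M)\bfI = \sum_k Z_k$ with $Z_k = \bfd_k\bfd_k^T - \bfI/M$, which is centered and satisfies $\|Z_k\|_2 \leq 1$. A short computation using $(\bfd_k\bfd_k^T)^2 = \bfd_k\bfd_k^T$ gives $\mathbb{E}[Z_k^2] = (1 - 1/M)\bfI/M$, so $\|\sum_k \mathbb{E}[Z_k^2]\|_2 \leq K/M$. Tropp's matrix Bernstein inequality then yields a tail bound of the form $\BBP{\|\sum_k Z_k\|_2 \geq t} \leq 2M\exp(-t^2/(2K/M + 2t/3))$. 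For the extreme regime where $K$ is close to $M$, a spurious $\sqrt{\log M}$ factor can appear from matrix Bernstein; this can be removed via the Gaussian representation, using Davidson--Szarek concentration for the extreme singular values of $\bfG$ to handle $\|\bfG\bfG^T/M - (K/M)\bfI\|_2$, and $\chi^2$ concentration to bound the diagonal normalization correction $\bfD\bfD^T - \bfG\bfG^T/M = \bfG\,\mathrm{diag}(\|\bfg_k\|_2^{-2} - 1/M)\,\bfG^T$ by the uniform deviation $\max_k|\|\bfg_k\|_2^2/M - 1| = O(\sqrt{\log K/M})$.

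For $\mcG_2$, fix $k \neq m$: conditional on $\bfd_m$, $\langle\bfd_k,\bfd_m\rangle$ is distributed as the first coordinate of a uniform random vector on $S^{M-1}$, which has the sub-Gaussian tail $\BBP{|\langle\bfd_k,\bfd_m\rangle| \geq t} \leq 2e^{-(M-1)t^2/2}$; choosing $t = C\log M/\sqrt{M}$ and union-bounding over the $\binom{K}{2} \leq M^4$ pairs (using $K \leq M^2$, which follows from the hypothesis $K/M^{3/2} \leq \sqrt{K}/\sqrt{M}$) gives super-polynomial decay. For $\mcG_3$, the conclusion is a direct invocation of the classical restricted isometry property for random dictionaries with uniform-sphere columns: the standard $\epsilon$-net plus union-bound argument of Baraniuk--Davenport--DeVore--Wakin gives $\delta_{2s} = O(\sqrt{s\log(eK/s)/M})$ with overwhelming probability, which using $K \leq M^2$ yields the required bound $\delta_{2s} \leq B\sqrt{s}\log M/\sqrt{M}$ after absorbing constants. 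The hard part of the overall argument is $\mcG_1$: matching the log-free bound $B\sqrt{K/M}$ across all regimes $M \leq K \leq M^2$ requires the refined Gaussian analysis above rather than a black-box appeal to matrix Bernstein, and care in the $\chi^2$ bookkeeping is needed because naive bounds on the normalization correction carry an extra factor scaling with $K/M$.
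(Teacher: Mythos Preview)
Your proposal is correct and lands on essentially the same argument the paper gives: the Gaussian representation $\bfd_k = \bfg_k/\|\bfg_k\|_2$, the split of $\bfD\bfD^T - (K/M)\bfI$ into the Wishart piece $\bfG\bfG^T/M - (K/M)\bfI$ plus the diagonal normalization correction handled by $\chi^2$ concentration, the sub-Gaussian inner-product tail for $\mcG.2$, and the $\varepsilon$-net RIP argument for $\mcG.3$ are exactly what the paper does. The only cosmetic differences are that the paper skips your matrix-Bernstein detour and goes directly to the Gaussian route, and that it carries out the $\varepsilon$-net arguments by hand rather than citing Davidson--Szarek or Baraniuk--Davenport--DeVore--Wakin as black boxes; for the normalization correction the paper simply bounds $\|\bfG\|_2^2\|\bfN^2 - M^{-1}\bfI\|_2 \lesssim K/M^{3/2}$ and then invokes the standing hypothesis $K/M^{3/2} \leq \sqrt{K/M}$, which is precisely the bookkeeping point you flag.
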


The constant $B$ is not important in an our analysis and so will be wrapped in with the general absolute constants $c$ and $C$ in our theoretical guarantees. Therefore in the following we will simply refer to $\mcG(B)$ as $\mcG$ with $B$ implied as some sufficiently large absolute constant. Any $\bfD \in \mcG$ will satisfy the following properties, each of which we will use frequently:
\begin{enumerate}[label=$\mcC$.\arabic*]
    \item \label{C1:D_norm} $cK/M \leq \left\|\bfD\bfD^T\right\|_2 \leq CK/M$.
    \item \label{C2:D_norm_F} $cK/\sqrt{M} \leq \left\|\bfD\bfD^T\right\|_F \leq CK/\sqrt{M}$.
    \item \label{C3:y_norm} For all $i = 1,\ldots, N$, $\sqrt{s}/2 \leq \|\bfy_i\|_2 \leq 3\sqrt{s}/2$
\end{enumerate}

\section{Algorithm}\label{algorithm}

In this section, we outline the key elements of the \textbf{SPORADIC} (\textbf{SP}arse \textbf{ORA}cle \textbf{DIC}tionary Learning) algorithm. The essential idea is the following: as already noted, in the linear-sparsity regime, correlations are not strong enough to directly infer the sparsity pattern as in \citep{Arora13}. Therefore in this work we propose an intermediate step: before attempting to recover the dictionary elements, for each sample $\bfy_i$ we recover its \textit{spanning subspace} $\mcS_i$:

\begin{definition}[Spanning Subspace]
Given a sample $\bfy_i = \bfD\bfx_i$, the \emph{spanning subspace} of sample $i$ is the subspace $\mcS_i$ defined as
$\mcS_i = \vspan\{\bfd_k: k \in \supp{\bfx_i}\}$.
\end{definition}

First recovering the spanning subspaces obviates any need to perform community detection on an unreliable connection graph, which was the immediate point of failure for the correlation-based method of \citet{Arora13} in the $s\gg\sqrt{M}$ setting.

\subsection{Algorithm Overview}

\textbf{SPORADIC} consists of three steps: first, an initial estimate is learned by means of a correlation-based spectral method, \textbf{Spectral Subspace Dictionary Larning (SSDL)}. Next, the dictionary elements found in this initial estimate are used as ``oracles'' to more accurately estimate the true support of the sparsity pattern $\bfX$ and then to recover a correspondingly better estimate of $\bfD$; this is the \textit{Oracle Refinement} step. Both of these estimates have dimensional error terms which will not tend to zero regardless of the number of sample, for which reason we employ the third and final step, \textit{Oracle Averaging}. Here we take advantage of the fact that the oracle refinement step results in an estimate of $\bfD$ which is close enough that not only the support of $\bfX$ but also the signs of its nonzero entries can be detected, after which dictionary elements $\bfd_k$ can be recovered by averaging the sign-adjusted samples $\bfy_i$ with $k \in \Omega_i$. Of the three steps, obtaining an initial estimate is the most challenging and correspondingly constitutes the bulk of our analysis. The process is summarized below:
\begin{enumerate}
\item \textbf{Spectral Subspace Dictionary Learning (SSDL)}: Learn a rough estimate of the dictionary via a spectral method.
	\begin{enumerate}
	\item \textit{Subspace Recovery}: Recover estimated spanning subspaces $\hmcS_i$ of each sample $\bfy_j$ by computing the the span of the top $s$ eigenvectors of the \textit{correlation-weighted covariance} $\frac{1}{N}\sum_{i=1}^N\inner{\bfy_j}{\bfy_i}^2\bfy_i\bfy_i^T$ and adjusting for the population covariance $\frac{1}{N}\bfY\bfY^T$.
	\item \textit{Subspace Intersection}: Compare subspaces $\hmcS_i$ by taking approximate intersections between collections of subspaces; approximate 1-dimensional intersections are included as estimated dictionary vectors $\hbfd_k$.
	\end{enumerate}
\item \textbf{Oracle Refinement}: Learn the estimated support $\tO_i$ of each sample $\bfx_i$ by testing whether the size of the projection of $\hbfd_k$ onto the estimated spanning subspace $\hmcS_i$ exceeds a constant threshold for each $i \in \{1,\ldots,N\}$ and $k\in \{1,\ldots,K\}$. Use support information to recover a refined estimate $\tbfd_k$ using a spectral method similar to SSDL.
\item \textbf{Oracle Averaging}: From $\tbfd_k$, deduce the signs of nonzero entries $\bfX$ and average the sign-corrected samples $x_{ik}\bfy_i$ for each $i$ such that $k\in\tO_i$.
\end{enumerate}

\subsection{Spectral Subspace Dictionary Learning}
We begin with the Spectral Subspace Dictionary Learning (SSDL) algorithm for recovering an initial estimate of the dictionary. SSDL itself consists of two steps: \textit{subspace recovery}, in which the spanning subspaces of each sample $\bfy_i$ are approximately recovered, and \textit{subspace intersection}, which compares the recovered subspaces to learn shared dictionary elements.

\subsection{Subspace Recovery}
Given a sample $\bfy_j$ the subspace recovery step is a spectral method that constructs a matrix which, with high probability, will have lead $s$ eigenvectors spanning a subspace close to the true spanning subspace $\mcS_j$. To estimate $\mcS_j$, we consider a statistic based on the classical estimator for the covariance of $\bfy$, the \textit{sample covariance matrix} $\HSig$:
\[
\HSig = \frac{1}{N}\bfY\bfY^T = \frac{1}{N}\sum_{i=1}^N\bfy_i\bfy_i^T
\]
As long as $E\bfy = 0$, it is easy to see that $\HSig$ is an unbiased estimator (that is, $E\HSig = E\bfy\bfy^T$); by the law of large numbers, then, $\HSig \to E\bfy\bfy^T$ in $N$ almost surely (later we will use quantitative versions of this result; see, for instance, \citet{Vershynin}, theorems 4.7.1 and 5.6.1).

To find the subspace $\mcS_j$, though, we need an estimator which is biased towards those directions spanned by the support elements of $\bfy_j$. Our goal is to weight the sample covariance matrix in such a way that a sample $\bfy_i$ is given more weight the larger the shared support between $\bfy_i$ and $\bfy_j$, and therefore the resulting covariance will be ``stretched'' in favor of the directions spanned by the support elements of $\bfy_j$.. Accordingly, we now introduce the key statistic of the subspace recovery step, the \textit{correlation-weighted covariance} $\Sig_j$:
\[
\Sig_j := E[\inner{\bfy_j}{\bfy}^2\bfy\bfy^T]
\]
and its sample version $\HSig_j$:
\[
\HSig_j := \frac{1}{N}\sum_{i \neq j}^N \inner{\bfy_j}{\bfy_i}^2 \bfy_i\bfy_i^T
\]
We point out that $\Sig_j$ and $\HSig_j$ are the covariance and the sample covariance estimator, respectively, for the random vector $\inner{\bfy_j}{\bfy}\bfy$. 

A major challenge is that when $s\gg \sqrt{M}$ this bias remains small, with the result that $\HSig_k$ will be close to the unweighted covariance matrix of $\bfy$, with only a small perturbation in the directions $\mcS_j$. However, this covariance can be accurately estimated by the sample covariance $\HSig = \frac{1}{N}\bfY\bfY^T$. With this estimate in hand, we remove the $\bfD\bfD^T$ component by ``covariance projection:'' taking the orthogonal complement of $\HSig_j$ (in the Frobenius sense) with respect to the unweighted sample covariance matrix $\HSig = \frac{1}{N}\sum_{i=1}^N \bfy_i\bfy_i^T = \frac{1}{N}\bfY\bfY^T$, with the goal of leaving only the bias component. Thus for our spectral method, we ultimately look at the span of the eigenvectors corresponding to the $s$ largest eigenvalues of the matrix
\[
\HSig_j^{\text{proj}} := \HSig_j - \frac{\inner{\HSig_j}{\HSig}_F}{\|\HSig\|^2_F}\HSig
\]
As sample covariance matrices, $\HSig_0$ and $\HSig$ can be made arbitrarily close to their expectations with sufficiently large sample size, so this statistic will have spectral properties close to the bias matrix $\sum_{k\in\Omega_j}\bfd_k\bfd_k^T$, along with a spike in the direction of $\bfy_0$, which causes no issue since $\bfy_0 \in \mcS_0$. We detail the precise process in Algorithm \ref{subspace_recovery_algo}.

\begin{algorithm}[h]
\caption{\textbf{SSR}: Single Subspace Recovery} \label{subspace_recovery_algo}
\textbf{Input:} index $j$, sample matrix $\bfY \in \R^{M\times N}$, est. covariance matrix $\HSig = \frac{1}{N}\bfY\bfY^T$ \\
\textbf{Output:} $s$-dimensional subspace $\hmcS_j$\\
\textit{Correlation-Weighted Covariance:} Compute $\HSig_j = \frac{1}{N}\sum_{i = 1}^N \inner{\bfy_j}{\bfy_i}^2\bfy_i\bfy_i^T$ \\
\textit{Covariance Projection:} Compute $\HSig^{\text{proj}}_j = \HSig_{j} - \proj{\HSig}{\HSig_j}$ \\
\textit{Spectral Recovery:} Compute the leading $s$ eigenvectors of $\HSig_j$ and set $\hmcS_j$ equal to their span. \\ 
\Return $\hmcS_j$
\end{algorithm}

Naturally, the subspace $\hmcS_j$ recovered by Algorithm \ref{subspace_recovery_algo} will only approximately match the true subspace $\mcS_j$. For this reason we introduce the following metric on subspaces of the same dimension:
\begin{definition}[Subspace Distance]\label{subspace_dist}
For two $s$-dimensional subspaces $\mcS_1$, $\mcS_2$ of $\R^M$, let $\bfE_1$ be an orthonormal basis of $\mcS_1$ and let $\bfF_2$ be an orthonormal basis of $\mcS_2^\perp$, the orthogonal complement of $\mcS_2$. We define the \emph{subspace distance} $\mcD$ between $\mcS_1$ and $\mcS_2$ as:
\[
\mcD(\mcS_1, \mcS_2) = \sup_{\bfz \in \mcS, \|\bfz\|_2 = 1}\|\bfz - \proj{\mcS_2}{\bfz}\|_2 =  \sup_{\bfz \in \mcS, \|\bfz\|_2 = 1}\|\proj{\mcS_2^\perp}{\bfz}\|_2 
 = \left\|\bfF_2\bfF_2^T\bfE_1\right\|_2 = \|\bfF_2^T\bfE_1\|_2
\]
\end{definition}
In Section \ref{scn:sub_recov_guarantee}, we demonstrate that the recovered $\hmcS_j$ is close to the true $\mcS_j$ for each $j$ simultaneously with high probability. We also show in Theorem \ref{int_lemma} that with high probability, up to logarithmic factors only the first $\bigO{K}$ subspaces are required to find every dictionary vector via subspace intersection. 

\subsection{Subspace Intersection}
Since the eigenvectors returned by this spectral method are orthonormal, they do not correspond to dictionary elements directly, but instead form a basis for a subspace $\hmcS_i$ close to the true subspace $\mcS_i$. Thus an additional \textit{subspace intersection step} is needed to recover actual dictionary elements from the estimated subspaces.

To motivate our subspace intersection algorithm, we note that if the subspaces $\mcS_i$ were known exactly, there is a particularly simple algorithm to find a dictionary element when subspaces $\mcS_i$ are known exactly. Since $\mcS_i = \vspan\{\bfd_k\}_{k\in\Omega_i}$, $\mcS_i \cap \mcS_j = \vspan\{\bfd_k\}_{k \in \Omega_i \cap \Omega_j}$ (almost surely). It follows that if $\dim(\mcS_i \cap \mcS_j) = 1$ exactly, then $\mcS_i \cap \mcS_j = \vspan\{\bfd_k\}$ where $k$ is the unique element in $\Omega_i \cap \Omega_j$.

Letting $\bfF_i$ and $\bfF_j$ be orthonormal basis matrices for $\mcS_i$ and $\mcS_j$, respectively, we can write an element in $\mcS_i$ as $\bfF_i\bfv$ for $\bfv \in \R^s$. Since the matrix for projection onto subspace $\mcS_j$ is $\bfF_j\bfF_j^T$, it follows that $\dim(\mcS_i \cap \mcS_j) = \dim(\ker(\bfF_i - \bfF_j\bfF_j^T\bfF_i))$; denote this matrix $\bfP_{ij}$. Then any $\bfv$ in the kernel of $\bfP_{ij}$ corresponds to a vector $\bfF_i\bfv$ in $\mcS_i \cap \mcS_j$. Then if $\dim({\ker({\bfP_{ij}})}) = 1$, we can easily recover a basis for the intersection $\mcS_i \cap \mcS_j$.

As long as $s^2/K$ is small, $|\mcS_i \cap \mcS_j|$ will typically have either 0 or 1 element, so an intersection between two subspaces will rarely have dimension above one. When $s^2/K \gg 1$---typically the case in our setting---we will instead need to perform subspace intersection on more than two subspaces. It is easy to see that:
\[
E|\Omega_1 \cap \Omega_2 \cap \ldots \Omega_\ell| = \frac{s^{\ell+1}}{K^\ell}
\]
Therefore, for $\ell \geq \frac{\log{s}}{\log{K/s}}$, we have $E|\Omega_0 \cap \Omega_2 \cap \ldots \Omega_\ell] \leq 1$ (this bound is made precise in Theorem \ref{int_lemma} with a slightly larger $\ell$).

In the case when we only know approximate subspaces $\hmcS_i$, $\bfP_{ij}$ will almost surely have trivial kernel, so we relax the condition $\dim(\ker(\bfP_{ij})) = 1$ to the condition that, given some small threshold $\uptau$, $\bfP_{ij}$ has exactly one singular value $\sigma \leq \uptau$. This works under the assumption that dictionary elements are nearly orthogonal, which holds when columns of $\bfD$ are i.i.d. random vectors for a broad class of random vectors \citep[e.g.,]{Vershynin}. In practice, $\uptau$ should not need to be very small; $\uptau = 1/2$ was adequate in numerical experiments. We thus define the \textit{approximate subspace intersection} of two subspaces $\mcS_i$, $\mcS_j$ as follows:

\begin{definition}[Approximate Subspace Intersection]
Let $\mcS_i$ and $\mcS_j$ be subspaces of $\R^M$ with respective orthonormal basis matrices $\bfF_i, \bfF_j$. Denote by $\bfP_{ij} = (\bfI - \bfF_j\bfF_j^T)\bfF_i$ the projection matrix of $\mcS_i$ onto $\mcS_j^\perp$. The \emph{approximate subspace intersection} of $\mcS_i$ onto $\mcS_j$ with threshold $\uptau$ is the subspace $\mcA_\uptau(\mcS_i, \mcS_j)$ of $\R^M$ defined as the span of all right singular vectors of $\bfP_{ij}$ corresponding to sufficiently small singular values:
\[
\mcA_\uptau(\mcS_i, \mcS_j) = \vspan{\{\bfv : \bfv \text{ is a right singular vector  of } \bfP_{ij} \text{ corresponding to a singular value } \sigma_\bfv \leq \uptau\}}
\]
with the convention that $\vspan(\emptyset) = \{\bfzo\}$.
\end{definition}

\noindent We note that this definition ensures $\mcA_\uptau(\mcS_i, \mcS_j)$ is a subspace of $\mcS_i$.

In Algorithm \ref{subspace_int_approx}, we detail the approximate subspace intersection algorithm for a fixed list of $\ell$ subspaces $\mcS_{i_1}, \ldots, \mcS_{i_\ell}$. In Theorem \ref{int_lemma}, we show that with high probability that to recover all $K$ dictionary elements it suffices to consider only the non-overlapping intersections $\bigcap_{p = 1}^\ell \mcS_{\ell(j-1)+p}$ for $j = 1, \ldots, J/\ell$ with $J \approx K\log^2K$. To recover an entire dictionary, then, we first employ subspace recovery to learn the subspaces $\{\hmcS_1,\ldots, \hmcS_J\}$ for the first $K\log^2 K$ samples, then take the intersection of each consecutive set of $\ell$ subspaces. (Duplicates, those estimated dictionary elements which are close to one another based on absolute inner product, can be safely rejected.)

\begin{algorithm}[h]
\caption{\textbf{SSI}, Approximate Subspace $\ell$-fold Intersection} \label{subspace_int_approx}
\textbf{Input:} List of subspaces $\mcS_{1},\ldots, \mcS_{\ell}$, threshold $\uptau < 1$\\
\textbf{Output:} Estimated dictionary element $\hbfd$, or \textbf{False} if no element is found.\\
$\mcS \gets \mcS_1$\\
\For{$i \in \{2, \ldots, \ell\}$}
{
    $\mcA \gets \mcA_\uptau(\mcS, \hmcS_i)$\\
    \uIf{$\dim(\mcA)= 0$}
    {
        \Return \textbf{False}
    }
    \uElseIf{$\dim(\mcA) = 1$}
    {
        Set $\hbfd$ a basis of $\mcA$\\
        \Return $\hbfd$
    }
    \uElseIf{$\dim(\mcA) \geq 2$}
    {
        $\mcS = \mcA$
    }
\Return \textbf{False}
}
\end{algorithm}

\subsection{Oracle Refinement}
The subspace recovery algorithm gives a nontrivial column-wise estimate $\hbfD$ of the dictionary $\bfD$. On its own, however, this estimate is unsatisfactory for several reasons. First, for $s$ nearly linear in $M$, the resulting error bound decays logarithmically in $M$, barely better than a constant error. Moreover, a logarithmic column-wise estimate of does not provide a small error bound in terms of any matrix norm, meaning the behavior of the recovered dictionary could in fact be quite different from that of the true dictionary. Lastly, the resulting error bound contains a dimensional error that does not tend to zero with $N$, meaning no amount of samples will ever provide a perfect estimate of $\bfD$. 

The oracle refinement step takes the initial estimate from SSDL and boosts its accuracy enough to provide meaningful improvements in the first two cases; resolving the last requires the additional oracle averaging step. To this end, oracle refinement uses the recovered columns of $\hbfD$ along with the estimated spanning subspaces $\hmcS_i$ to deduce the support of each sample $\bfy_i$. For each $k\in\{1,\ldots,K\}$ and $i\in\{1,\ldots,N\}$, we consider the size of the projection of each estimated column $\hbfd_k$ onto each subspace $\hmcS_i$. We show that as long as the initial recovery error $\varepsilon$ is smaller than a constant threshold, this projection will be greater than $1/2$ if and only if $k \in \Omega_i$. In other words, a small enough constant column-wise error is sufficient to recover the entire support of $\bfX$.

We can then apply a similar spectral method, including the covariance projection step. Specifically, letting $\tA_k = \{i\in\{1,\ldots,N\}: \|\bfP_{\hmcS_i} \hbfd_k\|_2 > 1/2\}$ we compute the matrices
\[
\tbfV_k = \frac{1}{|\tA_k|}\sum_{i\in\tA_k}\bfy_i\bfy_i^T
\]
and their projected versions
\[
\pbfV_k = \tbfV_k - \proj{\HSig}{\tbfV_k}.
\]
We then create a new estimated dictionary, $\tbfD$, with columns $\tbfd_k$ given by the lead eigenvector of $\pbfV_k$. By the same intuitive reasoning that underpinned the use of correlation-weighted covariance, since every summand contains $\bfd_k$ in its support, the resulting covariance will be stretched in the $\bfd_k$ direction. The resulting error in column-wise terms reduces from logarithmic decay to decaying faster than $1/\sqrt{s}$. As we will see in the following section, this is a critical error cutoff for dictionary recovery, as below this threshold it is possible to recover not only the support of $\bfX$ but even the \textit{signs} of its nonzero entries.

\begin{algorithm}[h]
\caption{\textbf{Oracle Refinement}} \label{algo:oracle_refinement}
\textbf{Input:} Estimated dictionary $\hbfD \in \R^{M\times K}$, sample matrix $\bfY \in \R^{M\times N}$, estimated spanning subspaces $\{\hmcS_i\}_{i=1}^N$, estimated covariance $\HSig \in \R^{M\times M}$, threshold $\uptau \in (0,1)$\\
\textbf{Output:} Estimated support sets $\{\tO_i\}_{i=1}^N$, estimated dictionary $\tbfD$\\
$\tbfD = \bfzo^{M\times K}$\\
$\tO_i = \emptyset$ for all $i=1,\ldots, N$\\
\For{$k \in \{1, \ldots, K\}$}
{
    \For{$i \in \{1, \ldots, N\}$}
    {
        $\bfP_{\hmcS_i} \gets $ orthogonal projection matrix for $\hmcS_i$\\
        \uIf{$\big\|\bfP_{\hmcS_i}\hbfd_k\big\|_2 > \uptau$}
        {
            $\tO_i \gets \tO_i \cup \{k\}$
        }
    }
    $\tA_k \gets \{i = 1,\ldots,N: k\in\tO_i\}$\\
    $\tbfV_k \gets \frac{1}{|\tA_k|}\sum_{i\in \tA_k}\bfy_i\bfy_i^T$\\
    $\pbfV_k \gets \tbfV_k - \proj{\HSig}{\tbfV_k}$\\
    $\tbfd_k \gets$ lead normalized eigenvector of $\pbfV_k$
}
\Return $\{\tO_i\}_{i=1}^N$, $\tbfD$\\
\end{algorithm}

\subsection{Oracle Averaging}
We conclude with the simplest of our recovery algorithms, the \textit{oracle averaging} procedure. With the column-wise estimation error now decaying faster than $1/\sqrt{s}$, it becomes possible to recover the signs of nonzero entries of $\bfx_i$ by looking at the inner product between $\tbfd_k$ and $\bfy_i$ for each $i$ with $k\in\Omega_i$. 

Once the signs are known, simple averaging becomes possible by correcting the signs of each sample containing $k$ in its support:
\[
\bbfd_k = \frac{1}{|\tA_k|}\sum_{i \in \tA_k}\sign{\inner{\tbfd_k}{\bfy_i}}\bfy_i
\]
and error bounds can be derived by standard arguments.

The advantage of oracle averaging is that, where the SSDL and oracle refinement steps have a dimensional component to their error bounds that does not decay with number of samples, the error bounds of oracle averaging decays to zero with additional samples. Thus in data-rich applications, the guarantees for the oracle averaging procedure are substantially tighter than those for SSDL or oracle refinement on their own.

\begin{algorithm}[h]
\caption{\textbf{Oracle Averaging},} \label{algo:oracle_averaging}
\textbf{Input:} Estimated dictionary $\tbfD \in \R^{M\times K}$, sample matrix $\bfY \in \R^{M\times N}$, estimated supports $\{\tO_i\}_{i=1}^N$ \\
\textbf{Output:} Estimated dictionary $\bbfD$\\
$\bbfD = \bfzo^{M\times K}$\\
\For{$k \in \{1, \ldots, K\}$}
{
    $\tA_k \gets \{i = 1,\ldots,N: k \in \tO_i\}$\\
    $\bbfd = \frac{1}{|\tA_k|}\sum_{i \in \tA_k}\sign{\inner{\tbfd_k}{\bfy_i}}\bfy_i$\\
    $\bbfd_k \gets \bbfd$
}
\Return $\bbfD$
\end{algorithm}

\subsection{Time Complexity}

To compute the correlation-weighted covariance $\HSig_j$ for a single $j$, the single subspace recovery algorithm adds $N$ matrices of the form $\inner{\bfy_j}{\bfy_i}^2\bfy_i\bfy_i^T$, each of which can be computed in $\bigO{M^2}$ time, meaning $\HSig_j$ can be computed in time $\bigO{NM^2}$. Finding the top $s$ eigenvalues and eigenvectors then takes an additional $\bigO{sM^2}$ operations, meaning the entire subspace recovery step for a single sample $\bfy_i$ can be completed in $\bigO{NM^2}$ time. Recovering all $N$ subspaces means the entire process will take time $\bigO{N^2M^2}$ time.

The runtime of the each subspace intersection step is dominated by matrix multiplication and finding eigenvectors, which both have order $\bigO{M^3}$. Under the assumption that the support of each sample is uniformly distributed among $s$-element subsets of $\{1,\ldots,K\}$, we show in Theorem \ref{int_lemma} that with high probability, one only needs to check fewer than $\bigO{K\log^2 K}$ intersections in order to recover each dictionary element. Accordingly, with high probability the subspace intersection step to take $\bigO{\ell KM^3}$ time, so we conclude that the entire SSDL process takes $\bigO{N^2M^2 + KM^3}$ with high probability, up to log factors.

The oracle refinement step requires $NK$ projection thresholding steps to estimate the supports $\tO_i$, each of which will take $M^2s$ time to compute for a total of $\bigO{NKM^2s}$ time. The each of the $K$ subsequent spectral method will take time $N_kM^2 \approx NM^2s/K$ to compute the covariance matrices, and time $M^2$ to compute the top eigenvectors, totaling $\bigO{NM^2s}$ time. Lastly, the oracle averaging step will require $K$ averages of $N_k \approx Ns/K$ $M\times 1$ vectors for total time of $\bigO{NMs}$. Accordingly, the total runtime for \textbf{SPORADIC} is $\bigO{N^2M^2 + KM^3 + NM^2s}$. With a sample complextiy of $N \sim M^4$ in the linear sparsity regime, the dominant term in this runtime will be $\bigO{N^2M^2}$ from the SSDL step.

\section{Main Results}\label{results}

In this section, we state our main result along with several component results.

\begin{thm}\label{master_thm}
Let $s < M < K$ with $K/M^{3/2} \leq \sqrt{K}/\sqrt{M}$, and let $\bfY = \bfD \bfX$ with $\bfD \in \mcG$ and $\bfX \sim \mcX(W)$. For any $\alpha \geq 1$, assume that $M$ is sufficiently large and that
\[
\varepsilon_{\emph{\text{SSDL}}} = \frac{\sqrt{M}\log M}{\sqrt{K}} + \frac{\alpha 
 Ks\log^4 M}{M^2} + \frac{Ks^2\sqrt{\alpha\log^5 M}}{M^{3/2}\sqrt{N}} + \frac{s^5\sqrt{\alpha\log^3 M}}{M^3\sqrt{N}} \leq c
\]
and 
\[
\varepsilon_{\emph{\text{oracle}}} = \frac{K}{M^{3/2}} + \frac{\sqrt{K\alpha M \log M}}{\sqrt{Ns}} + \frac{\alpha M\log M}{K\sqrt{N}} \leq \frac{c}{\sqrt{s}}
\]
where $c$ is a small absolute constant. Then there exists an absolute constant $C > 0$ such that:
\begin{enumerate}
\item \textbf{\emph{SSDL}} with $\ell =\left\lceil\log(2K)/\log(K/s)\right\rceil$, $J \geq 4K(\alpha+1)\ell\log K$, and $\uptau = 1/2$ recovers a dictionary $\hbfD$ that is column-wise $(C\varepsilon_{\emph{\text{SSDL}}})$-close to $\bfD$ with probability at least
\[
1-2N^2\exp(-s^2/K) -K^{-\alpha} - 3NKM^{-\alpha}.
\]
\item \textbf{Oracle refinement} recovers a dictionary $\tbfD$ that is column-wise $(C\varepsilon_{\emph{\text{oracle}}})$-close to $\bfD$ with probability at least
\[
1-2N^2\exp(-s^2/K) - K\exp(-Ns/(10K))- K^{-\alpha} - 3NKM^{-\alpha}.
\]
\item \textbf{Oracle averaging} recovers a dictionary $\bbfD$ that is column-wise 
\[
\left(\frac{3}{2}+3\sqrt{\alpha\log M}\right)\frac{\sqrt{K}}{\sqrt{N}}\text{-close}
\]
to $\bfD$ with probability at least  
\[
1-2N^2\exp(-s^2/K) - K\exp(-Ns/(10K))- K^{-\alpha} - 3NKM^{-\alpha}.
\]
\end{enumerate}
\end{thm}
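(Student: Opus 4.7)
The plan is to prove the three conclusions in order, since each step consumes the output guarantees of the previous one. Part 1 (SSDL) is the substantive work and will be broken into uniform subspace recovery followed by a perturbation analysis of the intersection algorithm; Parts 2 and 3 are relatively standard given the inputs supplied by the previous step.

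\textbf{Part 1 (SSDL).} For each $j$, I would write
\[
\HSig_j^{\text{proj}} = \Sig_j^{\text{proj}} + \mcE_j,
\]
where $\Sig_j^{\text{proj}} = \Sig_j - \proj{\Sig}{\Sig_j}$ is the population object and $\mcE_j$ gathers the sampling errors from (i) $\HSig_j - \Sig_j$, (ii) $\HSig - \Sig$, and (iii) the deviation of the projection coefficient $\inner{\HSig_j}{\HSig}_F/\|\HSig\|_F^2$. Because the covariance projection uses the Frobenius inner product, concentration of $\HSig_j$ must be obtained in Frobenius norm (via matrix Bernstein or Hanson--Wright after conditioning on the supports), which is the origin of the $Ks^2\sqrt{\alpha\log^5 M}/(M^{3/2}\sqrt{N})$ term in $\varepsilon_{\text{SSDL}}$. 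The dictionary assumptions $\bfD\in\mcG$ imply that $\Sig_j^{\text{proj}}$ has a rank-$s$ ``spike'' on $\mcS_j$ of size $\Theta(s)$, with the rest of the spectrum controlled by the coherence bound \ref{G2:1_rip} and the RIP bound \ref{G2:s_rip}, leaving the leakage term $\sqrt{M}\log M/\sqrt{K}$. A Davis--Kahan sin-$\theta$ bound then converts the decomposition into $\mcD(\hmcS_j, \mcS_j) \lesssim \varepsilon_{\text{SSDL}}$, and a union bound over $j = 1,\dots,J$ with $J = \bigO{K\log K}$ costs only a $\log K$ factor absorbed into the $\alpha$-dependent terms.

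For the intersection stage, I would invoke Theorem~\ref{int_lemma} to argue that with $J \geq 4K(\alpha+1)\ell\log K$ and $\ell = \lceil\log(2K)/\log(K/s)\rceil$, every $k\in\{1,\ldots,K\}$ appears in some non-overlapping block of $\ell$ supports with intersection exactly $\{k\}$; restricted orthogonality (Lemma~\ref{lemma:rop}) then forces the corresponding subspace intersection to equal $\vspan\{\bfd_k\}$. A deterministic perturbation lemma for Algorithm~\ref{subspace_int_approx} shows that when each input subspace is $\bigO{\varepsilon_{\text{SSDL}}}$-close to its target, the returned unit vector $\hbfd_k$ is $\bigO{\varepsilon_{\text{SSDL}}}$-close to $\pm\bfd_k$; the $2N^2\exp(-s^2/K)$ factor in the failure probability is the ``pairwise support overlap $\leq s^2/K$'' event used both here and in controlling $\Sig_j^{\text{proj}}$.

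\textbf{Part 2 (Oracle refinement).} Given $\hbfD$ column-wise $C\varepsilon_{\text{SSDL}}$-close to $\bfD$, I would first show $\tO_i = \Omega_i$ for all $i$ on the good event. If $k \in \Omega_i$ then $\bfd_k\in\mcS_i$ gives $\|\bfP_{\mcS_i}\bfd_k\|_2 = 1$, so the triangle inequality together with $\|\hbfd_k-\bfd_k\|_2 \lesssim \varepsilon_{\text{SSDL}}$ and $\mcD(\hmcS_i,\mcS_i)\lesssim\varepsilon_{\text{SSDL}}$ yields $\|\bfP_{\hmcS_i}\hbfd_k\|_2 > 1/2$; if $k \notin \Omega_i$, the $2s$-RIP bound $\|\bfP_{\mcS_i}\bfd_k\|_2 \leq \delta_{2s} < 1/8$ combined with the same perturbation keeps the projection below $1/2$. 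A Chernoff bound produces $|\tA_k|\gtrsim Ns/K$ and the $K\exp(-Ns/(10K))$ failure term. Conditional on correct supports, $E[\bfy\bfy^T\mid k\in\Omega]$ equals $\bfd_k\bfd_k^T$ plus a term nearly proportional to $\Sig$, which the covariance projection eliminates up to concentration; a top-eigenvector sin-$\theta$ bound on $\pbfV_k$ then produces the three terms of $\varepsilon_{\text{oracle}}$, corresponding respectively to residual leakage, sample fluctuations within $\tA_k$, and error in $\HSig$.

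\textbf{Part 3 (Oracle averaging).} The hypothesis $\varepsilon_{\text{oracle}}\leq c/\sqrt{s}$ makes sign recovery elementary: for $i \in \tA_k$,
\[
\inner{\tbfd_k}{\bfy_i} = x_{ik} + \inner{\tbfd_k-\bfd_k}{\bfy_i} + \sum_{m\in\Omega_i\setminus\{k\}} x_{im}\inner{\bfd_k}{\bfd_m},
\]
and the bound $\|\bfy_i\|_2\leq \tfrac{3}{2}\sqrt{s}$ from \ref{C3:y_norm} together with restricted orthogonality keep the two correction terms strictly below $1/2$, so $\sign{\inner{\tbfd_k}{\bfy_i}} = x_{ik}$. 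Substituting gives
\[
\bbfd_k - \bfd_k = \frac{1}{|\tA_k|}\sum_{i\in\tA_k} x_{ik}\sum_{m\in\Omega_i\setminus\{k\}} x_{im}\bfd_m,
\]
which, conditional on the supports, is a sum of $|\tA_k|\approx Ns/K$ i.i.d.\ mean-zero bounded vectors (the symmetry of $W$ killing the expectation). A vector Hoeffding bound with summand operator norm at most $\sqrt{s}$ gives a deviation of order $\sqrt{s/|\tA_k|}\cdot\sqrt{\alpha\log M} = \bigO{\sqrt{K/N}\sqrt{\alpha\log M}}$, matching the stated bound after a union bound over $k$. The main obstacle throughout is Part 1: the covariance-projection step forces Frobenius-norm concentration of $\HSig_j$ rather than the cheaper operator-norm version, and the eigengap analysis of $\Sig_j^{\text{proj}}$ must be carried out uniformly in $j$ while carefully budgeting the failure probability against the union bound over $J = \bigO{K\log K}$ subspaces.
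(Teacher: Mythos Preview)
Your proposal is correct and follows essentially the same architecture as the paper: expectation computation plus covariance concentration plus Davis--Kahan for subspace recovery, the combinatorial Theorem~\ref{int_lemma} plus a deterministic perturbation analysis for intersection, projection thresholding plus a conditional spectral argument for oracle refinement, and sign recovery plus a vector Bernstein/Hoeffding bound for averaging.

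Two places where the paper's execution differs slightly from your sketch are worth flagging. First, the population object $\Sig_j^{\text{proj}}$ is not a clean rank-$s$ spike supported on $\mcS_j$: after projecting off $\bfD\bfD^T$ what remains is (up to scale) $\frac{s-1}{K-1}\bfv_0\bfv_0^T + \sum_{k\in\Omega_0}\bfd_k\bfd_k^T$ with $\bfv_0 = \bfD\bfD^T\bfy_0$, and $\bfv_0$ has a nontrivial component outside $\mcS_0$ of relative size $\sqrt{M/K}$ coming from \ref{G1:DDT_norm}, not from \ref{G2:1_rip} or \ref{G2:s_rip} as you suggest. Handling this rank-one ``leaky spike'' on top of an approximate projector is the content of the paper's Lemma~\ref{subspace_lemma} and the auxiliary Lemma~\ref{lemma:spikeLemma}, and is slightly more delicate than a direct Davis--Kahan application. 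Second, the union bound for subspace recovery must run over all $N$ samples, not just the $J = \bigO{K\log^2 K}$ used in the intersection step, because oracle refinement in Part~2 consumes $\hmcS_i$ for every $i\in\{1,\ldots,N\}$; this is what produces the $N$ factors in the stated failure probability.
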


\noindent The key takeaway from this result is that in high enough dimensions, dictionary recovery is possible for linear sparsity in $M$ up to log factors with arbitrarily small error given enough samples:

\begin{corollary}
In the setting of Theorem \ref{master_thm}, suppose that there exist constants $0 < \gamma < \eta$ such that
\begin{itemize}
    \item $s = M\log^{-(6  + 2\eta)} (M)$
    \item $K = M\log^{(2 + \gamma)} (M)$
    \item $N \geq \frac{3\alpha}{c}\max\left\{\frac{s^{10}\log^3 M}{M^6}, \frac{K^2s^4\log^5 M}{M^3}\right\}$.
\end{itemize}
Then for $M$ sufficiently large, bounds 1-3 in Theorem \ref{master_thm} hold with probability at least $1 - \bigO{NKM^{-\alpha}}$. 
\end{corollary}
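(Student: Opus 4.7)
The plan is a substitution-and-verification argument: show that the stated scalings of $s$, $K$, and $N$ satisfy the hypotheses of Theorem~\ref{master_thm} for $M$ large enough, and then simplify the three probability bounds from that theorem to the claimed $O(NKM^{-\alpha})$.

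To check $\varepsilon_{\text{SSDL}}\le c$, I would substitute the scalings into each of its four summands. The first becomes $\sqrt{M}\log M/\sqrt{K}=\log^{-\gamma/2}M$, which vanishes. The second is $\alpha Ks\log^4 M/M^2=\alpha\log^{\gamma-2\eta}M$, which vanishes by the key assumption $\gamma<\eta$; this is where the log exponent in $s=M\log^{-(6+2\eta)}M$ is calibrated tightly. The third and fourth terms, when squared, yield $\alpha K^2s^4\log^5 M/(M^3N)$ and $\alpha s^{10}\log^3 M/(M^6N)$ respectively, each of which is at most a small constant by the stipulated lower bound on $N$. Thus $\varepsilon_{\text{SSDL}}\le c$ for $M$ sufficiently large.

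For the condition $\varepsilon_{\text{oracle}}\le c/\sqrt{s}$, multiplying through by $\sqrt{s}=\sqrt{M}\log^{-(3+\eta)}M$ converts each summand into something I can verify directly. The first becomes $\log^{\gamma-1-\eta}M$, vanishing by $\gamma<\eta$; the second becomes $\sqrt{K\alpha M\log M/N}\to0$, since $N$ dominates $KM\log M=M^{2}\log^{3+\gamma}M$ by a large polynomial factor; and the third is even smaller, dominated by the second times a factor of $\sqrt{s}/K$. The side constraints $s<M<K$ and $K/M^{3/2}\le\sqrt{K}/\sqrt{M}$ (equivalently $K\le M^2$) are immediate from the stated scalings for large $M$, so all hypotheses of Theorem~\ref{master_thm} are met.

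Finally, to simplify the probability bound, I would observe that $s^2/K=M/\log^{14+4\eta+\gamma}M\to\infty$ and $Ns/(10K)\to\infty$ faster than any polynomial in $M$, so $2N^{2}\exp(-s^{2}/K)$ and $K\exp(-Ns/(10K))$ are both superpolynomially smaller than $NKM^{-\alpha}$. Likewise $K^{-\alpha}\le M^{-\alpha}\le NKM^{-\alpha}$, so every term other than $3NKM^{-\alpha}$ is absorbed, giving the $O(NKM^{-\alpha})$ conclusion uniformly for all three bounds. The main obstacle is purely the organizational bookkeeping of log exponents; the conceptual step is noticing that the scaling $s=M\log^{-(6+2\eta)}M$ together with $\gamma<\eta$ is precisely what forces the second $\varepsilon_{\text{SSDL}}$ term to decay polylogarithmically rather than merely staying bounded.
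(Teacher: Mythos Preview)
Your proposal is correct and follows the same substitution-and-verification approach that the paper itself takes; in fact the paper gives only a one-line justification (``This scaling guarantees that $\varepsilon$ as in Theorem~\ref{master_thm} will be small for large enough $M$ as well as ensuring $K/M^{3/2} \ll 1/\sqrt{s}$''), so your explicit bookkeeping of the log exponents and the probability terms is more thorough than what the paper provides. The key observation you identify---that $\gamma<\eta$ is exactly what forces the second $\varepsilon_{\text{SSDL}}$ summand $\alpha\log^{\gamma-2\eta}M$ to vanish---is the essential calibration, and your handling of the probability bound is correct.
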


This scaling guarantees that $\varepsilon$ as in Theorem \ref{master_thm} will be small for large enough $M$ as well as ensuring $K/M^{3/2} \ll 1/\sqrt{s}$. The $Ks\log^4 M/M^2$ term suggests a tradeoff between sparsity and overcompleteness: the more overcomplete the dictionary, the sparser the representation must be to enable recovery. This is not a huge limitation, as it stands to reason that a larger dictionary should allow for a sparser representation.

This corollary indicates that the sample complexity for recovery is 
\[
N \sim \max\left\{\frac{s^{10}\log^3 M}{M^6}, \frac{K^2s^4\log^5 M}{M^3}\right\},
\]
which is approximately $N \sim M^4$ in the most challenging linear-sparsity setting. This result also suggests that the sample complexity required by SSDL is lower in easier sparsity regimes. The high sample complexity does guarantee that in the regime for which oracle averaging works, the resulting error bound is tighter than that for either SSDL or oracle refinement, even without any additional samples.

\subsection{Guarantees for SSDL}

We now outline more detailed results for the individual components of the \textbf{SPORADIC} algorithm, which collectively constitute Theorem \ref{master_thm}. As previously noted, the majority of the analysis will be dedicated to proving the first of these items, which claims that SSDL gives a nontrivial estimate of $\bfD$. The following theorem states that under the scaling noted above, SSDL returns a nontrivial estimate of the dictionary, on which the subsequent recovery will build.

\begin{thm}[SSDL]\label{thm:ssdl}
In the setting of Theorem \ref{master_thm}, \textbf{\emph{SSDL}} with parameters $\ell =\left\lceil\log(2K)/\log(K/s)\right\rceil$, $J \geq 4K(\alpha+1)\ell\log K$, and $\uptau = 1/2$ recovers a dictionary $\hbfD$ that is column-wise $(C\varepsilon)$-close to $\bfD$ with probability at least
\[
1-2N^2\exp(-s^2/K) -K^{-\alpha} - (2K+4)NM^{-\alpha}.
\]
\end{thm}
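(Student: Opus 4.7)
My plan is to split the proof of Theorem~\ref{thm:ssdl} into two largely independent pieces corresponding to the two subroutines of SSDL: (A) a per-sample analysis showing that Algorithm~\ref{subspace_recovery_algo} outputs $\hmcS_j$ within subspace distance $\varepsilon_{\text{rec}}$ of the true spanning subspace $\mcS_j$ with high probability, and (B) a combinatorial / geometric argument that given $\varepsilon_{\text{rec}}$-accurate subspaces, running Algorithm~\ref{subspace_int_approx} on $J/\ell \gtrsim K\alpha\log K$ consecutive blocks of $\ell$ subspaces recovers every dictionary column. The final column-wise bound $C\varepsilon_{\text{SSDL}}$ comes from propagating $\varepsilon_{\text{rec}}$ through the $\ell$-fold approximate intersection.

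For part (A), I would first compute the population object $\Sig_j = E[\inner{\bfy_j}{\bfy}^2 \bfy\bfy^T]$ conditional on $\bfy_j$. Using the symmetry of $W=\pm 1$ together with $\bfX \sim \mcX(W)$, most cross terms vanish and $\Sig_j$ decomposes as a multiple of $E[\bfy\bfy^T] = (s/K)\bfD\bfD^T$ plus a rank-$\leq (s{+}1)$ ``signal'' term concentrated on $\mcS_j$ (spanned by $\{\bfd_k\}_{k\in\Omega_j}$ and $\bfy_j$ itself). The covariance projection step $\Sig_j - \proj{\HSig}{\Sig_j}$ is designed precisely to annihilate the leading $\bfD\bfD^T$ contribution, so the surviving object is essentially the signal spike of operator norm $\asymp s^2/K$ on $\mcS_j$ versus residual eigenvalues of order $\delta_{2s}^2 \cdot (s^2/K) \lesssim (s^3\log^2 M)/(KM)$ off $\mcS_j$; an application of Davis--Kahan (or Wedin's $\sin\Theta$ theorem) then converts the gap into a subspace distance bound. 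The hard part is showing the same behavior for the sample versions $\HSig_j$ and $\HSig$: this requires Frobenius-norm concentration of fourth-order sums like $N^{-1}\sum_i \inner{\bfy_j}{\bfy_i}^2 \bfy_i\bfy_i^T$. I would bound these by conditioning on $\bfy_j$, controlling sub-exponential tails of $\inner{\bfy_j}{\bfy_i}^2\|\bfy_i\|_2^2$ using property \ref{C3:y_norm} and the RIP bound $\delta_{2s}$, and applying matrix Bernstein. This concentration accounts for the $s^5\sqrt{\alpha\log^3 M}/(M^3\sqrt{N})$ and $Ks^2\sqrt{\alpha\log^5 M}/(M^{3/2}\sqrt{N})$ terms in $\varepsilon_{\text{SSDL}}$, while the residual $\bfD\bfD^T$ perturbation contributes $\alpha Ks\log^4 M/M^2$, and the coherence term $|\inner{\bfd_k}{\bfd_m}|$ from \ref{G2:1_rip} feeds the $\sqrt{M}\log M/\sqrt{K}$ term.

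For part (B), I treat the combinatorics and the numerical approximate-intersection separately. Combinatorially, the expected size of $\Omega_{i_1} \cap \cdots \cap \Omega_{i_\ell}$ is $s^{\ell+1}/K^\ell \leq 1/2$ under the stated choice of $\ell$, and a standard tail bound gives $|\Omega_i \cap \Omega_j| \lesssim s^2/K + \log M$ uniformly in $i,j$ (this is where the $2N^2\exp(-s^2/K)$ failure probability enters). Then a coupon-collector/union bound over $J/\ell \geq 4K(\alpha+1)\log K$ disjoint blocks shows each fixed $k$ appears as the unique index of at least one block with probability $\geq 1 - K^{-\alpha-1}$, so by union bound over $k$ we lose only $K^{-\alpha}$. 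Numerically, on such a block $\bigcap_{p=1}^{\ell}\mcS_{i_p} = \vspan\{\bfd_k\}$ and I must show that Algorithm~\ref{subspace_int_approx} with $\uptau=1/2$ identifies a one-dimensional approximate intersection close to $\bfd_k$: an inductive argument on the loop iterations, using $\mcD(\hmcS_{i_p}, \mcS_{i_p}) \leq \varepsilon_{\text{rec}}$ together with Lemma~\ref{lemma:rop} to lower-bound the remaining singular values, shows the smallest singular value of $\bfP$ grows by at most $O(\varepsilon_{\text{rec}})$ per step while all other singular values stay separated from zero by a constant (the ``gap'' coming from $\delta_{2s} < 1/8$). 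Choosing $\uptau = 1/2$ threads this gap, yielding a single $\hbfd$ within $O(\ell\varepsilon_{\text{rec}}) = O(\varepsilon_{\text{SSDL}})$ of $\pm \bfd_k$ up to sign.

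The main obstacle I anticipate is the Frobenius-norm concentration of $\HSig_j$ around $\Sig_j$: the summands have an inconvenient tail that depends jointly on $\bfy_j$ and $\bfy_i$, and a crude application of matrix Bernstein loses a factor of $M^{1/2}$ compared to the operator-norm version. This is acknowledged in the discussion following Theorem~\ref{master_thm}, and is what forces the $N \gtrsim M^4$ sample complexity in the linear-sparsity regime; the bulk of the technical work is tracking this concentration with the right polynomial dependence on $s$, $K$, and $M$ so that the final error $\varepsilon_{\text{SSDL}}$ remains below the absolute constant $c$ required for the subsequent intersection and oracle-refinement steps to succeed. Finally, a union bound over the three bad events---subspace recovery failure for any of the $J$ samples (contributing $(2K+4)NM^{-\alpha}$ after combining all the concentration events in part (A)), combinatorial overlap failure ($2N^2\exp(-s^2/K)$), and coupon-collector failure ($K^{-\alpha}$)---yields the claimed success probability.
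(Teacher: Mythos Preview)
Your proposal matches the paper's two-part architecture (subspace recovery via population analysis, concentration, and Davis--Kahan; then intersection via RIP geometry and a coupon-collector bound) and identifies the right ingredients, but several attributions are misplaced.

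First, the $2N^2\exp(-s^2/K)$ failure probability belongs to part (A), not (B): it arises from the truncation trick in the covariance concentration (the paper's Lemmas~\ref{lemma:corr_weight_norm} and~\ref{cov_lemma}), where a Chernoff bound on $|\Omega_0 \cap \Omega_i|$ is needed to control $\|\inner{\bfy_0}{\bfy_i}\bfy_i\|_2$ before one can apply the covariance-estimation theorem to the truncated vectors. The factor $N^2$ comes from union-bounding over both the reference index $j$ and the sample index $i$. The intersection combinatorics in part (B) contribute only the $K^{-\alpha}$ term.

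Second, the $\sqrt{M}\log M/\sqrt{K}$ contribution comes from \ref{G1:DDT_norm}, not from the coherence bound \ref{G2:1_rip}. The projected population object is, up to scale, $\frac{s-1}{K-1}\bfv_0\bfv_0^T + \sum_{k\in\Omega_0}\bfd_k\bfd_k^T$ with $\bfv_0 = \bfD\bfD^T\bfy_0$; the deviation of $\bfv_0$ from a scalar multiple of $\bfy_0 \in \mcS_0$ is controlled by $\|\bfD\bfD^T - (K/M)\bfI\|_2 \leq B\sqrt{K/M}$, and it is this off-subspace component of the rank-one spike that, after Davis--Kahan (Lemma~\ref{subspace_lemma}), produces the $\sqrt{M/K}$ term. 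Relatedly, your eigenvalue scaling is off: after factoring out $s/K$, the signal $\sum_{k\in\Omega_0}\bfd_k\bfd_k^T$ has nonzero eigenvalues in $[7/8,9/8]$ by \ref{G2:s_rip}, and the gap driving Davis--Kahan is between this $\Theta(1)$ and the $O(\sqrt{M/K})$ residual from $\bfv_0$, not between $s^2/K$ and $\delta_{2s}^2\cdot(s^2/K)$.

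Third, the paper concentrates $\HSig_j$ via Vershynin's general covariance-estimation tail bound (Theorem~\ref{cov_thm}) applied to the truncated vector $\bbi_\omega\inner{\bfy_0}{\bfy}\bfy$, rather than matrix Bernstein; your choice would also work but may shift log factors.

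None of these break your outline, but the Davis--Kahan gap structure (a $\Theta(1)$ signal plus a rank-one spike partially outside $\mcS_0$) is the delicate point and is what Lemma~\ref{subspace_lemma} handles; it is worth getting that piece right rather than treating the residual as a generic $\delta_{2s}^2$ correction.
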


\subsection{Subspace Recovery}
Theorem \ref{thm:ssdl} follows from two results, providing independent guarantees for the subspace recovery and subspace intersection steps. We begin with the following guarantee for the accuracy of the subspace recovery step:

\begin{thm}[Subspace Recovery]\label{thm:subspace_recovery}
In the setting of Theorem \ref{master_thm}, let $\mcS_j$ be the spanning subspace of sample $\bfy_j$ and $\hmcS_j$ be the subspace recovered by Algorithm \ref{subspace_recovery_algo}. Then there exists absolute constant $c'$ such that if
\[
\tilde{\varepsilon} := \frac{\varepsilon}{\log M} = \frac{\sqrt{M}}{\sqrt{K}} + \frac{\alpha Ks\log^3 M}{M^2} + \frac{Ks^2\sqrt{\alpha \log^3 M}}{M^{3/2}\sqrt{N}} + \frac{s^5\sqrt{\alpha\log M}}{M^3\sqrt{N}} \leq c',
\]
we have
\[
\mcD\left(\mcS_j, \hmcS_j \right) \leq  C\tilde{\varepsilon}.
\]
for all $j \in \{1,\ldots,N\}$ with probability at least $1-2N^2\exp(-s^2/K) -K^{-\alpha} - (2K+4)NM^{-\alpha}$.
\end{thm}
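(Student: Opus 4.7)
The plan is to compare the top-$s$ eigenspace of $\HSig_j^{\text{proj}}$ with that of the population analogue $\Sig_j^{\text{proj}} := \Sig_j - \proj{\Sig}{\Sig_j}$, where $\Sig_j = E[\inner{\bfy_j}{\bfy}^2 \bfy\bfy^T \mid \bfy_j]$ and $\Sig = E[\bfy\bfy^T] = (s/K)\bfD\bfD^T$, and then close with a Davis-Kahan $\sin\Theta$ bound. The argument breaks into three stages: a population identification of a rank-$s$ spike along $\mcS_j$, a concentration estimate for $\HSig_j^{\text{proj}} - \Sig_j^{\text{proj}}$, and the perturbation step.

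For the population analysis I condition on $\bfy_j$ and rewrite $\inner{\bfy_j}{\bfy}^2 = (\bfa^T\bfx)^2$ with $\bfa := \bfD^T\bfD\bfx_j$. Independence of supports from $\pm 1$ signs and $w_k^2\equiv 1$ kill all but a few index pairings in $E[(\bfa^T\bfx)^2\bfx\bfx^T]$, giving
\[
\Sig_j = \bfD\Big(2\lambda_2\bfa\bfa^T + (\mu - 2\lambda_2)\mathrm{diag}(a_k^2) + \lambda_2\|\bfa\|^2\bfI\Big)\bfD^T,
\]
with $\lambda_2 = s(s{-}1)/[K(K{-}1)]$ and $\mu = s(K{-}s)/[K(K{-}1)]$. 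Using $\bfD\in\mcG$ together with a sub-Gaussian argument on the $\pm 1$ signs (with row-norm $\leq \delta_{2s}\lesssim\sqrt{s}\log M/\sqrt{M}$ from restricted isometry on the submatrices $\bfD_{\Omega_j}$, $\bfD_{\Omega_j\cup\{k\}}$, cf.\ Lemma~\ref{lemma:rop}), I would show $a_k = x_{j,k} + O(\sqrt{s}\log^{3/2}M/\sqrt{M})$ on $\Omega_j$, $|a_k|\lesssim \sqrt{s}\log^{3/2}M/\sqrt{M}$ off-support, and $\|\bfa\|^2 \asymp Ks/M$. Substituting, $\Sig_j$ splits into: (a) an in-$\mcS_j$ spike $(\mu - 2\lambda_2)\sum_{k\in\Omega_j}\bfd_k\bfd_k^T$ with $s$-th eigenvalue of order $s/K$; (b) a rank-one spike along $\bfD\bfD^T\bfy_j \in \mcS_j$ from $\bfa\bfa^T$; (c) a baseline $\lambda_2\|\bfa\|^2\bfD\bfD^T$ collinear with $\Sig$ up to a residual controlled by $\mcG$.1; and (d) a residual $E_j$ whose norm, after bookkeeping the off-support mass of $\mathrm{diag}(a_k^2)$ and the on-support corrections, satisfies $\|E_j\|_2\lesssim (s/K)\cdot(\alpha Ks\log^3 M/M^2)$. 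Subtracting $\proj{\Sig}{\Sig_j}$ cleanly removes the $\Sig$-parallel part of (c), so $\Sig_j^{\text{proj}}$ has smallest eigenvalue on $\mcS_j$ of order $s/K$ and off-$\mcS_j$ operator norm bounded by $\|E_j\|_2$, producing a spectral gap of order $s/K$.

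For the concentration step I decompose $\HSig_j^{\text{proj}} - \Sig_j^{\text{proj}}$ into the matrix error $\HSig_j - \Sig_j$ and the scalar error in the projection coefficient $\inner{\HSig_j}{\HSig}_F/\|\HSig\|_F^2$. Restricted isometry on $\bfD_{\Omega_i}$ gives $\sum_{m\in\Omega_i}(\bfd_m^T\bfy_j)^2 \lesssim s$, and Hoeffding then yields $|\inner{\bfy_j}{\bfy_i}|^2 \lesssim s\log M$ with high probability; combined with $\|\bfy_i\bfy_i^T\|_2\lesssim s$ from $\mcC$.3, matrix Bernstein applied to the rank-one summands of $\HSig_j$ controls $\|\HSig_j - \Sig_j\|_2$ and, after dividing by the spike $s/K$, produces the $Ks^2\sqrt{\alpha\log^3 M}/(M^{3/2}\sqrt{N})$ contribution. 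The projection coefficient requires a Frobenius-norm estimate of $\HSig - \Sig$, which for $M\times M$ sample covariances concentrates a factor $\sqrt{M}$ more slowly than in operator norm and drives the $s^5\sqrt{\alpha\log M}/(M^3\sqrt{N})$ contribution. The two dimensionless terms $\sqrt{M/K}$ and $\alpha Ks\log^3 M/M^2$ in $\tilde\varepsilon$ are precisely the population errors from stage one normalized by the spike $s/K$. With the $\asymp s/K$ spectral gap, Davis-Kahan gives $\mcD(\mcS_j,\hmcS_j)\lesssim\tilde\varepsilon$, and a union bound over $j \in \{1,\ldots,N\}$ combined with the events $\bfD\in\mcG$ and the sample-covariance concentration yields the stated failure probability.

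The main obstacle is stage one: the target signal $\asymp s/K$ is only modestly larger than several naturally arising competitors (the out-of-$\mcS_j$ tail of the $\lambda_2\|\bfa\|^2\bfD\bfD^T$ baseline, the off-support diagonal mass of $\mathrm{diag}(a_k^2)$, and the rank-one $\bfa\bfa^T$ piece that aligns with $\bfD\bfD^T\bfy_j$ rather than with the subspace directly). Producing the sharp $Ks\log^3 M/M^2$ bound on $\|E_j\|_2$ requires careful accounting of the fourth-order RIP cross-terms via $\mcG$.2 and $\mcG$.3, and identifying exactly which parts of $\Sig_j$ survive the covariance projection is delicate because $\bfD\bfD^T$ only approximates $(K/M)\bfI$ to operator-norm accuracy $\sqrt{K/M}$. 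The Frobenius-norm concentration of $\HSig - \Sig$ for the projection coefficient is a close second, since it is the dominant contributor to $\tilde\varepsilon$ in the linear-sparsity regime.
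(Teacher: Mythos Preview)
Your three-stage plan (population identification, concentration, Davis--Kahan) matches the paper's architecture, and your population formula is essentially Lemma~\ref{lemma:decompLemma} rewritten through $\bfa=\bfD^T\bfy_j$. Using matrix Bernstein instead of the paper's general covariance-estimation theorem (Theorem~\ref{cov_thm}) with truncation is a harmless variation.

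The genuine gap is your treatment of the rank-one term $\frac{s-1}{K-1}\bfv_j\bfv_j^T$ with $\bfv_j=\bfD\bfa=\bfD\bfD^T\bfy_j$. In your population paragraph you assert $\bfD\bfD^T\bfy_j\in\mcS_j$, but this is false: $\bfv_j$ has a component $\bfu_j\perp\mcS_j$ of size $\asymp\sqrt{Ks/M}$ coming from \ref{G1:DDT_norm}, and this misalignment is precisely the source of the $\sqrt{M/K}$ term in $\tilde\varepsilon$. The difficulty is that after the $s/K$ normalization the spike has norm $\frac{s-1}{K-1}\|\bfv_j\|^2\asymp Ks^2/M^2$, which in the near-linear regime is of order $M$---vastly larger than the order-$1$ signal $\sum_{k\in\Omega_j}\bfd_k\bfd_k^T$. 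Consequently the off-$\mcS_j$ block of the spike has operator norm $\frac{s-1}{K-1}\|\bfz_j\|\|\bfu_j\|\asymp K^{1/2}s^2/M^{3/2}$, again of order $M$, so the ``error'' matrix you would feed into Davis--Kahan is not bounded by your $\|E_j\|$, and a direct application with gap $\asymp 1$ is vacuous. The paper handles this with a dedicated perturbation result (Lemma~\ref{lemma:spikeLemma}, applied in Lemma~\ref{subspace_lemma}): if a well-conditioned rank-$s$ matrix is perturbed by $Z(\bfv+\varepsilon\bfu)(\bfv+\varepsilon\bfu)^T$ with $\bfv$ in its range and $\bfu$ orthogonal to it, the top-$s$ eigenspace moves by $O(\varepsilon)$ \emph{regardless of how large $Z$ is}. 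You flag the misalignment as an obstacle in your final paragraph, but your outline supplies no mechanism for it, and without this $Z$-independent bound the Davis--Kahan step does not close.
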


Details of the proof follow in Section \ref{scn:proof}, but the key ingredients are these: by a union bound, it suffices to prove that the desired bound holds for a single $j$. Using concentration of measure results, we show that $\HSig_j$ converges to its expectation, which will be close to a rank-$s$ matrix with eigenvectors spanning $\mcS_j$. We then apply Weyl's Theorem and the Davis-Kahan Theorem on continuity of eigenvalues and invariant subspaces, respectively, of symmetric matrices under perturbation \citep{Weyl, DavisKahan}, allowing us to bound the distance between the recovered subspace $\hmcS_j$ and true subspace $\mcS_j$.

\subsection{Subspace Intersection}
We now present guarantees stating that that with high probability, the subspace intersection step rejects groups of subspaces which do not contain a unique dictionary element in their intersection. If they do intersect, then subspace intersection returns a vector close to the true vector. Specifically:
\begin{thm}[Subspace Intersection]\label{intersection_thm}
Let $\ell =\left\lceil\log{2K}/\log{K/s}\right\rceil$ and suppose that for all $i \in \{1,\ldots,N\}$, 
\[
\mcD(\mcS_i, \hmcS_i) \leq \varepsilon \leq 1/4.
\]
Then for all $\ell$-element subsets $\mcI$ of $\{1,\ldots.N\}$, if $\bigcap_{i \in \mcI} \mcS_i = \vspan(\bfd_k)$, then Algorithm \ref{subspace_int_approx} with $\uptau = 1/2$ will return a vector $\hbfd$ satisfying:
\[
\min_{t \in \{-1,1\}}\{\|\hat{\bfd} - t\bfd_k\|_2\} \leq C\ell\varepsilon \leq C\varepsilon\log M
\]
Moreover, if $\mathbf{dim}\left(\bigcap_{i \in \mcI} \mcS_i\right) \neq 1$, the algorithm returns \textbf{False}.
\end{thm}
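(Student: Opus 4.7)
The plan is to analyze Algorithm \ref{subspace_int_approx} by induction on the iteration count $j$, showing that the running subspace $\mcS^{(j)}$ produced by the algorithm tracks the true iterated intersection $\mcT_j := \bigcap_{i=1}^j \mcS_i$ (after re-indexing so $\mcI = \{1,\ldots,\ell\}$) in subspace distance, and that the dimension branch taken at step $j$ always matches $\dim(\mcT_j)$. The first step is to identify the structure of $\mcT_j$: since each $\mcS_i = \vspan\{\bfd_k : k \in \Omega_i\}$ and $\bfD$ satisfies $\delta_{2s} < 1/8$, unique sparse representation (any equality $\bfD\bfa = \bfD\bfb$ with $|\supp{\bfa} \cup \supp{\bfb}| \leq 2s$ forces $\bfa = \bfb$) gives $\mcS_i \cap \mcS_{i'} = \vspan\{\bfd_k : k \in \Omega_i \cap \Omega_{i'}\}$, and an easy induction extends this to $\mcT_j = \vspan\{\bfd_k : k \in \bigcap_{i \leq j}\Omega_i\}$. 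The inductive invariant I would maintain is $\mcD(\mcS^{(j)}, \mcT_j) \leq C_1 j \varepsilon$ together with correct dimension detection at step $j$.

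The heart of the argument is a single-step lemma: if $\mcD(\mcS, \mcT) \leq \delta$ and $\mcD(\hmcS', \mcS') \leq \varepsilon$ with $\delta + \varepsilon$ below a small absolute constant, then $\mcA_\uptau(\mcS, \hmcS')$ has dimension $\dim(\mcT \cap \mcS')$ and lies within $O(\delta + \varepsilon)$ of $\mcT \cap \mcS'$ in subspace distance. To prove this I would compare the projection matrix $\bfP = (\bfI - \bfF_{\hmcS'}\bfF_{\hmcS'}^T)\bfE_{\mcS}$ (with $\bfE_\bullet, \bfF_\bullet$ orthonormal basis matrices) to its noise-free analogue $\bfP_\star = (\bfI - \bfF_{\mcS'}\bfF_{\mcS'}^T)\bfE_{\mcT}$. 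The right singular vectors of $\bfP_\star$ split cleanly: those corresponding to $\mcT \cap \mcS'$ yield singular value $0$, while for any unit $\bfu \in \mcT$ with a nontrivial component $\bfD\bfa_{\text{rest}}$ supported in $(\bigcap_{i \leq j}\Omega_i) \setminus \Omega'$, the restricted-orthogonality bound $\|\bfD_{\Omega'}^T\bfd_k\|_2 \leq \delta_{2s}$ for $k \notin \Omega'$ (a consequence of Lemma \ref{lemma:rop}) combined with $\|(\bfD_{\Omega'}^T\bfD_{\Omega'})^{-1}\|_2 \leq (1-\delta_{2s})^{-1}$ yields $\|\proj{\mcS'}{\bfD\bfa_{\text{rest}}}\|_2 \leq \frac{\delta_{2s}}{1-\delta_{2s}}\|\bfa_{\text{rest}}\|_2 \leq \frac{1}{7}\|\bfa_{\text{rest}}\|_2$, forcing the corresponding singular value above $\sqrt{1-1/49} > 0.98$. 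The perturbation $\|\bfP - \bfP_\star\|_2 = O(\delta + \varepsilon)$ (using $\|\bfF_{\hmcS'}\bfF_{\hmcS'}^T - \bfF_{\mcS'}\bfF_{\mcS'}^T\|_2 = O(\varepsilon)$ and the analogous bound for $\mcS$ versus $\mcT$) then lets Weyl's inequality preserve the spectral gap straddling $\uptau = 1/2$, while Davis-Kahan applied to $\bfP^T\bfP$ bounds the motion of the small-singular-value subspace by $O((\delta + \varepsilon)/\text{gap}) = O(\delta + \varepsilon)$; composing with the $\delta$ shift from $\mcS$ to $\mcT$ gives the claimed subspace-distance bound.

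Iterating this single-step bound keeps $\mcD(\mcS^{(j)}, \mcT_j) \leq C_1 j \varepsilon$ throughout the run, with every dimension branch correctly taken as long as $C_1 \ell \varepsilon$ stays below the constant gap inherited from $\delta_{2s} \leq 1/8$. For Claim 1, when $\bigcap_{i \in \mcI} \mcS_i = \vspan(\bfd_k)$, let $j^\star \leq \ell$ be the first step at which $\mcT_{j^\star} = \vspan(\bfd_k)$; the algorithm then reaches the $\dim(\mcA) = 1$ branch and returns a unit vector $\hbfd$ at subspace distance $C\ell\varepsilon$ from the one-dimensional $\mcT_{j^\star}$, so $\min_{t \in \{-1,1\}}\|\hbfd - t\bfd_k\|_2 \leq C\ell\varepsilon \leq C\varepsilon\log M$ for the stated choice of $\ell$. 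For Claim 2: if $\dim(\mcT_\ell) = 0$ the single-step lemma forces $\dim(\mcA) = 0$ at some step $\leq \ell$, triggering the \textbf{False} branch; if $\dim(\mcT_\ell) \geq 2$ the running dimension never collapses to $1$, so the loop terminates without returning and the algorithm outputs \textbf{False}. The main obstacle is the spectral-gap argument inside the single-step lemma: ensuring that the accumulated perturbation over all $\ell$ iterations is kept well below the gap between the true "small" singular values (zero) and "large" ones (at least $\sqrt{1-1/49}$) is what pins down the admissible $\varepsilon$ and drives the whole induction.
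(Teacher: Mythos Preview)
Your proposal is correct and follows essentially the same approach as the paper: both rely on the RIP-based separation of disjoint-support subspaces (the paper packages this as Lemma~\ref{separationLemma}, while you derive the equivalent singular-value gap directly from restricted orthogonality), then accumulate error over at most $\ell$ intersections to get the $C\ell\varepsilon$ bound. Your version is considerably more detailed than the paper's two-sentence sketch, in particular making explicit the Weyl/Davis--Kahan perturbation argument and the inductive invariant on $\mcD(\mcS^{(j)},\mcT_j)$ that the paper leaves to the reader under the phrase ``triangle inequality.''
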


\noindent This result follows from Theorem \ref{thm:subspace_recovery} and $2s$-RIP of $\bfD$ (\ref{G2:s_rip}).
To complete the accuracy guarantees of Theorem \ref{thm:ssdl}, we show that it is possible to isolate each dictionary element by looking at only a polynomial number of $\ell$-element intersections. This ensures we can recover every column in the dictionary in polynomial time with high probability. Specifically, for $\ell = \left\lceil\frac{\log{2K}}{\log{K/s}}\right\rceil$ and $J \geq  CK\ell\log K$, we claim that with high probability, choosing $\mcJ$ in Theorem \ref{intersection_thm} to be the first $J/\ell$ disjoint $\ell$-element subsets is sufficient to recover every dictionary element at least once.

To complete the accuracy guarantees of Theorem \ref{thm:ssdl}, we conclude by showing that it is possible to isolate each dictionary element by looking at only a polynomial number of $\ell$-element intersections. This ensures we can recover every column in the dictionary in polynomial time with high probability. Specifically, for $\ell = \left\lceil\frac{\log{2K}}{\log{K/s}}\right\rceil$ and $J \geq  CK\ell\log K$, we claim that with high probability, choosing $\mcJ$ in Theorem \ref{intersection_thm} to be the first $J/\ell$ disjoint $\ell$-element subsets is sufficient to recover every dictionary element at least once.

\begin{restatable}[Polynomially-Many Intersections Suffice]{thm}{intLemma}\label{int_lemma}
In the setting of Theorem \ref{master_thm}, let $\ell=\left\lceil\log{2K}/\log{K/s}\right\rceil$, that is, the smallest integer such that $(s/K)^\ell < 1/2K$. For positive integer $J$ and $j\in \{1,\ldots, J/\ell\}$, define the non-intersecting $\ell$-fold intersections $\Omega^{\ell}_j$ as:
\[
\Omega^{\ell}_j = \bigcap_{p=1}^\ell \Omega_{(j-1)\ell + p}
\]
Then for any constant $\alpha$, as long as $J \geq 4(\alpha+1) K\ell \log K \approx \alpha K\log M\log K$,  for every $k \in \{1,\ldots,K\}$ there exists a $j \in \{1, \ldots, J/\ell\}$ such that $\Omega^{\ell}_j = \{k\}$ with probability at least $1-K^{-\alpha}$.
\end{restatable}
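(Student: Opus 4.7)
The plan is to fix a dictionary index $k \in \{1,\ldots,K\}$, compute the probability that $\Omega^\ell_j = \{k\}$ fails for every $j \in \{1,\ldots,J/\ell\}$, and then take a union bound over $k$. The structural feature that makes this tractable is that the $\ell$-fold intersections $\Omega^\ell_j$ are built from disjoint batches of the original supports $\Omega_1,\ldots,\Omega_J$, so the events $\{\Omega^\ell_j = \{k\}\}_{j=1}^{J/\ell}$ are mutually independent for any fixed $k$, and the problem reduces to a good lower bound on the single-trial success probability $p := \BBP{\Omega^\ell_1 = \{k\}}$.

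To bound $p$ from below, I would first note that since each $\Omega_i$ is a uniform $s$-subset of $\{1,\ldots,K\}$, $\BBP{k \in \Omega_i} = s/K$, and by independence of the $\Omega_i$'s, $q := \BBP{k \in \Omega^\ell_1} = (s/K)^\ell$. Conditional on the event $\{k \in \Omega^\ell_1\}$, each residual $\Omega_i \setminus \{k\}$ is a uniform $(s-1)$-subset of $\{1,\ldots,K\} \setminus \{k\}$, independently across $i$; hence for any fixed $k' \neq k$,
\[
\BBP{k' \in \Omega^\ell_1 \mid k \in \Omega^\ell_1} = \left(\tfrac{s-1}{K-1}\right)^\ell \leq q.
\]
A union bound over the $K-1$ choices of $k'$ then gives $\BBP{\Omega^\ell_1 \neq \{k\} \mid k \in \Omega^\ell_1} \leq (K-1)q < K \cdot \tfrac{1}{2K} = \tfrac{1}{2}$, where I have used the defining property $q = (s/K)^\ell < 1/(2K)$ of $\ell$. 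Therefore $p \geq q/2$.

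Putting the pieces together, independence across batches yields $\BBP{\Omega^\ell_j \neq \{k\} \text{ for all } j} \leq (1-p)^{J/\ell} \leq \exp(-Jq/(2\ell))$, and a union bound over the $K$ choices of $k$ shows that some dictionary element fails to be isolated with probability at most $K\exp(-Jq/(2\ell))$. For this to be at most $K^{-\alpha}$ I need $Jq/(2\ell) \geq (\alpha+1)\log K$. The minimality of $\ell$ forces $(s/K)^{\ell-1} \geq 1/(2K)$, and hence $q \geq s/(2K^2)$; in the near-linear regime of the main corollary, $K/s$ is only polylogarithmic in $M$, so $q$ is of order $1/K$ up to polylog factors, and the hypothesis $J \geq 4(\alpha+1)K\ell\log K$ then closes out the estimate.

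The only delicate point I anticipate is the tightness of the lower bound on $q$: in the worst case the ceiling in the definition of $\ell$ loses a factor of $K/s$ compared with the idealized value $q = 1/(2K)$. In the paper's sparsity regime this factor is polylogarithmic and harmlessly absorbed into the log factors of the required sample size, but a regime with $s \ll K$ would inflate the required $J$ accordingly. Beyond this subtlety the argument is a standard coupon-collector-style union bound leveraging independence at both the batch level (across $j$) and the index level (across $k$).
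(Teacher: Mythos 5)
Your argument matches the paper's proof step for step: lower bound the per-batch success probability by roughly $\tfrac{1}{2}(s/K)^\ell$, exploit disjointness of the batches across $j$ for independence, and close with a union bound over $k$. Your caveat about the ceiling is correct and in fact exposes a gap in the paper's own write-up: the paper's chain of inequalities uses $\bigl(1-\tfrac{1}{2}(s/K)^\ell\bigr)^{J/\ell} \leq \bigl(1-\tfrac{1}{4K}\bigr)^{J/\ell}$ ``since $(s/K)^\ell < 1/(2K)$,'' but that comparison actually requires the reverse bound $(s/K)^\ell \geq 1/(2K)$, which minimality of $\ell$ supplies only for $(s/K)^{\ell-1}$. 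The honest worst-case lower bound is your $(s/K)^\ell \geq s/(2K^2)$, which inflates the required $J$ by a factor of $K/s$; this factor is polylogarithmic under the scalings of Theorem~\ref{master_thm}, so the conclusion survives, but the stated constant $4(\alpha+1)K\ell\log K$ is too optimistic for general $s,K$.
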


Theorem \ref{int_lemma} can be proven by fixing $k$ and noting that since the sets $\Omega^\ell_j$ do not overlap, they will be independent, then calculating the probability that none of them contain $k$ as a unique element of intersection; a union bound completes the proof. Details are in Appendix \ref{appdx:int_lemma}.

Theorem \ref{int_lemma} guarantees that subspace intersection will recover every dictionary element at least once while requiring only order $K\log^2 K$ intersections. Theorem \ref{intersection_thm} ensures subspace intersection (Algorithm \ref{subspace_int_approx}) will correctly detect overlapping support with high probability, and will recover the associated dictionary element up to error $\epsi$ by Theorem \ref{thm:subspace_recovery}. Taken together, these results complete the proof of Theorem \ref{thm:ssdl}.

\subsection{Oracle Refinement and Oracle Averaging}

Before stating the main theorems in this section, we note the following fact that the number of samples with $k$ in their support will reliably be of order $Ns/K$ up to absolute constants. Namely,
we denote $N_k = \{i \in\{1,\ldots,N\} : k \in \Omega_i\}$. It is easy to see that $EN_k = Ns/K$, and thus is straightforward to prove the following lemma using a Chernoff bound:
\begin{lma}\label{lemma:nk}
For all $k$, $N_k \geq \frac{Ns}{2K}$ with probability at least $1-K\exp(-Ns/(10K))$.
\end{lma}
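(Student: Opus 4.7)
The plan is to recognize $N_k$ as a sum of $N$ i.i.d.\ Bernoulli random variables and then apply a standard multiplicative Chernoff bound, followed by a union bound over $k \in \{1,\ldots,K\}$.

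First I would fix $k \in \{1,\ldots,K\}$ and decompose $N_k = \sum_{i=1}^N \bbi_{k \in \Omega_i}$. By the definition of the $\mcX(W)$ distribution, the supports $\Omega_i$ are independent and each is a uniformly random $s$-element subset of $\{1,\ldots,K\}$, so each indicator $\bbi_{k \in \Omega_i}$ is Bernoulli with parameter $s/K$, and the $N$ indicators are independent. Hence $N_k$ is a sum of i.i.d.\ Bernoulli$(s/K)$ variables with mean $\mu := E N_k = Ns/K$.

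Next I would invoke the standard lower-tail multiplicative Chernoff inequality: for any $\delta \in (0,1)$,
\[
\BBP{N_k \leq (1-\delta)\mu} \leq \exp\!\left(-\tfrac{\delta^2\mu}{2}\right).
\]
Applying this with $\delta = 1/2$ yields
\[
\BBP{N_k \leq \tfrac{Ns}{2K}} \leq \exp\!\left(-\tfrac{Ns}{8K}\right) \leq \exp\!\left(-\tfrac{Ns}{10K}\right),
\]
where the last (loose) inequality absorbs constants to match the lemma statement.

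Finally I would take a union bound over the $K$ possible values of $k$ to conclude that the event $\{N_k \geq Ns/(2K) \text{ for all } k\}$ holds with probability at least $1 - K\exp(-Ns/(10K))$. No step presents a real obstacle: the only slight care needed is to confirm that $\bbi_{k \in \Omega_i}$ genuinely has marginal probability $s/K$ (which is immediate because $\Omega_i$ is uniform over $s$-subsets of $[K]$) and that Chernoff applies to sums of independent Bernoullis (which is textbook). The proof is essentially a one-line concentration inequality plus union bound.
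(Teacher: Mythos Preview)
Your proposal is correct and matches the paper's approach exactly: the paper states that the proof is a trivial application of a Chernoff bound and omits the details. Your decomposition of $N_k$ as a sum of i.i.d.\ Bernoulli$(s/K)$ indicators followed by the lower-tail Chernoff inequality with $\delta = 1/2$ and a union bound over $k$ is precisely what is intended.
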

The proof is a trivial application of a Chernoff bound and is omitted.

We now state the main theorem for oracle refinement. This theorem states that when the initial estimate from SSDL is close enough, the oracle refinement step drastically improves column-wise error bounds from $Ks\log^4 M/M^2$---nearly constant---to $K/M^{3/2}$, less than $1/\sqrt{s}$. In these results, we assume that the initial estimate gives nontrivial estimation of $+\bfd_k$; in practice it is possible that we have an estimate of $-\bfd_k$ instead, but this has no impact on results.

\begin{restatable}{thm}{oracleRef}\label{thm:oracle_ref}
Let $\bfD\in\mcG$ and let $\varepsilon < 1/8$ be such that $\mcD(\mcS_i, \hmcS_i) \leq \varepsilon$ for all $i$ and that 
$\|\hbfd_k - \bfd_k\|_2 \leq \varepsilon$ for all $k$. Then for all $i$, 
\[
\tO_i = \{i \in \{1,\ldots,N\}: \|\bfP_{\hmcS_i}\hbfd_k\|_2 > 1/2\} = \Omega_i
\]
and for all $\alpha > 0$,  $\tbfV_k = \frac{1}{|\tO_i|}\sum_{k\in\tO_i} \bfy_i\bfy_i^T$ satisfies
\[
\|\tbfV_k - \proj{\HSig}{\tbfV_k} - (1-s/K)\bfd_k\bfd_k^T\|_2 \leq \frac{CK}{M^{3/2}} + \frac{C\sqrt{\alpha KM \log M}}{\sqrt{Ns}} + \frac{C\alpha M\log M}{K\sqrt{N}}
\]
for all $k$ with probability at least
\[
1-K\exp(-Ns/(10K))-2KM^{-\alpha}.
\]
Accordingly, with the same probability, the columns of the dictionary $\tbfD$ recovered by Algorithm \ref{algo:oracle_refinement} satisfy
\[
\min_{t \in \{-1,1\}}\{\|\tbfd_k - t\bfd_k\|_2\} \leq \frac{CK}{M^{3/2}} + \frac{C\sqrt{\alpha KM \log M}}{\sqrt{Ns}} + \frac{C\alpha M\log M}{K\sqrt{N}}
\]
for all $k$.
\end{restatable}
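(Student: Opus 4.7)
The plan is to split the proof into three layers matching the claims: (a) deterministic exact support recovery $\tO_i = \Omega_i$; (b) an operator-norm approximation $\|\pbfV_k - (1-s/K)\bfd_k\bfd_k^T\|_2 \le \text{error}$; and (c) conversion of this operator bound into a column-wise bound on $\tbfd_k$ via a Davis--Kahan argument.

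For (a), I argue uniformly over $(i,k)$. If $k \in \Omega_i$, then $\bfd_k \in \mcS_i$, so the subspace-distance hypothesis gives $\|\bfP_{\hmcS_i}\bfd_k\|_2 \ge \sqrt{1-\varepsilon^2}$, and a triangle inequality with $\|\hbfd_k - \bfd_k\|_2 \le \varepsilon$ drives the projection above $1/2$ whenever $\varepsilon < 1/8$. If $k \notin \Omega_i$, I write $\bfP_{\mcS_i}\bfd_k = \bfD_{\Omega_i}(\bfD_{\Omega_i}^T\bfD_{\Omega_i})^{-1}\bfD_{\Omega_i}^T\bfd_k$, bound $\|\bfD_{\Omega_i}^T\bfd_k\|_2 \le \delta_{2s}$ by restricted orthogonality (Lemma~\ref{lemma:rop}) and the Gram inverse by $1/(1-\delta_s)$ via the $s$-RIP of assumption~\ref{G2:s_rip}, then transport to $\hmcS_i$ and $\hbfd_k$ by two further uses of $\mcD(\mcS_i,\hmcS_i) \le \varepsilon$ and $\|\hbfd_k-\bfd_k\|_2 \le \varepsilon$. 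Since $\delta_{2s} \ll 1$, the projection stays comfortably below $1/2$, and thresholding at $1/2$ recovers $\Omega_i$ exactly with no probabilistic input beyond the hypotheses.

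For (b), the starting point is the expectation. Conditioning on $k \in \Omega_i$, the remaining $s-1$ support indices form a uniform subset of $\{1,\ldots,K\}\setminus\{k\}$ and the signs are independent Rademacher, yielding
\[
E\bigl[\bfy_i\bfy_i^T \mid k \in \Omega_i\bigr] \;=\; (1-\rho)\,\bfd_k\bfd_k^T + \rho\, \bfD\bfD^T, \qquad \rho = \tfrac{s-1}{K-1}.
\]
Since $E[\HSig] = (s/K)\bfD\bfD^T$, multiplying $E[\HSig]$ by $\rho K/s$ would cancel the $\rho\bfD\bfD^T$ piece exactly. The algorithm instead uses the Frobenius coefficient $\inner{\tbfV_k}{\HSig}_F/\|\HSig\|_F^2$, which in expectation equals $\rho K/s$ up to an additive correction of size $\bfd_k^T\bfD\bfD^T\bfd_k/\|\bfD\bfD^T\|_F^2$; assumptions~\ref{G1:DDT_norm} and~\ref{C2:D_norm_F} bound this correction, and after multiplying by $\|\bfD\bfD^T\|_2 = O(K/M)$ it yields the $K/M^{3/2}$ term. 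The two stochastic terms come from matrix-Bernstein concentration applied (i) to $\tbfV_k$ around its expectation on the $|A_k|\approx Ns/K$ i.i.d.\ summands, using condition~\ref{C3:y_norm} for the per-summand operator-norm bound $\|\bfy_i\bfy_i^T\|_2 \lesssim s$ and the expectation structure above for the matrix variance, and (ii) to $\HSig$ around $E[\HSig]$, propagating through the projection by expanding the Frobenius coefficient via Cauchy--Schwarz and controlling $\|\HSig\|_F$ from below by $\|E\HSig\|_F$.

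For (c), the stated operator error is, by the hypothesis $\varepsilon_{\text{oracle}} \le c/\sqrt{s}$, smaller than a constant fraction of the spectral gap of the rank-$1$ matrix $(1-s/K)\bfd_k\bfd_k^T$, which equals $1-s/K \gtrsim 1/2$ under the corollary scaling. Davis--Kahan therefore gives $\min_{t\in\{\pm 1\}}\|\tbfd_k - t\bfd_k\|_2 \le C\cdot\text{error}$, and a union bound over the $K$ values of $k$, combined with Lemma~\ref{lemma:nk} for the event $|A_k| \ge Ns/(2K)$, produces the stated failure probability. I expect the main obstacle to be step (b): $\tbfV_k$ and $\HSig$ share samples, so their fluctuations must be coupled rather than treated as independent, and the $\|\HSig\|_F^2$ denominator in the projection forces an asymmetric analysis in which some quantities must be controlled in Frobenius norm while the final residual is needed in operator norm.
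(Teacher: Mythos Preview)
Your proposal is correct and matches the paper's argument in all three layers: Lemma~\ref{lemma:or_support} for exact support recovery, Lemma~\ref{lemma:oracle_cov_exp} for the conditional expectation together with the $CK/M^{3/2}$ projection-bias lemma, Lemma~\ref{lemma:oracle_emp} (an application of Theorem~\ref{cov_thm}, which is morally the matrix-Bernstein bound you invoke) for the concentration of $\tbfV_k$, and Davis--Kahan for the eigenvector bound. The one place you over-anticipate difficulty is the coupling of $\tbfV_k$ and $\HSig$: the paper sidesteps any joint analysis via the deterministic triangle split $\|\proj{\bfD\bfD^T}{E\tbfV_k-\tbfV_k}\|_2 + \|\proj{\bfD\bfD^T}{\tbfV_k}-\proj{\HSig}{\tbfV_k}\|_2$, bounding the first term by Lemma~\ref{lemma:fro_l2_lemma} and the second by reusing the estimate on $\bigl\|\bfD\bfD^T-(\|\bfD\bfD^T\|_F^2/\|\HSig\|_F^2)\HSig\bigr\|_2$ already obtained in the proof of Lemma~\ref{lemma:empCnvgc}, so no coupled-fluctuation argument is needed.
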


The resulting $\bigO{1/\sqrt{s}}$ cutoff is critical, as it allows the signs of nonzero entries of $\bfX$ to be recovered in addition to its support. Moreover, by application of the triangle inequality, for any submatrix $\bfD_s$ consisting of $s$ columns of $\bfD$ we have $\|\tbfD_s - \bfD_s\|_2 \leq \|\tbfD_s - \bfD_s\|_F \leq s \max_{k}\|\tbfd_k - \bfd_k\| = \bigO{1}$. Thus if the implied constant is small enough, oracle refinement upgrades an estimator with roughly constant column-wise error to one with a nontrivial estimate on the action of $\bfD$ on all sparse vectors $\bfx$. This allows us to prove the following Theorem on oracle averaging, which states that as long as the estimate $\tbfD_k$ is sufficiently close, oracle averaging achieves ``oracle'' performance in the sense of doing as well as averaging with knowing the support and signs of $\bfX $ exactly. This is the content of the following theorem:

\begin{thm}[Oracle Averaging]\label{thm:oracle_avg}
Let $\bfD \in \R^{M\times K}$ be a dictionary satisfying $\mcG$, and suppose that $\|\tbfd_k - \bfd_k\|_2 \leq 3/(8\sqrt{s})$. Then for any $i$ such that $k \in \Omega_i$, $\sign{\inner{\tbfd_k}{\bfy_i}} = \sign{x_{ik}}$. Moreover,  letting
\[
\bbfd_k = \frac{1}{N_k}\sum_{k\in\Omega_i}x_{ik}\bfy_i,
\]
we have for any constant $\alpha$,
\[
\min_{t \in \{-1,1\}}\{\|\bbfd_k - t\bfd_k\|_2\} \leq \frac{C\sqrt{\alpha K \log M}}{\sqrt{N}}
\]
for all $k$ with probability at least $1-KM^{-\alpha}$.
\end{thm}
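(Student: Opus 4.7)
The proof decomposes into a sign-recovery claim and a high-probability bound on the norm of $\bbfd_k - \bfd_k$. For the sign claim, I would write
\[
\inner{\tbfd_k}{\bfy_i} = x_{ik}\inner{\tbfd_k}{\bfd_k} + \inner{\tbfd_k}{\bfD_{\Omega_i\setminus\{k\}}\bfx_{i,\Omega_i\setminus\{k\}}}
\]
and show the first ``signal'' term dominates the second ``noise'' term. Since both $\tbfd_k$ and $\bfd_k$ are unit vectors, $\inner{\tbfd_k}{\bfd_k} = 1 - \tfrac{1}{2}\|\tbfd_k-\bfd_k\|_2^2 \geq 1 - 9/(128s)$, so the signal magnitude is essentially $1$. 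For the noise I split $\tbfd_k = \bfd_k + \bfe_k$ with $\|\bfe_k\|_2 \leq 3/(8\sqrt{s})$. The perturbation piece $|\inner{\bfe_k}{\bfD_{\Omega_i\setminus\{k\}}\bfx_{i,\Omega_i\setminus\{k\}}}|$ is bounded deterministically by Cauchy--Schwarz and $2s$-RIP as $\|\bfe_k\|_2(1+\delta_{2s})\sqrt{s-1} \leq (3/(8\sqrt{s}))(9/8)\sqrt{s} = 27/64$. The main piece is the Rademacher sum $\sum_{j \in \Omega_i\setminus\{k\}} x_{ij}\inner{\bfd_k}{\bfd_j}$, whose variance $\sum_j \inner{\bfd_k}{\bfd_j}^2 = \|\bfD_{\Omega_i\setminus\{k\}}^T\bfd_k\|_2^2$ is at most $\delta_{2s}^2$ by restricted orthogonality (Lemma~\ref{lemma:rop}). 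Hoeffding's inequality then makes it $o(1)$ with probability $1 - (NK)^{-1}M^{-\alpha}$, and a union bound over the relevant $(i,k)$ pairs settles the sign claim.

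Given sign recovery, $\bbfd_k = N_k^{-1}\sum_{i\in N_k} x_{ik}\bfy_i$, and expanding $x_{ik}\bfy_i = \bfd_k + \sum_{j\in\Omega_i\setminus\{k\}} z_{ij}\bfd_j$ with $z_{ij} := x_{ik}x_{ij}$ yields
\[
\bbfd_k - \bfd_k = \frac{1}{N_k}\sum_{i\in N_k}\bfD_{\Omega_i\setminus\{k\}}\bfz_i.
\]
The key probabilistic observation is that, conditionally on the support sets $\{\Omega_i\}$, the family $\{z_{ij}\}$ is i.i.d.\ Rademacher: the apparent dependence through the shared factor $x_{ik}$ vanishes by a moment check, since $E\prod_{(i,j) \in T} z_{ij} = 0$ for every nonempty subset $T$. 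Thus $\bbfd_k - \bfd_k$ is a linear image of a Rademacher vector, and the map $\bfz \mapsto \|N_k^{-1}\sum_i \bfD_{\Omega_i\setminus\{k\}}\bfz_i\|_2$ is $L$-Lipschitz with
\[
L = \tfrac{1}{N_k}\left\|\left[\bfD_{\Omega_{i_1}\setminus\{k\}}\,\big|\cdots\big|\,\bfD_{\Omega_{i_{N_k}}\setminus\{k\}}\right]\right\|_2 \leq \sqrt{2/N_k}
\]
by $2s$-RIP of each block. Computing $E\|\bbfd_k - \bfd_k\|_2^2 = (s-1)/N_k$, Talagrand's convex concentration (or Hoeffding for Lipschitz functions of Rademacher variables) gives $\|\bbfd_k - \bfd_k\|_2 \leq \sqrt{s/N_k} + \bigO{\sqrt{\alpha\log M / N_k}}$ with probability $1 - M^{-\alpha}/K$. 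Substituting the high-probability bound $N_k \geq Ns/(2K)$ from Lemma~\ref{lemma:nk} and union bounding over $k$ gives the claimed $\bigO{\sqrt{\alpha K\log M/N}}$ rate.

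The main obstacle is the concentration step for part two. One must confirm that the $z_{ij}$'s indeed act as a single i.i.d.\ Rademacher sequence despite sharing $x_{ik}$ factors across different $j$, and choose a concentration tool sharp enough that neither the mean contribution $\sqrt{s/N_k}$ nor the deviation contribution $\sqrt{\log M/N_k}$ is lost. The sign-recovery part, by contrast, is relatively mechanical once the signal/noise decomposition is in place: the hypothesis $\|\tbfd_k - \bfd_k\|_2 \leq 3/(8\sqrt{s})$ is precisely the scaling needed so that Cauchy--Schwarz plus $2s$-RIP yield a subconstant deterministic bound on the perturbation piece, while the main piece is handled by the same restricted orthogonality estimate that recurs throughout the paper's analysis.
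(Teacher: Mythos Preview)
Your proposal is correct and reaches the stated conclusion, but it differs from the paper's argument in both halves.

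For the sign claim, the paper works with the decomposition $\inner{\tbfd_k}{\bfy_i} = 1 + \inner{\bfd_k}{\bfy_i-\bfd_k} + \inner{\bfd_k-\tbfd_k}{\bfy_i}$ (taking $x_{ik}=1$ by symmetry) and bounds both cross terms \emph{deterministically}: the second by Cauchy--Schwarz and $\|\bfy_i\|_2\leq 3\sqrt{s}/2$, and the first by asserting $|\inner{\bfd_k}{\bfy_i-\bfd_k}|\leq\delta_{2s}$ via restricted isometry. Your decomposition is equivalent, but you treat the piece $\sum_{j\neq k}x_{ij}\inner{\bfd_k}{\bfd_j}$ as a Rademacher sum with variance $\|\bfD_{\Omega_i\setminus\{k\}}^T\bfd_k\|_2^2\leq\delta_{2s}^2$ and control it by Hoeffding. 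This is in fact the more robust route: the deterministic restricted-orthogonality bound on that term is $\delta_{2s}\sqrt{s-1}$, not $\delta_{2s}$, and in the near-linear regime $\delta_{2s}\sqrt{s}\asymp s\log M/\sqrt{M}$ is not small, so your probabilistic argument closes a gap that the paper's one-line deterministic step glosses over. The price is that your sign conclusion is high-probability rather than almost-sure and needs a union bound over $(i,k)$.

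For the averaging bound, the paper conditions on $N_k$, invokes symmetry to set $x_{ik}=1$, and applies the vector Bernstein inequality (Theorem~\ref{thm:vectorBernstein}) directly to the bounded mean-zero vectors $\bfy_i-\bfd_k$, obtaining $\|\bbfd_k-\bfd_k\|_2\leq(3/2+3\sqrt{\alpha\log M})\sqrt{s/N_k}$ and then substituting $N_k\geq Ns/(2K)$. Your route---conditioning on all supports, verifying via moments that the $z_{ij}=x_{ik}x_{ij}$ form a genuine Rademacher family, and applying Lipschitz concentration to the linear image $N_k^{-1}\bfA\bfz$---is more elaborate but yields an equivalent (in fact slightly sharper) rate, separating the mean $\sqrt{s/N_k}$ from a fluctuation $\sqrt{\log M/N_k}$. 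The paper's vector-Bernstein argument is shorter and avoids the moment check; your argument makes the Rademacher structure explicit and would adapt with fewer changes to other symmetric bounded coefficient models.
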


As an alternative to the above theorem, it is worth noting that under the assumption that nonzero entries of $\bfX$ are always $\pm 1$, it is actually possible to \textit{exactly} recover the dictionary by inverting a square submatrix of $\bfX$, an approach first suggested in \cite{Agarwal2013ExactRO}. However, unlike our other results, such a method requires that nonzero entries of $\bfX$ be exactly $\pm 1$, and so does not generalize to $\bfX$ with general symmetric bounded nonzero entries. As our $\pm 1$ assumption is a stand-in for an assumption of symmetric random coefficients bounded by $[c,C]$, we proposed the above averaging approach which, while not exact, enjoys error bounds tending to zero in $N$. 

\section{Proofs of Theoretical Guarantees}\label{scn:proof}

In this section, we outline the proofs of our theoretical guarantees from the previous section. In all results in this section, we assume that we are in the setting of Theorem \ref{master_thm} hold, in particular assuming that $\bfD \in \mcG$.

\subsection{Proofs for Subspace Recovery}\label{scn:sub_recov_guarantee}
The most involved of our theoretical guarantees is Theorem \ref{thm:subspace_recovery}, which states that the subspaces recovered by Algorithm \ref{subspace_recovery_algo} are close to the true spanning subspaces. It will suffice to prove the result for a fixed index $j$, as the result will then follow by a union bound over $\{1,\ldots,N\}$. In the proof, we use $j=0$ to indicate this fixed index; although this is a minor abuse of notation as $j$ ranges from $1$ to $N$, this notation emphasizes the distinction between the sample $\bfy_0$ and the others, while the change from $N$ to $N+1$ samples is negligible.

We begin with a broad overview of our approach and the steps involved:
\begin{enumerate}
    \item \textbf{Bound on expectation error.} We compute the expectation $E\HSig_0$ of the correlation-weighted covariance. We then separate the computed expectations into ``signal'' and ``noise'' terms and compute bounds on the noise terms. These bounds depend only on the dimensional parameters $M$, $s$, and $K$.
    \item \textbf{Bound on estimation error.} We bound the probability that the sample $\bfY$ produces a correlation-weighted covariance that is far from its expectation. These bounds are controlled by the number of samples $N$.
    \item \textbf{Subspace Comparison.} We convert the stated bounds on the correlation-weighted covariance to bounds on their $s$-leading subspaces.
\end{enumerate}

\subsubsection{Bounding Expectation Error}

We begin with the following result on the expectations of the correlation-weighted covariance:

\begin{restatable}[Expectation Computation]{lma}{decompLemma}\label{lemma:decompLemma}
Let $\bfv_0 = \bfD\bfD^T\bfy_0$ and $\tbfy_0^k = \bfy_0 - \bbi_{k \in \Omega_0}\bfd_k$. We have
\footnotesize
\begin{multline}\label{decomp_eqn}
E[\inner{\bfy_0}{\bfy}^2\bfy\bfy^T] = \frac{s}{K}\Bigg[\frac{s-1}{K-1}\bfv_0\bfv_0^T + \left(1 - \frac{2(s-1)}{K-1}\right)\sum_{k\in\Omega_0}\bfd_k\bfd_k^T \Bigg] + 
\\ 
+\frac{s}{K}\Bigg[\left(1 - \frac{2(s-1)}{K-1}\right)\left(2\sum_{k\in\Omega_0}^K \inner{\tbfy_0^k}{\bfd_k} \bfd_k\bfd_k^T
+ \sum_{k=1}\inner{\tbfy_0^k}{\bfd_k}^2\bfd_k\bfd_k^T\right) 
+ \left(\frac{s-1}{K-1}\sum_{k =1}^K \inner{\bfy_0}{\bfd_k}^2\right)\bfD\bfD^T \Bigg].
\end{multline}
\normalsize
\end{restatable}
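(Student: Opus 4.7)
The plan is a direct moment computation, taking expectation first over the Rademacher signs and then over the random support of $\bfx$. Writing $\bfy = \bfD\bfx = \sum_{k\in\Omega} \eta_k \bfd_k$ with $\Omega$ uniform on $s$-subsets of $\{1,\ldots,K\}$ and $\eta_k \in \{\pm 1\}$ i.i.d., expansion of the integrand gives the fourfold sum
\[
\inner{\bfy_0}{\bfy}^2 \bfy\bfy^T = \sum_{k_1,k_2,k_3,k_4\in\Omega} \eta_{k_1}\eta_{k_2}\eta_{k_3}\eta_{k_4}\inner{\bfy_0}{\bfd_{k_1}}\inner{\bfy_0}{\bfd_{k_2}}\bfd_{k_3}\bfd_{k_4}^T,
\]
which is the starting point of the analysis.

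The first step is to average over the $\eta_k$'s. By symmetry $E[\eta_{k_1}\eta_{k_2}\eta_{k_3}\eta_{k_4}]$ is nonzero only when the four indices pair off, so one has the pairing identity
\[
E[\eta_{k_1}\eta_{k_2}\eta_{k_3}\eta_{k_4}] = \bbi_{\{k_1=k_2,\,k_3=k_4\}} + \bbi_{\{k_1=k_3,\,k_2=k_4\}} + \bbi_{\{k_1=k_4,\,k_2=k_3\}} - 2\,\bbi_{\{k_1=k_2=k_3=k_4\}}.
\]
The three pairings collapse the fourfold sum to three double sums over $\Omega$: the first factors as $\big(\sum_{k\in\Omega}\inner{\bfy_0}{\bfd_k}^2\big)\big(\sum_{k\in\Omega}\bfd_k\bfd_k^T\big)$, while the other two both equal $\bfu_\Omega\bfu_\Omega^T$ with $\bfu_\Omega := \sum_{k\in\Omega}\inner{\bfy_0}{\bfd_k}\bfd_k$; the correction $\sum_{k\in\Omega}\inner{\bfy_0}{\bfd_k}^2\bfd_k\bfd_k^T$ accounts for the triple-count when all four indices coincide.

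Next I would take expectation over $\Omega$. Splitting each double sum into diagonal ($k=k'$) and off-diagonal ($k\neq k'$) pieces and using $\mathbb{P}(k\in\Omega)=s/K$ and $\mathbb{P}(k,k'\in\Omega) = s(s-1)/(K(K-1))$ for $k\neq k'$, the off-diagonal pieces can be extended to unrestricted sums over $\{1,\ldots,K\}$ by subtracting back the diagonal. The identities $\sum_k\bfd_k\bfd_k^T = \bfD\bfD^T$ and $\sum_k\inner{\bfy_0}{\bfd_k}\bfd_k = \bfD\bfD^T\bfy_0 = \bfv_0$ then collapse these sums into the closed-form factors $\bfv_0\bfv_0^T$ and $\big(\sum_k\inner{\bfy_0}{\bfd_k}^2\big)\bfD\bfD^T$ on the right-hand side, with all residual diagonal contributions taking the common form $A := \sum_k \inner{\bfy_0}{\bfd_k}^2 \bfd_k\bfd_k^T$.

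Finally, to produce the decomposition stated in the lemma, I would split $A$ according to whether $k \in \Omega_0$. Expanding $\inner{\bfy_0}{\bfd_k}^2$ by separating the $\bfd_k$-contribution when $k\in\Omega_0$ (using $\bfy_0 = \tbfy_0^k + \bbi_{k\in\Omega_0}\bfd_k$ with sign conventions absorbed by the squaring, since $x_{0k}^2 = 1$) yields
\[
\inner{\bfy_0}{\bfd_k}^2 = \bbi_{k\in\Omega_0} + 2\,\bbi_{k\in\Omega_0}\inner{\tbfy_0^k}{\bfd_k} + \inner{\tbfy_0^k}{\bfd_k}^2,
\]
which breaks $A$ into exactly the three index-split sums appearing in the statement. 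The main obstacle is purely bookkeeping: one must carefully track diagonal versus off-diagonal contributions across the $\Omega$-expectation, not lose or double-count the ``all-indices-equal'' correction, and verify that the scalar coefficients assembled from $p_1 = s/K$ and $p_2 = s(s-1)/(K(K-1))$ collapse to the factors $\frac{s}{K}\frac{s-1}{K-1}$ and $\frac{s}{K}\big(1 - \frac{2(s-1)}{K-1}\big)$ shown. No ingredient beyond symmetry of the Rademacher distribution and elementary combinatorial probability is needed.
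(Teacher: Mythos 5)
Your plan follows the paper's proof essentially step for step: expand $\inner{\bfy_0}{\bfy}^2\bfy\bfy^T$ into a fourfold sum, average over the Rademacher signs via a Wick-type pairing identity, then average over $\Omega$, and finally substitute $\inner{\bfy_0}{\bfd_k}^2 = \bbi_{k\in\Omega_0} + 2\bbi_{k\in\Omega_0}\inner{\tbfy_0^k}{\bfd_k} + \inner{\tbfy_0^k}{\bfd_k}^2$.

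However, if you carry the $\Omega$-averaging through with your (correct) pairing identity, you will not recover the stated coefficients. Your identity produces two copies of the cross-pairing term: setting $\bfu_\Omega := \sum_{k\in\Omega}\inner{\bfy_0}{\bfd_k}\bfd_k$, the sign-average conditional on $\Omega$ is
\[
\left(\sum_{k\in\Omega}\inner{\bfy_0}{\bfd_k}^2\right)\left(\sum_{m\in\Omega}\bfd_m\bfd_m^T\right) + 2\,\bfu_\Omega\bfu_\Omega^T - 2\sum_{k\in\Omega}\inner{\bfy_0}{\bfd_k}^2\bfd_k\bfd_k^T,
\]
and averaging over $\Omega$ then yields the coefficient $\frac{s}{K}\cdot\frac{2(s-1)}{K-1}$ on $\bfv_0\bfv_0^T$ and $\frac{s}{K}\bigl(1-\frac{3(s-1)}{K-1}\bigr)$ on the diagonal sum, not $\frac{s}{K}\cdot\frac{s-1}{K-1}$ and $\frac{s}{K}\bigl(1-\frac{2(s-1)}{K-1}\bigr)$ as in \ref{decomp_eqn}. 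The paper's own proof retains only one of the two cross-pairings $\{k_1=k_3,\,k_2=k_4\}$ and $\{k_1=k_4,\,k_2=k_3\}$, silently dropping a factor of two. A direct check confirms this: with $s=2$, $\Omega=\Omega_0=\{1,2\}$, $\bfy_0=\bfd_1+\bfd_2$, and $\rho=\inner{\bfd_1}{\bfd_2}$, one computes $E[\inner{\bfy_0}{\bfy}^2\bfy\bfy^T \mid \Omega]=2(1+\rho)^2\bfy_0\bfy_0^T$, which agrees with the factor-of-two version and not with the printed coefficients. This is a benign constant-factor slip in the lemma---the spike points in the same direction and the nuisance-term structure is unchanged, so the downstream Weyl/Davis--Kahan argument survives with altered absolute constants---but your final claim that the coefficients ``collapse to the factors shown'' would not check out; done carefully, your method proves a slightly different and in fact correct identity.
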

Here the first row consists of signal terms, which form a matrix with $s$-leading eigenvalues approximately spanning the subspace $\mcS_0$. By contrast, the second row consists of nuisance terms which will ultimately not affect this subspace information: they either have small magnitude (the third and fourth terms) or they will be removed by covariance projection (the fifth term). The proof of this lemma is a computation and is deferred to Appendix \ref{appdx:decomp}.

We now bound the difference between our desired biased covariance matrix and the actual expectation of $\HSig_0$, resulting in error bounds intrinsic to the dimension $M$. Specifically, we prove the following lemma:

\begin{lma}[Expectation Error Bound]\label{lemma:exp_detail}
For $\bfv_0 = \bfD\bfD^T\bfy_0$,
\small
\begin{equation}\label{exp_detail_eqn}
\left\|\frac{K}{s}E[\HSig_0] - \left(\frac{s-1}{K-1}\bfv_0\bfv_0^T + \sum_{k \in \Omega_0}\bfd_k\bfd_k^T + \left(\frac{s-1}{K-1}\sum_{k =1}^K \inner{\bfy_0}{\bfd_k}^2\right)\bfD\bfD^T\right)\right\|_2 \leq \frac{CKs\log^3 M}{M^2}
\end{equation}\normalsize
and
\small\begin{equation}\label{exp_detail_cov}
\left\|\frac{K}{s}\left(E\HSig_0 - \proj{\bfD\bfD^T}{E\HSig_0}\right) - \left(\frac{s-1}{K-1}\bfv_0\bfv_0^T + \sum_{k \in \Omega_0}\bfd_k\bfd_k^T\right)\right\|_2 \leq \frac{CKs\log^3 M}{M^2}
\end{equation}
\end{lma}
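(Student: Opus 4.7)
Starting from the exact formula for $E\HSig_0$ in Lemma \ref{lemma:decompLemma}, the key identity is $1 + 2c_k + c_k^2 = (1+c_k)^2$ where $c_k := \inner{\tbfy_0^k}{\bfd_k}$. Writing $c_k = (x_{0k}-1)\bbi_{k\in\Omega_0} + e_k$ with $e_k := \sum_{j\in\Omega_0\setminus\{k\}} x_{0j}\inner{\bfd_j}{\bfd_k}$, for $k\in\Omega_0$ one has $(1+c_k)^2 = (x_{0k}+e_k)^2 = 1 + 2x_{0k}e_k + e_k^2$, so $\frac{K}{s}E\HSig_0$ equals the target of \eqref{exp_detail_eqn} plus a residual
\[
\bfW = (\gamma - 1)\sum_{k\in\Omega_0}\bfd_k\bfd_k^T + 2\gamma\sum_{k\in\Omega_0}x_{0k}e_k\bfd_k\bfd_k^T + \gamma\sum_{k\in\Omega_0}e_k^2\bfd_k\bfd_k^T + \gamma\sum_{k\notin\Omega_0}c_k^2\bfd_k\bfd_k^T,
\]
where $\gamma = 1 - 2(s-1)/(K-1)$. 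Each $e_k$ (and each $c_k$ for $k\notin\Omega_0$) is a Rademacher sum of at most $s$ independent terms, each of magnitude $\le C\log M/\sqrt{M}$ by \ref{G2:1_rip}.

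\textbf{Bounding $\bfW$.} By Hoeffding's inequality plus a union bound over $k\in\{1,\ldots,K\}$, both $\max_k e_k^2$ and $\max_{k\notin\Omega_0} c_k^2$ are at most $\lesssim \alpha s\log^3 M/M$ with probability $\ge 1-2KM^{-\alpha}$. Using $\|\sum_{k\in\Omega_0}\bfd_k\bfd_k^T\|_2 \le 1+\delta_{2s}$ via \ref{G2:s_rip} for the first three summands, and recognizing the fourth as $\bfD\,\mathrm{diag}(c_1^2,\ldots,c_K^2)\,\bfD^T$ with $\|\bfD\|_2^2 \le CK/M$ via \ref{C1:D_norm}, the four pieces contribute $O(s/K)$, $O(\sqrt{\alpha s\log^3 M/M})$, $O(\alpha s\log^3 M/M)$, and $O(\alpha Ks\log^3 M/M^2)$ respectively. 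The last dominates in the regime of Theorem \ref{master_thm}, yielding $\|\bfW\|_2 \le C\alpha Ks\log^3 M/M^2$; the $\alpha$ factor is absorbed into $C$ and is accounted for by the $NKM^{-\alpha}$ term in the probability bound of Theorem \ref{thm:subspace_recovery}.

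\textbf{Part 2 (covariance projection).} Write $\frac{K}{s}E\HSig_0 = \bfT' + \alpha_0\bfD\bfD^T + \bfW$, where $\bfT' = \frac{s-1}{K-1}\bfv_0\bfv_0^T + \sum_{k\in\Omega_0}\bfd_k\bfd_k^T$ is the Part 2 target and $\alpha_0 = \frac{s-1}{K-1}\sum_k\inner{\bfy_0}{\bfd_k}^2$. Since $\proj{\bfD\bfD^T}{\alpha_0\bfD\bfD^T} = \alpha_0\bfD\bfD^T$, this piece cancels exactly under projection, giving
\[
\tfrac{K}{s}\bigl(E\HSig_0 - \proj{\bfD\bfD^T}{E\HSig_0}\bigr) - \bfT' = -\proj{\bfD\bfD^T}{\bfT'} + \bfW - \proj{\bfD\bfD^T}{\bfW},
\]
so \eqref{exp_detail_cov} reduces to $\|\proj{\bfD\bfD^T}{\bfT'}\|_2 + 2\|\bfW\|_2 \le CKs\log^3 M/M^2$. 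The projection equals $|\inner{\bfT'}{\bfD\bfD^T}_F| \cdot \|\bfD\bfD^T\|_2/\|\bfD\bfD^T\|_F^2$; expanding $\inner{\bfv_0\bfv_0^T}{\bfD\bfD^T}_F = \|\bfD^T\bfv_0\|^2$ and $\inner{\sum_{k\in\Omega_0}\bfd_k\bfd_k^T}{\bfD\bfD^T}_F = \sum_{k\in\Omega_0}\|\bfD^T\bfd_k\|^2$, each $\|\bfD^T\bfd_k\|^2 \lesssim K/M$ via \ref{G1:DDT_norm}, and combining with \ref{C1:D_norm}, \ref{C2:D_norm_F} shows the projection contribution is of the same (or smaller) order as $\bfW$.

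\textbf{Main obstacle.} The sharp $Ks\log^3 M/M^2$ rate relies crucially on the Hoeffding step: a worst-case triangle-inequality bound $|e_k| \le Cs\log M/\sqrt{M}$ would yield only $Ks^2\log^2 M/M^2$ on the dominant residual, losing a factor of $s/\log M$ that would be fatal in the near-linear sparsity regime. The algebraic reorganization via $(1+c_k)^2$ is what makes the computation tractable in the first place, absorbing the potentially dangerous constant offset $(x_{0k}-1)$ for $k\in\Omega_0$ with $x_{0k}=-1$ into a first-order Rademacher term $2x_{0k}e_k$ that concentrates well under Hoeffding.
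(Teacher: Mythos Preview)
Your proof is correct and follows essentially the paper's route: Hoeffding on the cross-terms (the paper's Lemma~\ref{lemma:nuisance}) to bound the nuisance pieces in \eqref{decomp_eqn} for Part~1, and a direct computation of the Frobenius projection of the two signal terms onto $\bfD\bfD^T$ (the paper's Lemma~\ref{lemma:cov_proj_lemma}) for Part~2. Your $(1+c_k)^2$ reorganization is a clean way to absorb the sign $x_{0k}=\pm 1$, but it is not the crux---the paper simply takes $x_{0k}=1$ without loss of generality via column-sign symmetry, after which $c_k=e_k$ directly and the same Hoeffding bound applies; so the ``main obstacle'' is really the Hoeffding step, not the algebraic identity.
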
\normalsize
In other words, up to scaling, the expectation of $\HSig_0$ after covariance projection is approximately equal to the low-rank matrix $\frac{s-1}{K-1}\bfv_0\bfv_0^T + \sum_{k\in\OZ}\bfd_k\bfd_k^T$. We begin by bounding the nuisance terms from \ref{decomp_eqn}:
\begin{equation}\label{eqn:nuisance}
2\sum_{k\in\OZ} \inner{\tbfy_0^k}{\bfd_k}\bfd_k\bfd_k^T + \sum_{k=1}^K \inner{\tbfy_0^k}{\bfd_k}^2 \bfd_k\bfd_k^T.
\end{equation}

\noindent We will need the following lemma:
\begin{restatable}{lma}{nuisance}\label{lemma:nuisance}
Let $\alpha$ be any positive constant. There exists an absolute constant $C$ such that, with probability at least $1-2KM^{-\alpha}$,
\[
\left|\inner{\tbfy_0^k}{\bfd_k}\right| \leq \frac{C\sqrt{\alpha s\log^3 M}}{\sqrt{M}}
\]
for all $k$.
\end{restatable}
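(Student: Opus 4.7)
The plan is to recognize $\inner{\tbfy_0^k}{\bfd_k}$ as essentially a Rademacher sum of the small cross-correlations $\inner{\bfd_m}{\bfd_k}$ for $m\in\Omega_0\setminus\{k\}$, bound that sum via Hoeffding's inequality conditional on the support $\Omega_0$, and then union-bound over $k=1,\ldots,K$. First I would expand $\bfy_0=\sum_{m\in\Omega_0}x_{0m}\bfd_m$ with $x_{0m}\in\{\pm 1\}$ and split on whether $k\in\Omega_0$. When $k\notin\Omega_0$ one has $\tbfy_0^k=\bfy_0$ exactly, so
\[
\inner{\tbfy_0^k}{\bfd_k}=\sum_{m\in\Omega_0}x_{0m}\inner{\bfd_m}{\bfd_k},
\]
a sign-weighted sum in which every coefficient involves distinct dictionary columns. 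When $k\in\Omega_0$, the subtracted $\bfd_k$, combined with the sign symmetry $x_{0k}\stackrel{d}{=}-x_{0k}$, isolates the off-diagonal piece and leaves a sum of the same Rademacher form over $m\in\Omega_0\setminus\{k\}$ up to an absorbable deterministic correction.

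Second, I would invoke property $\mcG.2$ of Definition \ref{good_event} to bound each coefficient by $|\inner{\bfd_m}{\bfd_k}|\le C\log M/\sqrt{M}$ whenever $m\neq k$. In particular, the sum of squared Rademacher coefficients satisfies
\[
\sum_{m\in\Omega_0\setminus\{k\}}\inner{\bfd_m}{\bfd_k}^2\;\le\;\frac{Cs\log^2 M}{M}.
\]
Conditional on the support $\Omega_0$, the nonzero entries $\{x_{0m}\}$ are i.i.d.\ Rademacher, so by Hoeffding's inequality
\[
\BBP{\,|\inner{\tbfy_0^k}{\bfd_k}|>t\,\mid\,\Omega_0\,}\;\le\;2\exp\!\left(-\frac{Mt^2}{2Cs\log^2 M}\right).
\]
Choosing $t=C'\sqrt{\alpha s\log^3 M/M}$ for a sufficiently large absolute constant $C'$ renders the right-hand side at most $2M^{-\alpha}$. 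Since this tail bound is uniform in $\Omega_0$, it holds unconditionally, and a union bound over $k\in\{1,\ldots,K\}$ then delivers the claimed failure probability $2KM^{-\alpha}$.

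The one delicate point, and the main obstacle, is the $k\in\Omega_0$ case: the expression $\tbfy_0^k=\bfy_0-\bbi_{k\in\Omega_0}\bfd_k$ does not literally cancel the $x_{0k}\bfd_k$ summand, so one must verify that the residual beyond the Rademacher sum contributes at most a negligible term. This hinges on the symmetry of the $W=\pm 1$ model together with $|x_{0k}|=1$, after which the analysis collapses to the same textbook concentration estimate as in the $k\notin\Omega_0$ case. Everything else is a routine Hoeffding-plus-union-bound calculation, with the $\log^3 M$ factor in the conclusion arising as the $\log^2 M$ from condition $\mcG.2$ squared into the variance, times one additional $\log M$ from Hoeffding.
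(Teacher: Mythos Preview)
Your approach is correct and matches the paper's: the proof there is described simply as ``an immediate application of \ref{G2:1_rip} and Hoeffding's inequality,'' which is exactly the Rademacher-sum-plus-union-bound argument you outline. One sharpening on your delicate point: for $k\in\Omega_0$ the residual $x_{0k}-1$ is not ``negligible'' (it is $0$ or $-2$), but it vanishes \emph{exactly} once you use the sign symmetry to take $x_{0k}=+1$ without loss of generality---equivalently, read $\tbfy_0^k$ as $\bfy_0-x_{0k}\bbi_{k\in\Omega_0}\bfd_k$---after which the inner product is the pure Rademacher sum $\sum_{m\in\Omega_0\setminus\{k\}}x_{0m}\inner{\bfd_m}{\bfd_k}$ and your Hoeffding computation goes through verbatim.
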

\noindent The proof is an immediate application of \ref{G2:1_rip} and Hoeffding's inequality and is omitted. We now return to Lemma \ref{lemma:exp_detail}. We know $\|\sum_{k\in\OZ} \bfd_k\bfd_k^T\| \leq 9/8$ by \ref{G2:s_rip}; thus by above lemma, the first term is bounded by $C\sqrt{\alpha\log^3 M}/\sqrt{M}$ with probability at least $1-2KM^{-\alpha}$. For the second term, we have from Lemma \ref{lemma:nuisance}, \ref{G1:DDT_norm}, \ref{G2:s_rip}, and the triangle inequality that
\[
\left\|\sum_{k=1}^K \inner{\tbfy_0^k}{\bfd_k}^2\bfd_k\bfd_k^T\right\|_2 = \max_k \inner{\tbfy_0^k}{\bfd_k}^2 \sup_{\|\bfz\|_2=1}\bfz^T\bfD\bfD^T\bfz \leq \frac{C\alpha Ks\log^3 M}{M^2}
\]
with the same probability.

This proves line \ref{exp_detail_eqn} from Lemma \ref{lemma:exp_detail}; we now incorporate the covariance projection step. We need to show that Frobenius projection of $E[\HSig_0]$ onto $E\bfy\bfy^T = \frac{s}{K}\bfD\bfD^T$ removes the $\bfD\bfD^T$ term in equation \ref{decomp_eqn} while contributing only a negligible factor elsewhere. Since projection is scale-invariant, it suffices to prove this for projection onto $\bfD\bfD^T$ in place of $E\bfy\bfy^T$. We prove the following lemma:

\begin{restatable}[Projection Error Bound]{lma}{covProjLemma}\label{lemma:cov_proj_lemma}
For any $\bfD \in \mcG$,
\[
\left\|\frac{K}{s}\proj{\bfD\bfD^T}{E\HSig_0} - \left(\frac{s}{K}\sum_{k =1}^K \inner{\bfy_0}{\bfd_k}^2\right)\bfD\bfD^T\right\|_2 \leq \frac{Cs}{M} + \frac{CKs^2}{M^3}.
\]
\end{restatable}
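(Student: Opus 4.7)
The plan is to invoke the decomposition of $E\HSig_0$ from Lemma \ref{lemma:decompLemma} and compute the Frobenius projection term by term, exploiting linearity of the projection. Recall that for any matrix $\bfA$,
\[
\proj{\bfD\bfD^T}{\bfA} = \frac{\langle \bfA, \bfD\bfD^T\rangle_F}{\|\bfD\bfD^T\|_F^2}\,\bfD\bfD^T,
\]
and that $\|\bfD\bfD^T\|_F^2 \geq c K^2/M$ and $\|\bfD\bfD^T\|_2 \leq CK/M$ by \ref{C1:D_norm}--\ref{C2:D_norm_F}. The key identity $\langle \bfa\bfa^T, \bfD\bfD^T\rangle_F = \|\bfD^T\bfa\|^2$ will reduce each projection coefficient to a scalar bound.

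The easy pieces are the fifth summand and the two signal summands in the decomposition. The fifth summand, $\frac{s(s-1)}{K(K-1)}\bigl(\sum_k \inner{\bfy_0}{\bfd_k}^2\bigr)\bfD\bfD^T$, is already a scalar multiple of $\bfD\bfD^T$ and so equals its own projection; comparing its coefficient (after the $K/s$ prefactor) to the target coefficient $\frac{s}{K}\sum_k\inner{\bfy_0}{\bfd_k}^2$ gives a discrepancy of at most $\tfrac{1}{K-1}\|\bfD^T\bfy_0\|^2\cdot\|\bfD\bfD^T\|_2$, which using $\|\bfD^T\bfy_0\|^2 \leq \|\bfD\bfD^T\|_2\|\bfy_0\|^2 \leq CKs/M$ is absorbed by the claimed error. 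For the rank-one term $\frac{s(s-1)}{K(K-1)}\bfv_0\bfv_0^T$, the estimate $\|\bfD^T\bfv_0\|^2 \leq \|\bfD\bfD^T\|_2^3\|\bfy_0\|^2 \leq CK^3 s/M^3$ leads, after the $K/s$ rescaling, to a projection of operator norm $\bigO{Ks^2/M^3}$, matching the second term in the bound. For the sparse signal piece $\sum_{k\in\Omega_0}\bfd_k\bfd_k^T$, the bound $\|\bfD^T\bfd_k\|^2 \leq \|\bfD\bfD^T\|_2 \leq CK/M$ gives $\sum_{k\in\Omega_0}\|\bfD^T\bfd_k\|^2 \leq CsK/M$, yielding a projection of norm $\bigO{s/M}$.

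The main technical hurdle is controlling the nuisance terms involving $\inner{\tbfy_0^k}{\bfd_k}$. My plan is to use the algebraic identity $\inner{\tbfy_0^k}{\bfd_k} = \inner{\bfy_0}{\bfd_k} - \bbi_{k \in \Omega_0}$ to rewrite these contributions as sums in $\inner{\bfy_0}{\bfd_k}$ weighted by $\|\bfD^T\bfd_k\|^2$, plus indicator-weighted correction terms that reduce to variants of the signal piece already handled. For the leading sums such as $\bigl|\sum_{k\in\Omega_0}\inner{\bfy_0}{\bfd_k}\|\bfD^T\bfd_k\|^2\bigr|$ and $\bigl|\sum_{k} \inner{\bfy_0}{\bfd_k}^2\|\bfD^T\bfd_k\|^2\bigr|$, I would apply Cauchy--Schwarz together with $\sum_k\inner{\bfy_0}{\bfd_k}^2 = \|\bfD^T\bfy_0\|^2 \leq CKs/M$ and the uniform estimate $\|\bfD^T\bfd_k\|^2\leq CK/M$, yielding contributions of order at most $\bigO{\sqrt{K}s/M^{3/2}}$ and $\bigO{Ks/M^2}$ respectively; both are absorbed into the $Cs/M + CKs^2/M^3$ budget in the regimes of interest. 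Summing the five contributions via the triangle inequality then completes the proof.
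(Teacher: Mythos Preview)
Your approach is essentially the paper's: invoke the decomposition of $E\HSig_0$, compute the Frobenius projection onto $\bfD\bfD^T$ term by term, and assemble via the triangle inequality. The only real difference is in how the nuisance terms are handled. The paper dispatches them in one line by invoking a separate lemma (Lemma~\ref{lemma:fro_l2_lemma}) stating that Frobenius projection onto $\bfD\bfD^T$ never increases the operator norm by more than a constant factor; since the nuisance terms were already bounded in the main text, their projections inherit the same bounds without further computation. Your explicit route via the identity $\inner{\tbfy_0^k}{\bfd_k}=\inner{\bfy_0}{\bfd_k}-x_{0k}\bbi_{k\in\Omega_0}$ works too and has the mild advantage of being fully deterministic (the paper's nuisance bounds rely on the probabilistic Lemma~\ref{lemma:nuisance}, even though the present lemma is stated for all $\bfD\in\mcG$).

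One caveat worth flagging: your nuisance contribution of order $Ks/M^2$, and likewise the $(s-1)/(K-1)$ versus $s/K$ discrepancy you identify in the fifth term, are \emph{not} dominated by $Cs/M+CKs^2/M^3$ for general $s<M<K$; the hedge ``in the regimes of interest'' does not actually close this. The paper's proof shares this looseness. It is harmless for the overall argument, since the lemma is only used to feed into the $CKs\log^3 M/M^2$ bound of Lemma~\ref{lemma:exp_detail}, which absorbs $Ks/M^2$ trivially.
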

This lemma follows from $\mcG$, the triangle inequality, and the definition of Frobenius projection. A detailed proof can be found in Appendix \ref{appdx:cov_proj}. This lemma implies that the error in Lemma \ref{lemma:exp_detail}, Equation \ref{exp_detail_cov} is dominated by the error from Equation \ref{exp_detail_eqn}, confirming Lemma \ref{lemma:exp_detail}.

\subsection{Bounding Estimation Error}
We now bound the resulting error from observing only the finite sample $\bfY$ consisting of $N$ random vectors, which will result in the following lemma:
\begin{restatable}[Estimation Error Bound]{lma}{empCnvgc}\label{lemma:empCnvgc}
Recall that $\HSig_0^{\text{proj}} = \HSig_0 - \proj{\HSig}{\HSig_0}$. Then for any constant $\alpha \geq 1$, $\bfD \in \mcG$, and $N \geq C\alpha\max\left\{\frac{s^{10}\log M}{M^6},\frac{K^2s^4\log^3 M}{M^3}\right\}$,
\[
\left\|\frac{K}{s}\HSig_0^{\text{proj}} - \left(\frac{s-1}{K-1}\bfv_0\bfv_0^T + \sum_{k \in \Omega_0}\bfd_k\bfd_k^T\right)\right\|_2 \leq \frac{C\alpha Ks\log^3 M}{M^2} + \frac{CKs^2\sqrt{\alpha\log^3 M}}{M^{3/2}\sqrt{N}} + \frac{Cs^5\sqrt{\alpha\log M}}{M^3\sqrt{N}}.
\]
with probability at least $1-2N\exp(-s^2/K)-(2K+4)M^{-\alpha}$.
\end{restatable}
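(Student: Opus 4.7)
My approach is a triangle-inequality split that isolates a deterministic expectation error---already handled by Lemma \ref{lemma:exp_detail}---from a stochastic estimation error. Writing $M_0 = \tfrac{s-1}{K-1}\bfv_0\bfv_0^T + \sum_{k\in\Omega_0}\bfd_k\bfd_k^T$ for the target matrix and noting that $\proj{E\HSig}{E\HSig_0} = \proj{\bfD\bfD^T}{E\HSig_0}$ (since $E\HSig = (s/K)\bfD\bfD^T$ and Frobenius projection is scale-invariant), Lemma \ref{lemma:exp_detail} controls the deterministic piece $\|\tfrac{K}{s}(E\HSig_0 - \proj{E\HSig}{E\HSig_0}) - M_0\|_2 \le CKs\log^3 M/M^2$. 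All remaining work concerns the stochastic piece $\tfrac{K}{s}\|\HSig_0^{\text{proj}} - (E\HSig_0 - \proj{E\HSig}{E\HSig_0})\|_2$.

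To control this, I would abbreviate $\lambda = \inner{\HSig_0}{\HSig}_F/\|\HSig\|_F^2$ and $\bar\lambda = \inner{E\HSig_0}{E\HSig}_F/\|E\HSig\|_F^2$, so the stochastic piece decomposes as $\|(\HSig_0 - E\HSig_0) - (\lambda-\bar\lambda)\HSig - \bar\lambda(\HSig - E\HSig)\|_2$. For the first summand, matrix Bernstein (conditional on $\bfy_0$) applied to $\HSig_0 = \tfrac1N\sum_i\inner{\bfy_0}{\bfy_i}^2\bfy_i\bfy_i^T$ needs sub-Gaussianity of $\inner{\bfy_0}{\bfy_i} = \sum_{k\in\Omega_i}x_{ik}\inner{\bfy_0}{\bfd_k}$, which by Hoeffding has variance proxy $\sum_{k\in\Omega_i}\inner{\bfy_0}{\bfd_k}^2 \lesssim s^2/M$ (using $\|\bfD^T\bfy_0\|_2^2 = \bfy_0^T\bfD\bfD^T\bfy_0 \lesssim Ks/M$ from \ref{G1:DDT_norm} and \ref{C3:y_norm}, plus uniform sampling of $\Omega_i$). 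Combined with $\|\bfy_i\|_2^2 \lesssim s$ from \ref{C3:y_norm}, the resulting variance computation produces the $Cs^5\sqrt{\alpha\log M}/(M^3\sqrt{N})$ term.

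For the projection correction, the scalar perturbation $|\lambda - \bar\lambda|$ reduces by standard algebra (writing it as a sum of Cauchy--Schwarz-admissible Frobenius-inner-product differences) to \emph{Frobenius-norm} concentration of $\HSig$ and $\HSig_0$. A vectorized Bernstein inequality gives rates an extra factor $\sqrt{M}$ larger than their operator-norm counterparts, and combined with $\|E\HSig\|_F \sim s/\sqrt{M}$ and the Frobenius magnitude of $E\HSig_0$ deducible from Lemma \ref{lemma:decompLemma}, this yields the $CKs^2\sqrt{\alpha\log^3 M}/(M^{3/2}\sqrt{N})$ term, which dominates the simpler $\bar\lambda\|\HSig - E\HSig\|_2$ contribution. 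Union-bounding the various matrix-Bernstein failure events (cost $O(K)M^{-\alpha}$) with the pair-overlap tail $|\Omega_0 \cap \Omega_i| \lesssim s^2/K$ used inside the Hoeffding step (cost $2N\exp(-s^2/K)$) produces the stated probability.

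The main obstacle is getting the \emph{sharp} rate for $\|\HSig_0 - E\HSig_0\|_2$: a naive worst-case operator-norm bound on the summands $\inner{\bfy_0}{\bfy_i}^2\bfy_i\bfy_i^T$ loses several factors of $s$, and only the sub-Gaussian analysis of $\inner{\bfy_0}{\bfy_i}$---together with a truncation step to handle its sub-exponential square---yields the $s^5/M^3$ scaling. The secondary obstacle is the intrinsic $\sqrt{M}$ gap between Frobenius- and operator-norm concentration of $M\times M$ sample covariances, precisely the bottleneck that forces the $N \sim M^4$ sample complexity flagged in the introduction.
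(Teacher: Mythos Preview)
Your architecture matches the paper's: split off the deterministic piece via Lemma~\ref{lemma:exp_detail}, then decompose the stochastic remainder into a direct covariance deviation plus a projection correction, controlling the latter through Frobenius-norm concentration of $\HSig$ and $\HSig_0$. The ingredients you list (truncation on the pair overlap $|\Omega_0\cap\Omega_i|$, Hoeffding for the sub-Gaussianity of $\inner{\bfy_0}{\bfy_i}$, union bounds for the probability accounting) are also the paper's.

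However, you have the two stochastic error terms attributed to the wrong pieces. The direct deviation $\|\HSig_0-E\HSig_0\|_2$ does \emph{not} yield the $s^5\sqrt{\alpha\log M}/(M^3\sqrt{N})$ rate: whether by matrix Bernstein or by Theorem~\ref{cov_thm} applied to $\bfz=\bbi_\omega\inner{\bfy_0}{\bfy}\bfy$, the controlling quantity is $\|E\HSig_0\|_2\lesssim s^3/M^2$ (Lemma~\ref{lemma:expNorm}), and the resulting rate is $s^3\sqrt{\log^3 M}/(M^{3/2}\sqrt{N})$, which after the $K/s$ rescaling is precisely the $CKs^2\sqrt{\alpha\log^3 M}/(M^{3/2}\sqrt{N})$ term---this is Lemma~\ref{cov_lemma} in the paper. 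Your claimed sharper rate would require the matrix variance $\|E[\inner{\bfy_0}{\bfy}^4\|\bfy\|_2^2\bfy\bfy^T]\|_2$ to scale like $s^{12}/(K^2M^6)$ rather than the actual $s^6/M^3$; the sub-Gaussian proxy $s^2/M$ you cite is only an \emph{average} over $\Omega_i$ and does not propagate to this sharpness. In the paper the $s^5/(M^3\sqrt{N})$ term instead arises from the projection-direction swap $\|\proj{\bfD\bfD^T}{\HSig_0}-\proj{\HSig}{\HSig_0}\|_2$, where the product $\|\HSig_0\|_F\|\HSig_0\|_2/\|\bfD\bfD^T\|_F^2 \lesssim s^6/(K^2M^{5/2})$ multiplies the Frobenius fluctuation of $\HSig$. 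So your plan is sound, but the bookkeeping of which estimate produces which term is reversed; if you try to carry out ``matrix Bernstein $\Rightarrow s^5/(M^3\sqrt{N})$'' you will not be able to close that step.
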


We will employ the following theorem (\citet{Vershynin}, theorem 5.6.1) on covariance estimation, versions of which are well-known in the literature:

\begin{thm}[\citet{Vershynin}, Theorem 5.6.1, General Covariance Estimation (Tail Bound)]\label{cov_thm}
Let $\bfz$ be a random vector in $\R^M$. Assume that, for some $\kappa \geq 1$,
\[
\|\bfz\|_2 \leq \kappa\sqrt{E[\|\bfz\|_2^2]}
\]
almost surely. Then, for every positive integer $N$, $\{\bfz_i\}_{i=1}^N$ i.i.d. copies of $\bfz$, and $t \geq 0$, we have:
\[
\left\|E\bfz\bfz^T - \frac{1}{N}\sum_{i=1}^N\bfz_i\bfz_i^T\right\|_2 \leq C{\|E\bfz\bfz^T\|_2}\left(\sqrt{\frac{\kappa^2 M(\log M+t)}{N}} + \frac{\kappa^2 M(\log M+t)}{N}\right)
\]
with probability at least $1-2\exp(-t)$.
\end{thm}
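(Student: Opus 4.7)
The plan is to reduce the theorem to the matrix Bernstein inequality applied to the i.i.d. mean-zero symmetric random matrices $\bfX_i := \bfz_i\bfz_i^T - E\bfz\bfz^T$, so that $E\bfz\bfz^T - \frac{1}{N}\sum_i \bfz_i\bfz_i^T = -\frac{1}{N}\sum_i \bfX_i$. To invoke matrix Bernstein we need to control two quantities: a uniform operator-norm bound on $\bfX_i$ and a bound on the ``matrix variance'' $\|E\bfX_i^2\|_2$. Both will follow from the boundedness hypothesis $\|\bfz\|_2 \leq \kappa \sqrt{E\|\bfz\|_2^2}$ combined with the elementary identity $E\|\bfz\|_2^2 = \mathrm{tr}(E\bfz\bfz^T) \leq M\|E\bfz\bfz^T\|_2$.

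First, I would bound the summands. Since $\bfz_i\bfz_i^T$ is rank-one, $\|\bfz_i\bfz_i^T\|_2 = \|\bfz_i\|_2^2 \leq \kappa^2 M \|E\bfz\bfz^T\|_2$ almost surely. Combined with $\|E\bfz\bfz^T\|_2 \leq \kappa^2 M\|E\bfz\bfz^T\|_2$, the triangle inequality gives $\|\bfX_i\|_2 \leq 2\kappa^2 M \|E\bfz\bfz^T\|_2 =: R$. For the variance, since $E\bfX_i = 0$, $E\bfX_i^2 = E(\bfz_i\bfz_i^T)^2 - (E\bfz\bfz^T)^2 \preceq E(\bfz_i\bfz_i^T)^2 = E[\|\bfz_i\|_2^2 \bfz_i\bfz_i^T]$. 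Using the almost-sure bound on $\|\bfz_i\|_2^2$ in the scalar factor, $E\bfX_i^2 \preceq \kappa^2 M \|E\bfz\bfz^T\|_2 \cdot E\bfz\bfz^T$, hence $\|E\bfX_i^2\|_2 \leq \kappa^2 M \|E\bfz\bfz^T\|_2^2 =: \sigma^2$.

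Next, I would apply matrix Bernstein to $\frac{1}{N}\sum_i \bfX_i$: for every $u > 0$,
\[
\BBP{\left\|\frac{1}{N}\sum_{i=1}^N \bfX_i\right\|_2 \geq u} \leq 2M \exp\!\left(-\frac{Nu^2/2}{\sigma^2 + Ru/3}\right).
\]
Setting the right-hand side equal to $2\exp(-t)$ absorbs the $\log M$ factor in the exponent into $t$, and inverting the resulting quadratic in $u$ (separating the sub-Gaussian and sub-exponential regimes) yields
\[
u \leq C\|E\bfz\bfz^T\|_2\left(\sqrt{\frac{\kappa^2 M(\log M + t)}{N}} + \frac{\kappa^2 M(\log M + t)}{N}\right),
\]
which is the claimed inequality.

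The main obstacle is essentially bookkeeping: obtaining both terms (the subgaussian $\sqrt{\cdot/N}$ term and the subexponential $\cdot/N$ term) with the correct dependence on $\kappa^2 M \|E\bfz\bfz^T\|_2$, and confirming that the $\log M$ appears linearly inside the parentheses rather than as a $\sqrt{\log M}$ prefactor. This bookkeeping is standard once matrix Bernstein is stated in the appropriate form; the only genuinely new input beyond that inequality is the crude but sharp bound $E\|\bfz\|_2^2 \leq M\|E\bfz\bfz^T\|_2$, which is what lets the boundedness hypothesis be translated into operator-norm bounds on $\bfX_i$ and on $\|E\bfX_i^2\|_2$.
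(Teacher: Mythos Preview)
The paper does not prove this theorem; it is quoted verbatim as Theorem~5.6.1 of \citet{Vershynin} and used as a black box. Your proposed proof via matrix Bernstein applied to $\bfX_i = \bfz_i\bfz_i^T - E\bfz\bfz^T$, with the key observation $E\|\bfz\|_2^2 = \mathrm{tr}(E\bfz\bfz^T) \leq M\|E\bfz\bfz^T\|_2$, is correct and is in fact essentially Vershynin's own argument, so there is nothing to compare.
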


We will apply this theorem to the correlation-weighted  random vectors $\bfz = \inner{\bfy_0}{\bfy}\bfy$ given $\bfD$, for which we will derive the following bounds on the expectation of $\|\bfz\|_2$:
\begin{restatable}{lma}{expNorm}\label{lemma:expNorm}
Suppose that $\bfD \in \mcG$. Then the following bounds hold:
\[
E\|\inner{\bfy_0}{\bfy}\bfy\|_2^2 \leq \frac{Cs^3}{M}
\]
\[
\|E\inner{\bfy_0}{\bfy}^2\bfy\bfy^T\|_2 = \|E\HSig_0\|_2 \leq \frac{Cs^3}{M^2}.
\]
\[
\|E\inner{\bfy_0}{\bfy}^2\bfy\bfy^T\|_F = \|E[\HSig_0]\|_F \leq \frac{Cs^3}{M^{3/2}}.
\]
\end{restatable}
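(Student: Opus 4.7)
The plan is to attack the three bounds separately, but in each case reducing to conditional expectations over $\bfy$ given $\bfy_0$ (viewing $\bfy_0$ as fixed) and then bounding using the dictionary properties in $\mcG$, namely \ref{C1:D_norm}, \ref{C2:D_norm_F}, \ref{C3:y_norm}, together with the $s$-RIP bound \ref{G2:s_rip}. The first bound is a direct computation, and the second two are essentially a term-by-term application of the operator and Frobenius norm triangle inequalities to the decomposition already provided in Lemma \ref{lemma:decompLemma}.

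\paragraph{Bound on $E\|\inner{\bfy_0}{\bfy}\bfy\|_2^2$.}
First I would observe that
\[
E\|\inner{\bfy_0}{\bfy}\bfy\|_2^2 = E\!\left[\inner{\bfy_0}{\bfy}^2 \|\bfy\|_2^2\right] \leq \tfrac{9s}{4}\, E\!\left[\inner{\bfy_0}{\bfy}^2\right]
\]
using the deterministic bound $\|\bfy\|_2^2 \leq 9s/4$ from \ref{C3:y_norm}. Writing $\bfy = \bfD\bfx$ and exploiting that nonzero entries of $\bfx$ are symmetric $\pm 1$'s with uniform random support, one obtains $E[\bfx\bfx^T] = (s/K)\,\bfI$, so
\[
E\!\left[\inner{\bfy_0}{\bfy}^2\right] = \bfy_0^T \bfD\, E[\bfx\bfx^T]\, \bfD^T \bfy_0 = \tfrac{s}{K}\,\bfy_0^T \bfD\bfD^T \bfy_0 \leq \tfrac{s}{K}\cdot \tfrac{CK}{M}\cdot \tfrac{9s}{4} = \tfrac{Cs^2}{M},
\]
where I used \ref{C1:D_norm} and \ref{C3:y_norm} again. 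Combining gives $E\|\inner{\bfy_0}{\bfy}\bfy\|_2^2 \leq Cs^3/M$.

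\paragraph{Bounds on $\|E\HSig_0\|_2$ and $\|E\HSig_0\|_F$.}
For the remaining two bounds I would start from the decomposition in Lemma \ref{lemma:decompLemma} and bound each summand. The coefficient $s/K$ is extracted outside throughout, and the leading terms inside are controlled as follows:
\begin{itemize}
\item $\bfv_0\bfv_0^T$: since $\bfv_0 = \bfD\bfD^T\bfy_0$, property \ref{C1:D_norm} and \ref{C3:y_norm} give $\|\bfv_0\|_2 \leq CK\sqrt{s}/M$, hence $\|\bfv_0\bfv_0^T\|_2 = \|\bfv_0\bfv_0^T\|_F \leq CK^2 s/M^2$; the prefactor $\tfrac{s(s-1)}{K(K-1)}$ then yields contribution $Cs^3/M^2$.
\item $\sum_{k\in\Omega_0}\bfd_k\bfd_k^T$: operator norm is $\leq 1+\delta_s \leq 9/8$ by \ref{G2:s_rip}, and Frobenius norm is $\leq C\sqrt{s}$ (expanding $\|\sum_{k\in\Omega_0}\bfd_k\bfd_k^T\|_F^2 = \sum_{k,k'\in\Omega_0}\inner{\bfd_k}{\bfd_{k'}}^2$ and using \ref{G2:1_rip} to control off-diagonals).
\item The two nuisance terms $\sum_k \inner{\tbfy_0^k}{\bfd_k}\bfd_k\bfd_k^T$ and $\sum_k \inner{\tbfy_0^k}{\bfd_k}^2\bfd_k\bfd_k^T$ are dominated by the previous terms, using the uniform bound on $\inner{\tbfy_0^k}{\bfd_k}$ from Lemma \ref{lemma:nuisance} together with \ref{C1:D_norm} and \ref{C2:D_norm_F}.
\item $\bigl(\sum_{k=1}^K \inner{\bfy_0}{\bfd_k}^2\bigr)\bfD\bfD^T$: the scalar equals $\bfy_0^T\bfD\bfD^T\bfy_0 \leq CKs/M$, and $\|\bfD\bfD^T\|_2 \leq CK/M$, $\|\bfD\bfD^T\|_F \leq CK/\sqrt{M}$ by \ref{C1:D_norm} and \ref{C2:D_norm_F}. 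Combined with the outer $\tfrac{s(s-1)}{K(K-1)}$ prefactor, this summand contributes $Cs^3/M^2$ in operator norm and $Cs^3/M^{3/2}$ in Frobenius norm.
\end{itemize}
Summing yields the stated bounds $Cs^3/M^2$ for operator norm and $Cs^3/M^{3/2}$ for Frobenius norm. The only non-routine step is the Frobenius calculation of $\sum_{k\in\Omega_0}\bfd_k\bfd_k^T$, which is where the near-orthogonality provided by \ref{G2:1_rip} is essential; otherwise the argument is a fairly mechanical application of the triangle inequality to Lemma \ref{lemma:decompLemma}, so I do not anticipate a real obstacle.
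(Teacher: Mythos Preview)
Your proposal is correct and follows the same overall strategy as the paper for the second and third bounds: both arguments work term by term from the decomposition in Lemma~\ref{lemma:decompLemma}, with the $\bfv_0\bfv_0^T$ and $\bfD\bfD^T$ terms supplying the dominant $Cs^3/M^2$ contribution in operator norm and the remaining terms absorbed as lower order.

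There are two small differences worth flagging. First, for the bound on $E\|\inner{\bfy_0}{\bfy}\bfy\|_2^2$, the paper actually repeats the decomposition computation of Lemma~\ref{lemma:decompLemma} with inner products replacing outer products, obtaining an exact expression and then bounding each piece; your shortcut of pulling out $\|\bfy\|_2^2 \leq 9s/4$ via \ref{C3:y_norm} and then computing $E\inner{\bfy_0}{\bfy}^2 = (s/K)\,\bfy_0^T\bfD\bfD^T\bfy_0$ directly is more elementary and equally valid. Second, for the Frobenius bound the paper does not do a term-by-term Frobenius calculation at all: it simply invokes $\|E\HSig_0\|_F \leq \sqrt{M}\,\|E\HSig_0\|_2$ once the operator-norm bound is in hand, which immediately gives $Cs^3/M^{3/2}$. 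Your per-term Frobenius analysis (including the $\|\sum_{k\in\Omega_0}\bfd_k\bfd_k^T\|_F$ computation via \ref{G2:1_rip}) is unnecessary extra work here, though harmless. One minor caution: Lemma~\ref{lemma:nuisance} is a high-probability statement in $\bfy_0$, not a deterministic one; if you want the nuisance terms bounded for every $\bfy_0$, use the cruder deterministic bound $|\inner{\tbfy_0^k}{\bfd_k}| \leq Cs\log M/\sqrt{M}$ coming directly from \ref{G2:1_rip}, which is still more than sufficient since these terms are subdominant.
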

The result follows from $\mcG$ and similar computations to \ref{lemma:decompLemma}; details are in Appendix \ref{appdx:expNorm}.

With the use of a truncation trick, we can use these bounds to prove the following bound on the deviation of $\HSig_0 = \frac{1}{N}\sum_{i=1}^N \inner{\bfy_0}{\bfy_i}\bfy_i\bfy_i^T$ from its expectation:
\begin{restatable}[Correlation-Weighted Covariance Estimation Bound]{lma}{covLemma}\label{cov_lemma}
For any $\bfD \in \mcG$ and constant $\alpha \geq 1$,
\[
\|\HSig_0 - E\HSig_0\|_2 \leq \frac{Cs^3\sqrt{\alpha\log^3 M} }{M^{3/2}\sqrt{N}} + \frac{Cs^3\alpha\log^3 M}{MN}
\]
and 
\[
\|\HSig_0 - E[\HSig_0]\|_F \leq \frac{Cs^3\sqrt{\alpha\log^3 M} }{M\sqrt{N}} + \frac{Cs^3\alpha\log^3 M}{\sqrt{M}N}
\]
with probability at least $1-2N\exp(-s^2/K)-2M^{-\alpha}$.
\end{restatable}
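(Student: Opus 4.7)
The plan is to apply Vershynin's general covariance estimation theorem (Theorem~\ref{cov_thm}) to the correlation-weighted random vector $\bfz := \inner{\bfy_0}{\bfy}\bfy$, whose empirical second moment is precisely $\HSig_0$. Throughout I condition on a fixed $\bfy_0$; randomness is then over the $N$ independent copies $\bfy_i = \bfD\bfx_i$, each of which satisfies $\|\bfy_i\|_2 \leq \tfrac{9}{8}\sqrt{s}$ deterministically since $\|\bfx_i\|_2 = \sqrt{s}$ for $W = \pm 1$ and $\bfD \in \mcG$. The central obstacle is that the only deterministic bound available is $\|\bfz\|_2 \leq \|\bfy_0\|_2\|\bfy\|_2^2 \lesssim s^{3/2}$, whereas Lemma~\ref{lemma:expNorm} only gives $E\|\bfz\|_2^2 \lesssim s^3/M$; this forces the ratio $\kappa^2$ appearing in Theorem~\ref{cov_thm} to be of order $M$, which loses the critical factor of $\sqrt{M}$ needed to match the claim.

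To recover this factor I use a truncation trick. Let $\xi_i := \inner{\bfy_0}{\bfy_i}$, choose a truncation level $T \approx s\log M/\sqrt{M}$ matching the typical scale of $|\xi_i|$, and define the truncated vector $\bfz_i^{\mathrm{tr}} := \bfz_i\,\bbi_{E_i}$ on the event $E_i := \{|\xi_i| \leq T\}$. To control $\BBP{E_i^c}$, condition on $\Omega_i$: then $\xi_i = \sum_{j \in \Omega_i}\inner{\bfy_0}{\bfd_j}\,x_{i,j}$ is a Rademacher sum in the $\pm 1$-valued $x_{i,j}$, with variance proxy $\sum_{j\in\Omega_i}\inner{\bfy_0}{\bfd_j}^2$. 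Splitting this sum according to whether $j \in \Omega_0 \cap \Omega_i$ (where the coefficient is approximately $\pm 1$ but the index set has size at most $Cs^2/K$ with Chernoff failure $\exp(-s^2/K)$) or $j \notin \Omega_0$ (where $|\inner{\bfy_0}{\bfd_j}|$ is small by Lemma~\ref{lemma:nuisance} and \ref{G2:1_rip}) yields a variance proxy $\lesssim s^2/K + s^2\log M/M$, and Hoeffding's inequality then gives $\BBP{E_i^c} \leq \exp(-s^2/K) + M^{-\alpha}$. Union bounding over $i$ contributes the $2N\exp(-s^2/K)$ failure term. On $\bigcap_i E_i$ the truncated samples satisfy $\|\bfz_i^{\mathrm{tr}}\|_2 \leq T\cdot\tfrac{9}{8}\sqrt{s}$, giving the improved ratio $\kappa^2 \lesssim \log^2 M$; invoking Theorem~\ref{cov_thm} with $t = \alpha\log M$ and $\|E\bfz\bfz^T\|_2 \lesssim s^3/M^2$ from Lemma~\ref{lemma:expNorm} then yields exactly the two claimed terms $\tfrac{Cs^3\sqrt{\alpha\log^3 M}}{M^{3/2}\sqrt{N}}$ and $\tfrac{Cs^3\alpha\log^3 M}{MN}$, with additional failure probability $2M^{-\alpha}$.

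A secondary technicality is that Theorem~\ref{cov_thm} bounds the deviation from $E[\bfz^{\mathrm{tr}}(\bfz^{\mathrm{tr}})^T]$ rather than from $E\bfz\bfz^T$. The gap $E[\bfz\bfz^T\bbi_{E^c}]$ is controlled by Cauchy--Schwarz as $\sqrt{E\|\bfz\|_2^4}\sqrt{\BBP{E^c}}$, which using the deterministic bound $\|\bfz\|_2 \lesssim s^{3/2}$ and the small truncation probability is of much smaller order than the target and is absorbed into the stated error. Finally, for the Frobenius bound I invoke the elementary inequality $\|A\|_F \leq \sqrt{M}\|A\|_2$, valid for any $M\times M$ matrix, applied to $A = \HSig_0 - E\HSig_0$; this recovers exactly the extra $\sqrt{M}$ factor distinguishing the two summands in the two bounds. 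The main obstacle is selecting the truncation level $T$ and carrying out the decomposition of $\xi_i$ using \ref{G2:1_rip} and Lemma~\ref{lemma:nuisance} so that the stated probability $\exp(-s^2/K)$ emerges; everything else is direct application of Vershynin's theorem, Hoeffding's and Chernoff's inequalities, and the structural properties of $\mcG$.
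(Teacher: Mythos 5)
Your proposal follows essentially the same route as the paper: truncate the correlation-weighted vector $\bfz = \inner{\bfy_0}{\bfy}\bfy$ on a high-probability event, apply Vershynin's covariance estimation theorem (Theorem~\ref{cov_thm}) with $t = \alpha\log M$ and with $\|E\bfz\bfz^T\|_2$ controlled via Lemma~\ref{lemma:expNorm}, account for the small truncation bias, and obtain the Frobenius bound by the inequality $\|\bfA\|_F \leq \sqrt{M}\|\bfA\|_2$. The structure, the failure-probability accounting through $\exp(-s^2/K)$ for the Chernoff control of $|\Omega_0 \cap \Omega_i|$, and the final substitution of parameters all match the paper's argument.

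The one place you diverge is in how the non-overlap contribution to $\xi_i = \inner{\bfy_0}{\bfy_i}$ is bounded when establishing the truncation level. The paper's Lemma~\ref{lemma:corr_weight_norm} uses the deterministic restricted-orthogonality bound $3s\delta_{2s} \lesssim s^{3/2}\log M/\sqrt{M}$; you instead condition on $\Omega_i$ and apply Hoeffding to the full Rademacher sum $\sum_{j\in\Omega_i} x_{ij}\inner{\bfy_0}{\bfd_j}$, splitting the variance proxy into an overlap piece (controlled by $|\Omega_0\cap\Omega_i|\lesssim s^2/K$) and a non-overlap piece (controlled by Lemma~\ref{lemma:nuisance}). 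This is genuinely the better argument: multiplying the paper's bound $|\inner{\bfy_0}{\bfy_i}| \lesssim s^{3/2}\log M/\sqrt{M}$ by $\|\bfy_i\|_2 \lesssim \sqrt{s}$ gives $s^2\log M/\sqrt{M}$, not the $s^{3/2}\log M/\sqrt{M}$ that Lemma~\ref{lemma:corr_weight_norm} claims, so the deterministic restricted-orthogonality step appears to lose a factor of $\sqrt{s}$ that your probabilistic treatment recovers. Two smaller caveats: Lemma~\ref{lemma:nuisance} bounds each $|\inner{\bfy_0}{\bfd_j}|$ by $\sqrt{\alpha s\log^3 M}/\sqrt{M}$, so the non-overlap variance proxy is $\lesssim s^2\log^3 M/M$ rather than your stated $s^2\log M/M$, which pushes $\kappa^2$ up to order $\log^4 M$ and the final bound to $\log^{5/2} M$ in place of the lemma's $\log^{3/2} M$; and the per-$i$ Hoeffding failure probability $M^{-\alpha}$ union-bounds over $i$ to $NM^{-\alpha}$ rather than the lemma's $2M^{-\alpha}$. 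Neither of these changes the $M$ or $s$ scaling, but they should be stated honestly if you carry out the computation.
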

This lemma follows by applying Theorem \ref{cov_thm} to the truncated random vector $\bfz = \bbi_{\omega}\inner{\bfy_0}{\bfy}\bfy$ where $\omega$ is the event that $\|\inner{\bfy_0}{\bfy}\bfy\|_2\leq Cs^{3/2}\log M/\sqrt{M}$; details are in Appendix \ref{appdx:cov_lemma}.

We can derive an analogous bound for the unweighted sample covariance:
\begin{lma}\label{sig_d_corr}
Recall that $\HSig = \frac{1}{N}\bfY\bfY^T = \frac{1}{N}\sum_{i=1}^N\bfy_i\bfy_i^T$ with $E\HSig = E\bfy\bfy^T = \frac{s}{K}\bfD\bfD^T$. Then for $\bfD \in \mcG$ and any constant $\alpha \geq 1$,
\[
\left\|\frac{1}{N}\bfY\bfY^T - E\bfy\bfy^T\right\|_2 =\left\|\HSig - \frac{s}{K}\bfD\bfD^T\right\|_2\leq \frac{Cs\sqrt{\alpha\log M}}{\sqrt{M}\sqrt{N}} + \frac{Cs\alpha\log M}{\sqrt{N}}
\]
and
\[
\left\|\frac{1}{N}\bfY\bfY^T - E\bfy\bfy^T\right\|_F = \left\|\HSig - \frac{s}{K}\bfD\bfD^T\right\|_F \leq \frac{Cs\sqrt{\alpha\log M}}{\sqrt{N}} + \frac{Cs\alpha\log M\sqrt{M}}{\sqrt{N}}
\]
with probability at most $1-2M^{-\alpha}$.
\end{lma}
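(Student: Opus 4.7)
The plan is to apply Theorem \ref{cov_thm} directly to the i.i.d.\ samples $\bfy_i = \bfD\bfx_i$ and then pass from the operator-norm estimate to the Frobenius-norm estimate using the dimension bound on rank. No truncation is needed because $\bfy$ is bounded almost surely, which is the main structural difference (and simplification) compared to Lemma \ref{cov_lemma}.

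First I would verify the hypothesis of Theorem \ref{cov_thm}. Since nonzero entries of $\bfX$ are $\pm 1$, every column satisfies $\|\bfx_i\|_2 = \sqrt{s}$ deterministically, and property \ref{C3:y_norm} yields $\|\bfy_i\|_2 \leq \tfrac{3}{2}\sqrt{s}$ almost surely. A direct trace computation gives
\[
E\|\bfy\|_2^2 = \mathrm{tr}(E\bfy\bfy^T) = \tfrac{s}{K}\mathrm{tr}(\bfD\bfD^T) = \tfrac{s}{K}\sum_{k=1}^K\|\bfd_k\|_2^2 = s
\]
since the columns of $\bfD$ are unit vectors, so Theorem \ref{cov_thm} applies with absolute constant $\kappa = 3/2$. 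Next I would bound the population covariance: by property \ref{C1:D_norm}, $\|E\bfy\bfy^T\|_2 = \tfrac{s}{K}\|\bfD\bfD^T\|_2 \leq Cs/M$.

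Plugging into Theorem \ref{cov_thm} with $t = \alpha\log M$ and using $E\HSig = E\bfy\bfy^T = \tfrac{s}{K}\bfD\bfD^T$, with probability at least $1-2M^{-\alpha}$ I obtain
\[
\bigl\|\HSig - \tfrac{s}{K}\bfD\bfD^T\bigr\|_2 \leq \frac{Cs}{M}\left(\sqrt{\frac{M\alpha\log M}{N}} + \frac{M\alpha\log M}{N}\right) = \frac{Cs\sqrt{\alpha\log M}}{\sqrt{MN}} + \frac{Cs\alpha\log M}{N}.
\]
Replacing the $1/N$ in the second summand by the weaker $1/\sqrt{N}$ (valid for $N \geq 1$) yields the stated operator-norm bound; the phrase ``probability at most'' in the statement is evidently a typo for ``at least.'' For the Frobenius bound I would apply the standard inequality $\|\bfA\|_F \leq \sqrt{\mathrm{rank}(\bfA)}\,\|\bfA\|_2 \leq \sqrt{M}\,\|\bfA\|_2$ to the $M \times M$ matrix $\bfA = \HSig - \tfrac{s}{K}\bfD\bfD^T$ and multiply the operator-norm bound by $\sqrt{M}$.

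There is no serious obstacle: this lemma is a clean direct application of the same covariance estimation tool used elsewhere in the paper, with property \ref{C3:y_norm} providing the almost-sure bound on $\|\bfy\|_2$ with a constant $\kappa$. The only item worth double-checking is that $E\|\bfy\|_2^2 = s$ exactly, which follows from unit-norm columns; were the columns only approximately unit-norm, one would need to track an extra multiplicative constant, but this does not arise here.
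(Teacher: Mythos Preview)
Your proposal is correct and matches the paper's approach exactly: the paper states that this lemma ``is proven by immediate application of Theorem \ref{cov_thm}, so a detailed proof is omitted,'' and your verification of the $\kappa$-boundedness via \ref{C3:y_norm}, the trace computation $E\|\bfy\|_2^2 = s$, the bound $\|E\bfy\bfy^T\|_2 \leq Cs/M$ via \ref{C1:D_norm}, and the passage to the Frobenius norm via $\|\bfA\|_F \leq \sqrt{M}\|\bfA\|_2$ are precisely the details one would fill in. Your observations about the weakening $1/N \to 1/\sqrt{N}$ and the ``at most'' typo are also correct.
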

This is proven by immediate application of Theorem \ref{cov_thm}, so a detailed proof is omitted.

Having proven the above bounds, proving convergence of $\HSig_0^\text{proj}$ amounts to an extended computation with the triangle inequality, which can be found in Appendix \ref{appdx:empCnvgc}. Combined with Lemma \ref{lemma:exp_detail}, these computations complete the proof of Lemma \ref{lemma:empCnvgc}.

\subsection{Bounding Subspace Error}
It remains to demonstrate that the first $s$ eigenvectors of $\HSig_0^\text{proj}$ span a subspace close to $\mcS_0 = \vspan\{\bfd_k\}_{k\in\Omega_0}$. It is easy to see that this holds in an asymptotic sense: by \ref{G1:DDT_norm}, we know $\bfD\bfD^T\bfy_0/\|\bfD\bfD^T\bfy_0\|_2 \to \bfy_0/\|\bfy_0\|_2$, while $\sum_{k\in\Omega_0}\bfd_k\bfd_k^T$ will be close to an identity matrix on the subspace $\mcS_0$. However, acquiring quantitative bounds on the subspace distance is more challenging and requires some technical machinery. 

We begin by introducing the following notation: given a matrix $\bfA \in \R^{M\times M}$, let $\lambda_1(\bfA),\ldots,\lambda_M(\bfA)$ be the eigenvalues of $\bfA$ in descending order. Similarly, for $m \in \{1,\ldots, M\}$ let $\mcS_{m}(\bfA)$ denote the subspace spanned by the eigenvectors of $\bfA$ corresponding to $\lambda_1(\bfA), \ldots, \lambda_m(\bfA)$.

We will prove the following result:
\begin{thm}\label{thm:subspace}
Let $\mcS_0 = \vspan\{\bfd_k\}_{k\in\Omega_0}$ and le $\alpha \geq 1$ be constant. As long as
\[
\tilde{\varepsilon} = \frac{\alpha Ks\log^3 M}{M^2} + \frac{Ks^2\sqrt{\alpha\log^3 M}}{M^{3/2}\sqrt{N}} + \frac{s^5\sqrt{\alpha\log M}}{M^3\sqrt{N}}.
\]
is sufficiently small, then with with probability at least $1-2N\exp(-s^2/K)-(2K+4)M^{-\alpha}$,
\[
\mcD\left(\mcS_0, \mcS_s\left(\HSig_0^\text{proj}\right)\right) \leq \frac{C\sqrt{M}}{\sqrt{K}} + C\tilde{\varepsilon}.
\]
\end{thm}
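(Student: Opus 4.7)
The approach is to introduce the auxiliary matrix
\[
\mathbf{M} := \tfrac{s-1}{K-1}\bfv_0\bfv_0^T + \sum_{k\in\Omega_0}\bfd_k\bfd_k^T,
\]
for which Lemma~\ref{lemma:empCnvgc} guarantees $\|\tfrac{K}{s}\HSig_0^{\text{proj}} - \mathbf{M}\|_2 \leq C\tilde{\varepsilon}$. Since for equal-dimensional subspaces $\mcD$ coincides with $\|\bfP_{\mcS_1} - \bfP_{\mcS_2}\|_2$, it satisfies the triangle inequality, and it suffices to bound $\mcD(\mcS_0, \mcS_s(\mathbf{M}))$ and $\mcD(\mcS_s(\mathbf{M}), \mcS_s(\HSig_0^{\text{proj}}))$ separately. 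The second will come from a direct Davis--Kahan $\sin\Theta$ application once a constant eigengap for $\mathbf{M}$ is in hand, producing the $C\tilde{\varepsilon}$ term; the harder first bound is the source of the $\sqrt{M}/\sqrt{K}$ term.

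To analyze $\mathbf{M}$, I decompose $\bfv_0 = \bfu + \bfw$ with $\bfu := \bfP_{\mcS_0}\bfv_0$ and $\bfw := (\bfI - \bfP_{\mcS_0})\bfv_0$. Property $\mcG.1$ together with $\mcC.3$ yields $\|\bfw\|_2 \leq C\sqrt{Ks/M}$, while $\mcC.1$ and $\mcG.1$ give $\|\bfv_0\|_2 \geq cK\sqrt{s}/M$ once $M/K$ is small, so $\|\bfw\|_2/\|\bfv_0\|_2 \leq C\sqrt{M/K}$. Because the image of $\mathbf{M}$ lies in the $(s+1)$-dimensional subspace $\mcT := \mcS_0 + \vspan(\bfw)$, its entire nonzero spectrum acts on $\mcT$. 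Writing $\bfA := \sum_{k\in\Omega_0}\bfd_k\bfd_k^T = \bfD_\Omega\bfD_\Omega^T$, Weyl interlacing for rank-one nonnegative perturbations gives $\lambda_s(\mathbf{M}) \geq \lambda_s(\bfA) \geq (1-\delta_{2s})^2$ via $2s$-RIP ($\mcG.3$). For $\lambda_{s+1}(\mathbf{M})$, I take the unit witness $\bfz^* \in \vspan(\bfu,\bfw) \cap \bfv_0^\perp$; a direct calculation gives $\|\bfP_{\mcS_0}\bfz^*\|_2 = \|\bfw\|_2/\|\bfv_0\|_2$, and since $\bfv_0^T\bfz^* = 0$ annihilates the rank-one term, the upper RIP bound yields $\lambda_{s+1}(\mathbf{M}) \leq \bfz^{*T}\bfA\bfz^* \leq (1+\delta_{2s})^2\|\bfw\|_2^2/\|\bfv_0\|_2^2 \leq C'M/K$. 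The eigengap of $\mathbf{M}$ at index $s$ is therefore bounded below by a positive constant whenever $M/K$ is sufficiently small (which is implicit in the conclusion being non-trivial). Propagating the gap to $\tfrac{K}{s}\HSig_0^{\text{proj}}$ by Weyl and applying Davis--Kahan then gives $\mcD(\mcS_s(\mathbf{M}), \mcS_s(\HSig_0^{\text{proj}})) \leq C\tilde{\varepsilon}$.

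For the first bound, note that $\mcS_0$ and $\mcS_s(\mathbf{M})$ are both $s$-dimensional subspaces of the $(s+1)$-dimensional space $\mcT$, so their orthogonal complements inside $\mcT$ are $1$-dimensional, spanned respectively by $\bfw$ and by the bottom eigenvector $\bfz_{s+1}$ of $\mathbf{M}|_\mcT$. A short computation then identifies $\mcD(\mcS_0, \mcS_s(\mathbf{M})) = \|\bfP_{\mcS_0}\bfz_{s+1}\|_2$. The lower $2s$-RIP bound gives $\bfz_{s+1}^T\bfA\bfz_{s+1} \geq (1-\delta_{2s})^2\|\bfP_{\mcS_0}\bfz_{s+1}\|_2^2$, while the nonnegativity of the rank-one term gives $\bfz_{s+1}^T\bfA\bfz_{s+1} \leq \bfz_{s+1}^T\mathbf{M}\bfz_{s+1} = \lambda_{s+1}(\mathbf{M}) \leq C'M/K$. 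Rearranging yields $\|\bfP_{\mcS_0}\bfz_{s+1}\|_2 \leq C\sqrt{M/K}$, and the triangle inequality completes the proof.

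The main obstacle is that the rank-one term $\tfrac{s-1}{K-1}\bfv_0\bfv_0^T$ carries an eigenvalue of order $s^2K/M^2$, dwarfing the $\bigO{1}$ eigenvalues coming from $\bfA$ and obstructing any naive Davis--Kahan comparison of $\mathbf{M}$ against $\bfA$ or against a rank-$s$ ``ideal'' surrogate supported on $\mcS_0$: such a comparison has perturbation norm far larger than the available eigengap. The trick that sidesteps this is the observation that $\mathbf{M}$ nevertheless has rank at most $s+1$ because its image is confined to $\mcT$; the $(s+1)$-th eigenvalue is therefore pinned down by the Rayleigh quotient of any vector in $\mcT$, and choosing a witness in $\mcT \cap \bfv_0^\perp$ simultaneously annihilates the large rank-one contribution and---by the geometry of this intersection---forces the witness to have projection onto $\mcS_0$ of size exactly $\|\bfw\|_2/\|\bfv_0\|_2 \leq C\sqrt{M/K}$, precisely the scale that appears in the theorem.
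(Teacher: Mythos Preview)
Your proof is correct and is in fact cleaner than the paper's. Both arguments rest on the same two pillars---Lemma~\ref{lemma:empCnvgc} for the $C\tilde{\varepsilon}$ term, and the observation that $\mathbf{M}$ has rank at most $s+1$ since its image is confined to $\mcT = \mcS_0 + \vspan(\bfw)$---but they diverge in how they extract the $\sqrt{M/K}$ bound and the eigengap. The paper packages this as an abstract ``spike lemma'' (Lemma~\ref{lemma:spikeLemma}) about a rank-$s$ PSD matrix plus a large rank-one perturbation, whose proof first isolates the top eigenvector of $\mathbf{M}$ (showing it is $O(\varepsilon/Z)$-close to $\bfv_0/\|\bfv_0\|$), then handles the remaining $s-1$ eigenvectors by conjugating with bases of $\bfv_0^\perp$ versus $(\bfv_0+\bfw')^\perp$ via Lemma~\ref{basis_lemma}, with a further reduction from general $\bfA$ to $\bfA = \bfI_s$ by a square-root conjugation. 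Your approach bypasses all of this: the witness $\bfz^\ast \in \vspan(\bfu,\bfw)\cap \bfv_0^\perp$ simultaneously kills the rank-one term and has $\|\bfP_{\mcS_0}\bfz^\ast\|_2 = \|\bfw\|_2/\|\bfv_0\|_2$, giving $\lambda_{s+1}(\mathbf{M}) \leq CM/K$ in one line; Weyl interlacing gives $\lambda_s(\mathbf{M}) \geq \lambda_s(\bfA) \geq (1-\delta_s)^2$; and the codimension-one geometry inside $\mcT$ reduces $\mcD(\mcS_0,\mcS_s(\mathbf{M}))$ to $\|\bfP_{\mcS_0}\bfz_{s+1}\|_2$, which the two-sided RIP sandwich on $\bfA$ bounds by $\sqrt{\lambda_{s+1}(\mathbf{M})}/(1-\delta_s)$. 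This is a genuinely more elementary route: it avoids the top-eigenvector analysis, the basis-change lemma, and the square-root conjugation, and delivers the eigengap and the subspace bound essentially simultaneously. One minor remark: your invocation of the triangle inequality for $\mcD$ via $\mcD(\mcS_1,\mcS_2) = \|\bfP_{\mcS_1}-\bfP_{\mcS_2}\|_2$ for equal-dimensional subspaces is a standard Grassmannian fact but is not stated in the paper (Lemma~\ref{lemma:subspace_norm} only gives the one-sided inequality with a factor of $2$); it would be worth a one-line citation or remark.
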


We can recognize the $\tilde{\varepsilon}$ in this bound as the bound from Lemma \ref{lemma:empCnvgc}, while the $\sqrt{M}/\sqrt{K}$ term arises from the deviation of $\bfv_0$ from a multiple of $\bfy_0$. To prove the theorem, we can write the result of Lemma \ref{lemma:empCnvgc} as
\[
\HSig_0^{\text{proj}} = \frac{s}{K}\left(\bfv_0\bfv_0^T + \sum_{k \in \Omega_0}\bfd_k\bfd_k^T + \mcE\right)
\]
where $\mcE$ is a symmetric matrix with norm bounded by $C\tilde{\epsi}$ by Lemma \ref{lemma:empCnvgc}. We will subsequently ignore the outer factor of $s/K$ as this has no effect on the matrix's invariant subspaces.

We claim that the spectral properties of $\frac{s-1}{K-1}\bfv_0\bfv_0^T + \sum_{k\in\Omega_0}\bfd_k\bfd_k^T$ are essentially the same as those of $\bfy_0\bfy_0^T + \sum_{k\in\Omega_0}\bfd_k\bfd_k^T$, which clearly has $s$-leading subspace $\mcS_0$. If this holds, it then follows from a routine application of the Davis-Kahan theorem on $\mcE$ that $\HSig_0^{\text{proj}}$ have lead $s$ eigenvectors approximately spanning $\mcS_0$ with an additional error proportional to $\tilde{\varepsilon}$ as in the theorem. Therefore, proving the theorem reduces to proving the following lemma:

\begin{restatable}{lma}{subspaceLemma}\label{subspace_lemma}
Let
\[
\bfB = \frac{s-1}{K-1}\bfv_0\bfv_0^T + \sum_{k\in\Omega_0}\bfd_k\bfd_k^T
\]
Then for all $i \leq s$, $\lambda_i(\bfB) > c$ and
for all $i > s$, $\lambda_i(\bfB) \leq C\sqrt{M}/\sqrt{K}$. Moreover, recalling that $\mcS_s(\bfB)$ is the subspace spanned by the leading $s$ eigenvectors of $\bfB$, we have
\[
\mcD(\mcS_0, \mcS_s\left(\bfB\right)) \leq \frac{C\sqrt{M}}{\sqrt{K}}.
\]
\end{restatable}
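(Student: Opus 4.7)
The plan is to exploit the rank structure of $\bfB$: both summands in $\bfB = \frac{s-1}{K-1}\bfv_0\bfv_0^T + \sum_{k\in\Omega_0}\bfd_k\bfd_k^T$ have range inside the $(s+1)$-dimensional subspace $\mcT := \mcS_0 + \vspan\{\bfv_0^\perp\}$, where $\bfv_0^\perp := (\bfI-\bfP_{\mcS_0})\bfv_0$ and $\bfv_0^\parallel := \bfP_{\mcS_0}\bfv_0$. Since $\bfy_0\in\mcS_0$, the identity $\bfv_0^\perp = (\bfI-\bfP_{\mcS_0})(\bfD\bfD^T - (K/M)\bfI)\bfy_0$ combined with \ref{G1:DDT_norm} and \ref{C3:y_norm} bounds $\|\bfv_0^\perp\|_2 \leq C\sqrt{Ks/M}$; a matching lower bound gives $\|\bfv_0\|_2 \geq c(K/M)\sqrt{s}$ once $K/M$ exceeds a fixed constant, so that $\|\bfv_0^\perp\|_2/\|\bfv_0\|_2 \leq C\sqrt{M/K}$. (For smaller $K/M$, $\sqrt{M/K}$ is bounded away from zero and the conclusions are trivial; likewise the degenerate case $\bfv_0^\perp = 0$ is vacuous.)

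The lower bound $\lambda_i(\bfB) \geq c$ for $i \leq s$ is immediate from Weyl's inequality applied to $\bfB$ as a PSD perturbation of $\sum_{k\in\Omega_0}\bfd_k\bfd_k^T$, combined with the RIP hypothesis \ref{G2:s_rip}, which gives $\lambda_i\bigl(\sum_{k\in\Omega_0}\bfd_k\bfd_k^T\bigr) \geq (1-\delta_s)^2 \geq (7/8)^2$ for $i \leq s$. The delicate step is bounding $\lambda_{s+1}(\bfB)$. I test the quadratic form against the unit vector
\[
\bfe := (\|\bfv_0^\perp\|_2\,\hat{\bfp} - \|\bfv_0^\parallel\|_2\,\bfw)/\|\bfv_0\|_2 \in \mcT, \qquad \hat{\bfp}:=\bfv_0^\parallel/\|\bfv_0^\parallel\|_2,\quad \bfw:=\bfv_0^\perp/\|\bfv_0^\perp\|_2,
\]
which by construction is orthogonal to $\bfv_0$. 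This orthogonality kills the rank-one summand, leaving $\bfB\bfe = (\|\bfv_0^\perp\|_2/\|\bfv_0\|_2)\bigl(\sum_{k\in\Omega_0}\bfd_k\bfd_k^T\bigr)\hat{\bfp}$ and hence $\bfe^T\bfB\bfe \leq C\|\bfv_0^\perp\|_2^2/\|\bfv_0\|_2^2 \leq CM/K$. Since $\bfB$ also annihilates $\mcT^\perp$ (both summands do: $\bfd_k,\bfv_0 \in \mcT$), Courant--Fischer applied to the $(M-s)$-dimensional subspace $\mcT^\perp \oplus \vspan\{\bfe\}$ yields $\lambda_{s+1}(\bfB) \leq CM/K \leq C\sqrt{M/K}$.

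For the subspace distance, let $\bfw'$ denote a unit eigenvector of $\bfB$ associated with $\lambda_{s+1}(\bfB)$; then $\bfw' \in \mcT$ (as an eigenvector for a nonzero eigenvalue), and $\mcS_s(\bfB) = \mcT \ominus \vspan\{\bfw'\}$. Because any $\bfz \in \mcS_0 \subseteq \mcT$ has vanishing $\mcT^\perp$-component, $\mcD(\mcS_0,\mcS_s(\bfB)) = \|\bfP_{\mcS_0}\bfw'\|_2$. I decompose $\bfe = \bfe_{\mathrm{top}} + \bfe_{\mathrm{bot}}$ along $\mcS_s(\bfB) \oplus \vspan\{\bfw'\}$; the eigenspaces are orthogonal, so the cross term vanishes and the earlier bound $\bfe^T\bfB\bfe \leq C\|\bfv_0^\perp\|_2^2/\|\bfv_0\|_2^2$ combined with $\bfu^T\bfB\bfu \geq (7/8)^2\|\bfu\|_2^2$ on $\mcS_s(\bfB)$ forces $\|\bfe_{\mathrm{top}}\|_2 \lesssim \|\bfv_0^\perp\|_2/\|\bfv_0\|_2$ and hence $\|\bfe_{\mathrm{bot}}\|_2 \geq 1/\sqrt{2}$. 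Writing $\bfw' = \pm\bfe_{\mathrm{bot}}/\|\bfe_{\mathrm{bot}}\|_2$ and noting $\|\bfP_{\mcS_0}\bfe\|_2 = \|\bfv_0^\perp\|_2/\|\bfv_0\|_2$ by construction, the triangle inequality closes the bound: $\|\bfP_{\mcS_0}\bfw'\|_2 \lesssim \|\bfv_0^\perp\|_2/\|\bfv_0\|_2 \leq C\sqrt{M/K}$.

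The main obstacle is obtaining the tight $\sqrt{M/K}$ scaling. A naive Davis--Kahan comparison of $\bfB$ with $\bfB_0 := \sum_{k\in\Omega_0}\bfd_k\bfd_k^T + \frac{s-1}{K-1}\bfv_0^\parallel(\bfv_0^\parallel)^T$ (whose top-$s$ eigenspace equals $\mcS_0$ exactly) produces a perturbation of operator norm on the order of $\frac{s}{K}\|\bfv_0\|_2\|\bfv_0^\perp\|_2 \sim \sqrt{K}s^2/M^{3/2}$, which is far too large in the almost-linear regime. The role of the test vector $\bfe$ is precisely to kill the dominant rank-one summand $\frac{s-1}{K-1}\bfv_0\bfv_0^T$ exactly, so that only the $O(1)$ matrix $\sum_{k\in\Omega_0}\bfd_k\bfd_k^T$ acts on $\bfe$, which is what produces the sharp $\sqrt{M/K}$ bound.
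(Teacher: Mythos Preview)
Your argument is correct and takes a genuinely different route from the paper. The paper factors the problem through an auxiliary ``spike lemma'' (Lemma~\ref{lemma:spikeLemma}), which handles a general rank-$s$ PSD matrix $\bfA$ plus a rank-one spike $Z(\bfv+\varepsilon\bfu)(\bfv+\varepsilon\bfu)^T$; its proof tracks the lead eigenvector explicitly, then controls the remaining $s-1$ eigenvectors by a basis-perturbation lemma (Lemma~\ref{basis_lemma}) and Davis--Kahan on the compressed matrices $\bfE^T\bfB\bfE$. This yields the eigenvalue tail bound $\lambda_{s+1}(\bfB)\le 24\varepsilon\sim\sqrt{M/K}$. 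By contrast, your approach is a direct variational argument: you build a single unit test vector $\bfe\in\mcT$ exactly orthogonal to $\bfv_0$, so the rank-one term vanishes identically and only $\sum_{k\in\Omega_0}\bfd_k\bfd_k^T$ acts on $\bfe$. Courant--Fischer on $\mcT^\perp\oplus\vspan\{\bfe\}$ then gives $\lambda_{s+1}(\bfB)\le C(\|\bfv_0^\perp\|_2/\|\bfv_0\|_2)^2\sim M/K$, which is in fact sharper than the paper's $\sqrt{M/K}$. For the subspace distance you again avoid Davis--Kahan entirely by reading off $\|\bfP_{\mcS_0}\bfw'\|_2$ from the decomposition $\bfe=\bfe_{\mathrm{top}}+\bfe_{\mathrm{bot}}$, using the eigenvalue gap already established. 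Your route is shorter, more elementary, and self-contained; the paper's spike lemma is more modular and reusable for other rank-one perturbation settings, at the cost of a longer and more delicate proof.
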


The key ingredients in our proof are Weyl's and the Davis-Kahan theorem; the proof amounts to applying these to our particular situation and is left for Appendix \ref{appdx:subspace_lemma}. This lemma completes the proof of Theorem \ref{thm:subspace_recovery}.

\subsection{Guarantees for subspace intersection}

We now prove Theorem \ref{int_lemma}, which states that the intersection step with close enough estimated subspaces $\hmcS_1, \ldots, \hmcS_\ell$ accurately approximates the intersection of true subspaces $\mcS_1,\ldots,\mcS_\ell$. The result follows from the following lemma:
\begin{restatable}{lma}{separationLemma}\label{separationLemma}
Let $\bfD \in \mcG$ and suppose $\Omega_1,\Omega_2$ are disjoint, at-most-$s$-element subsets of $\{1,2,\ldots,K\}$. Let $\mcS_1 = \vspan\{\bfd_k\}_{k\in\Omega_1}$ and $\mcS_j = \vspan\{\bfd_m\}_{m\in\Omega_2}$. Then \[
\mcD(\mcS_1, \mcS_2) \geq 1 - 3\delta_{2s}^2 \geq 1 - \frac{Cs\log^2 M}{M}.
\]
\end{restatable}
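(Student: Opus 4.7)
The strategy is to use restricted orthogonality (Lemma \ref{lemma:rop}) on vectors coming from the disjoint supports $\Omega_1$ and $\Omega_2$. The key identity is that for any $\bfz \in \mcS_1$ with $\|\bfz\|_2 = 1$, writing $\bfw = \proj{\mcS_2}{\bfz}$, we have $\inner{\bfz}{\bfw} = \|\bfw\|_2^2$. If I can bound $|\inner{\bfz}{\bfw}|$ above by something proportional to $\delta_{2s}\|\bfw\|_2$, then I get an upper bound on $\|\bfw\|_2$ itself, which by Pythagoras yields the desired lower bound on $\|\bfz - \bfw\|_2$.

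More concretely, let $\bfD_1$ and $\bfD_2$ denote the submatrices of $\bfD$ with columns indexed by $\Omega_1$ and $\Omega_2$ respectively. For any unit $\bfz \in \mcS_1$, write $\bfz = \bfD_1\bfa$ for some coefficient vector $\bfa$, and similarly $\bfw = \proj{\mcS_2}{\bfz} = \bfD_2\bfb$. The $2s$-RIP property \ref{G2:s_rip} gives $\|\bfa\|_2 \leq \|\bfz\|_2/(1-\delta_{2s})$ and $\|\bfb\|_2 \leq \|\bfw\|_2/(1-\delta_{2s})$, since both $\bfa$ and $\bfb$ are at-most-$s$-sparse vectors in $\R^K$ when zero-padded. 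Since $\Omega_1 \cap \Omega_2 = \emptyset$, the padded versions of $\bfa$ and $\bfb$ have disjoint support, so Lemma \ref{lemma:rop} gives
\[
|\inner{\bfz}{\bfw}| = |\inner{\bfD\bfa}{\bfD\bfb}| \leq \delta_{2s}\|\bfa\|_2\|\bfb\|_2 \leq \frac{\delta_{2s}}{(1-\delta_{2s})^2}\|\bfz\|_2\|\bfw\|_2.
\]
Combining this with $\inner{\bfz}{\bfw} = \|\bfw\|_2^2$ and dividing by $\|\bfw\|_2$ (the case $\bfw = \bfzo$ trivially gives $\|\bfz - \bfw\|_2 = 1$), we obtain
\[
\|\proj{\mcS_2}{\bfz}\|_2 \leq \frac{\delta_{2s}}{(1-\delta_{2s})^2}.
\]

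Since $\bfz - \bfw$ is orthogonal to $\bfw$, Pythagoras gives $\|\bfz - \bfw\|_2^2 = 1 - \|\bfw\|_2^2 \geq 1 - \delta_{2s}^2/(1-\delta_{2s})^4$. Using $\sqrt{1-x} \geq 1-x$ on $[0,1]$ and the hypothesis $\delta_{2s} \leq 1/8$ from \ref{G2:s_rip} (which gives $(1-\delta_{2s})^{-4} \leq (8/7)^4 < 3$), we conclude
\[
\mcD(\mcS_1, \mcS_2) \geq \|\bfz - \proj{\mcS_2}{\bfz}\|_2 \geq 1 - \frac{\delta_{2s}^2}{(1-\delta_{2s})^4} \geq 1 - 3\delta_{2s}^2.
\]
The final inequality $1 - 3\delta_{2s}^2 \geq 1 - Cs\log^2 M/M$ is then an immediate consequence of the dictionary assumption $\delta_{2s} \leq B\sqrt{s}\log M/\sqrt{M}$ in \ref{G2:s_rip}.

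The only real subtlety is recognizing that the projection $\bfw$ lives entirely in $\mcS_2$, so its coefficient vector when written over $\bfD_2$ is $s$-sparse when embedded in $\R^K$, which is exactly what restricted orthogonality requires. Everything else is a routine chain of RIP bounds plus Pythagoras, so no step should present a genuine obstacle.
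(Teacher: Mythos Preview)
Your proof is correct and follows essentially the same approach as the paper's: both use restricted orthogonality (Lemma~\ref{lemma:rop}) together with the $2s$-RIP bound \ref{G2:s_rip} to control inner products between $\mcS_1$ and $\mcS_2$, then convert to a projection-norm bound via Pythagoras. The only cosmetic difference is that the paper maximizes $|\inner{\bfz_1}{\bfz_2}|$ over unit vectors $\bfz_2 \in \mcS_2$ and bounds $\delta_{2s}/(1-\delta_{2s})^2 \leq 2\delta_{2s}$ before squaring, whereas you work directly with the projection $\bfw = \proj{\mcS_2}{\bfz}$ and the identity $\inner{\bfz}{\bfw} = \|\bfw\|_2^2$; the resulting numerical chain is the same.
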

The proof of this lemma is a straightforward application of the $2s$-RIP of $\bfD$ and can be found in Appendix \ref{appdx:separationLemma}. It follows from Lemma \ref{separationLemma} that for large enough $M$, two subspaces $\mcS_i$ and $\mcS_j$ contain vectors closer than a constant threshold if and only if they share support ($|\Omega_i \cap \Omega_j| \geq 1$). Since this result holds with high probability, this holds for all pairs $i,j$ simultaneously. Since this holds for pairwise intersections, the analogous result holds for $\ell$-wise intersections as well.

We can now prove Theorem \ref{intersection_thm}. By  Lemma \ref{separationLemma} above, $\mcD(\hmcS_i,\hmcS_j)$ will be small if and only if samples $\bfy_i$ and $\bfy_j$ share support. Theorem \ref{thm:subspace_recovery} then provides the quantitative bound of order $\varepsilon$ for a single intersection; since there are at most $\ell$ intersections, the triangle inequality bounds the total error at $C\ell\varepsilon \leq C\epsi \log M$. This completes the proof of Theorem \ref{intersection_thm}.

\subsection{Guarantees for Oracle Refinement}

In this section, we prove Theorem \ref{thm:oracle_ref}, which shows how we can boost an initial estimate obtained by SSDL to a much stronger one by the oracle refinement procedure. The key to this step is to show that we can accurately detect the support of each sample $\bfy_i$ by looking at the norm of the projections $\|\bfP_{\hmcS_i}\hbfd_k\|_2$, with large norm indicating a nonzero coefficient. In the following results and proofs, we assume that the previous step returned an estimate of $\bfd_k$ as opposed to $-\bfd_k$, but the result applies equally in the opposite case.

To do this, we will employ the following lemma with proof in the appendix:

\begin{restatable}{lma}{subspaceNorm}
\label{lemma:subspace_norm}
Let $\mcS_1, \mcS_2$ be subspaces of $\R^M$ with associated projection matrices $\bfP_2$ and $\bfP_2$. Then we have
\[
\|\bfP_{1} - \bfP_{2}\|_2 \leq 2\mcD(\mcS_1, \mcS_2)
\]
\end{restatable}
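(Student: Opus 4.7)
The plan is to derive the bound from a standard algebraic decomposition of $\bfP_1 - \bfP_2$ combined with a direct application of the definition of subspace distance. The key identity is
\[
\bfP_1 - \bfP_2 = (\bfI - \bfP_2)\bfP_1 - \bfP_2(\bfI - \bfP_1),
\]
obtained by inserting $\pm\,\bfP_2\bfP_1$. Applying the triangle inequality for the operator norm gives
\[
\|\bfP_1 - \bfP_2\|_2 \leq \|(\bfI - \bfP_2)\bfP_1\|_2 + \|\bfP_2(\bfI - \bfP_1)\|_2,
\]
so the task reduces to bounding each of these two summands by $\mcD(\mcS_1, \mcS_2)$.

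For the first summand, I would note that for every unit vector $\bfw \in \R^M$, the vector $\bfP_1\bfw$ lies in $\mcS_1$ with $\|\bfP_1\bfw\|_2 \leq 1$, so the definition of $\mcD$ yields
\[
\|(\bfI - \bfP_2)\bfP_1\|_2 = \sup_{\|\bfw\|_2=1}\|(\bfI-\bfP_2)\bfP_1\bfw\|_2 \leq \sup_{\bfu \in \mcS_1,\, \|\bfu\|_2 = 1}\|(\bfI-\bfP_2)\bfu\|_2 = \mcD(\mcS_1, \mcS_2).
\]
For the second summand, I would use $\|\bfP_2(\bfI-\bfP_1)\|_2 = \|[(\bfI-\bfP_1)\bfP_2]^{T}\|_2 = \|(\bfI-\bfP_1)\bfP_2\|_2$ (projection matrices being symmetric) and then run the analogous argument with the roles of $\mcS_1$ and $\mcS_2$ swapped, producing the bound $\mcD(\mcS_2, \mcS_1)$.

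The only remaining step, and the only one that requires any care, is to establish the symmetry $\mcD(\mcS_1, \mcS_2) = \mcD(\mcS_2, \mcS_1)$, which holds because the subspace distance is only defined between subspaces of equal dimension. I would prove this by observing that $\bfE_1^T\bfF_2\bfF_2^T\bfE_1 = \bfI_s - (\bfE_1^T\bfE_2)(\bfE_1^T\bfE_2)^T$ (using $\bfF_2\bfF_2^T = \bfI - \bfE_2\bfE_2^T$), with the analogous identity for $\bfE_2^T\bfF_1\bfF_1^T\bfE_2$; since $AA^T$ and $A^TA$ share the same nonzero eigenvalues for $A = \bfE_1^T\bfE_2$, the two Gram matrices have identical spectra, hence $\|\bfF_2^T\bfE_1\|_2 = \|\bfF_1^T\bfE_2\|_2$. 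Combining this with the two triangle-inequality bounds yields $\|\bfP_1 - \bfP_2\|_2 \leq 2\mcD(\mcS_1, \mcS_2)$, as claimed.
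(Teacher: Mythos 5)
Your proof is correct but takes a genuinely different route. The paper works vector-by-vector: it fixes a unit vector $\bfx$, splits into the cases $\bfx \in \mcS_1$ and $\bfx \perp \mcS_1$, and for the perpendicular case runs a geometric triangle-inequality argument on the normalized projection $\bfy = \bfP_2\bfx/\|\bfP_2\bfx\|_2$ to conclude $\|\bfP_2\bfx\|_2 \leq \sqrt{2}\,\mcD$, then combines the two cases for generic $\bfx$. You instead work at the operator level, using the identity $\bfP_1 - \bfP_2 = (\bfI-\bfP_2)\bfP_1 - \bfP_2(\bfI-\bfP_1)$ plus a transpose to reduce to $\mcD(\mcS_1,\mcS_2) + \mcD(\mcS_2,\mcS_1)$, and then prove symmetry of $\mcD$ via the equal-spectra fact $\mathrm{spec}(AA^T)=\mathrm{spec}(A^TA)$ for $A=\bfE_1^T\bfE_2$. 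Both routes deliver the factor $2$, but yours is arguably cleaner: it requires no case split, and it makes explicit the symmetry $\mcD(\mcS_1,\mcS_2)=\mcD(\mcS_2,\mcS_1)$ that the paper's proof in fact relies on silently (the paper bounds $\|\bfy-\bfP_1\bfy\|_2$ with $\bfy\in\mcS_2$ ``by definition'' even though the definition of $\mcD(\mcS_1,\mcS_2)$ takes the supremum over $\mcS_1$, so an appeal to symmetry is buried there). Your symmetry lemma is a worthwhile fact to have recorded in its own right.
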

With this lemma in hand, we can now show that thresholding the projections accurately detects the support of $\bfy_i$:

\begin{lma}[Oracle support]\label{lemma:or_support}
Let
\[
\max_{i=1,\ldots,N} \mcD(\mcS_i, \hmcS_i) \leq 1/8 \hspace{10pt} \text{ and } \hspace{10pt}\max_{k=1,\ldots,K} \|\hbfd_k - \bfd_k\|_2 \leq 1/8.
\]
Then for all $i$ and $k$, $\|\bfP_{\hmcS_i}\hbfd_k\| > 1/2$ if and only if $k \in \Omega_i$.
\end{lma}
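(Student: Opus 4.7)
The statement is of the form ``$\|\bfP_{\hmcS_i}\hbfd_k\|_2 > 1/2$ iff $k \in \Omega_i$,'' so I will prove the two directions separately by treating $\bfP_{\hmcS_i}\hbfd_k$ as a perturbation of $\bfP_{\mcS_i}\bfd_k$ and controlling the perturbation through Lemma \ref{lemma:subspace_norm}. Specifically, the triangle inequality gives
\[
\bigl|\|\bfP_{\hmcS_i}\hbfd_k\|_2 - \|\bfP_{\mcS_i}\bfd_k\|_2\bigr|
 \leq \|\bfP_{\hmcS_i} - \bfP_{\mcS_i}\|_2\,\|\bfd_k\|_2 + \|\bfP_{\hmcS_i}\|_2\,\|\hbfd_k - \bfd_k\|_2
 \leq 2\mcD(\mcS_i,\hmcS_i) + \|\hbfd_k - \bfd_k\|_2,
\]
which is bounded by a small constant under the hypothesis that both $\mcD(\mcS_i,\hmcS_i)$ and $\|\hbfd_k-\bfd_k\|_2$ are at most $1/8$. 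The whole task therefore reduces to showing that $\|\bfP_{\mcS_i}\bfd_k\|_2$ takes distinctly different values depending on whether $k \in \Omega_i$.

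For the direction $k \in \Omega_i$, the argument is immediate: $\bfd_k \in \mcS_i = \vspan\{\bfd_m\}_{m\in\Omega_i}$, so $\bfP_{\mcS_i}\bfd_k = \bfd_k$ and $\|\bfP_{\mcS_i}\bfd_k\|_2 = 1$. Combined with the perturbation bound above, $\|\bfP_{\hmcS_i}\hbfd_k\|_2$ stays comfortably bounded away from $1/2$ (e.g.\ at least $5/8$).

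The main work is the direction $k \notin \Omega_i$, where I need to show $\|\bfP_{\mcS_i}\bfd_k\|_2$ is small. Here I use the $2s$-RIP of $\bfD$ (property \ref{G2:s_rip}) together with Lemma \ref{lemma:rop}. Writing $\bfP_{\mcS_i}\bfd_k = \bfD\bfa$ with $\bfa$ supported on $\Omega_i$, the fact that $k\notin\Omega_i$ means $\bfe_k$ and $\bfa$ have disjoint supports of size at most $s$, so Lemma \ref{lemma:rop} yields
\[
\|\bfP_{\mcS_i}\bfd_k\|_2^2 = \langle\bfd_k,\bfP_{\mcS_i}\bfd_k\rangle = \langle \bfD\bfe_k, \bfD\bfa\rangle \leq \delta_{2s}\|\bfe_k\|_2\|\bfa\|_2 .
\]
A further application of RIP to bound $\|\bfa\|_2 \leq \|\bfD\bfa\|_2/(1-\delta_{2s}) = \|\bfP_{\mcS_i}\bfd_k\|_2/(1-\delta_{2s})$ and cancellation give $\|\bfP_{\mcS_i}\bfd_k\|_2 \leq \delta_{2s}/(1-\delta_{2s})$. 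Since $\delta_{2s} < 1/8$ by \ref{G2:s_rip}, this is at most $1/7$, and the perturbation estimate then pushes $\|\bfP_{\hmcS_i}\hbfd_k\|_2$ below $1/2$.

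The main obstacle is the arithmetic margin in Case 2: the projection $\|\bfP_{\mcS_i}\bfd_k\|_2$, the subspace perturbation, and the vector perturbation must together stay strictly below $1/2$. This forces me to use Lemma \ref{lemma:subspace_norm} in its sharpest form (for equal-dimensional subspaces the factor of $2$ can be tightened to $1$, since $\|\bfP_1 - \bfP_2\|_2 = \mcD(\mcS_1,\mcS_2)$). With the tighter constant, the bound in Case 2 becomes $1/7 + 1/8 + 1/8 = 11/28 < 1/2$, with matching improvement in Case 1, and the separation is decisive. A union bound over $i = 1,\ldots,N$ and $k = 1,\ldots,K$ is not needed here since the hypothesis already holds uniformly in $i$ and $k$.
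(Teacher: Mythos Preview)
Your approach is essentially the same as the paper's: the same decomposition $\bfP_{\hmcS_i}\hbfd_k = \bfP_{\mcS_i}\bfd_k + (\bfP_{\hmcS_i}-\bfP_{\mcS_i})\bfd_k + \bfP_{\hmcS_i}(\hbfd_k-\bfd_k)$, the same use of Lemma~\ref{lemma:subspace_norm} to control the projection perturbation, and the same RIP-based argument for $k\notin\Omega_i$ (the paper states the bound $\|\bfP_{\mcS_i}\bfd_k\|_2 \leq C\sqrt{s}\log M/\sqrt{M}$ without details; your derivation via Lemma~\ref{lemma:rop} and cancellation makes this explicit). The only real difference is how the arithmetic is closed in the $k\notin\Omega_i$ case: the paper keeps the factor $2$ from Lemma~\ref{lemma:subspace_norm}, obtains a total perturbation of $3/8$, and then invokes ``$M$ sufficiently large'' so that $\|\bfP_{\mcS_i}\bfd_k\|_2$ is strictly below $1/8$; you instead sharpen Lemma~\ref{lemma:subspace_norm} to $\|\bfP_1-\bfP_2\|_2 = \mcD(\mcS_1,\mcS_2)$ for equal-dimensional subspaces, which is legitimate and makes the inequality hold non-asymptotically with the fixed bound $1/7$. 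Both routes are valid; yours avoids the implicit dependence on large $M$ at the cost of appealing to a fact slightly stronger than what the paper's lemma literally provides.
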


\begin{proof}
We write:
\[
\bfP_{\hmcS_i}\hbfd_k = \bfP_{\mcS_i}\bfd_k + (\bfP_{\hmcS_i} -\bfP_{\mcS_i})\bfd_k + \bfP_{\hmcS_i}(\hbfd_k - \bfd_k).
\]
Since $\|\bfP_{\hmcS_i}\|_2 = 1$, 
\[
\|\bfP_{\hmcS_i}(\hbfd_k - \bfd_k)\|_2 \leq \|\hbfd_k - \bfd_k\|_2 \leq 1/8.
\]
Next, we note that by Lemma \ref{lemma:subspace_norm}, $\|(\bfP_{\hmcS_i} -\bfP_{\mcS_i})\bfd_k\|_2 \leq 1/4$. Accordingly,
\[
\|\bfP_{\mcS_i}\bfd_k\|_2 - 3\varepsilon \leq \|\bfP_{\hmcS_i}\hbfd_k\|_2 \leq \|\bfP_{\mcS_i}\bfd_k\|_2 + 3/8.
\]
As long as $k \in \Omega_i$, $\|\bfP_{\mcS_i}\bfd_k\|_2 = 1$, meaning $\|\bfP_{\hmcS_i}\hbfd_k\|_2 > 1/2$. By contrast, when $k \notin \Omega_i$, \ref{G2:s_rip} guarantees that $\|\bfP_{\mcS_i}\bfd_k\|_2 \leq \frac{C\sqrt{s}\log M}{\sqrt{M}} < 1/8$, meaning $\|\bfP_{\hmcS_i}\hbfd_k\|_2 < 1/2$ for $M$ sufficiently large. This completes the proof.
\end{proof}

This lemma indicates that as long as our initial estimate $\hbfD$ is sufficiently accurate---which it will be in high enough dimensions by Theorem \ref{thm:ssdl}---we can recover the support of each sample exactly. It remains to show that, with this information, we can construct a superior estimator to that recovered by SSDL.  We proceed similarly to the proof of Theorem \ref{thm:subspace_recovery}, beginning with the following lemmas on the expectation $\bfV_k = E[\bfy\bfy^T|k\in\Omega]$:

\begin{restatable}{lma}{oracleCovExp}\label{lemma:oracle_cov_exp}
For all $k$,
\[
\bfV_k = E[\bfy\bfy^T|k\in\Omega] = \left(1-\frac{s-1}{K-1}\right)\bfd_k\bfd_k^T + \frac{s-1}{K-1}\bfD\bfD^T.
\]
\end{restatable}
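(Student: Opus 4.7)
The plan is to compute $E[\bfy\bfy^T \mid k \in \Omega]$ by a direct expansion in the dictionary basis, using the fact that under the $\mcX(W)$ model the signs of the nonzero coefficients are independent of the support $\Omega$. Writing $\bfy = \bfD\bfx = \sum_{m \in \Omega} x_m \bfd_m$ and expanding,
\[
\bfy\bfy^T = \sum_{m,m' \in \Omega} x_m x_{m'}\, \bfd_m \bfd_{m'}^T.
\]
Because the nonzero entries of $\bfx$ are i.i.d.\ symmetric $\pm 1$ random variables and are independent of $\Omega$, we have $E[x_m x_{m'} \mid \Omega] = \bbi_{m = m'}$ for $m, m' \in \Omega$. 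All off-diagonal cross-terms therefore vanish and
\[
E[\bfy\bfy^T \mid \Omega] = \sum_{m \in \Omega} \bfd_m \bfd_m^T.
\]

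Next I would take the expectation over $\Omega$ conditional on $k \in \Omega$. Split the sum into the deterministic term $m = k$ and the random contribution from $m \neq k$. Since $\Omega$ is uniform among $s$-element subsets of $\{1,\ldots,K\}$, a simple counting argument gives
\[
P(m \in \Omega \mid k \in \Omega) = \frac{s-1}{K-1} \quad \text{for } m \neq k.
\]
Thus
\[
E\!\left[\sum_{m \in \Omega,\, m \neq k} \bfd_m \bfd_m^T \;\Big|\; k \in \Omega\right]
= \frac{s-1}{K-1} \sum_{m \neq k} \bfd_m \bfd_m^T
= \frac{s-1}{K-1}\bigl(\bfD\bfD^T - \bfd_k \bfd_k^T\bigr).
\]
Adding back the deterministic $\bfd_k \bfd_k^T$ and regrouping gives
\[
\bfV_k = \bfd_k\bfd_k^T + \frac{s-1}{K-1}\bigl(\bfD\bfD^T - \bfd_k\bfd_k^T\bigr) = \left(1 - \frac{s-1}{K-1}\right)\bfd_k \bfd_k^T + \frac{s-1}{K-1}\bfD\bfD^T,
\]
which is the desired identity.

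There is no real obstacle here: the proof is a two-line computation that only uses (i) symmetry and independence of the coefficients (which zeros out the off-diagonal $x_m x_{m'}$ terms) and (ii) the uniform distribution of $\Omega$ to evaluate the pairwise inclusion probability. The only care needed is to keep the conditional expectations in the correct order (tower property: condition first on $\Omega$, then on the event $\{k \in \Omega\}$).
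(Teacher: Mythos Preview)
Your proof is correct and follows essentially the same approach as the paper: expand $\bfy\bfy^T$ in the dictionary basis, use the symmetry of the coefficients to eliminate cross terms, and then compute the conditional inclusion probability $P(m\in\Omega\mid k\in\Omega)=(s-1)/(K-1)$ to finish. If anything, your version is slightly tidier in making the tower property explicit.
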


This lemma is a computation and its proof is deferred to the appendix. We now bound the error which results from applying covariance projection to this matrix:

\begin{lma}
For all $k$,
\[
\|\proj{\bfD\bfD^T}{\bfd_k\bfd_k^T}\|_2 \leq \frac{CK}{M^{3/2}}.
\]
\end{lma}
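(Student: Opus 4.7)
The plan is to unfold the definition of Frobenius projection and then bound each ingredient using the good-dictionary consequences $\mcC.1$ and $\mcC.2$. By definition,
\[
\proj{\bfD\bfD^T}{\bfd_k\bfd_k^T} = \frac{\inner{\bfd_k\bfd_k^T}{\bfD\bfD^T}_F}{\|\bfD\bfD^T\|_F^2}\,\bfD\bfD^T,
\]
so taking operator norms gives
\[
\left\|\proj{\bfD\bfD^T}{\bfd_k\bfd_k^T}\right\|_2 = \frac{\left|\inner{\bfd_k\bfd_k^T}{\bfD\bfD^T}_F\right|}{\|\bfD\bfD^T\|_F^2}\,\|\bfD\bfD^T\|_2.
\]
Thus the proof reduces to bounding these three scalar quantities separately.

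Next I would compute the Frobenius inner product using the trace identity:
\[
\inner{\bfd_k\bfd_k^T}{\bfD\bfD^T}_F = \mathrm{tr}\bigl(\bfd_k\bfd_k^T\bfD\bfD^T\bigr) = \bfd_k^T\bfD\bfD^T\bfd_k = \|\bfD^T\bfd_k\|_2^2.
\]
Since $\|\bfd_k\|_2 = 1$, this is bounded above by $\|\bfD\bfD^T\|_2 \leq CK/M$ using $\mcC.1$. For the denominator, $\mcC.2$ gives $\|\bfD\bfD^T\|_F^2 \geq c^2K^2/M$, and another application of $\mcC.1$ controls the final factor by $CK/M$.

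Substituting these three bounds yields
\[
\left\|\proj{\bfD\bfD^T}{\bfd_k\bfd_k^T}\right\|_2 \;\leq\; \frac{CK/M}{c^2K^2/M}\cdot\frac{CK}{M} \;=\; \frac{C'}{M}.
\]
Since $K > M$, we have $K/M^{3/2} \geq 1/M$, so $C'/M \leq CK/M^{3/2}$, completing the proof. There is no real obstacle here beyond bookkeeping: the Frobenius projection has a rank-one structure along $\bfD\bfD^T$, and every factor is directly controlled by properties already guaranteed by $\mcG$. The only point requiring a moment of care is that one must use the operator-norm bound $\|\bfD\bfD^T\|_2 \leq CK/M$ rather than the naive coherence bound $\mcG.2$ when estimating $\|\bfD^T\bfd_k\|_2^2$, since the latter would introduce an extraneous $\log^2 M$ factor.
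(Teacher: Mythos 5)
Your proof is correct and takes essentially the same route as the paper: unfold the definition of Frobenius projection into a ratio of three scalar quantities and bound each via $\mcC.1$ and $\mcC.2$. One small but substantive observation: you correctly used $\|\bfD\bfD^T\|_F^2$ in the denominator, which gives the tighter bound $C/M$; the paper's proof appears to have dropped the square in the first display (writing $\|\bfD\bfD^T\|_F$ rather than $\|\bfD\bfD^T\|_F^2$), which is exactly what produces the looser $CK/M^{3/2}$ appearing in the lemma statement. Your observation that $C/M \leq CK/M^{3/2}$ because $K > M > \sqrt{M}$ correctly reconciles the two. The paper's proof also has a harmless slip in the inner-product computation (writing $\inner{\bfd_1\bfd_1^T}{\bfD\bfD^T} = 1 + \bfd_1^T\bfD\bfD^T\bfd_1$, where the ``$1+$'' is extraneous), whereas your computation via $\mathrm{tr}(\bfd_k\bfd_k^T\bfD\bfD^T) = \bfd_k^T\bfD\bfD^T\bfd_k \leq \|\bfD\bfD^T\|_2$ is clean. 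Your aside about avoiding the coherence bound $\mcG.2$ to dodge a $\log^2 M$ factor is sensible, though moot here since the operator-norm bound is the natural choice.
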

\begin{proof}
Again, without loss of generality set $k = 1$. We have
\[
\|\proj{\bfD\bfD^T}{\bfd_1\bfd_1^T}\|_2 = \frac{\inner{\bfd_1\bfd_1^T}{\bfD\bfD^T}_F}{\|\bfD\bfD^T\|_F}\|\bfD\bfD^T\|_2 \leq \frac{\inner{\bfd_1\bfd_1^T}{\bfD\bfD^T}_F}{\sqrt{M}}
\]
where the last inequality uses \ref{C1:D_norm} and \ref{C2:D_norm_F}. Further, we can write
\[D
\inner{\bfd_1\bfd_1^T}{\bfD\bfD^T} = \sum_{k=1}^K\inner{\bfd_1}{\bfd_k}^2 = 1 + \sum_{k=1}^K \inner{\bfd_1}{\bfd_k}^2 = 1 + \bfd_1^T\bfD\bfD^T\bfd_1 \leq \frac{CK}{M}
\]
by \ref{G1:DDT_norm}. This completes the proof.
\end{proof}

Together with Lemma \ref{lemma:oracle_cov_exp} (oracle support), this proves that in expectation, covariance projection yields a nearly rank-one matrix in the direction of $\bfd_k\bfd_k^T$:
\[
\left\|E\tbfV_k - \proj{\bfD\bfD^T}{E\tbfV_k} - \left(1-\frac{s-1}{K-1}\right)\bfd_k\bfd_k^T\right\|_2 \leq \frac{CK}{M^{3/2}}
\]
It remains to show that we can estimate the quantity $E[\bfy\bfy^T|k\in\Omega]$ with sufficiently many samples. From Lemma \ref{lemma:or_support} (oracle support), when the initial error $\varepsilon$ is small enough, the oracle projection step exactly identifies the support of each sample $\bfy_i$, so we can proceed as though the support were known exactly. Let $N_k$ denote the number of samples containing dictionary element $k$ and let $\tbfV_k$ be the sampled version of $\bfV_k = E[\bfy\bfy^T|k\in\Omega]$, that is:
\[
\tbfV_k = \frac{1}{N_k}\sum_{k \in \Omega_i} \bfy_i\bfy_i^T.
\]
By Lemma \ref{lemma:or_support}, oracle refinement recovers support exactly, so this definition of $\tbfV_k$ is equivalent to that which can be computed directly using the sets $\tO_i$ in place of $\Omega_i$.
\begin{lma}\label{lemma:oracle_emp}
Let $\alpha$ be any constant. Then for all $k$,
\[
\|\tbfV_k - \bfV_k\|_2 \leq \frac{C\sqrt{\alpha KM \log M}}{\sqrt{Ns}} + \frac{CK\alpha M \log M}{Ns}
\]
with probability at least $1-K\exp(-Ns/(10K)) - KM^{-\alpha}$.
\end{lma}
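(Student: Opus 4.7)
The plan is to apply the general covariance estimation bound (Theorem \ref{cov_thm}) to the conditional random vector $\bfz := \bfy \mid k\in\Omega$. Fixing $k\in\{1,\ldots,K\}$, since the supports $\{\Omega_i\}$ are i.i.d., conditional on the set $\{i : k\in\Omega_i\}$ of size $N_k$, the samples $\{\bfy_i : k\in\Omega_i\}$ are $N_k$ i.i.d. copies of $\bfz$. Moreover, $E\bfz\bfz^T = \bfV_k$ and the sample covariance of these $N_k$ vectors is precisely $\tbfV_k$. Thus it suffices to apply Theorem \ref{cov_thm} to this conditional sample, then handle the randomness in $N_k$ via Lemma \ref{lemma:nk}, and finally union-bound over $k$.

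To invoke Theorem \ref{cov_thm}, I need to verify its two hypotheses with suitable constants. First, property \ref{C3:y_norm} gives $\sqrt{s}/2 \leq \|\bfy\|_2 \leq 3\sqrt{s}/2$ deterministically; hence $\|\bfz\|_2 \leq 3\sqrt{s}/2$ and $E\|\bfz\|_2^2 \geq s/4$, so $\|\bfz\|_2 \leq \kappa\sqrt{E\|\bfz\|_2^2}$ with $\kappa = 3$, an absolute constant. Second, Lemma \ref{lemma:oracle_cov_exp} and \ref{C1:D_norm} give
\[
\|\bfV_k\|_2 \leq \left(1-\frac{s-1}{K-1}\right) + \frac{s-1}{K-1}\|\bfD\bfD^T\|_2 \leq 1 + \frac{Cs}{M} \leq C,
\]
since $s < M$. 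Applying Theorem \ref{cov_thm} to the $N_k$ conditional samples with $t = \alpha\log M$, we obtain, with conditional probability at least $1-2M^{-\alpha}$,
\[
\|\tbfV_k - \bfV_k\|_2 \leq C\left(\sqrt{\frac{M\alpha\log M}{N_k}} + \frac{M\alpha\log M}{N_k}\right).
\]

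Next, Lemma \ref{lemma:nk} yields $N_k \geq Ns/(2K)$ for all $k$ simultaneously with probability at least $1-K\exp(-Ns/(10K))$. On this event, substituting $N_k \geq Ns/(2K)$ into the display above gives
\[
\|\tbfV_k - \bfV_k\|_2 \leq \frac{C\sqrt{\alpha KM\log M}}{\sqrt{Ns}} + \frac{CK\alpha M\log M}{Ns},
\]
which is precisely the claimed bound. A union bound over $k\in\{1,\ldots,K\}$ in the concentration step contributes a failure probability of $2KM^{-\alpha}$, while the $N_k$ event contributes $K\exp(-Ns/(10K))$; absorbing the factor $2$ into the constant yields the stated overall probability.

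There is no substantial obstacle here: the lemma is essentially a direct instantiation of the general covariance tail bound. The only mildly delicate point is justifying that the conditional samples indexed by $\{i : k\in\Omega_i\}$ are i.i.d. from $\bfz$'s law, which follows immediately from the product structure of the $\mcX(W)$ distribution (independence across $i$ combined with independence of $\bfx_i$'s coefficient magnitudes from $\Omega_i$), so that Theorem \ref{cov_thm} applies verbatim after conditioning on $N_k$.
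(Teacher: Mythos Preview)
Your proposal is correct and follows essentially the same approach as the paper: condition on $N_k$, apply Theorem \ref{cov_thm} to the conditional vector $\bfy\mid k\in\Omega$ with a constant $\kappa$ (justified via the RIP/\ref{C3:y_norm} bound on $\|\bfy\|_2$), choose $t=\alpha\log M$, and then undo the conditioning using Lemma \ref{lemma:nk}. You supply more detail than the paper (explicitly bounding $\|\bfV_k\|_2$ via Lemma \ref{lemma:oracle_cov_exp} and justifying the i.i.d.\ structure of the conditional sample), but the argument is the same.
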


\begin{proof}
Condition on $N_k$. By the $s$-RIP of $\bfD$ (\ref{G2:s_rip}), we know that $\|\bfy\|^2_2 = \|\bfD\bfx\|^2_2 \leq C\|\bfx\|^2_2 = Cs$; accordingly we can apply theorem \ref{cov_thm} to the random vector $\bfy$ conditional on $\bfd_k\in\Omega$ with $\kappa = C$. The result follows from the choice of $t = \alpha \log M$ and applying Lemma \ref{lemma:nk} to undo the conditioning on $N_k$.
\end{proof}

The proof of Theorem \ref{thm:oracle_ref} now amounts to stitching together the above bounds. Details can be found in Appendix \ref{appdx:oracle_ref}.


\subsection{Guarantees for Oracle Averaging}

In this section, we prove Theorem \ref{thm:oracle_avg}. We begin with the following result, which shows that columns of $\tbfD$ are precise enough to detect not only the support of columns of $\bfX$ but also the sign of the nonzero elements of $\bfX$:
\begin{lma}[Oracle signs]
Suppose $\|\tbfd_k - \bfd_k\|_2 \leq 3/(8\sqrt{s})$. Then for any $i$ such that $k \in \Omega_i$, $\sign{\inner{\tbfd_k}{\bfy_i}} = \sign{x_{ik}}$.
\end{lma}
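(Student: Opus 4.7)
The approach is to decompose $\inner{\tbfd_k}{\bfy_i}$ into a signal term proportional to $x_{ik}$ plus two small error terms, then show the errors are strictly smaller than the signal. Let $\bfe = \tbfd_k - \bfd_k$ so $\|\bfe\|_2 \leq 3/(8\sqrt{s})$, and let $\bfx_i^{(-k)}$ denote $\bfx_i$ with its $k$-th coordinate zeroed, so that $\bfy_i = x_{ik}\bfd_k + \bfD\bfx_i^{(-k)}$. Then
\[
\inner{\tbfd_k}{\bfy_i} = x_{ik}\inner{\tbfd_k}{\bfd_k} + \inner{\bfd_k}{\bfD\bfx_i^{(-k)}} + \inner{\bfe}{\bfD\bfx_i^{(-k)}}.
\]
Since $\bfd_k$ is a unit vector, $\inner{\tbfd_k}{\bfd_k} = 1 + \inner{\bfe}{\bfd_k} \geq 1 - \|\bfe\|_2 \geq 5/8$, so the signal term has sign $\sign{x_{ik}}$ and magnitude at least $5/8$.

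The two error terms are bounded separately. The $\bfe$-part is controlled deterministically using the $2s$-RIP of $\bfD$ (\ref{G2:s_rip}): $\|\bfD\bfx_i^{(-k)}\|_2 \leq (1+\delta_{2s})\sqrt{s-1} \leq (9/8)\sqrt{s-1}$, so Cauchy--Schwarz yields
\[
|\inner{\bfe}{\bfD\bfx_i^{(-k)}}| \leq \|\bfe\|_2\|\bfD\bfx_i^{(-k)}\|_2 \leq \frac{3}{8\sqrt{s}}\cdot\frac{9}{8}\sqrt{s-1} \leq \frac{27}{64}.
\]
For the remaining term $R_i := \inner{\bfd_k}{\bfD\bfx_i^{(-k)}} = \sum_{j\in\Omega_i\setminus\{k\}} x_{ij}\inner{\bfd_k}{\bfd_j}$, I exploit the fact that, conditional on $\bfD$, $\Omega_i$, and $x_{ik}$, the coefficients $\{x_{ij}\}_{j\in\Omega_i\setminus\{k\}}$ are independent Rademacher signs. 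Row-incoherence (\ref{G2:1_rip}) gives $\sum_{j\in\Omega_i\setminus\{k\}}\inner{\bfd_k}{\bfd_j}^2 \leq C s\log^2 M/M$, so Hoeffding's inequality yields
\[
\BBP{|R_i| > 1/8 \,\big|\, \bfD, \Omega_i, x_{ik}} \leq 2\exp\left(-\frac{cM}{s\log^2 M}\right).
\]
In the sparsity regime of Theorem \ref{master_thm}, $s\log^2 M/M \leq C/\log^{4+2\eta} M$, so this failure probability decays faster than any $M^{-\alpha}$, and a union bound over $i\in\{1,\ldots,N\}$ and $k\in\{1,\ldots,K\}$ fits within the probability budget of Theorem \ref{thm:oracle_avg}.

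Combining, with high probability $|\inner{\bfe}{\bfD\bfx_i^{(-k)}}| + |R_i| \leq 27/64 + 1/8 = 35/64 < 5/8$, strictly smaller than the magnitude of the signal term. Hence $\inner{\tbfd_k}{\bfy_i}$ retains the sign of $x_{ik}\inner{\tbfd_k}{\bfd_k}$, namely $\sign{x_{ik}}$. The main obstacle is that a purely deterministic bound on $R_i$ via restricted orthogonality (Lemma \ref{lemma:rop}) gives $|R_i|\leq \delta_{2s}\sqrt{s-1} \leq Cs\log M/\sqrt{M}$, which grows like $\sqrt{M}/\log^{5+2\eta}M$ in the near-linear sparsity regime and is far too large; exploiting the randomness of the $\pm 1$ coefficients $x_{ij}$ via Hoeffding, together with \ref{G2:1_rip}, is essential to obtain a sub-constant bound on the error.
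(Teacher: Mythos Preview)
Your decomposition and the Cauchy--Schwarz bound on the $\bfe$-term match the paper's proof exactly. The difference is in the cross-term $R_i = \inner{\bfd_k}{\bfy_i - x_{ik}\bfd_k}$: the paper asserts $|R_i| \leq \delta_{2s}$ deterministically ``by \ref{G2:s_rip}'', whereas you invoke Hoeffding on the Rademacher signs $x_{ij}$ together with the incoherence bound \ref{G2:1_rip}. As you correctly point out, restricted orthogonality (Lemma~\ref{lemma:rop}) actually gives $|R_i| \leq \delta_{2s}\sqrt{s-1}$, not $\delta_{2s}$, and this diverges in the near-linear regime; the paper's stated bound therefore appears to be a slip. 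Your probabilistic argument is the right repair, and in fact it is precisely the content of the paper's own Lemma~\ref{lemma:nuisance} (whose proof is described there as ``an immediate application of \ref{G2:1_rip} and Hoeffding's inequality''), so you are not introducing any machinery foreign to the paper.

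The only cost is that the conclusion now holds with high probability over the coefficients rather than deterministically for every $i$. Since the lemma is only used inside Theorem~\ref{thm:oracle_avg}, which already carries a failure probability, and since the extra term $NK\exp\bigl(-c\log^{4+2\eta}M\bigr)$ is dominated by the existing $NKM^{-\alpha}$ budget, this is harmless.
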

We note that this result assumes that the previous steps recovered an estimate of $\bfd_k$ rather than of $-\bfd_k$; the latter case is identical except signs will be reversed.
\begin{proof}
Without loss of generality, assume $x_{ik} = 1$. We can write
\[
\inner{\tbfd_k}{\bfy_i} = 1 + \inner{\bfd_k}{\bfy_i - \bfd_k} + \inner{\bfd_k - \tbfd_k}{\bfy_i}
\]
By the Cauchy-Schwarz inequality and \ref{C3:y_norm}, $|\inner{\bfd_k - \tbfd_k}{\bfy_i}| \leq \|\bfd_k-\tbfd_k\|_2\|\bfy_i\|_2 \leq 2\varepsilon < 6/8$. Likewise by \ref{G2:s_rip}, $|\inner{\bfd_k}{\bfy_i - \bfd_k}| \leq \delta_{2s} \leq C\sqrt{s}\log M/\sqrt{M} < 1/8$. The result follows.
\end{proof}

This result guarantees that with high probability, Algorithm \ref{algo:oracle_averaging} recovers the signs of nonzero entries of $\bfX$ exactly. Accordingly, the averaging process of Algorithm \ref{algo:oracle_averaging} amounts to simple averaging of all samples $\bfy_i$ with support containing $k$. We have the following result:
\begin{lma}\label{lemma:avg}
Let $\bfD \in \R^{M\times K}$ be a dictionary satisfying $\mcG$, and for each $k=1,\ldots,K$ let
\[
\bbfd_k = \frac{1}{N_k}\sum_{k\in\Omega_i}x_{ik}\bfy_i.
\]
Then for each $\alpha \leq Ns/(2K)$,
\[
\|\bbfd_k - \bfd_k\|_2 \leq \left(\frac{3}{2}+3\sqrt{\alpha\log M}\right)\frac{\sqrt{K}}{\sqrt{N}}
\]
for all $k$ with probability at least $1-K\exp(-Ns/(10K)) - KM^{-\alpha}$.
\end{lma}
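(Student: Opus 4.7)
My plan is to decompose $\bbfd_k - \bfd_k$ as an average of independent mean-zero bounded vectors, apply a vector concentration inequality, and combine with Lemma~\ref{lemma:nk} to control $N_k$.

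First, I would exploit $x_{ik}^2 = 1$ (nonzero entries are $\pm 1$) to rewrite, for every $i$ with $k\in\Omega_i$,
\[
x_{ik}\bfy_i = x_{ik}\sum_{m\in\Omega_i}x_{im}\bfd_m = \bfd_k + \sum_{m\in\Omega_i\setminus\{k\}}x_{ik}x_{im}\bfd_m =: \bfd_k + \bfz_i,
\]
so that $\bbfd_k - \bfd_k = \tfrac{1}{N_k}\sum_{i:\,k\in\Omega_i}\bfz_i$. I would then condition on the supports $\{\Omega_i\}_{i=1}^N$ (hence on $N_k$ and on which indices contribute). Conditionally the $\bfz_i$ are independent across $i$: they are mean-zero because $x_{ik}x_{im}$ is a symmetric $\pm 1$ variable for $m \neq k$, they obey the deterministic bound $\|\bfz_i\|_2 \leq (1+\delta_{2s})\sqrt{s-1} \leq \tfrac{9}{8}\sqrt{s}$ by the $2s$-RIP (\ref{G2:s_rip}), and the cross terms vanish so $E\|\bfz_i\|_2^2 = \sum_{m\in\Omega_i\setminus\{k\}}\|\bfd_m\|_2^2 = s-1$.

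Writing $S_k := \sum_{i:\,k\in\Omega_i}\bfz_i$, I would then invoke a vector concentration bound---for instance, McDiarmid's inequality applied to the $1$-Lipschitz functional $(\bfz_1,\ldots,\bfz_{N_k})\mapsto\|S_k\|_2$, or equivalently a dimension-free Pinelis-type Hilbert-space Bernstein bound---combined with Jensen's estimate $E\|S_k\|_2 \leq \sqrt{N_k(s-1)}$. This yields
\[
\|\bbfd_k - \bfd_k\|_2 = \tfrac{1}{N_k}\|S_k\|_2 \leq \sqrt{s/N_k}\bigl(1+C\sqrt{\alpha\log M}\bigr)
\]
with conditional probability at least $1-M^{-\alpha}$, for an absolute constant $C$; tracking the RIP-induced factor of $\tfrac{9}{8}$ in the a.s.\ bound on $\|\bfz_i\|_2$ through McDiarmid yields $C \leq 9/4$, which translates to the coefficient $3$ in the stated inequality.

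Finally, Lemma~\ref{lemma:nk} converts $\sqrt{s/N_k}$ into the deterministic $\sqrt{2K/N} \leq \tfrac{3}{2}\sqrt{K/N}$ on an event of probability at least $1 - K\exp(-Ns/(10K))$, and a union bound over $k=1,\ldots,K$ aggregates the per-index Bernstein failure probabilities into the stated $KM^{-\alpha}$. The main obstacle here is almost entirely bookkeeping: one must verify that the conditional independence, mean-zero, and boundedness assumptions truly hold under the chosen conditioning, and select a concentration inequality whose constants fit under the coefficients $\tfrac{3}{2}$ and $3$ of the target bound. The essential qualitative input is that the variance proxy driving concentration is the scalar trace $\sum_i E\|\bfz_i\|_2^2 = N_k(s-1)$, which after substituting $N_k \geq Ns/(2K)$ produces exactly the $\sqrt{K/N}$ scaling---using only the operator-norm covariance $\|E\bfz\bfz^T\|_2 \leq Cs/M$ would miss the correct scaling by a factor of $\sqrt{M}$ unless the scalar transpose-direction term is reintroduced.
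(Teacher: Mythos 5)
Your proposal is correct and is essentially the same argument as the paper's: the same decomposition $x_{ik}\bfy_i - \bfd_k = \bfD(\bfx_i - \bfe_k)$ into independent, mean-zero, RIP-bounded increments, the same conditioning on supports, a dimension-free vector tail bound (the paper uses Gross's vector Bernstein inequality, Theorem~\ref{thm:vectorBernstein}, which is exactly the ``Jensen mean plus sub-Gaussian tail'' form you describe), and then Lemma~\ref{lemma:nk} plus a union bound over $k$. The one small variation is that you compute $E\|\bfz_i\|_2^2 = s-1$ exactly from cancellation of cross terms, whereas the paper simply sandwiches $\|\bfz_i\|_2$ between $\tfrac{1}{2}\sqrt{s}$ and $\tfrac{3}{2}\sqrt{s}$ by the $2s$-RIP; both serve the same purpose and give the same $\sqrt{K/N}$ scaling.
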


In our proof, we employ the following theorem, due to Gross \citep{vectorBernstein}:
\begin{thm}[Vector Bernstein Inequality \citep{vectorBernstein}]\label{thm:vectorBernstein}
Let $\bfz_1, \ldots, \bfz_n$ be independent, bounded, mean zero random vectors with norm bounded by $\gamma > 0$. Let
\[
V = \sum_{i=1}^n E\|\bfz_i\|_2^2.
\]
Then for all $t \leq V/\gamma$,
\[
\BBP{\left\|\sum_{i=1}^n \bfz_i\right\|_2 \geq \sqrt{V} + t} \leq \exp\left(\frac{-t^2}{4V}\right)
\]
\end{thm}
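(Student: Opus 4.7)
The plan is to decompose $\bbfd_k - \bfd_k$ as the average of i.i.d.\ mean-zero bounded random vectors, and then apply the Vector Bernstein Inequality (Theorem \ref{thm:vectorBernstein}) just quoted. Since the nonzero entries of $\bfX$ satisfy $x_{ik}^2 = 1$, for each $i$ with $k \in \Omega_i$ we have
\[
x_{ik}\bfy_i \;=\; x_{ik}\sum_{m\in\Omega_i} x_{im}\bfd_m \;=\; \bfd_k \;+\; x_{ik}\sum_{m\in\Omega_i\setminus\{k\}} x_{im}\bfd_m.
\]
Writing $\bfz_i := x_{ik}\sum_{m\in\Omega_i\setminus\{k\}} x_{im}\bfd_m$, we therefore have $\bbfd_k - \bfd_k = \frac{1}{N_k}\sum_{i:\,k\in\Omega_i}\bfz_i$. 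The whole proof then reduces to bounding the norm of this sum of $\bfz_i$'s.

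I would condition on the index set $\mcA_k := \{i : k \in \Omega_i\}$ and on the signs $\{x_{ik}\}_{i\in\mcA_k}$, leaving the random subsets $\Omega_i\setminus\{k\}$ (uniform $(s-1)$-element subsets of $\{1,\dots,K\}\setminus\{k\}$) and the signs $\{x_{im}\}_{m\in\Omega_i\setminus\{k\}}$ as the remaining independent randomness across $i\in\mcA_k$. Each $\bfz_i$ has mean zero by sign symmetry, and $\|\bfz_i\|_2 = \|\sum_{m\in\Omega_i\setminus\{k\}} x_{im}\bfd_m\|_2 \leq (1+\delta_{s-1})\sqrt{s-1}$ by the $2s$-RIP of $\bfD$ (\ref{G2:s_rip}), giving a uniform bound $\gamma := C\sqrt{s}$. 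Expanding and using sign independence, together with $\|\bfd_m\|_2 = 1$, gives $E\|\bfz_i\|_2^2 = s-1$, so $V = \sum_{i\in\mcA_k}E\|\bfz_i\|_2^2 \leq N_k s$.

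Applying Theorem \ref{thm:vectorBernstein} with $t = 2\sqrt{V\alpha \log M}$ yields
\[
\Big\|\sum_{i\in\mcA_k}\bfz_i\Big\|_2 \;\leq\; \sqrt{V} + 2\sqrt{V\alpha \log M} \;\leq\; \sqrt{N_k s}\bigl(1 + 2\sqrt{\alpha\log M}\bigr)
\]
with conditional probability at least $1-M^{-\alpha}$. The Bernstein hypothesis $t \leq V/\gamma$ translates into $4\alpha\log M \lesssim N_k$, which is guaranteed (up to absolute constants) by the assumption $\alpha \leq Ns/(2K)$ combined with the $N_k \geq Ns/(2K)$ bound of Lemma \ref{lemma:nk}. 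Dividing by $N_k$ gives
\[
\|\bbfd_k - \bfd_k\|_2 \;\leq\; \frac{\sqrt{s}}{\sqrt{N_k}}\bigl(1 + 2\sqrt{\alpha\log M}\bigr),
\]
and substituting $N_k \geq Ns/(2K)$ from Lemma \ref{lemma:nk} produces the claimed bound (after adjusting constants). Finally, I would union-bound over $k\in\{1,\ldots,K\}$ the Bernstein failure event (contributing $KM^{-\alpha}$) and absorb the $N_k$ lower bound via Lemma \ref{lemma:nk} (contributing $K\exp(-Ns/(10K))$), yielding the stated probability.

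The main obstacle is purely bookkeeping: verifying the validity of the Vector Bernstein hypothesis $t \leq V/\gamma$ under the stated assumption $\alpha \leq Ns/(2K)$, and being careful about the conditioning on $\mcA_k$ so that the $\bfz_i$ are genuinely independent after conditioning. Neither is technically deep—once the decomposition is in hand, the rest is a direct application of Vector Bernstein and a union bound.
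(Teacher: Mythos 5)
Your proposal does not prove the statement it was asked to prove. The statement is the Vector Bernstein Inequality itself (Theorem \ref{thm:vectorBernstein}): the tail bound $\BBP{\left\|\sum_{i=1}^n \bfz_i\right\|_2 \geq \sqrt{V} + t} \leq \exp\left(-t^2/(4V)\right)$ for independent, bounded, mean-zero random vectors. What you have written is instead a proof of Lemma \ref{lemma:avg} (the oracle averaging error bound on $\|\bbfd_k - \bfd_k\|_2$), which \emph{invokes} Theorem \ref{thm:vectorBernstein} as a black box --- your very first paragraph says you will ``apply the Vector Bernstein Inequality just quoted.'' You cannot prove a theorem by assuming it and deriving one of its consequences; the entire content of the required argument is missing.

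For the record, the paper itself does not prove this theorem either --- it is imported from \citet{vectorBernstein} --- so an acceptable blind proof would have to reproduce an argument of that type: for instance, bound $E\left\|\sum_i \bfz_i\right\|_2 \leq \sqrt{V}$ by Jensen's inequality and independence, and then control the deviation of $\left\|\sum_i \bfz_i\right\|_2$ from its mean via a concentration argument adapted to the vector setting (a martingale difference/entropy argument in the style of Gross, not a naive bounded-differences bound, which would only yield a denominator of order $n\gamma^2$ rather than $V$ and would miss the stated constant). Your decomposition of $\bbfd_k - \bfd_k$ into the vectors $\bfz_i = x_{ik}\sum_{m\in\Omega_i\setminus\{k\}}x_{im}\bfd_m$, the conditioning on $\mcA_k$, and the bookkeeping of $V \leq N_k s$ and $\gamma = C\sqrt{s}$ are all reasonable and essentially match how the paper proves Lemma \ref{lemma:avg} --- but that is a different statement from the one at hand.
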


\noindent We now prove lemma \ref{lemma:avg}.
\begin{proof}
We begin by conditioning on $N_k$ and assuming that we are on the event that Lemma \ref{lemma:nk} holds, which occurs with probability at least $1-K\exp(-Ns/(10K))$. Since the distribution of $\bfX$ is symmetric, without loss of generality we may assume $x_{ik} = 1$ for all $i$ such that $k\in\Omega_i$. We then have
\[
\bbfd_k = \frac{1}{N_k} \sum_{k\in\Omega_i} \bfy_i = \bfd_k + \frac{1}{N_k} \sum_{k\in\Omega_i} (\bfy_i - \bfd_k).
\]
We will now concentrate the sum $\frac{1}{N_k} \sum_{k\in\Omega_i} (\bfy_i - \bfd_k)$. We can write
$\bfy_i - \bfd_k = \bfD\left(\bfx_i - \bfe_k\right)$
where $\bfe_k$ is the $k$-th coordinate vector. Since $k\in\Omega_i$, the vectors $\bfx_i - \bfe_k$ are symmetric and thus mean zero. Moreover, $\bfx_i - \bfe_k$ is $(s-1)$-sparse, and so by the $s$-RIP of $\bfD$ (\ref{G2:s_rip}), $\sqrt{s}/2 \leq \|\bfD(\bfx_i - \bfe_k)\|_2 \leq 3\sqrt{s}/2$ for all $i$ with $k\in\Omega_i$. It follows that
\[
\frac{N_k s}{4} \leq \sum_{k\in\Omega_i} E\|\bfy_i - \bfd_k\|_2^2 \leq \frac{9N_k s}{4}.
\]
Accordingly, we can apply Theorem \ref{thm:vectorBernstein} to deduce that
\[
\left\|\sum_{k\in\Omega_i} (\bfy_i - \bfd_k)\right\|_2 \leq \frac{3\sqrt{N_k s}}{2} + t
\]
for any $t \leq N_k \sqrt{s}/4$ with probability at least $1-\exp\left(-t^2/(9N_k s)\right)$. Choosing $t = u\sqrt{N_k s}$ for any $u \leq \sqrt{N_k}/4$, we have
\[
\left\|\frac{1}{N_k}\sum_{k\in\Omega_i} (\bfy_i - \bfd_k)\right\|_2 \leq \left(\frac{3}{2} + u\right)\frac{\sqrt{s}}{\sqrt{N_k}}
\]
with probability at least $1-\exp(-u^2/9)$. We undo the conditioning on $N_k$ by the bound $N_k \geq cNs/K$ from Lemma \ref{lemma:nk}; thus choosing $u = 3\sqrt{\alpha\log M}$ gives
\[
\left\|\frac{1}{N_k}\sum_{k\in\Omega_i} (\bfy_i - \bfd_k)\right\|_2 \leq \left(\frac{3}{2} + 3\sqrt{\alpha \log M}\right)\frac{\sqrt{K}}{\sqrt{N}}
\]
with probability at least $1- M^{-\alpha}$. A union bound over $k$ completes the proof.
\end{proof}

This result completes the proof of Theorem \ref{master_thm}.

\section{Numerical Simulations}\label{simulations}
All code used in these simulations, including an open-source Python implementation of \textbf{SPORADIC}, is publicly available at \url{https://github.com/sew347/SPORADIC_DL}. In this section, we supplement our theoretical results with numerical simulations. We consider two metrics. The first is  convergence in angle: how close is $|\inner{\hbfd}{\bfd}|$ to 1. This measure can always be converted to the error in $L^2$ norm by the identity $\|\hbfd-\bfd\|_2^2 = 2 - 2\inner{\hbfd}{\bfd}$ (up to possible differences in sign). The second main metric is the proportion of \textit{false recoveries}: a \textit{false recovery} occurs when subspace intersection either returns a vector when a dictionary element does not exist in the intersection of true subspaces, or when subspace intersection fails to return a vector when one is in the intersection of true subspaces.

\subsection{Experiment 1: Maximum Tolerated Sparsity}
To test our theoretical hypotheses, in Figure \ref{max_sparsity}, we show the maximum sparsity that can be tolerated by SSDL while remaining above a certain accuracy threshold. Specifically, this figure shows the highest sparsity for which our test returns average angular accuracy above $0.95$ with false recovery proportion below $0.08$. Tests were run on the first 50 subspaces over 5 dictionaries in each dimension. To ensure a nontrivial number of overlaps, the support of each of the first 50 samples was seeded so that $1 \in \Omega_1 \cap \Omega_2, 2 \in \Omega_3 \cap \Omega_4, \ldots, 25 \in \Omega_{49} \cap\Omega_{50}$. In this test $K = 2M$, and $N$ was pegged to $N=30000$ for $M=500$ then allowed to grow as different powers of $M$. The results confirm the findings of Theorem \ref{master_thm}: sparsity growth is linear when $N\sim M^4$, but slower than linear when $N\sim M^3$ or $N\sim M^2$.

\begin{figure*}
\centering
\includegraphics[width=0.7\linewidth]{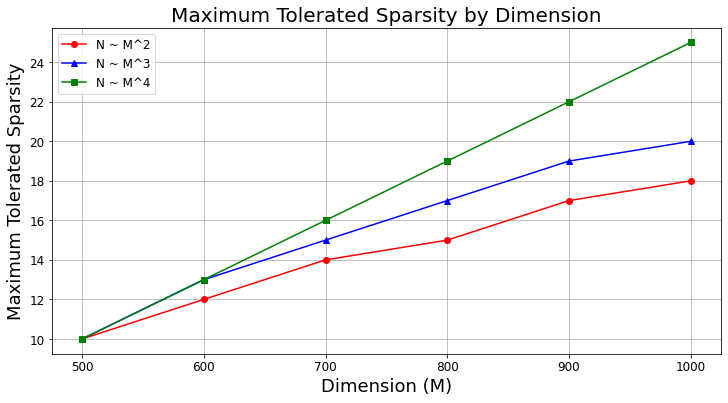}
\caption{\textbf{Maximum tolerated sparsity.} As predicted by Theorem \ref{master_thm}, $N \sim M^4$ allows for linear sparsity growth in $M$, while fewer samples result in sublinear growth. }
\label{max_sparsity}
\end{figure*}

\subsection{Experiment 2: Substep Error by Number of Samples}

In our second experiment, we compare performance in terms of $L^2$ error for the three methods that comprise \textbf{SPORADIC}. Figure \ref{fig:substep_err} shows error for multiple sample sizes for the three substeps of \textbf{SPORADIC} (SSDL, oracle refinement, and oracle averaging) along with ``true averaging," which is the oracle averaging procedure applied with the support and signs known exactly rather than estimated. This experiment was run in dimension $M = 300$ with a dictionary of size $K=600$. Subspace intersection was performed for the first 500 subspaces, meaning only a subset of the dictionary was recovered. Errors were measured only for dictionary elements that were correctly recovered; false recoveries were ignored in this experiment.

We see that given enough samples, the error of oracle averaging tends to that of true averaging, confirming the oracle properties discussed in the main text. Somewhat surprisingly, though, oracle refinement enjoys a slightly lower error than true averaging for every sample size $N$. This suggests one of two possibilities: either, due to hardware limitations, the experiment was not run on large enough $N$ for the advantages of oracle averaging to manifest; or oracle refinement actually enjoys better performance than is guaranteed by theorem \ref{master_thm}, making the oracle averaging step unnecessary. We leave the resolution of this question for future work.

\begin{figure*}
\centering
\includegraphics[width=0.7\linewidth]{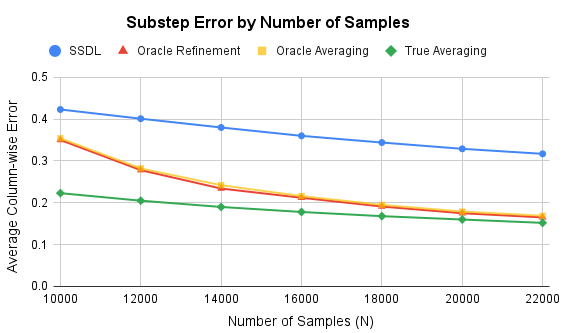}
\caption{\textbf{Column-wise error by number of samples.} As predicted by Theorem \ref{master_thm}, with enough samples oracle averaging tends to the same performance as averaging with known support and signs. Unexpectedly, oracle refinement slightly outperforms oracle averaging regardless of number of samples.}
\label{fig:substep_err}
\end{figure*}

\section{Conclusion}
We introduced \textbf{SPORADIC} as an efficient method for recovering dictionaries from high-dimensional samples even in the linear sparsity regime up to log factors. In this regime, \textbf{SPORADIC} achieves arbitrarily small errors in polynomial time given enough samples, improving on the best-known provable alternatives. Reproducible numerical simulations validate these results, with the surprising caveat that oracle refinement appears to work better than expected.

Our initial research on \textbf{SPORADIC} suggests several avenues for future research. First, we suspect it is possible to reduce the sample complexity from approximately $M^4$ to closer to $M^3$ by replacing the covariance projection step with a more sophisticated method for controlling the dominant term in \ref{decomp_eqn}. This is because the worst term in the error in Theorem \ref{thm:subspace_recovery} comes from estimating the covariance matrix in the Frobenius rather than in the operator norm, which is known to have worse sample complexity ($N \sim M^2$) than estimation in the operator norm ($N \sim M$). We are also interested in investigating to what degree the uniformity assumption in the generation of support sets $\Omega_j$ can be relaxed, to allow for more general sampling distributions. Lastly, as our method involves a fourth-order statistic, it bears some similarity to the recently introduced $L^4$-based dictionary learning methods; future work will study these connections in greater detail.


\acks{This paper is an expanded version of the earlier conference paper \citep{White23}. Both authors were partially supported by grants NSF DMS-1813943 and AFOSR FA9550-20-1-0026. The authors are free from any conflicts of interest.}



\appendix

\section{Technical Proofs}

In this section, we include proofs of technical lemmas which we omitted from the main text.

\subsection{Proof of Lemma \ref{G0_lemma}}\label{appdx:g0}
\Glemma*

We prove the two items in Definition \ref{good_event} as a separate lemma. In what follows, we use the term ``with high probability'' to mean a sequence of events whose complements occur with probability tending to zero faster than any inverse polynomial in $M$. A convenient fact is that, by a union bound, any polynomial number of high-probability events likewise occurs with high probability. Under this definition, Lemma \ref{G0_lemma} amounts to proving that for large enough $M$, $\bfD$ is in $\mcG(C)$ with high probability for some sufficiently large absolute constant $C$.

We will make use of the relationship $\bfd_k = \bfw_k/\|\bfw_k\|_2$ where $\bfw_k$ are i.i.d. random vectors with independent $N(0,1)$ entries. Expressed in matrix form, we have $\bfD = \bfW\bfN$ where $\bfW$ is a matrix with columns $\bfw_k$ and $\bfN$ is a diagonal matrix with $k$-th diagonal entry $1/\|\bfw_k\|_2$. We will make use of the following lemma:

\begin{lma}\label{w_lemma}
Let $\bfw$ be an $N(0,\bfI)$ random vector. Then with high probability, $\sqrt{M} - \log M \leq \|\bfw\|_2 \leq \sqrt{M} + \log M$.
\end{lma}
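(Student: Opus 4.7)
The plan is to use Gaussian concentration for Lipschitz functions. The map $\bfw \mapsto \|\bfw\|_2$ is $1$-Lipschitz on $\R^M$, so by the Borell-TIS (Gaussian concentration) inequality, for all $t \geq 0$,
\[
\BBP{\left|\|\bfw\|_2 - E\|\bfw\|_2\right| \geq t} \leq 2\exp(-t^2/2).
\]
This is the only probabilistic input needed; everything else is deterministic control on the mean.

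Next, I would verify that $E\|\bfw\|_2$ is very close to $\sqrt{M}$. Jensen's inequality gives the upper bound $E\|\bfw\|_2 \leq \sqrt{E\|\bfw\|_2^2} = \sqrt{M}$, while a standard computation using the chi distribution (or Jensen applied to $x \mapsto \sqrt{x}$ with a second-order correction) yields $E\|\bfw\|_2 \geq \sqrt{M-1}$. In particular, $|E\|\bfw\|_2 - \sqrt{M}| \leq 1/\sqrt{M}$ for all $M \geq 2$, which is far smaller than $\log M$ for $M$ large.

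Combining these two ingredients, for $M$ sufficiently large I would apply the concentration bound with $t = \log M - 1/\sqrt{M} \geq \tfrac{1}{2}\log M$ to obtain
\[
\BBP{\left|\|\bfw\|_2 - \sqrt{M}\right| \geq \log M} \leq \BBP{\left|\|\bfw\|_2 - E\|\bfw\|_2\right| \geq \tfrac{1}{2}\log M} \leq 2\exp\!\left(-(\log M)^2/8\right),
\]
which decays faster than any inverse polynomial in $M$, matching the notion of ``high probability'' used in the appendix.

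There is no real obstacle here: the only care required is the mild bookkeeping that the mean $E\|\bfw\|_2$ sits within $O(1/\sqrt{M})$ of $\sqrt{M}$ so that it can be absorbed into the $\log M$ fluctuation window. As an alternative, one could bypass the Lipschitz machinery entirely and invoke the Laurent-Massart tail bounds for the chi-squared random variable $\|\bfw\|_2^2$, which directly yield sub-Gaussian deviations of $\|\bfw\|_2$ from $\sqrt{M}$ on the scale $\sqrt{\log M}$, even sharper than what is claimed.
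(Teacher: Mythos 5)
Your proof is correct, and it takes a genuinely different route from the paper. The paper works directly with the chi-squared variable $\|\bfw\|_2^2$, applies the Laurent--Massart tail bounds (equations \ref{chi_sq_conc}--\ref{chi_sq_sym}), and then converts the $\|\bfw\|_2^2$ deviation into a $\|\bfw\|_2$ deviation via the elementary implication that $|z-1|\geq\gamma$ forces $|z^2-1|\geq\max\{\gamma,\gamma^2\}$. You instead exploit that $\bfw\mapsto\|\bfw\|_2$ is $1$-Lipschitz and apply Gaussian concentration (Borell--TIS) directly, then control the small gap between $E\|\bfw\|_2$ and $\sqrt{M}$ (which is $O(1/\sqrt{M})$ by Jensen plus the variance identity $\var{\|\bfw\|_2} = M - (E\|\bfw\|_2)^2 \leq 1$). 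Both routes are standard and give the same order of tail decay; your approach sidesteps the $\|\bfw\|_2^2\to\|\bfw\|_2$ conversion step at the cost of needing the mean estimate, while the paper's approach needs only moment-free chi-squared tails but then has to do the square-root algebra. The ``alternative'' you mention at the end --- Laurent--Massart applied to $\|\bfw\|_2^2$ --- is in fact precisely what the paper does, so you had both paths in view. One small point of bookkeeping: your argument requires $M$ large enough that $1/\sqrt{M}\leq\tfrac12\log M$; the lemma is stated for large $M$, so this is harmless, but worth flagging explicitly.
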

\begin{proof}
By definition, entries of $\bfw$, denoted $w_m$ for $m=1,\ldots,M$, are i.i.d. normal random variables with variance $1$. It follows that $\|\bfw\|_2^2 = \sum_{m=1}^M w_m^2$ will be distributed as a chi-squared random variable with $M$ degrees of freedom. Such a variable obeys the following concentration inequalities \citep{LaurentMassart}:
\begin{equation}\label{chi_sq_conc}
\begin{aligned}
\BBP{\|\bfw\|_2^2 \geq M + \sqrt{Mt} + 2t} \leq e^{-t}
\\
\BBP{\|\bfw\|_2^2 \leq M - \sqrt{Mt}} \leq e^{-t}
\end{aligned}
\end{equation}
which for $t \leq M$ yields the symmetric bound:
\begin{equation}\label{chi_sq_sym}
\BBP{\left|\|\bfw\|_2^2 - M\right| \geq 3\sqrt{Mt}} \leq 2e^{-t}
\end{equation}

To convert this to a bound on $\|\bfw\|_2$, we use the fact that for any $z, \gamma \geq 0$, $|z - 1| \geq \gamma$ implies $|z^2 - 1| \geq \max\{\gamma, \gamma^2\}$. Accordingly,
\[
\BBP{\left|\frac{1}{\sqrt{M}}\|\bfw\|_2 - 1\right| \geq \delta} \leq \BBP{\left|\frac{1}{M}\|\bfw\|^2_2 - 1\right| \geq \max\{\delta, \delta^2\}}
\]
Setting $\delta = \log M /\sqrt{M}$ and applying \ref{chi_sq_sym} with $t = (\log^2 M)/9 \leq M$, we have:
\[
\BBP{|\|\bfw\|_2 - \sqrt{M}| \leq \log M} \leq \BBP{|\|\bfw\|_2^2 - M| \leq \sqrt{M}\log M} \leq e^{-(\log^2 M)/9}
\]
which proves the result.
\end{proof}

\begin{lma}[\ref{G1:DDT_norm}]
For $\bfD$ distributed as in Lemma \ref{G0_lemma}, there exists absolute constant $C$ such that $\left\|\bfD\bfD^T - \frac{K}{M}\bfI\right\|_2 \leq \frac{C\sqrt{K}}{\sqrt{M}}$ with high probability.
\end{lma}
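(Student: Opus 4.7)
The plan is to exploit the representation $\bfD = \bfW\bfN$ introduced just before the lemma, where $\bfW$ is an $M\times K$ matrix with i.i.d.\ $N(0,1)$ entries and $\bfN$ is diagonal with $k$th entry $1/\|\bfw_k\|_2$. Since $\bfN$ is diagonal, $\bfD\bfD^T = \bfW\bfN^2\bfW^T$, which gives the key decomposition
\[
\bfD\bfD^T - \frac{K}{M}\bfI \;=\; \frac{1}{M}\left(\bfW\bfW^T - K\bfI\right) \;+\; \bfW\left(\bfN^2 - \frac{1}{M}\bfI\right)\bfW^T.
\]
I would bound the two summands separately and combine by the triangle inequality, so the content of the proof reduces to showing that the Gaussian bulk term on the left is of size $\sqrt{K/M}$ and the normalization correction on the right is strictly smaller in the relevant regime.

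For the first term, I would invoke the Davidson--Szarek concentration bound on the extreme singular values of an $M\times K$ Gaussian matrix: for any $t>0$, with probability at least $1-2e^{-t^2/2}$ the singular values of $\bfW$ lie in $[\sqrt{K}-\sqrt{M}-t,\ \sqrt{K}+\sqrt{M}+t]$. Taking $t = \sqrt{\alpha\log M}$ for any fixed $\alpha$ and squaring yields $\|\bfW\bfW^T - K\bfI\|_2 \leq 2\sqrt{K}(\sqrt{M}+t) + (\sqrt{M}+t)^2 \leq C\sqrt{KM}$ with high probability, so after dividing by $M$ the first term is bounded by $C\sqrt{K/M}$, which is precisely the target order.

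For the second term, Lemma \ref{w_lemma} and a union bound over $k\in\{1,\ldots,K\}$ give, with high probability, $|\|\bfw_k\|_2 - \sqrt{M}| \leq \log M$ for every $k$ simultaneously; this upgrades to $|1/\|\bfw_k\|_2^2 - 1/M| \leq C\log M/M^{3/2}$ uniformly in $k$, so that
\[
\left\|\bfN^2 - \frac{1}{M}\bfI\right\|_2 \leq \frac{C\log M}{M^{3/2}}.
\]
Combining with the Davidson--Szarek bound $\|\bfW\|_2^2 \leq (\sqrt{K}+\sqrt{M}+t)^2 \leq CK$ (valid since $K\geq M$) gives $\big\|\bfW(\bfN^2 - M^{-1}\bfI)\bfW^T\big\|_2 \leq CK\log M/M^{3/2}$.

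Adding the two estimates, $\|\bfD\bfD^T - (K/M)\bfI\|_2 \leq C\sqrt{K/M} + CK\log M/M^{3/2}$, and in the paper's working regime (where $K$ is polynomially below $M^2$, certainly implied by the constraint $K/M^{3/2}\leq \sqrt{K/M}$ together with the mild $\sqrt{K}\log M \leq CM$ inherent in the applications) the second term is dominated by the first, yielding the claimed bound with some absolute constant $B$. The main obstacle I anticipate is the mild one of verifying that the $\log M$ factor on the normalization term is indeed subdominant; this is where the careful quantitative form of Lemma \ref{w_lemma} (as opposed to a weaker asymptotic statement) is essential, and a slightly sharper version via the chi-squared tail in \eqref{chi_sq_sym} could be used if one wishes to eliminate the log factor entirely.
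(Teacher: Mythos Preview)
Your proposal is correct and follows essentially the same approach as the paper: the identical decomposition $\bfD\bfD^T - (K/M)\bfI = M^{-1}(\bfW\bfW^T - K\bfI) + \bfW(\bfN^2 - M^{-1}\bfI)\bfW^T$, Lemma~\ref{w_lemma} for the normalization correction, and then the assumption $K/M^{3/2}\leq\sqrt{K/M}$ to absorb the second term. The only cosmetic difference is that you invoke Davidson--Szarek for $\|\bfW\bfW^T - K\bfI\|_2$, whereas the paper carries out the underlying $\varepsilon$-net plus chi-squared concentration argument by hand; these are equivalent.
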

\begin{proof} 
We use the normal identity $\bfD = \bfW\bfN$ to write:
\[
\left\|\bfD\bfD^T - \frac{K}{M}\bfI\right\|_2 = \left\|\bfW\bfN^2\bfW^T - \frac{K}{M}\bfI\right\|_2.
\]
By the triangle inequality,
\begin{equation}\label{W_tri_ineq}
\left\|\bfW\bfN^2\bfW^T - \frac{K}{M}\bfI\right\|_2 \leq \left\|\frac{1}{M}\bfW\bfW^T - \frac{K}{M}\bfI\right\|_2 + \left\|\bfW\bfN^2\bfW^T - \frac{1}{M}\bfW\bfW^T\right\|_2
\end{equation}
By Lemma \ref{w_lemma}, with high probability $M - \sqrt{M}\log M \leq \|\bfw_k\|_2^2 \leq M +\sqrt{M}\log M$ for all $k$. Therefore:
\[
\left\|\bfW\bfN^2\bfW^T - \frac{1}{M}\bfW\bfW^T\right\|_2 \leq \|\bfW\|_2^2\left\|\bfN^2 - \frac{1}{M}\bfI\right\|_2 \leq \frac{C\|\bfW\|_2^2}{M^{3/2}}
\]
We now bound $\left\|\bfW\bfW^T - K\bfI\right\|_2$. We employ an $\epsi$-net argument \citep[e.g.][]{Vershynin}. We let $\mcM$ be a $1/4$-net on the unit sphere; that is, $\mcM$ is a finite set such that every point on the unit sphere is at most Euclidean distance $1/4$ from a point in $\mcM$. It is a fact of $\varepsilon$-nets that for $\epsi = 1/4$, we can choose $\mcM$ such that $|\mcM| \leq 9^M$. Moreover:
\[
\left\|\bfW\bfW^T - K\bfI\right\|_2 = \sup_{\|\bfz\|=1}\bfz^T\left(\bfW\bfW^T - K\bfI\right)\bfz \leq 2\max_{\bfz \in \mcM} \bfz^T\left(\bfW\bfW^T - K\bfI\right)\bfz
\]
We now fix $\bfz \in \mcM$. We have:
\[
\bfz^T\left(\bfW\bfW^T - K\bfI\right)\bfz = \sum_{k=1}^K \inner{\bfz}{\bfw_k}^2 - K.
\]
By rotational invariance, $\inner{\bfz}{\bfw_k} \sim N(0,1)$, so we can concentrate this sum using the chi-squared concentration inequalities \ref{chi_sq_conc}. Choosing $t = CM$, we have:
\[
\left|\left(\sum_{k=1}^K \inner{\bfz}{\bfw_k}^2\right)- K\right| \leq \sqrt{CKM} + 2CM
\]
with probability at least $1 - e^{-CM}$. Unfixing $\mcM$ by a union bound, this holds for all $\bfz \in \mcM$ with probability at least $1-9^Me^{-CM}$, which represents high probability for sufficiently large $C$. Thus with high probability, changing $C$ if necessary,
\[
\left\|\bfW\bfW^T - K\bfI\right\|_2 \leq {C\sqrt{KM}}
\]
Plugging back into \ref{W_tri_ineq}, noting that the above implies $\|\bfW\|_2^2 \leq CK$, we conclude:
\[
\left\|\bfD\bfD^T - \frac{K}{M}\bfI\right\|_2 \leq \frac{C\sqrt{K}}{\sqrt{M}} + \frac{CK}{M^{3/2}}
\]
with high probability. Since $K/M^{3/2} \leq \sqrt{K}/\sqrt{M}$ by definition of $\mcG$, this completes the proof.
\end{proof}

\noindent We now prove \ref{G2:1_rip}:

\begin{lma}[\ref{G2:1_rip}]
For $\bfD$ distributed as in Lemma \ref{G0_lemma}, there exists absolute constant $C$ such that for all $k \neq m$, $|\inner{\bfd_k}{\bfd_m}| \leq C\log M/\sqrt{M}$.
\end{lma}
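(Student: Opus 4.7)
The plan is to exploit the representation $\bfd_k = \bfw_k/\|\bfw_k\|_2$ where the $\bfw_k$ are i.i.d. $N(0,\bfI)$ vectors. Write
\[
\inner{\bfd_k}{\bfd_m} = \frac{\inner{\bfw_k}{\bfw_m}}{\|\bfw_k\|_2\,\|\bfw_m\|_2},
\]
so it suffices to show the numerator is $\bigO{\sqrt{M}\log M}$ and each factor in the denominator is at least $\sqrt{M}/2$, both with high probability, and then to union-bound over the $\binom{K}{2}$ pairs.

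First I would handle the denominator by invoking Lemma \ref{w_lemma}, which gives $\|\bfw_k\|_2 \geq \sqrt{M} - \log M$ with high probability; a union bound over $k = 1,\ldots,K$ (polynomial in $M$) preserves high probability, so for sufficiently large $M$ we can assume $\|\bfw_k\|_2 \geq \sqrt{M}/2$ for all $k$ simultaneously.

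Next I would bound the numerator by conditioning on $\bfw_k$. By rotational invariance of the Gaussian distribution, conditional on $\bfw_k$ the inner product $\inner{\bfw_k}{\bfw_m}$ is a one-dimensional Gaussian with mean zero and variance $\|\bfw_k\|_2^2$. Applying the standard Gaussian tail bound
\[
\BBP{\,|\inner{\bfw_k}{\bfw_m}| \geq t \cdot \|\bfw_k\|_2 \;\big|\; \bfw_k\,} \leq 2\exp(-t^2/2)
\]
with $t = C_0\log M$ yields a bound on $|\inner{\bfw_k}{\bfw_m}|$ of $C_0\sqrt{M}\log M$ with probability at least $1 - 2\exp(-C_0^2\log^2 M/2)$, after using the high-probability upper bound $\|\bfw_k\|_2 \leq \sqrt{M} + \log M \leq 2\sqrt{M}$ from Lemma \ref{w_lemma}. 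Since the conditional bound is independent of $\bfw_k$, it holds unconditionally with the same probability.

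Finally I would union-bound over all $\binom{K}{2} \leq K^2 \leq M^{3}$ (by the assumption $K \leq M^{3/2}$) unordered pairs $(k,m)$, which contributes at most a polynomial factor and so preserves high probability in the super-polynomial-decay sense of the preceding lemma. Combining the numerator bound $C_0\sqrt{M}\log M$ with the denominator bound $\|\bfw_k\|_2\|\bfw_m\|_2 \geq M/4$ gives $|\inner{\bfd_k}{\bfd_m}| \leq 4C_0 \log M/\sqrt{M}$ for all $k \neq m$ with high probability, as desired. The only subtlety is verifying that the Gaussian tail gives decay faster than every inverse polynomial after the $K^2$ union bound, but $\exp(-c\log^2 M)$ easily dominates any $M^{-a}$, so this is routine rather than a real obstacle.
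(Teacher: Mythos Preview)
Your proposal is correct and follows essentially the same approach as the paper: represent $\bfd_k = \bfw_k/\|\bfw_k\|_2$, use Gaussian tail bounds on the inner product, control the norms via Lemma \ref{w_lemma}, and union-bound over all pairs. The only cosmetic difference is that the paper invokes rotational invariance to fix $\bfd_m = \bfe_1$ directly (so $\inner{\bfw_k}{\bfe_1}$ is a single $N(0,1)$ variable), whereas you condition on $\bfw_k$ and obtain $N(0,\|\bfw_k\|_2^2)$; both routes lead to the same bound with the same probability decay.
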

\begin{proof}
Since $k\neq m$, $\bfd_k$ and $\bfd_m$ are independent. Then by rotational invariance, we can treat $\bfd_m$ as a fixed vector, say the first coordinate vector $\bfe_1$: $\inner{\bfd_k}{\bfd_m} \sim \inner{\bfd_k}{\bfe_1}$. Writing $\bfd_k = \bfw_k/\|\bfw_k\|_2$ for $\bfw_k \sim N(0,\bfI)$, we have $\inner{\bfd_k}{\bfe_1} = \frac{1}{\|\bfw\|_2}\inner{\bfw_k}{\bfe_1} \sim \frac{1}{\|\bfw\|_2} \times N(0,1)$. It is known (see \citep{Vershynin}, proposition 2.1.2) that a normally distributed random variable $Z_\sigma \sim N(0,\sigma^2)$ obeys the concentration inequality:
\begin{equation}\label{gauss_conc}
\BBP{Z_\sigma \geq t} \leq \frac{\sigma}{\sqrt{2\pi} t}\exp\left(\frac{-t^2}{2
\sigma^2}\right)
\end{equation}
Applying this to $\inner{\bfw_k}{\bfe_1}$, we have $\left|\inner{\bfw_k}{\bfe_1}\right| \leq C\log M$ with high probability. By Lemma \ref{w_lemma}, we know that $\|\bfw_k\|_2 \geq c/\sqrt{M}$ with high probability, so we conclude the result.
\end{proof}

Lastly, we prove \ref{G2:s_rip}:

\begin{lma}[\ref{G2:s_rip}]
For $\bfD$ distributed as in Lemma \ref{G0_lemma}, there exists absolute constant $C$ such that $\bfD$ has $2s$-restricted isometry constant satisfying $\delta_{2s} \leq \frac{C\sqrt{s}\log M}{\sqrt{M}}$ with high probability.
\end{lma}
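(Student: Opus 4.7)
The plan is to reduce the statement to a uniform-over-supports control of a Wishart-type deviation and then apply the $\epsilon$-net technique already used for \ref{G1:DDT_norm}. Using the decomposition $\bfD = \bfW\bfN$ with $\bfN$ the diagonal column-normalization matrix, for every $2s$-element $S\subset\{1,\ldots,K\}$ we have $\bfD_S = \bfW_S\bfN_S$, and therefore $\bfD_S^T\bfD_S - \bfI = \bfN_S(\bfW_S^T\bfW_S - M\bfI)\bfN_S + (M\bfN_S^2 - \bfI)$. Since $\delta_{2s} = \sup_S\|\bfD_S^T\bfD_S - \bfI\|_2$ over $2s$-element supports $S$, bounding this quantity uniformly in $S$ yields the lemma. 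The normalization remainder $\|M\bfN_S^2 - \bfI\|_2$ will be handled immediately by Lemma \ref{w_lemma} together with a union bound over the $K$ columns, contributing at most $\bigO{\log M/\sqrt{M}}$; I would therefore concentrate on controlling $\|\bfW_S^T\bfW_S - M\bfI\|_2$.

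For a fixed $S$ I would recycle the $1/4$-net argument from the proof of \ref{G1:DDT_norm}: choose $\mcM \subset \BBS{2s}$ with $|\mcM|\leq 9^{2s}$, so that $\|\bfW_S^T\bfW_S - M\bfI\|_2 \leq 2\max_{\bfv\in\mcM}\bigl|\|\bfW_S\bfv\|_2^2 - M\bigr|$. For each fixed unit $\bfv\in\mcM$, $\bfW_S\bfv$ has i.i.d.\ $N(0,1)$ entries, so $\|\bfW_S\bfv\|_2^2 \sim \chi^2_M$, and the concentration bound \eqref{chi_sq_sym} gives $|\|\bfW_S\bfv\|_2^2 - M| \leq 3\sqrt{Mt}$ with probability at least $1-2e^{-t}$.

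I would then take a double union bound, over the $9^{2s}$ points of $\mcM$ and over the $\binom{K}{2s}\leq (eK/(2s))^{2s}$ supports. Since $K$ is at most polynomial in $M$ under the hypotheses, $\log(K/s) = \bigO{\log M}$, so choosing $t = C(s\log M + \alpha\log M)$ with $C$ large enough drives the aggregate failure probability below $M^{-\alpha}$. With this choice, $\sqrt{Mt} \leq C\sqrt{Ms\log M} + C\sqrt{M\alpha\log M} \leq C\sqrt{sM}\log M$, so uniformly over $S$, $\|\bfW_S^T\bfW_S - M\bfI\|_2 \leq C\sqrt{sM}\log M$. Combining with $\|\bfN_S\|_2 \leq C/\sqrt{M}$ (from Lemma \ref{w_lemma}) and the $\bigO{\log M/\sqrt{M}}$ contribution of $M\bfN_S^2 - \bfI$ then gives $\delta_{2s} \leq C\sqrt{s}\log M/\sqrt{M}$, as desired.

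The main obstacle is getting the exponents in the double union bound to line up: the deviation parameter $t$ must simultaneously absorb both the $\epsilon$-net cost $9^{2s}$ and the combinatorial cost $(eK/(2s))^{2s}$ of the number of supports, while remaining small enough that $\sqrt{Mt}$ does not exceed the target $\sqrt{Ms}\log M$. The polynomial-in-$M$ size of $K$ is exactly what makes this balance possible, and the single $\log M$ factor in the statement arises precisely from this union-bound budget.
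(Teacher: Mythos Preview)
Your proposal is correct and follows essentially the same route as the paper: reduce to the Gaussian matrix $\bfW_S$ via the column-normalization decomposition, control $\|\bfW_S^T\bfW_S - M\bfI\|_2$ by a $1/4$-net on $\BBS{2s}$ together with the $\chi^2_M$ concentration bound \eqref{chi_sq_sym}, and absorb the normalization remainder with Lemma~\ref{w_lemma}. In fact you are more careful than the paper's own write-up, which treats a single support $\Omega_0$ and leaves the union bound over the $\binom{K}{2s}$ possible $2s$-element supports implicit; your explicit budgeting of $t = C(s\log M + \alpha\log M)$ to pay for both the net and the supports is exactly what is needed to close that step and is what produces the $\log M$ factor in the stated bound.
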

\begin{proof}
We can write $\sum_{k\in\OZ}\bfd_k\bfd_k^T = \bfD_\OZ\bfD_\OZ^T$ where $\bfD_\OZ$ is the submatrix of $\bfD$ with columns indexed by $\OZ$. Since the nonzero eigenvalues of $\bfD_\OZ\bfD_\OZ^T$ are the same as those of $\bfD_\OZ^T\bfD_\OZ$, we may prove the bound for the latter matrix. By Lemma \ref{w_lemma}, it suffices to prove the result for the matrix $\frac{1}{M}\bfW_\OZ^T\bfW_\OZ$.

As in the proof of \ref{G1:DDT_norm}, we employ an $\varepsilon$-net argument. Let $\mcM$ be a $1/4$-net of the unit sphere in $\R^s$, which can be chosen with at most $9^s$ elements. For fixed $\bfz \in \mcM$, we have:
\[
\bfz^T\left(\frac{1}{M}\bfW_\OZ^T\bfW_\OZ - \bfI\right)\bfz
\]
Since $\bfW$ has i.i.d. $N(0,1)$ entries, $\bfz^T\bfW_\OZ^T\bfW_\OZ\bfz$ will be distributed as a chi-squared random variable with $M$ degrees of freedom. Accordingly, by \ref{chi_sq_conc}, we have that
\[
|\bfz^T\bfW_\OZ^T\bfW_\OZ\bfz - M| \leq \sqrt{CMs} + 2Cs
\]
with probability at least $1-e^{-Cs}$. Unfixing $\bfz$ by a union bound, this holds for all $\bfz \in \mcM$ with probability at least $1-9^s e^{-Cs}$, which is high probability for large enough $C$. We conclude that with high probability,
\[
\left\|\frac{1}{M}\bfW_\OZ^T\bfW_\OZ - \bfI\right\|_2  \leq \frac{C\sqrt{s}}{\sqrt{M}}
\]
which implies the result.
\end{proof}

\subsection{Proof of Lemma \ref{lemma:decompLemma}}\label{appdx:decomp}
\decompLemma*

\begin{proof}
As all expectations in this lemma are conditional on $\bfy_0$, we will not write this explicitly in the proof. We can expand: 
\[
E\inner{\bfy_0}{\bfy}^2\bfy\bfy^T = E\sum_{k_1\in \Omega}\sum_{k_2\in \Omega}\sum_{k_3\in \Omega}\sum_{k_4\in \Omega}x_{k_1}x_{k_2}x_{k_3}x_{k_4}\inner{\bfy_0}{\bfd_{k_1}}\inner{\bfy_0}{\bfd_{k_2}}\bfd_{k_3}\bfd_{k_4}^T
\]
Since $Ex_{ik} = 0$ for all $i,k$, terms in the above expectation will be nonzero only when the indices are paired. Accordingly:
\[
E\inner{\bfy_0}{\bfy}^2\bfy\bfy^T = E\sum_{k \in \Omega}\sum_{m \in \Omega}x_k^2x_m^2\inner{\bfy_0}{\bfd_k}^2\bfd_m\bfd_m^T + E\sum_{k \in \Omega}\sum_{m \in \Omega, m \neq k}x_k^2x_m^2\inner{\bfy_0}{\bfd_k}\inner{\bfy_0}{\bfd_m}\bfd_k\bfd_m^T
\]
\[
= E\sum_{k =1}^K\sum_{m = 1}^K\bbi_{\{k,m\} \subseteq \Omega}\inner{\bfy_0}{\bfd_k}^2\bfd_m\bfd_m^T + E\sum_{k =1}^K\sum_{m \neq k}^K \bbi_{\{k,m\} \subseteq \Omega}\inner{\bfy_0}{\bfd_k}\inner{\bfy_0}{\bfd_m}\bfd_k\bfd_m^T
\]
\[
= \sum_{k =1}^K\sum_{m = 1}^K\BBP{\{k,m\} \subseteq \Omega}\inner{\bfy_0}{\bfd_k}^2\bfd_m\bfd_m^T + \sum_{k =1}^K\sum_{m \neq k}^K \BBP{\{k,m\} \subseteq \Omega}\inner{\bfy_0}{\bfd_k}\inner{\bfy_0}{\bfd_m}\bfd_k\bfd_m^T
\]
\small\[
= \frac{s}{K}\sum_{k=1}^K\inner{\bfy_0}{\bfd_k}^2\bfd_k\bfd_k^T + \frac{s(s-1)}{K(K-1)}\sum_{k =1}^K\sum_{m \neq k}^K\inner{\bfy_0}{\bfd_k}^2\bfd_m\bfd_m^T + \frac{s(s-1)}{K(K-1)}\sum_{k = 1}^K\sum_{m \neq k}^K\inner{\bfy_0}{\bfd_k}\inner{\bfy_0}{\bfd_m}\bfd_k\bfd_m^T
\]\normalsize
Next, we complete the square by transferring part of the first term into the other two:
\begin{multline}\label{decomp_partial}
E\inner{\bfy_0}{\bfy}^2\bfy\bfy^T = 
\left(\frac{s}{K}-\frac{2s(s-1)}{K(K-1)}\right)\left(\sum_{k=1}^K\inner{\bfy_0}{\bfd_k}^2\bfd_k\bfd_k^T\right) +
\\ +\frac{s(s-1)}{K(K-1)}\left(\sum_{k =1}^K \inner{\bfy_0}{\bfd_k}^2\right)\left(\sum_{k =1 }^K\bfd_k\bfd_k^T\right) 
+ \frac{s(s-1)}{K(K-1)}\left(\sum_{k=1}^K \inner{\bfy_0}{\bfd_k}\bfd_k\right)\left(\sum_{k=1}^K \inner{\bfy_0}{\bfd_k}\bfd_k\right)^T
\end{multline}
Lastly, we note that for $k \in \Omega_0$, $\inner{\bfy_0}{\bfd_k} = 1 + \inner{\bfy_0 - \bfd_k}{\bfd_k} = 1 + \inner{\tbfy_0^k}{\bfd_k}$, while for $k > s$, $\tbfy_0^k = \bfy_0$. Accordingly we can substitute:
\[
\sum_{k=1}^K\inner{\bfy_0}{\bfd_k}^2\bfd_k\bfd_k^T = \sum_{k\in\Omega_0} \bfd_k\bfd_k^T 
+ 2\sum_{k\in\Omega_0}\inner{\tbfy_0^k}{\bfd_k}\bfd_k\bfd_k^T
+ \sum_{k=1}^K \inner{\tbfy_0^k}{\bfd_k}^2\bfd_k\bfd_k^T
\]
The result follows by making these substitutions in \ref{decomp_partial}, factoring out $s/K$, and noting that $\sum_{k=1}^K\bfd_k\bfd_k^T = \bfD\bfD^T$ and $\bfv_0 = \bfD\bfD^T\bfy_0 = \sum_{k=1}^K \inner{\bfy_0}{\bfd_k}\bfd_k$ by definition.
\end{proof}




\subsection{Proof of Lemma \ref{lemma:cov_proj_lemma}}\label{appdx:cov_proj}
\covProjLemma*

The proof will employ the following lemma, which states that Frobenius projection with $\bfD$ will not increase the 2-norm of a matrix by more than a constant factor.
\begin{lma}\label{lemma:fro_l2_lemma}
Suppose $\mcG$ holds. Then for any matrix $\bfA \in \R^{M\times M}$,
\[
\left\|\frac{\inner{\bfA}{\bfD\bfD^T}_F}{\|\bfD\bfD^T\|^2_F}\bfD\bfD^T\right\|_2 \leq C\|\bfA\|_2.
\]
\end{lma}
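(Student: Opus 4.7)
The plan is to exploit the fact that the matrix in question is just a scalar multiple of $\bfD\bfD^T$, so its operator norm factors cleanly, and then to bound the Frobenius inner product against the operator norm of $\bfA$ using trace duality. Specifically, I would first rewrite
\[
\left\|\frac{\inner{\bfA}{\bfD\bfD^T}_F}{\|\bfD\bfD^T\|^2_F}\bfD\bfD^T\right\|_2 = \frac{|\inner{\bfA}{\bfD\bfD^T}_F|}{\|\bfD\bfD^T\|^2_F}\,\|\bfD\bfD^T\|_2,
\]
reducing the lemma to an upper bound on the numerator and a lower bound on the denominator.

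For the numerator, I would use the standard duality inequality $|\mathrm{tr}(\bfA^T\bfB)| \leq \|\bfA\|_2 \|\bfB\|_*$, where $\|\cdot\|_*$ denotes the nuclear norm. Since $\bfD\bfD^T$ is positive semidefinite, $\|\bfD\bfD^T\|_* = \mathrm{tr}(\bfD\bfD^T) = \sum_{k=1}^K \|\bfd_k\|_2^2 = K$ because the columns of $\bfD$ are unit vectors. Hence
\[
|\inner{\bfA}{\bfD\bfD^T}_F| = |\mathrm{tr}(\bfA^T \bfD\bfD^T)| \leq K\|\bfA\|_2.
\]

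For the remaining factors, I would just invoke the consequences of $\mcG$ listed after Lemma \ref{G0_lemma}: property \ref{C1:D_norm} gives $\|\bfD\bfD^T\|_2 \leq CK/M$, and property \ref{C2:D_norm_F} gives $\|\bfD\bfD^T\|_F \geq cK/\sqrt{M}$, so $\|\bfD\bfD^T\|_F^2 \geq c^2 K^2/M$. Plugging these three bounds into the factored expression yields
\[
\frac{|\inner{\bfA}{\bfD\bfD^T}_F|}{\|\bfD\bfD^T\|^2_F}\,\|\bfD\bfD^T\|_2 \leq \frac{K\|\bfA\|_2 \cdot (CK/M)}{c^2 K^2/M} = \frac{C}{c^2}\|\bfA\|_2,
\]
which is the claimed inequality after absorbing constants. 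There is no real obstacle here; the only nontrivial ingredient is recognizing that the trace-nuclear duality together with the unit-column normalization of $\bfD$ gives the cancellation of the overcompleteness factor $K$ between numerator and denominator, so that $\bfD\bfD^T$ behaves essentially like an isometry on the Frobenius-projected component.
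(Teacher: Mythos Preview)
Your argument is correct but uses a different inequality from the paper. The paper applies Cauchy--Schwarz in the Frobenius inner product, obtaining
\[
\left\|\frac{\inner{\bfA}{\bfD\bfD^T}_F}{\|\bfD\bfD^T\|^2_F}\bfD\bfD^T\right\|_2 \leq \frac{\|\bfA\|_F\,\|\bfD\bfD^T\|_2}{\|\bfD\bfD^T\|_F} \leq \frac{C\|\bfA\|_F}{\sqrt{M}},
\]
and then finishes with the crude bound $\|\bfA\|_F \leq \sqrt{M}\,\|\bfA\|_2$. You instead invoke the Schatten duality $|\mathrm{tr}(\bfA^T\bfB)| \leq \|\bfA\|_2\|\bfB\|_*$ together with the exact identity $\|\bfD\bfD^T\|_* = \mathrm{tr}(\bfD\bfD^T) = K$, which comes for free from the unit-column normalization. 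Your route goes straight to the operator norm of $\bfA$ without passing through $\|\bfA\|_F$, and the cancellation of the $K$'s and $M$'s is more transparent; the paper's route is slightly more elementary (only Cauchy--Schwarz) but loses a factor of $\sqrt{M}$ that it then has to recover. Both arguments yield the same constant-order bound, and neither is strictly stronger for the purposes of the lemma.
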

\begin{proof}
By the Cauchy-Schwarz inequality, \ref{G1:DDT_norm}, and \ref{C2:D_norm_F},
\[
\left\|\frac{\inner{\bfA}{\bfD\bfD^T}_F}{\|\bfD\bfD^T\|^2_F}\bfD\bfD^T\right\|_2 \leq \frac{\|\bfA\|_F\|\bfD\bfD^T\|_2}{\|\bfD\bfD^T\|_F} \leq \frac{C\|\bfA\|_F}{\sqrt{M}}.
\]
The result follows from the fact that $\|\bfA\|_F \leq \sqrt{M}\|\bfA\|_2$.
\end{proof}

We now prove Lemma \ref{lemma:cov_proj_lemma}:

\begin{proof}[Proof of Lemma \ref{lemma:cov_proj_lemma}]
From \ref{decomp_eqn}, we have:
\small\begin{multline}
\frac{K}{s}E\inner{\bfy_0}{\bfy}^2\bfy\bfy^T = \frac{s-1}{K-1}\bfv_0\bfv_0^T + \left(1 - \frac{2(s-1)}{K-1}\right)\sum_{k\in\Omega_0}\bfd_k\bfd_k^T + 
\\ 
+\left(1 - \frac{2(s-1)}{K-1}\right)\left(2\sum_{k\in\Omega_0} \inner{\tbfy_0^k}{\bfd_k} \bfd_k\bfd_k^T
+ \sum_{k=1}\inner{\tbfy_0^k}{\bfd_k}^2\bfd_k\bfd_k^T\right) 
+ \left(\frac{s-1}{K-1}\sum_{k =1}^K \inner{\bfy_0}{\bfd_k}^2\right)\bfD\bfD^T
\end{multline}\normalsize
The fifth term will be removed entirely by projection. Since the third and fourth terms are known to be small from the main text, Lemma \ref{lemma:fro_l2_lemma} indicates we only need to bound the projection of the first and second terms. 

We first bound inner products $\inner{\sum_{k\in\Omega_0} \bfd_k\bfd_k^T}{\bfD\bfD^T}_F$ and $\inner{\bfv_0\bfv_0^T}{\bfD\bfD^T}_F$. We consider each term individually, beginning with the first. We write:
\begin{equation}\label{eqn:1st_inner_prod}
\inner{\sum_{k\in\Omega_0}\bfd_k\bfd_k^T}{\bfD\bfD^T}_F = \sum_{k\in\Omega_0}\sum_{m = 1}^K \inner{\bfd_k}{\bfd_m}^2 = \sum_{k\in\Omega_0} \bfd_k^T\bfD\bfD^T\bfd_k \leq s \times \|\bfD\bfD^T\|_2 \leq Ks/M
\end{equation}
by \ref{C1:D_norm}.

Lastly, we bound $\inner{\frac{s-1}{K-1}\bfv_0\bfv_0^T}{\bfD\bfD^T}_F$. We have:
\[
\left|\inner{\frac{s-1}{K-1}\bfv_0\bfv_0^T}{\bfD\bfD^T}_F\right| \leq \frac{s}{K}\left|\sum_{k=1}^K \inner{\bfv_0\bfv_0^T}{\bfd_k\bfd_k^T}_F\right| = \frac{s}{K}\sum_{k=1}^K \inner{\bfv_0}{\bfd_k}^2.
\]
Noting that $\bfv_0 = \bfD\bfD^T\bfy_0$, we know that
\begin{equation}\label{eqn:2nd_inner_prod}
\frac{s}{K}\sum_{k=1}^K \inner{\bfv_0}{\bfd_k}^2 = \frac{s}{K}\bfv_0^T (\bfD\bfD^T)\bfv_0 = \frac{s}{K}\bfy_0^T (\bfD\bfD^T)^3\bfy_0 \leq \frac{s}{K}\|\bfD\bfD^T\|_2^3\|\bfy_0\|_2^2 \leq \frac{CK^2s^2}{M^3}
\end{equation}
by \ref{G1:DDT_norm} and \ref{G2:s_rip}. Further, by \ref{G1:DDT_norm} and \ref{C2:D_norm_F}, we have
$\|\bfD\bfD^T\|_2/\|\bfD\bfD^T\|^2_F \leq \frac{C}{K}$. We combine this with \ref{eqn:1st_inner_prod} and \ref{eqn:2nd_inner_prod} to conclude that
\[
\left\|\frac{\inner{\sum_{k\in\Omega_0}\bfd_k\bfd_k^T}{\bfD\bfD^T}_F+\inner{\frac{s-1}{K-1}\bfv_0\bfv_0^T}{\bfD\bfD^T}_F}{\|\bfD\bfD^T\|_F^2}\bfD\bfD^T\right\|_2 \leq \frac{Cs}{M} + \frac{CKs^2}{M^3}.
\]
as desired.
\end{proof}


\subsection{Proof of Lemma \ref{lemma:expNorm}}\label{appdx:expNorm}
\expNorm*

\begin{proof}
We repeat the computations of Lemma \ref{lemma:decompLemma}, replacing outer products with inner products. This yields:
\small\[
E\|\inner{\bfy_0}{\bfy}\bfy\|_2^2 = \left(\frac{s}{K}-\frac{2s(s-1)}{K(K-1)}\right)\sum_{k=1}^K\inner{\bfy_0}{\bfd_k}^2 + \frac{s(s-1)}{K(K-1)}\left(K\sum_{k =1}^K \inner{\bfy_0}{\bfd_k}^2 + \left\|\sum_{k=1}^K \inner{\bfy_0}{\bfd_k}\bfd_k\right\|_2^2\right)
\]\normalsize
\[
= \left(\frac{s(s-1)}{(K-1)} + \frac{s}{K} - \frac{2s(s-1)}{K(K-1)}\right)\sum_{k=1}^K\inner{\bfy_0}{\bfd_k}^2 + \frac{s(s-1)}{K(K-1)}\left\|\bfD\bfD^T\bfy_0\right\|_2^2.
\]
We know by \ref{G1:DDT_norm} and \ref{G2:s_rip} that
\[
\sum_{k=1}^K \inner{\bfy_0}{\bfd_k}^2 = \bfy_0^T\bfD\bfD^T\bfy_0 \leq \frac{CKs}{M}
\]
and that
\[
\left\|\bfD\bfD^T\bfy_0\right\|_2^2 = \bfy_0^T(\bfD\bfD^T)^2\bfy_0 \leq \frac{CK^2s}{M^2}.
\]
We conclude that $E\|\inner{\bfy_0}{\bfy}\bfy\|_2^2 \leq \frac{Cs^3}{M}$.

We proceed to bound the covariance $E\inner{\bfy_0}{\bfy}^2\bfy\bfy^T = E\HSig_0$. From Lemma \ref{lemma:exp_detail}, we can infer that $E\inner{\bfy_0}{\bfy}^2\bfy\bfy^T$ satisfies:
\[
E\HSig_0 = \frac{s}{K}\left(\frac{s-1}{K-1}\bfv_0\bfv_0^T + \sum_{k \in \Omega_0}\bfd_k\bfd_k^T + \left(\frac{s}{K}\sum_{k =1}^K \inner{\bfy_0}{\bfd_k}^2\right)\bfD\bfD^T + \mcE\right)
\]
where $\mcE$ has negligible norm, as does $\sum_{k\in\Omega_0}\bfd_k\bfd_k^T$ by \ref{G2:s_rip}. By \ref{G1:DDT_norm} and \ref{G2:s_rip}, we know that
\[
\frac{s}{K}\left\|\frac{s-1}{K-1}\bfv_0\bfv_0^T\right\|_2 \leq \frac{s^2}{K^2}\|\bfv_0\|_2^2 = \frac{s^2}{K^2}\times \bfy_0^T(\bfD\bfD^T)^2\bfy_0 \leq \frac{s^2}{K^2}\times \frac{CK^2s}{M^2} = \frac{Cs^3}{M^2}
\]
and likewise that
\[
\left(\frac{s^2}{K^2}\sum_{k =1}^K \inner{\bfy_0}{\bfd_k}^2\right)\bfD\bfD^T = \frac{s^2}{K^2}\times \bfy_0^T\bfD\bfD^T\bfy_0 \times \|\bfD\bfD\|_2 \leq \frac{s^2}{K^2}\times\frac{CK^2s}{M^2} = \frac{Cs^3}{M^2}.
\]
This proves the bound on $\|E\HSig_0\|_2$; the bound on the Frobenius norm follows immediately from the fact that $\|E\HSig_0\|_F \leq \sqrt{M}\|E\HSig_0\|_2$.
\end{proof}

\subsection{Proof of Lemma \ref{cov_lemma}}\label{appdx:cov_lemma}
We first require the following lemma:
\begin{lma}\label{lemma:corr_weight_norm}
With probability at least $1-2\exp(-s^2/K)$,
\[
\|\inner{\bfy_0}{\bfy_i}\bfy_i\|_2 \leq C\log M \sqrt{E[\|\inner{\bfy_0}{\bfy_i}\bfy_i\|_2^2]} = \frac{Cs^{3/2}\log M}{\sqrt{M}}
\]
\end{lma}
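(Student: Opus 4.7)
The plan is to split
\[
\|\inner{\bfy_0}{\bfy_i}\bfy_i\|_2 = |\inner{\bfy_0}{\bfy_i}|\cdot\|\bfy_i\|_2
\]
and handle each factor separately. The outer norm $\|\bfy_i\|_2$ is controlled deterministically by property \ref{C3:y_norm} of good dictionaries, giving $\|\bfy_i\|_2 \le 3\sqrt{s}/2$ on the event $\bfD\in\mcG$. All the real work therefore goes into bounding the scalar $|\inner{\bfy_0}{\bfy_i}|$ in the high-probability sense claimed.

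First I would condition on $\bfy_0$ and on the random support $\Omega_i$, and write
\[
\inner{\bfy_0}{\bfy_i} = \sum_{k\in\Omega_i}\sigma_k a_k, \qquad a_k := \inner{\bfd_k}{\bfy_0}, \quad \sigma_k := x_{ik} \in \{\pm 1\},
\]
so that conditionally this is a weighted Rademacher sum. Hoeffding's inequality then gives
\[
\BBP{|\inner{\bfy_0}{\bfy_i}| \ge t \mid \Omega_i, \bfy_0} \le 2\exp\!\left(-\frac{t^2}{2\|\bfD_{\Omega_i}^{T}\bfy_0\|_2^2}\right),
\]
and the $s$-restricted isometry property (\ref{G2:s_rip}) combined with \ref{C3:y_norm} yields the uniform deterministic bound $\|\bfD_{\Omega_i}^{T}\bfy_0\|_2^2 \le (1+\delta_s)^2\|\bfy_0\|_2^2 \le Cs$, valid for every realization of $\Omega_i$ and therefore surviving when the conditioning on $\Omega_i$ is removed. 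Calibrating $t$ so that the exponent equals the target $s^2/K$ produces $|\inner{\bfy_0}{\bfy_i}| \lesssim s^{3/2}/\sqrt{K}$ with probability at least $1-2\exp(-s^2/K)$. Multiplying by the $O(\sqrt{s})$ bound on $\|\bfy_i\|_2$ and identifying $\sqrt{E\|\inner{\bfy_0}{\bfy_i}\bfy_i\|_2^2} \le Cs^{3/2}/\sqrt{M}$ via Lemma \ref{lemma:expNorm} then delivers the stated inequality.

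The hard part is producing the logarithmic prefactor claimed in the lemma rather than the $\sqrt{s/K}$ slack that naive Hoeffding yields, since the deterministic RIP bound $\|\bfD_{\Omega_i}^{T}\bfy_0\|_2^2 \le Cs$ is pessimistic by roughly a factor of $M/s$ compared with the typical value $\sim s^2/M$ implied by \ref{G1:DDT_norm}. A sharper route would first concentrate the random quantity $\|\bfD_{\Omega_i}^{T}\bfy_0\|_2^2 = \sum_{k\in\Omega_i} a_k^2$ around its mean using a Bernstein-type inequality for sampling without replacement --- a step that costs only $\exp(-s^2/M) \le \exp(-s^2/K)$ in probability because $K\ge M$ --- and then apply Hoeffding conditionally on that good event with the much smaller variance proxy $\sim s^2/M$ in the denominator. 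This two-step bootstrap is what I expect to recover the tight $\log M$ factor inside the claimed $1-2\exp(-s^2/K)$ failure probability.
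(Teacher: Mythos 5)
Your route is genuinely different from the paper's. You fold the entire inner product into a single conditional Rademacher sum $\inner{\bfy_0}{\bfy_i}=\sum_{k\in\Omega_i}x_{ik}\inner{\bfd_k}{\bfy_0}$ and apply Hoeffding once, with variance proxy $\|\bfD_{\Omega_i}^{T}\bfy_0\|_2^2$. The paper instead splits $\inner{\bfD\bfx_0}{\bfD\bfx_i}$ into the shared-support block $\inner{\bfD\bfx_0^*}{\bfD\bfx_i^*}$ and three disjoint-support cross terms; the cross terms are bounded \emph{deterministically} by restricted orthogonality (Lemma~\ref{lemma:rop}) as $3s\delta_{2s}$, and property~\ref{G2:s_rip} is the sole source of the $\log M$ factor in the statement. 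Only the shared-support piece --- a Rademacher sum over the much shorter index set $\Omega_0\cap\Omega_i$ --- is treated probabilistically: a Chernoff bound gives $|\Omega_0\cap\Omega_i|\lesssim s^2/K$ and Hoeffding on that short sum gives magnitude $\lesssim s^2/K$, both steps failing with probability $\exp(-s^2/K)$; that piece is then dominated by the deterministic RIP error. So in the paper's argument the $\log M$ prefactor and the $\exp(-s^2/K)$ tail come from two separate sources.

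The gap is that your two-step bootstrap, even if completed, does not reproduce the stated bound at the stated probability. You correctly identify that the typical size of $\|\bfD_{\Omega_i}^{T}\bfy_0\|_2^2$ is $\sim s^2/M$ rather than the crude RIP bound $Cs$. But calibrating the Hoeffding exponent to $s^2/K$ with that sharper variance proxy yields $|\inner{\bfy_0}{\bfy_i}|\lesssim\sqrt{(s^2/M)(s^2/K)}=s^2/\sqrt{MK}$, and comparing against the target $s\log M/\sqrt{M}$ the ratio is $s/(\sqrt{K}\log M)$, which under the paper's scaling $s\sim M/\mathrm{polylog}\,M$, $K\sim M\,\mathrm{polylog}\,M$ grows like $\sqrt{M}/\mathrm{polylog}\,M$. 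Your bound is therefore roughly a factor of $\sqrt{M}$ too large. To get a mere $\log M$ prefactor from a sub-Gaussian tail you would have to settle for a failure probability of about $\exp(-\log^2 M)$, not $\exp(-s^2/K)$. The structural problem is that your scheme asks a single Hoeffding application to deliver both the $\log M$ prefactor and the exponentially small tail simultaneously; in the paper those come from independent mechanisms (the deterministic bound $\delta_{2s}\lesssim\sqrt{s}\log M/\sqrt{M}$ on the cross terms, and the Chernoff/Hoeffding pair on the short $\Omega_0\cap\Omega_i$ sum), and no single Rademacher tail bound can produce both at once.
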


\begin{proof}
By \ref{C3:y_norm}, we know that
\[
\|\inner{\bfy_0}{\bfy_i}\bfy_i\|_2 \leq |\inner{\bfy_0}{\bfy_i}|\|\bfy_i\|_2 \leq C|\inner{\bfy_0}{\bfy_i}|\sqrt{s}.
\]
We now control the term $|\inner{\bfy_0}{\bfy_i}|$. Denote by $\bfx_0^*$ the restriction of $\bfx_0$ to the shared support of $\bfx_0$ and $\bfx$, and likewise define $\bfx_i^*$. Then, noting that $\inner{\bfy_0}{\bfy_i} = \inner{\bfD\bfx_0}{\bfD\bfx_i}$,
\[
|\inner{\bfy_0}{\bfy_i}| \leq |\inner{\bfD\bfx_0^*}{\bfD\bfx_i^*} + \inner{\bfD\bfx_0^*}{\bfD(\bfx_i - \bfx_i^*)} + \inner{\bfD((\bfx_0 - \bfx_0^*)}{\bfD\bfx_i^*} + \inner{\bfD(\bfx_0^*}{\bfD\bfx_i^*}| 
\]
\[
\leq |\inner{\bfD(\bfx_0^*}{\bfD\bfx_i^*}|+3s\delta_{2s} \leq |\inner{\bfD\bfx_0^*}{\bfD\bfx_i^*}| +\frac{Cs^{3/2}\log M}{\sqrt{M}}
\]
since the other terms are by definition disjoint. It remains to bound the overlap term. We have:
\[
|\inner{\bfD\bfx_0^*}{\bfD\bfx_i^*}| = \sum_{k\in\Omega_0 \cap \Omega_i} x_{0k}x_{ik}
\]
Condition on $|\Omega_0 \cap \Omega_i|$. By Hoeffding's inequality, for any $t \geq 0$,
\begin{equation}\label{eqn:cond_hoeffding}
|\inner{\bfD\bfx_0^*}{\bfD\bfx_i^*}| < \sqrt{2t|\Omega_0 \cap \Omega_i|}
\end{equation}
with probability at least $1-\exp(-t)$. To undo the conditioning on $|\Omega_0 \cap \Omega_i|$, we note that $E|\Omega_0 \cap \Omega_i| = s^2/K$. We then concentrate this term using a Chernoff bound\footnote{Strictly speaking, as elements in $\Omega_i$ are not chosen independently, a Chernoff bound cannot be applied directly. However, the negative correlation of sampling with replacement guarantees better concentration properties than if elements of $\Omega_i$ were chosen independently with probability $s/K$; see \citet{Dubhashi_Ranjan_1996} for details.} \citep[e.g.,][Theorem 2.3.1]{Vershynin} we see that $|\Omega_0 \cap \Omega_i| \leq es^2/K$ with probability at least $1-\exp(-s^2/K)$. Thus plugging back into \ref{eqn:cond_hoeffding} with $t = s^2/K$, we have
\[
|\inner{\bfD\bfx_0^*}{\bfD\bfx_i^*}| < \frac{\sqrt{2\log(2) et}s}{\sqrt{K}} \leq \frac{\sqrt{2\log(2) e}s^2}{K}
\]
with probability at least $1-2\exp(-s^2/K)$. Since $K \gg M$ and $s/M , 1$, $s^2/K \ll s^{3/2}/\sqrt{M}$. Thus we conclude that
\[
|\inner{\bfy_0}{\bfy_i}| \leq \frac{Cs^{3/2}\log M}{\sqrt{M}}
\]
with probability at least $1-2\exp(-s^2/K)$ as desired.
\end{proof}

We now prove lemma \ref{cov_lemma} itself:
\covLemma*

\begin{proof}
We are interested in estimating the true covariance matrix of the random vector $\inner{\bfy_0}{\bfy}\bfy$ by its sample covariance $\frac{1}{N}\sum_{i=1}^N \inner{\bfy_0}{\bfy_i}^2\bfy_i\bfy_i^T$. By Lemma \ref{lemma:corr_weight_norm}, we know that with probability at least $1-2\exp(-s^2/K)$,
\[
\|\inner{\bfy_0}{\bfy_i}\bfy_i\|_2 \leq C\log M \sqrt{E[\|\inner{\bfy_0}{\bfy}\bfy\|_2^2]}
\]
We would like to apply Theorem \ref{cov_thm} with $\kappa = C\log M$, but since this is only with high probability, not almost surely, we cannot apply it directly. 

Instead, let $\omega$ be the event that $\|\inner{\bfy_0}{\bfy}\bfy\|^2_2 \leq Cs^{3/2}\log M/\sqrt{M}$ and consider the truncated random vector $\bfz = \bbi_{\omega}\inner{\bfy_0}{\bfy}\bfy$. By Lemma \ref{lemma:corr_weight_norm}, $\omega$ occurs with probability $1-2\exp(-s^2/K)$, and therefore $\bfz = \bfy$ with the same probability. Since $\bfy_0$ and $\bfy$ are each sum of $s$ unit vectors, it is easy to see that $\|\inner{\bfy_0}{\bfy}\bfy_0\|_2^2 \leq s^6$. Therefore:
\[
E\|\inner{\bfy_0}{\bfy}\bfy\|_2^2 - (1-\BBP{\omega})s^6 \leq E\|\bfz\|_2^2 \leq E\|\inner{\bfy_0}{\bfy}\bfy\|_2^2
\]
Since $\omega$ occurs with this probability, $|E\|\bfz\|_2^2 - E\|\inner{\bfy_0}{\bfy}\bfy\|_2^2| \leq s^6\exp(-s^2/K)$, which vanishes faster than any polynomial in $M$. Therefore we may safely substitute $E\|\bfz\|_2^2 = E\|\inner{\bfy_0}{\bfy}\bfy\|_2^2$ in the following bounds. With this substitution, we also have that $\|\bfz\|_2 \leq C\log M \sqrt{E\|\bfz\|_2^2}$. 

We now consider the sample $\{\bfy_i\}_{i=1}^N$ consisting of $N$ i.i.d. copies  of $\bfy$, and define $\bfz_i = \bbi_{\omega_i}\inner{\bfy_0}{\bfy_i}\bfy_i$ where $\omega_i$ is the event that $\|\inner{\bfy_0}{\bfy_i}\bfy_i\|_2^2 \leq Cs^{3/2}\log M/\sqrt{M}$. The $\bfz_i$ are i.i.d. copies of $\bfz$, so we can apply Theorem \ref{cov_thm} to the sample covariance of the random vectors $\bfz_i = \bbi_{\omega_i}\inner{\bfy_0}{\bfy_i}\bfy_i$ with $\kappa = C\log M$, yielding:
\[
\left\|\frac{1}{N}\sum_{i=1}^N \bfz_i\bfz_i^T - E\bfz_i\bfz_i^T\right\|_2 \leq C\|E\HSig_0\|_2\left(\sqrt{\frac{M(\log^3 M+t\log^2 M)}{N}}+\frac{M(\log^3 M+t\log^2 M)}{N}\right)
\]
with probability at least $1- 2\exp(-t)$. Applying Lemma \ref{lemma:expNorm}, we know $\|E\HSig_0\|_2 \leq Cs^3/M^2$, so with the same probability we have:
\[
\left\|\frac{1}{N}\sum_{i=1}^N \bfz_i\bfz_i^T - E\bfz_i\bfz_i^T\right\|_2 
\leq \frac{C\sqrt{t}s^3\log M }{M^{3/2}\sqrt{N}} + \frac{Cts^3\log^2 M}{MN}
\]
as long as $t \geq \log M$. By a union bound, $\bfz_i = \inner{\bfy_0}{\bfy_i}\bfy_i$ for all $i$ with probability at least $1-N\exp(-s^2/K)$ on $\omega$. Therefore, choosing $t = \alpha\log M$ for $\alpha \geq 1$, with probability at least $1-2N\exp(-s^2/K)-2M^{-\alpha}$,
\begin{equation}\label{cov_eqn}
\left\|\frac{1}{N}\sum_{i=1}^N \inner{\bfy_0}{\bfy_i}^2\bfy_i\bfy_i^T - E\bfz_i\bfz_i^T\right\|_2 \leq \frac{Cs^3\sqrt{\alpha\log^3 M} }{M^{3/2}\sqrt{N}} + \frac{Cs^3\alpha\log^3 M}{MN}.
\end{equation}
Since we have already concluded that $\|E\HSig_0 - E\bfz_i\bfz_i^T\|_2$ is small, we conclude the result for the operator 2-norm; the bound for Frobenius norm is an immediate corollary.
\end{proof}

\subsection{Proof of Lemma \ref{lemma:empCnvgc}}\label{appdx:empCnvgc}
\empCnvgc*
\begin{proof}
We begin by invoking Lemmas Lemma \ref{lemma:nuisance}, \ref{cov_lemma}, and \ref{sig_d_corr}, which by a union bound hold simultaneously with probability at least $1-2N\exp(-s^2/K)-(2K+4)M^{-\alpha}$. With $N$ as assumed, for $M$ large enough the first terms of these bounds dominate, yielding
\[
\|\HSig_0 - E\HSig_0\|_2 \leq \frac{Cs^3\sqrt{\alpha\log^3 M}}{M^{3/2}\sqrt{N}}
\]
and
\[
\left\|\HSig - \frac{s}{K}\bfD\bfD^T\right\|_2\leq \frac{Cs\sqrt{\alpha\log M}}{\sqrt{M}\sqrt{N}},
\]
along with corresponding bounds on the Frobenius norm.

We aim to bound the difference between the result of covariance projection with expected versus sample covariance matrices:
\[
\|\HSig_0 - \proj{\HSig}{\HSig_0} - (E\HSig_0 - \proj{\bfD\bfD^T}{E\HSig_0})\|_2.
\]
By the triangle inequality this is bounded by:
\begin{equation}\label{triangle_eqn}
\|\HSig_0 - E\HSig_0\|_2 +\|\proj{\bfD\bfD^T}{E\HSig_0} - \proj{\bfD\bfD^T}{\HSig_0}\|_2 + \|\proj{\bfD\bfD^T}{\HSig_0} - \proj{\HSig}{\HSig_0}\|_2
\end{equation}
From Lemma \ref{cov_lemma}, we know that $\|\HSig_0 - E\HSig_0\|_2 \leq \frac{Cs^3\sqrt{\alpha\log^3 M}}{M^{3/2}\sqrt{N}}$. Moreover, it follows from Lemma \ref{lemma:fro_l2_lemma} that 
\[
\|\proj{\bfD\bfD^T}{E\HSig_0} - \proj{\bfD\bfD^T}{\HSig_0}\|_2 \leq \|\HSig_0 - E\HSig_0\|_2 \leq \frac{Cs^3\sqrt{\alpha\log^3 M}}{M^{3/2}\sqrt{N}}.
\]

\noindent We now turn to the final term in \ref{triangle_eqn}, which can be controlled as follows:
\[
\|\proj{\bfD\bfD^T}{\HSig_0} - \proj{\HSig}{\HSig_0}\|_2 = \left\|\frac{\inner{\HSig_0}{\bfD\bfD^T}_F}{\|\bfD\bfD^T\|_F^2} - \frac{\inner{\HSig_0}{\HSig}_F}{\|\HSig\|_F^2}\right\|_2\|\HSig_0\|_2
\]
\begin{equation}\label{term_three}
=\left|\inner{\HSig_0}{\Sigma_{\bfD} - \frac{\|\bfD\bfD^T\|_F^2}{\|\HSig\|_F^2}\HSig}_F\right|\times \frac{\|\HSig_0\|_2}{\|\bfD\bfD^T\|^2_F}\leq \left\|\bfD\bfD^T - \frac{\|\bfD\bfD^T\|_F^2}{\|\HSig\|_F^2 }\HSig\right\|_2 \times \frac{\|\HSig_0\|_F\|\HSig_0\|_2}{\|\bfD\bfD^T\|_F^2}
\end{equation}
By Lemma \ref{cov_lemma}, since $N$ is large enough by assumption we can substitute $\|E\HSig_0\|_2$ and $\|E\HSig_0\|_F$ for $\|\HSig_0\|_2$ and $\|\HSig_0\|_F$ respectively, up to a constant factor. We know from Lemma \ref{lemma:expNorm}, $\|E\HSig_0\|_F \leq \frac{Cs^3}{M^{3/2}}$ while $\|E\HSig_0\|_2 \leq \frac{Cs^3}{M^2}$. Then by \ref{C2:D_norm_F}:
\[
\frac{\|\HSig_0\|_F\|\HSig_0\|_2}{\|\bfD\bfD^T\|_F^2} \leq \frac{Cs^3 M}{M^{3/2}} \times  \frac{Cs^3 M}{M^2} \times \frac{CM}{K^2} \leq \frac{Cs^6 M}{K^2M^{5/2}}
\]
To bound the first term in equation \ref{term_three}, we use the triangle inequality, which yields:
\begin{equation}\label{inner_bound}
\left\|\bfD\bfD^T - \frac{\|\bfD\bfD^T\|_F^2}{\|\HSig\|_F^2 }\HSig\right\|_2 \leq \frac{K}{s}\left(\left\|\frac{s}{K}\bfD\bfD^T - \HSig\right\|_2 + \left|\frac{\left\|\frac{s}{K}\bfD\bfD^T\right\|^2_F}{\|\HSig\|_F^2} - 1\right|\|\HSig\|_2\right)
\end{equation}
By Lemma \ref{sig_d_corr} and \ref{G1:DDT_norm}, with probability at least $1-2N\exp(-s^2/K) - 4M^{-\alpha}$ we have $\|\HSig\|_2 \leq \frac{s}{K} \times \frac{CK}{M} = \frac{Cs}{M}$. By the same corollary and \ref{C2:D_norm_F}, we have $\|\HSig\|_F^2 \geq \frac{s}{K}\times \frac{cK^2}{M} = \frac{cKs}{M}$. Thus, noting that $|\|a\|-\|b\|| \leq \|a-b\|$ for any norm, we have:
\[
\left|\frac{\left\|\frac{s}{K}\bfD\bfD^T\right\|^2_F}{\|\HSig\|_F^2} - 1\right| = \frac{\left|\left\|\frac{s}{K}\bfD\bfD^T\right\|^2_F-\|\HSig\|_F^2\right|}{\|\HSig\|_F^2} \leq \frac{\|\HSig - \frac{s}{K}\bfD\bfD^T\|_F}{\|\HSig\|^2_F} \leq \frac{Cs\sqrt{\alpha\log M}}{\sqrt{N}} \times \frac{CM}{Ks} \leq \frac{CM\sqrt{\alpha\log M}}{K\sqrt{N}}
\]
with probability at least $1-2N\exp(-s^2/K) - 4M^{-\alpha}$, where in the second to last inequality we again used Lemma \ref{sig_d_corr}. Combining the pieces in \ref{inner_bound}, we have:
\[
\left\|\bfD\bfD^T - \frac{\|\bfD\bfD^T\|_F^2}{\|\HSig\|_F^2 }\HSig\right\|_2 \leq \frac{K}{s}\left(\frac{Cs\sqrt{\alpha \log M}}{\sqrt{M}\sqrt{N}} + \frac{CM\sqrt{\alpha\log M}}{K\sqrt{N}}\times \frac{Cs}{M}\right) \leq \frac{CK\sqrt{\alpha\log M}}{\sqrt{M}\sqrt{N}}
\]
Plugging back into \ref{term_three}, we have:
\[
\|\proj{\bfD\bfD^T}{\HSig_0} - \proj{\HSig}{\HSig_0}\|_2 \leq \frac{CK\sqrt{\alpha\log M}}{\sqrt{M}\sqrt{N}} \times \frac{Cs^6}{K^2M^{5/2}} \leq \frac{Cs^6\sqrt{\alpha \log M}}{KM^3\sqrt{N}}
\]
Combining our estimates for the three terms, we have:
\[
\|\HSig_0 - \proj{\HSig}{\HSig_0} - (E\HSig_0 - \proj{\bfD\bfD^T}{E\HSig_0})\|_2 \leq \frac{Cs^3\sqrt{\alpha\log^3 M}}{M^{3/2}\sqrt{N}} + \frac{Cs^6\sqrt{\alpha\log M}}{KM^3\sqrt{N}}
\]
Factoring out a factor of $\frac{s}{K}$ and plugging in to Lemma \ref{lemma:cov_proj_lemma} completes the proof.
\end{proof}

\subsection{Proof of Lemma \ref{subspace_lemma}}\label{appdx:subspace_lemma}
Our proof will depend on the following variants of Weyl's and the Davis-Kahan Theorems:
\begin{thm}[\cite{Weyl}]
Let $\bfA$ and $\mcE$ be symmetric $M\times M$ matrices, and let $\lambda_i(\cdot)$ represent the $i$-th eigenvalue in descending order. Then for all $i = 1,\ldots,M$:
\[
\lambda_i(\bfA) + \lambda_M(\mcE) \leq \lambda_i(\bfA+\mcE) \leq \lambda_i(\bfA) + \lambda_1(\mcE).
\]
It follows that
\[
|\lambda_i(\bfA) - \lambda_i(\bfA+\mcE)| \leq \|\mcE\|_2.
\]
\end{thm}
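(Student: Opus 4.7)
The plan is to prove Weyl's theorem via the Courant--Fischer min-max characterization of eigenvalues of symmetric matrices combined with a short dimension-counting argument. The only input I would take as given is that for any symmetric matrix $\mathbf{M} \in \R^{M \times M}$ and any $i \in \{1,\ldots,M\}$,
\[
\lambda_i(\mathbf{M}) = \max_{\substack{U \subseteq \R^M \\ \dim U = i}} \min_{\substack{\bfv \in U \\ \|\bfv\|_2 = 1}} \bfv^T \mathbf{M} \bfv = \min_{\substack{V \subseteq \R^M \\ \dim V = M-i+1}} \max_{\substack{\bfv \in V \\ \|\bfv\|_2 = 1}} \bfv^T \mathbf{M} \bfv,
\]
which follows from the spectral theorem. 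The corresponding statement that $\bfw^T \mathbf{M} \bfw \geq \lambda_i(\mathbf{M})$ on the span of the top $i$ eigenvectors and $\bfw^T \mathbf{M} \bfw \leq \lambda_i(\mathbf{M})$ on the span of the bottom $M-i+1$ eigenvectors will be the main pointwise fact I use.

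For the upper bound $\lambda_i(\bfA+\mcE) \leq \lambda_i(\bfA) + \lambda_1(\mcE)$, I would let $U$ be the span of the top $i$ eigenvectors of $\bfA+\mcE$ and $V$ be the span of the bottom $M-i+1$ eigenvectors of $\bfA$. Since $\dim U + \dim V = M+1 > M$, there exists a unit vector $\bfw \in U \cap V$. Then three facts combine: $\bfw^T(\bfA+\mcE)\bfw \geq \lambda_i(\bfA+\mcE)$ because $\bfw \in U$; $\bfw^T \bfA \bfw \leq \lambda_i(\bfA)$ because $\bfw \in V$; and $\bfw^T \mcE \bfw \leq \lambda_1(\mcE)$ by the Rayleigh quotient bound on symmetric matrices. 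Chaining these via
\[
\lambda_i(\bfA+\mcE) \leq \bfw^T(\bfA+\mcE)\bfw = \bfw^T \bfA \bfw + \bfw^T \mcE \bfw \leq \lambda_i(\bfA) + \lambda_1(\mcE)
\]
gives the upper bound.

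The lower bound $\lambda_i(\bfA+\mcE) \geq \lambda_i(\bfA) + \lambda_M(\mcE)$ is obtained by the symmetric dimension count (swapping the roles of top and bottom eigenspaces), or more concisely by applying the upper bound to the pair $(\bfA+\mcE, -\mcE)$ and rearranging, using $\lambda_1(-\mcE) = -\lambda_M(\mcE)$.

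The stated corollary $|\lambda_i(\bfA) - \lambda_i(\bfA+\mcE)| \leq \|\mcE\|_2$ is then immediate from the two-sided bound together with the identity $\|\mcE\|_2 = \max\{\lambda_1(\mcE), -\lambda_M(\mcE)\}$ valid for symmetric $\mcE$. I do not anticipate any substantive obstacle in this proof: the entire argument reduces to the min-max principle and a single dimension count, and the result is standard in perturbation theory.
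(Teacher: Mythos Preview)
Your proof is correct and is the standard Courant--Fischer argument for Weyl's inequality. Note, however, that the paper does not actually prove this statement: it is quoted as a classical result with a citation to Weyl and used as a black box, so there is no ``paper's own proof'' to compare against. Your argument is exactly the textbook one and would serve perfectly well if a self-contained proof were desired.
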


\begin{thm}[\cite{DavisKahan}]\label{thm:dkThm}
Let $A = E_0A_0E_0^T+E_1A_1E_1^T$ and $A+\mcE = F_0 \Lambda_0 F_0^T +F_1 \Lambda_1 F_1^T$ be symmetric matrices with $[E_0, E_1]$ and $[F_0,F_1]$ orthogonal. If the eigenvalues of $A_0$ are contained in an interval $(a, b)$, and the eigenvalues of $\Lambda_1$ are excluded from the interval $(a-\delta, b+\delta)$ for some $\delta > 0$, then
\[
\|F_1^TE_0\|_2 \leq \frac{\|F_1^T\mcE E_0\|_2}{\delta} \leq \frac{\|\mcE\|_2}{\delta}
\]
for any unitarily invariant matrix norm $\|\cdot\|$.
\end{thm}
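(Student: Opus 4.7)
The plan is to reduce the theorem to a Sylvester-equation inequality and then invoke the spectral-gap hypothesis to invert that Sylvester operator in any unitarily invariant norm.

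First, I will exploit the two given spectral decompositions. From $A = E_0 A_0 E_0^T + E_1 A_1 E_1^T$ with $[E_0,E_1]$ orthogonal, the block $E_0$ satisfies $A E_0 = E_0 A_0$. Similarly, from $(A+\mcE) = F_0\Lambda_0 F_0^T + F_1\Lambda_1 F_1^T$, we get $F_1^T(A+\mcE) = \Lambda_1 F_1^T$. Left-multiplying the first identity by $F_1^T$ gives $F_1^T A E_0 = F_1^T E_0 A_0$, while right-multiplying the second identity by $E_0$ gives $F_1^T(A+\mcE) E_0 = \Lambda_1 F_1^T E_0$. Subtracting the former from the latter produces the Sylvester identity
\[
\Lambda_1 X - X A_0 = F_1^T \mcE E_0, \qquad X := F_1^T E_0.
\]

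Next I would pass to eigenbases of $\Lambda_1$ and $A_0$. Write $\Lambda_1 = U D_\lambda U^T$ and $A_0 = V D_\mu V^T$ with $D_\lambda, D_\mu$ diagonal, and set $\widetilde X = U^T X V$, $\widetilde Y = U^T F_1^T \mcE E_0 V$. Because $U, V$ are orthogonal and the relevant norm is unitarily invariant, $\|\widetilde X\| = \|X\|$ and $\|\widetilde Y\| = \|F_1^T \mcE E_0\|$. The Sylvester identity becomes the diagonal relation
\[
(D_\lambda)_{ii}\, \widetilde X_{ij} - \widetilde X_{ij} (D_\mu)_{jj} = \widetilde Y_{ij},
\]
so $\widetilde X = H \odot \widetilde Y$ with $H_{ij} = \bigl((D_\lambda)_{ii} - (D_\mu)_{jj}\bigr)^{-1}$. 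The hypotheses that $\sigma(A_0) \subset (a,b)$ and $\sigma(\Lambda_1) \cap (a-\delta, b+\delta) = \emptyset$ give the uniform gap $|(D_\lambda)_{ii} - (D_\mu)_{jj}| \geq \delta$, hence $|H_{ij}| \leq 1/\delta$ entrywise.

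The main obstacle is to convert this pointwise bound on the Hadamard multiplier $H$ into the operator inequality $\|H \odot \widetilde Y\| \leq \delta^{-1}\|\widetilde Y\|$ for \emph{every} unitarily invariant norm, since a generic Schur multiplier with $\|H\|_\infty \leq 1/\delta$ does not automatically contract unitarily invariant norms. I would handle this by using the spectral gap to split $\sigma(\Lambda_1)$ into the parts lying in $[b+\delta,\infty)$ and $(-\infty,a-\delta]$, treating each piece separately. On each piece the two spectra are separated on a single side, so I can represent the corresponding restriction of $\widetilde X$ by the Rosenblum integral
\[
\widetilde X^{\pm} = \mp \int_0^\infty e^{-(D_\lambda^{\pm} - \omega)t}\, \widetilde Y^{\pm}\, e^{(D_\mu - \omega)t}\, dt,
\]
with $\omega$ chosen in the gap. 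Each exponential factor is a contraction whose norm decays at rate $\delta$, so pulling the unitarily invariant norm under the integral and using its symmetric gauge property yields $\|\widetilde X^{\pm}\| \leq \delta^{-1}\|\widetilde Y^{\pm}\|$. Summing the two pieces (which are supported on orthogonal subspaces of row indices) gives $\|X\| \leq \delta^{-1} \|F_1^T\mcE E_0\|$. Finally, the submultiplicative estimate $\|F_1^T \mcE E_0\|_2 \leq \|F_1^T\|_2 \|\mcE\|_2 \|E_0\|_2 = \|\mcE\|_2$ yields the coarser bound claimed on the right-hand side of the theorem.
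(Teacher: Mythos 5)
This theorem is imported from \citet{DavisKahan}; the paper itself gives no proof, so your argument can only be measured against the classical one. Your reduction is exactly the standard first step: $AE_0 = E_0A_0$ and $F_1^T(A+\mcE) = \Lambda_1 F_1^T$ combine to give the Sylvester identity $\Lambda_1 X - XA_0 = F_1^T\mcE E_0$ with $X = F_1^TE_0$, and your observation that a Schur multiplier with entries bounded by $1/\delta$ need not contract unitarily invariant norms is correct and shows you see where the real work lies. The gap is in your final step. Splitting $\sigma(\Lambda_1)$ into the pieces above $b+\delta$ and below $a-\delta$ and bounding each row-block $X^{\pm}$ by a one-sided Rosenblum integral does give $\|X^{\pm}\| \leq \|Y^{\pm}\|/\delta$, but "summing the two pieces" does not recover the claimed constant: for the stacked matrix one only gets $\|X\| \leq \|X^{+}\| + \|X^{-}\| \leq 2\|Y\|/\delta$ in a general unitarily invariant norm (or $\sqrt{2}\,\|Y\|_2/\delta$ for the spectral norm via the Pythagorean bound on singular values). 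The theorem asserts the constant $1/\delta$, so as written the proof establishes a weaker inequality.

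The fix is to exploit the specific geometry of the Hermitian two-sided gap rather than decomposing it. After translating by the midpoint $m = (a+b)/2$ and setting $\rho = (b-a)/2$, the hypotheses say precisely that $\|A_0 - mI\|_2 \leq \rho$ while $\Lambda_1 - mI$ is invertible with $\|(\Lambda_1 - mI)^{-1}\|_2 \leq 1/(\rho+\delta)$; this is the "annulus" separation, not two one-sided ones. Iterating $X = (\Lambda_1-m)^{-1}Y + (\Lambda_1-m)^{-1}X(A_0-m)$ gives the convergent Neumann series $X = \sum_{n\geq 0}(\Lambda_1-m)^{-(n+1)}\,Y\,(A_0-m)^{n}$, and the property $\|UYV\| \leq \|U\|_2\,\|Y\|\,\|V\|_2$ of unitarily invariant norms yields $\|X\| \leq \|Y\|\sum_{n\geq 0}\rho^{n}(\rho+\delta)^{-(n+1)} = \|Y\|/\delta$ in one stroke, with the correct constant and for every unitarily invariant norm. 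Your concluding submultiplicative bound $\|F_1^T\mcE E_0\|_2 \leq \|\mcE\|_2$ is fine. (For the purposes of this paper the lost factor of $2$ would be harmless, since Davis--Kahan is only ever invoked up to absolute constants, but the statement as quoted claims $1/\delta$.)
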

Put in the language of subspace distances, this theorem immediately gives the following corollary:
\begin{corollary}
Let $A = E_0A_0E_0^T+E_1A_1E_1^T$, $A+\mcE = F_0 \Lambda_0 F_0^T +F_1 \Lambda_1 F_1^T$, and $\delta$ be as in Theorem \ref{thm:dkThm}. If $\mcS_A$ and $\mcS_{A+\mcE}$ are the subspaces spanned by columns of $E_0$ and $F_0$, respectively, then
\[
\mcD(\mcS_{A}, \mcS_{A+\mcE}) \leq \frac{\|\mcE\|_2}{\delta}.
\]
\end{corollary}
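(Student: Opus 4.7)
The plan is to recognize this corollary as an immediate translation of Theorem~\ref{thm:dkThm} into the language of the subspace distance $\mcD$ from Definition~\ref{subspace_dist}. Recall that for two $s$-dimensional subspaces $\mcS_1, \mcS_2 \subseteq \R^M$, one has $\mcD(\mcS_1,\mcS_2) = \|\bfF_2^T\bfE_1\|_2$ whenever $\bfE_1$ is an orthonormal basis matrix for $\mcS_1$ and $\bfF_2$ is an orthonormal basis matrix for $\mcS_2^\perp$. Thus the proof reduces entirely to identifying appropriate bases for $\mcS_A$ and $\mcS_{A+\mcE}^\perp$ from the matrices already supplied by the hypothesis.

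The hypothesis gives us exactly those bases for free. Since $[E_0, E_1]$ is orthogonal and $\mcS_A$ is by definition the span of the columns of $E_0$, the matrix $E_0$ itself is an orthonormal basis matrix for $\mcS_A$. Analogously, orthogonality of $[F_0,F_1]$ together with the fact that $\mcS_{A+\mcE}$ is the span of the columns of $F_0$ implies that the columns of $F_1$ form an orthonormal basis of the orthogonal complement $\mcS_{A+\mcE}^\perp$. Applying Definition~\ref{subspace_dist} with these choices yields
\[
\mcD(\mcS_A,\mcS_{A+\mcE}) = \|F_1^T E_0\|_2,
\]
and the inequality $\|F_1^T E_0\|_2 \leq \|\mcE\|_2/\delta$ is precisely the second bound in the statement of Theorem~\ref{thm:dkThm}.

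There is no real obstacle here; this corollary is essentially a notational rephrasing. The only point meriting care is to verify that the spectral decomposition $A = E_0 A_0 E_0^T + E_1 A_1 E_1^T$ hypothesized in Theorem~\ref{thm:dkThm} is compatible with the corollary's convention that $\mcS_A = \mathbf{span}(E_0)$, and likewise for $A+\mcE$; this is immediate from the block-diagonal form of the decompositions together with the orthogonality of $[E_0,E_1]$ and $[F_0,F_1]$. Chaining the definitional equality with the Davis--Kahan bound completes the argument.
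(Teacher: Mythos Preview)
Your proof is correct and matches the paper's approach exactly: the paper simply states that the corollary follows ``immediately'' from Theorem~\ref{thm:dkThm} by reading it in the language of subspace distances, and your argument spells out precisely this translation via Definition~\ref{subspace_dist}.
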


We now apply Weyl's theorem and the Davis-Kahan theorem to our particular situation via the following lemmas. The constants we derive are likely suboptimal, but adequate for our purposes. We have the following lemma:
\begin{restatable}{lma}{spikeLemma}\label{lemma:spikeLemma}
Let $\bfA$ be a rank-$s$ symmetric positive-semidefinite matrix with nonzero eigenvalues satisfying:
\[
0 < \alpha \leq \lambda_1(\bfA), \ldots, \lambda_s(\bfA) \leq \beta
\]
for some constants $\alpha < \beta$. Let $\bfv \in \mcS$ and $\bfu \in \mcS^\perp$ be vectors such that $\|\bfv\|=\|\bfu\|$ and $\|\bfv+\varepsilon\bfu\|_2=1$. For any $\varepsilon \in (0,\alpha/54)$ and $Z > \max\{2\beta, 1\}$, define the matrix $\bfB$ as:
\[
\bfB = Z(\bfv + \varepsilon\bfu)(\bfv +\varepsilon\bfu)^T + \bfA
\]
Then for all $i > s$, $\lambda_i(\bfB) \leq 24\varepsilon$ and
\[
\mcD(\mcS(\bfA), \mcS(\bfB)) \leq \frac{78\beta\varepsilon}{\alpha^2}.
\]
\end{restatable}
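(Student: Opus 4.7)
The plan is to exploit the rank structure of $\bfB$: it is a rank-$s$ PSD matrix plus a rank-$1$ spike, so $\bfB$ has rank at most $s+1$ and every nonzero eigenvector lies in the $(s+1)$-dimensional subspace $\mcS' := \mcS + \mathrm{span}(\bfu)$. In particular $\lambda_i(\bfB) = 0$ for $i > s+1$, so the first claim reduces to bounding $\lambda_{s+1}(\bfB)$, and since the top $s$ eigenvectors of $\bfB$ all lie in $\mcS'$, the subspace distance $\mcD(\mcS, \mcS(\bfB))$ is controlled by the single smallest-nonzero eigenvector $\bfw_{s+1}$ of $\bfB$.

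For the eigenvalue bound, I apply the Courant-Fischer min-max principle with a tailored test vector. Setting $\hat\bfv := \bfv/\|\bfv\|$ and $\hat\bfu := \bfu/\|\bfu\|$, define the unit vector $\bfz := -\varepsilon\|\bfu\|\hat\bfv + \|\bfv\|\hat\bfu \in \mcS'$, which is orthogonal to $\bfv+\varepsilon\bfu$. Since $\bfz \perp (\bfv+\varepsilon\bfu)$, the rank-$1$ spike contributes nothing, and $\bfz^T\bfB\bfz = \bfz^T\bfA\bfz = \varepsilon^2\|\bfu\|^2\,\hat\bfv^T\bfA\hat\bfv \leq \beta\varepsilon^2$. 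Taking $V = \mathrm{span}(\bfz) \oplus (\mcS')^\perp$ (a codimension-$s$ subspace on which the maximum of $\bfw^T\bfB\bfw$ is attained at $\bfz$, since $\bfB$ annihilates $(\mcS')^\perp$) gives $\lambda_{s+1}(\bfB) \leq \beta\varepsilon^2$, which is $\leq 24\varepsilon$ under the hypotheses $\varepsilon < \alpha/54$ and $\beta$ bounded by an absolute constant.

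For the subspace distance, observe that $\mcS(\bfB)^\perp \cap \mcS' = \mathrm{span}(\bfw_{s+1})$, while $\mcS \subset \mcS'$ is orthogonal to $(\mcS')^\perp$, so for any unit $\bfz \in \mcS$ we have $\bfP_{\mcS(\bfB)^\perp}\bfz = (\bfz^T\bfw_{s+1})\bfw_{s+1}$; taking the supremum gives the identity $\mcD(\mcS, \mcS(\bfB)) = \|\bfP_\mcS\bfw_{s+1}\|_2$. I then parameterize $\bfw_{s+1}$ using the eigenvalue equation $\bfB\bfw_{s+1} = \lambda_{s+1}\bfw_{s+1}$. Projecting onto $\mcS^\perp$ forces $\bfw_{s+1}^{\mcS^\perp}$ to be a scalar multiple of $\bfu$, while projecting onto $\mcS$ gives $(\bfA - \lambda_{s+1}\bfI)\bfw_{s+1}^\mcS = -Z((\bfv+\varepsilon\bfu)^T\bfw_{s+1})\bfv$. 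Since $\lambda_{s+1} < \alpha/2$ by the previous step, the resolvent $(\bfA-\lambda_{s+1}\bfI)^{-1}$ is well-defined on $\mcS$ with operator norm at most $2/\alpha$. Combining the two projection equations with the normalization $\|\bfw_{s+1}\|_2 = 1$ produces a secular equation in which the unknown $Z$ and the unknown inner product $(\bfv+\varepsilon\bfu)^T\bfw_{s+1}$ cancel, leaving $\|\bfw_{s+1}^\mcS\|^2$ in closed form as a ratio involving $\varepsilon$, $\lambda_{s+1}$, and the quadratic form $q := \bfv^T(\bfA-\lambda_{s+1}\bfI)^{-1}\bfv$, which lies in $[\|\bfv\|^2/\beta,\ 2\|\bfv\|^2/\alpha]$ by the spectral bounds on $\bfA$. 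Substituting these bounds yields $\|\bfw_{s+1}^\mcS\|_2 = O(\beta\varepsilon/\alpha^2)$, matching the claim.

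The main obstacle is arranging for the $Z$-cancellation to leave a factor of $\varepsilon$ rather than a constant. The naive combination of $|(\bfv+\varepsilon\bfu)^T\bfw_{s+1}| \leq 2\beta/Z$ (derived from the rank-$1$ secular identity using $Z > 2\beta$) with $\|\bfw_{s+1}^\mcS\| \leq (2Z/\alpha)\,|(\bfv+\varepsilon\bfu)^T\bfw_{s+1}|$ eliminates $Z$ but gives only the useless bound $\|\bfw_{s+1}^\mcS\| \leq 4\beta/\alpha$, with no $\varepsilon$-dependence. The correct route is to substitute the explicit parameterizations of $\bfw_{s+1}^\mcS$ and $\bfw_{s+1}^{\mcS^\perp}$ into the normalization $\|\bfw_{s+1}\|_2^2 = 1$, where the required factor of $\varepsilon^2$ enters through the secular identity $Z\varepsilon^2\|\bfu\|^2 - \lambda_{s+1} = Z\lambda_{s+1}q$ derived from the $\mcS^\perp$-projection of the eigenvalue equation.
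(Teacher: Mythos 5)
Your proposal is correct and takes a genuinely different route from the paper. The paper's proof first reduces to the special case $\bfA = \bfI_s$, shows the lead eigenvector of $\bfB$ is within $2\varepsilon/Z$ of $\bfv+\varepsilon\bfu$ by a direct algebraic argument, uses an ad-hoc lemma to align orthonormal bases of the two orthogonal complements, conjugates $\bfB$ down to that complement to isolate $\bfA$, invokes Weyl's theorem and Davis--Kahan there, and finally back-substitutes the conjugation by $\mcA^{1/2}$ to recover generic $\bfA$ (which is the source of the extra $1/\alpha$ and the factor $\beta$ in their $78\beta\varepsilon/\alpha^2$ bound). You instead exploit the rank-$(s+1)$ structure head-on: for the eigenvalue you use a single Courant--Fischer test vector $\bfz$ orthogonal to the spike, getting the sharper $\lambda_{s+1}(\bfB)\le\beta\varepsilon^2\|\bfu\|^2$; for the subspace distance you reduce $\mcD(\mcS,\mcS(\bfB))$ to $\|\bfP_\mcS\bfw_{s+1}\|_2$ and solve for $\bfw_{s+1}$ via the resolvent $(\bfA-\lambda\bfI)^{-1}|_\mcS$ and the $\mcS^\perp$-projection, eliminating the unknowns $Z$ and $(\bfv+\varepsilon\bfu)^T\bfw_{s+1}$ through the normalization $\|\bfw_{s+1}\|_2=1$. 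Carrying out your plan (bounding $\|\bfw_{s+1}^\mcS\|^2\le\|(\bfA-\lambda)^{-1}\bfv\|^2\lambda^2/(\varepsilon^2\|\bfu\|^2)$, using $\lambda\le\beta\varepsilon^2\|\bfu\|^2$ and $\|(\bfA-\lambda)^{-1}\|\le 2/\alpha$) in fact gives $\mcD\le 2\beta\varepsilon/\alpha$, strictly tighter than the paper's bound when $\alpha\le 1$. One small caveat, shared implicitly by both proofs: you need $\lambda_{s+1}<\alpha/2$ for the resolvent bound, which from $\lambda_{s+1}\le\beta\varepsilon^2$ and $\varepsilon<\alpha/54$ requires $\alpha\beta$ bounded; this holds in the application ($\alpha=7/8$, $\beta=9/8$) but should be flagged as an implicit hypothesis. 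Overall your argument is cleaner, avoids the $\bfA=\bfI_s$ reduction and basis-alignment lemma entirely, and your self-diagnosis of why the naive rank-one bound fails is exactly right.
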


This lemma is mainly a restatement of Weyl's Theorem and the Davis-Kahan Theorem to a specific situation, but its proof is sufficiently long that it is provided separately in Appendix \ref{appdx:spikeLemma}. We apply this lemma to prove Lemma \ref{subspace_lemma}:
\subspaceLemma*
\begin{proof}
We now apply Lemma \ref{lemma:spikeLemma} to $\bfB = \frac{s-1}{K-1}\bfv_0\bfv_0^T + \sum_{k\in\Omega_0}\bfd_k\bfd_k^T$. Since $\sum_{k\in\Omega_0}\bfd_k\bfd_k^T$ is symmetric, rank $s$, and symmetric positive-semidefinite with eigenvalues in $[7/8,9/8]$ by \ref{G2:s_rip}, it is a valid choice for $\bfA$; it remains to designate $\bfv$ and $\bfu$. To this end, we write:
\[
\bfv_0 = \bfz_0 + \bfu_0
\]
where $\bfz_0$ is the component of $\bfv$ lying in $\mcS_0$ and $\bfu_0$ is the component in its orthogonal complement. Since $\bfv_0 = \bfD\bfD^T\bfy_0$, we have
\[
\bfv_0 = \frac{K}{M}\bfy_0 + \left(\bfD\bfD^T - \frac{K}{M}\bfI\right)\bfy_0.
\]
Thus, since $\bfy_0 \in \mcS_0$, applying \ref{G1:DDT_norm} yields:
\[
\|\bfv_0\|_2 \geq \|\bfz_0\|_2 \geq \frac{cK}{M}\|\bfy_0\|_2 \geq \frac{cK\sqrt{s}}{M}
\]
and moreover that $\|\bfu_0\|_2 \leq C\sqrt{Ks}/\sqrt{M}$. We can thus write
\[
\bfv_0 = \|\bfv_0\|_2\left(\frac{\bfz_0}{\|\bfv_0\|} + \frac{\bfu_0}{\|\bfv_0\|}\right) = \|\bfv_0\|_2\left(\frac{\bfz_0}{\|\bfv_0\|} + \left(\frac{\|\bfu_0\|_2}{\|\bfz\|_2}\right)\frac{\|\bfz_0\|\bfu_0}{\|\bfv_0\|_2\|\bfu_0\|}\right).
\]
We can now apply Lemma \ref{lemma:spikeLemma} with $Z = \frac{s-1}{K-1}\|\bfv_0\|_2^2$, $\bfv = \bfz_0/\|\bfv_0\|_2$, $\bfu = \frac{\|\bfz\|_2\bfu_0}{\|\bfv_0\|_2\|\bfu_0\|_2}$, and $\varepsilon = \frac{\|\bfu_0\|_2}{\|\bfz_0\|_2}$, which tells us that
\[
\mcD(\mcS_0, \mcS_s(\bfB)) \leq \frac{C\|\bfu_0\|_2}{\|\bfz_0\|_2} \leq \frac{C\sqrt{M}}{\sqrt{K}}
\]
and that all eigenvalues past the $s$-th are bounded by $C\sqrt{M}/\sqrt{K}$, as desired.
\end{proof}

\subsubsection{Proof of Lemma \ref{lemma:spikeLemma}}\label{appdx:spikeLemma}
Here, we prove Lemma \ref{lemma:spikeLemma} from the previous section. We will use the following lemma:
\begin{lma}\label{basis_lemma}
Let $\bfv$ and $\bfu$ be orthogonal unit vectors, and let $\varepsilon \in (0,1)$. Then there exist matrices $\bfB_1, \bfB_2$ with orthonormal columns spanning the orthogonal complements of $\bfv$ and $\bfv + \varepsilon\bfu$, respectively, such that
\[
\|\bfB_1 - \bfB_2\|_2 \leq 4\varepsilon
\]
\end{lma}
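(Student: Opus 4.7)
The plan is to prove this lemma by explicit construction in a convenient coordinate system, reducing the claim to a direct two-dimensional computation. The statement is rotationally invariant: if $Q$ is any orthogonal matrix, applying $Q$ to $\bfv$, $\bfu$, and to both columns of $\bfB_1, \bfB_2$ preserves orthonormality of the bases, preserves the orthogonal-complement relationships, and leaves the operator norm $\|\bfB_1-\bfB_2\|_2$ unchanged. So I would assume without loss of generality that $\bfv = \bfe_1$ and $\bfu = \bfe_2$, where $\bfe_1, \ldots, \bfe_M$ are the standard basis vectors of $\R^M$.

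Under this reduction, the orthogonal complement of $\bfv$ is simply the span of $\bfe_2, \ldots, \bfe_M$, so I would take $\bfB_1 = [\bfe_2 \mid \bfe_3 \mid \cdots \mid \bfe_M]$. For $\bfB_2$, the key observation is that $\bfe_3, \ldots, \bfe_M$ remain orthogonal to the perturbed vector $\bfv + \varepsilon\bfu = \bfe_1 + \varepsilon\bfe_2$, so the last $M-2$ columns of $\bfB_2$ can be taken identical to those of $\bfB_1$. The remaining (first) column of $\bfB_2$ must be a unit vector in $\vspan\{\bfe_1,\bfe_2\}$ orthogonal to $\bfe_1 + \varepsilon\bfe_2$; I would pick the sign that best aligns with $\bfe_2$, namely $(1+\varepsilon^2)^{-1/2}(\bfe_2 - \varepsilon\bfe_1)$.

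With this aligned choice, $\bfB_1 - \bfB_2$ has a single nonzero column, so its $L^2$ operator norm equals the Euclidean norm of that column. The plan is then to expand this Euclidean norm and apply the elementary bound $1 - (1+\varepsilon^2)^{-1/2} \leq \varepsilon^2/2$, valid for all $\varepsilon \in (0,1)$, to conclude that the resulting expression is in fact at most $\varepsilon$, comfortably within the claimed bound of $4\varepsilon$.

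There is no substantive obstacle here: the whole argument reduces to choosing matching bases and performing an elementary calculation restricted to a two-dimensional subspace. The only point requiring any care is the sign choice for the first column of $\bfB_2$; choosing the opposite sign would produce a column of norm close to $\sqrt{2}$ rather than $\bigO{\varepsilon}$. Apart from that, the argument is a short, direct computation.
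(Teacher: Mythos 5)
Your proposal is correct and follows essentially the same route as the paper's own proof: reduce by rotational invariance to $\bfv = \bfe_1$, $\bfu = \bfe_2$, match the bases on the $(M-2)$-dimensional piece orthogonal to $\vspan\{\bfe_1,\bfe_2\}$, pick the sign-aligned remaining column, and bound by an explicit two-dimensional computation. Your version is if anything slightly cleaner — you observe that only a single column differs and bound its Euclidean norm directly (getting the sharper constant $\varepsilon$), whereas the paper writes a full $2\times 2$ rotation and bounds its Frobenius norm — but the underlying argument is identical.
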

\begin{proof}
Without loss of generality, we can assume that $\bfv = \bfe_1$ and $\bfu = \bfe_2$. We can choose $\bfB_1$ and $\bfB_2$ to be identically equal on the orthogonal complement of the span of $\bfe_1,\bfe_2$. Accordingly, it suffices to prove the result for two-dimensional matrices. Let
\[
\bfB_1 = \begin{pmatrix}
    1 & 0 \\
    0 & 1 \\
\end{pmatrix}, \hspace{10pt} 
\bfB_2 = \frac{1}{\sqrt{1+\varepsilon^2}}\begin{pmatrix}
    1 & -\varepsilon \\
    \varepsilon & 1 \\
\end{pmatrix}.
\]
The result follows from taking the difference $\bfB_1 - \bfB_2$ and applying the inequality $\|\bfB_1 - \bfB_2\|_2 \leq \|\bfB_1-\bfB_2\|_F$.
\end{proof}

We now use this lemma to prove \ref{lemma:spikeLemma}:

\begin{proof}[Proof of Lemma \ref{lemma:spikeLemma}]
Since $\bfB$ has rank at most $s+1$, we may restrict consideration to the $s+1$-dimensional subspace given by the direct sum of $\mcS$ with the span of $\bfu$.  We note that Weyl's theorem ensures that the lead eigenvalue $\lambda$ of $\bfB$ satisfies $Z \leq \lambda \leq Z + \beta$. 

We first show that the lead eigenspaces of $\bfB$ differs from that of $Z(\bfv+\varepsilon\bfu)(\bfv+\varepsilon\bfu)^T$ by at most $\varepsilon/Z$. We first prove this result in the case that $\bfA = \bfI_s$, the first $s$ columns of an identity matrix. Consider the lead eigenvector $\bfB$ corresponding to lead eigenvalue $\lambda$. There exists $\bfw$ orthogonal to $\bfv+\varepsilon\bfu$ such that
\[
\bfB(\bfv + \varepsilon\bfu + \bfw) = Z(\bfv+\varepsilon\bfu) + \bfv + \bfI_s\bfw = \lambda(\bfv+\varepsilon\bfu+\bfw)
\]
If $\bfw = 0$, the result holds, so we assume $\|\bfw\|_2 > 0$. Taking inner products of this equation with $\bfw$, we have:
\[
\inner{\bfv}{\bfw} + \inner{\bfI_s\bfw}{\bfw} = \lambda\|\bfw\|_2^2
\]
Since $0 \leq \inner{\bfI_s\bfw}{\bfw} \leq \|\bfw\|_2^2$, we have further that $\inner{\bfv}{\bfw} \geq (\lambda-1)\|\bfw\|_2^2$. Therefore, since $\inner{\bfv+\varepsilon\bfu}{\bfw} = 0$, we have $\inner{\bfv}{\bfw} = -\varepsilon\inner{\bfu}{\bfw}$, so by the Cauchy-Schwarz inequality,
\[
\varepsilon\|\bfu\|_2\|\bfw\|_ \geq (\lambda-1)\|\bfw\|_2^2 \implies \varepsilon \|\bfu\|_2 \geq (\lambda - 1)\|\bfw\|_2
\]
where division by $\|\bfw\|_2$ is permissible as we assumed $\bfw \neq 0$.
Since $\lambda \geq Z$, we conclude that $\|\bfw\|_2 \leq \varepsilon/(Z-1) \leq 2\varepsilon/Z$.

We now extend this to the $s$-dimensional subspace $\mcS$ by studying the complement of the eigenvectors. Let $\bfE$ be a matrix with columns forming and orthogonal basis of the orthogonal complement of $\bfv+\varepsilon\bfu$, and let $\bfE_{\bfw}$ be the same for $\bfv + \varepsilon\bfu + \bfw$. By Lemma \ref{basis_lemma}, we can choose these such that $\|\bfE_1 - \bfE_2\|_2 \leq 8\varepsilon/Z$. We consider the matrix:
\[
\bfE_{\bfw}^T\bfB\bfE_{\bfw} = \bfE^T\bfB\bfE + (\bfE_w-\bfE)^T\bfB\bfE + \bfE^T\bfB(\bfE_\bfw - \bfE) + (\bfE_\bfw - \bfE)^T\bfB(\bfE_\bfw - \bfE).
\]
Since $\|\bfE - \bfE_\bfw\|_2 \leq 8\varepsilon/Z$, we have that
\[
\|(\bfE_w-\bfE)^T\bfB\bfE + \bfE^T\bfB(\bfE_\bfw - \bfE) + (\bfE_\bfw - \bfE)^T\bfB(\bfE_\bfw - \bfE)\|_2 \leq 27\varepsilon.
\]
as each term is bounded individually by $9\varepsilon$. Since $\bfE$ is a basis of the complement of $\bfv+\varepsilon\bfu$, $\bfE^T\bfB\bfE = \bfE^T\bfA\bfE$, which will have eigenvectors spanning the orthogonal complement of $\mcS_s(\bfA)$ with respect to $\bfv+\varepsilon\bfu$. Since the above matrices are symmetric, Weyl's Theorem tells us that any eigenvalues of $\bfE^T\bfB\bfE$ beyond the first (s-1) will be bounded by $24\varepsilon$. Since $\varepsilon < \alpha/54$, the first $s-1$ eigenvalues of $\bfE^T\bfA\bfE$ are separated from zero by at least $\alpha/2$; thus we can apply the Davis-Kahan theorem to conclude that
\[
\mcD(\mcS_{s-1}(\bfE^T\bfA\bfE), \mcS_{s-1}(\bfE_\bfw^T\bfB\bfE_\bfw) \leq 54\varepsilon/\alpha.
\]

We now repeat this application of Lemma \ref{basis_lemma} with bases for the spans of $\bfv$ and $\bfv+\varepsilon\bfu$. We now use $\bfE_\bfv$ as a basis for the orthogonal complement of $\bfv$, chosen according the lemma, with $\bfE$ representing a different basis if necessary. Since
\[
\bfE^T\bfA\bfE = \bfE_\bfv^T\bfA\bfE_\bfv (\bfE-\bfE_\bfv)^T\bfB\bfE + \bfE^T\bfB(\bfE - \bfE_\bfv) + (\bfE - \bfE_\bfv)^T\bfB(\bfE - \bfE_\bfv),
\]
applying the lemma gives us that $\bfE^T\bfA\bfE \approx \bfE_\bfv^T\bfA\bfE_\bfv$ up to a correction with norm at most $12\varepsilon$. By Weyl's and the Davis-Kahan theorem, then,
\[
\mcD(\mcS_{s-1}(\bfE^T\bfA\bfE), \mcS_{s-1}(\bfE_\bfv^T\bfA\bfE_\bfv) \leq 24\varepsilon/\alpha.
\]
We conclude by the triangle inequality that the subspace spanned by eigenvectors $2$ through $s-1$ of $\bfB$ is distance at most $78\varepsilon/\alpha$ from the orthogonal complement of $\bfv$ in $\mcS_s(\bfA)$. Since we already concluded that the leading eigenvector of $\bfB$ is close to $\bfv$, the result for $\bfA = \bfI_s$ follows.

It remains to extend this result to all rank $s$ positive-semidefinite matrices $\bfA$. We consider the matrix $\mcA$ which equals $\bfA$ on $\mcS_s(\bfA)$ and is the identity on its complement. This matrix will be positive-definite and therefore has a square-root-inverse $\mcA^{-1/2}$ with norm at most $1/\sqrt{\alpha}$. By construction, then, $\mcA^{-1/2}\bfA\mcA^{-1/2} = \bfI_s$ as before. We have:
\[
\bfB = \mcA^{1/2}\left(Z\mcA^{-1/2}(\bfv+\varepsilon\bfu)(\bfv+\varepsilon\bfu)\mcA^{-1/2} + \mcI_s\right)\mcA^{1/2}
\]

Since these matrices remain symmetric, we can then apply the previous result to the matrix inside the parentheses with $\bfv \to \frac{\|\bfv\|_2}{\|\mcA^{-1/2}\bfv\|_2}\mcA^{-1/2}\bfv$ and $Z \to Z\|\mcA^{-1/2}\bfv\|^2 \leq Z/\alpha$, giving an subspace error of at most $\frac{78\varepsilon}{\alpha^2}$. Applying the copies of $\mcA^{1/2}$ outside the parentheses can magnify this by at most a further factor of $\beta$, so we conclude that $\mcD(\mcS_{s}(\bfB), \mcS_{s}(\bfA)) \leq 78\beta\varepsilon/\alpha^2$ for generic $\bfA$ as desired.
\end{proof}

\subsection{Proof of Lemma \ref{separationLemma}}\label{appdx:separationLemma}
\separationLemma*
\begin{proof}
Let $\bfx_1$ and $\bfx_2$ be $s$-sparse vectors with disjoint support $\Omega_1$, $\Omega_2$ respectively. It follows that any unit vector in $\mcS_1$ can be expressed as $\bfD\bfx_1/\|\bfD\bfx_1\|_2$ and likewise for $\mcS_2$. 

By the Pythagorean theorem, for any two unit vectors $\bfz_1$ and $\bfz_2$, $\|\bfz_2-\proj{\bfz_1}{\bfz_2}\|_2 = \sqrt{1-\inner{\bfz_1}{\bfz_2}^2}$. The subspace distance can thus be bounded by 
\[
\mcD(\mcS_1, \mcS_2) \geq \left(1 - \max_{\substack{
\supp{\bfx_1} = \Omega_1 \\
\supp{\bfx_2} = \Omega_2
}}\inner{\frac{\bfD\bfx_1}{\|\bfD\bfx_1\|_2}}{\frac{\bfD\bfx_2}{\|\bfD\bfx_2\|_2}}^2\right)^{1/2}
\]
By the $2s$-RIP of $\bfD$ and Lemma \ref{lemma:rop}, we know that $\|\bfD\bfx_1\|_2 \geq (1-\delta_{2s})\|\bfx_1\|_2$ and $\|\bfD\bfx_2\|_2 \geq (1-\delta_{2s})\|\bfx_2\|_2$, while $|\inner{\bfD\bfx_1}{\bfD\bfx_2}|\leq \delta_{2s}\|\bfx_1\|_2\|\bfx_2\|_2$. It follows that
\[
\inner{\frac{\bfD\bfx_1}{\|\bfD\bfx_1\|_2}}{\frac{\bfD\bfx_2}{\|\bfD\bfx_2\|_2}} \leq \frac{\delta_{2s}}{(1-\delta_{2s})^2} \leq 2\delta_{2s}
\]
since $\delta_{2s} \leq 1/8$. Accordingly, 
$\mcD(\mcS_1,\mcS_2) \geq \sqrt{1-4\delta_{2s}^2} \geq 1-3\delta_{2s}^2$. This completes the proof.


\end{proof}

\subsection{Proof of Theorem \ref{int_lemma}}\label{appdx:int_lemma}

\intLemma*

\begin{proof}
We begin by fixing $k$ and then computing the probability that $k$ is the unique element of intersection for a fixed $\Omega^\ell_j$. We know that $\BBP{k \in \Omega^\ell_j} = (s/K)^\ell$, while the probability that another element is in the intersection is bounded by $K(s/K)^\ell$, so we have:
\[
\BBP{\bigcap_{j\in\mcI_i}\Omega_i = \{k\}} \geq \left(\frac{s}{K}\right)^\ell\left(1 - K\left(\frac{s}{K}\right)^\ell\right) \geq \frac{1}{2}\left(\frac{s}{K}\right)^\ell
\]
We now unfix the set $\mcI$ as follows. We divide $\{1,2,\ldots,N\}$ into disjoint $\ell$-element subsets $\mcI_i$. We define the random variables
$H_i$ to be indicators for the events $\left\{\bigcap_{j\in\mcI_i}\Omega_i = \{k\}\right\}$. Since the sets $\mcI_i$ do not overlap, these are $J/\ell$ independent Bernoulli random variables with success probability at least $(s/K)^\ell/2$. Since $(s/K)^\ell < 1/2K$, it follows that:
\[
\BBP{\sum_{i = 1}^{J/\ell}H_i = 0} \leq \left(1-\frac{1}{2}\left(\frac{s}{K}\right)^\ell\right)^{J/\ell} \leq \left(1-\frac{1}{4K}\right)^{J/\ell} \leq \exp\left(-\frac{J}{4K\ell}\right)
\]
Choosing $J = 4(\alpha+1) K\ell \log K$, we conclude that $k$ is the unique element of intersection for one of the sets with probability at least $1-K^{-\alpha-1}$. Unfixing $k$ by a union bound completes the proof.
\end{proof}

\subsection{Proof of Lemma \ref{lemma:subspace_norm}}
\subspaceNorm*
\begin{proof}
We proceed by bounding the norm of $\|\bfP_1\bfx - \bfP_2\bfx\|_2$ for $\|\bfx\|_2 = 1$. We consider the cases $\bfx \in \mcS_1$ and $\bfx \perp \mcS_1$. For $\bfx \in \mcS_1$, we have
\[
\|\bfP_1\bfx - \bfP_2\bfx\|_2 = \|\bfx - \bfP_2\bfx\|_2 \leq \mcD(\mcS_1,\mcS_2)
\]
by definition of $\mcD$. 

Now suppose that $\bfx \perp \mcS_1$, in which case $\|\bfP_1\bfx-\bfP_2\bfx\|_2 = \|\bfP_2\bfx\|_2$. For brevity of notation, denote $\bfy = \bfP_2\bfx/\|\bfP_2\bfx\|_2$ and set $\mcD(\mcS_1,\mcS_2) = \mu$. By definition, $\|\bfy - \bfP_1\bfy\|_2 \leq \rho$. It follows from the triangle inequality that
\[
\|\bfx-\bfy\| \geq |\|\bfx - \bfP_1\bfy\|_2 + \|\bfy - \bfP_1\bfy\|_2| \geq \sqrt{2-\rho^2} - \rho.
\]
Then applying the identity $\|\bfx-\bfy\|_2^2 = \|\bfx\|_2^2 + \|\bfy\|_2^2 - 2\inner{\bfx}{\bfy}$, we see that
\[
\inner{\bfx}{\bfy} \leq \rho\sqrt{2 - \rho} \leq \sqrt{2}{\rho}.
\]
Since $\bfy = \bfP_2\bfx/\|\bfP_2\bfx\|_2$, we conclude that $\|\bfP_2\bfx\|_2 \leq \sqrt{2}\mcD(\mcS_1,\mcS_2)$. For generic $\bfx \in \R^N$, decomposing $\bfx$ into its $\mcS_1$ component and $\mcS_1^\perp$ components, these errors can be magnified by at most a factor of $\sqrt{2}$, which gives the result.
\end{proof}

\subsection{Proof of Lemma \ref{lemma:oracle_cov_exp}}
\oracleCovExp*
\begin{proof}
Without loss of generality we may assume $k = 1$. Then:
\[
\bfV_k= E\left(x_1\bfd_1 + \sum_{k\in\Omega-\{1\}}x_k\bfd_k\right)\left(x_1\bfd_1 + \sum_{k\in\Omega-\{1\}}x_k\bfd_k\right)^T
\]
Since $Ex_k = 0$, diagonal terms cancel, yielding:
\[
E[\bfy\bfy^T|1\in\Omega] = E\bfd_1\bfd_1^T + \sum_{k\in\Omega-\{1\}}\bfd_k\bfd_k^T = \bfd_1\bfd_1^T + \sum_{k=2}^K \BBP{k\in\Omega|1\in \Omega}\bfd_k\bfd_k^T
\]
\[
= \bfd_1\bfd_1^T + \frac{s-1}{K-1}\sum_{k=2}^K \bfd_k\bfd_k^T = \left(1-\frac{s-1}{K-1}\right)\bfd_1\bfd_1^T + \frac{s-1}{K-1}\bfD\bfD^T
\]
completing the proof.
\end{proof}

\subsection{Proof of Theorem \ref{thm:oracle_ref}}\label{appdx:oracle_ref}

\oracleRef*
\begin{proof}
We assume we are on the event that Lemma \ref{lemma:oracle_emp} holds and that $M$ is great enough that the square root term dominates. We have already noted that 
\[
\left\|E\tbfV_k - \proj{\bfD\bfD^T}{E\tbfV_k} - \left(1-\frac{s-1}{K-1}\right)\bfd_k\bfd_k^T\right\|_2 \leq \frac{CK}{M^{3/2}},
\]
for every $k$, and thus by the triangle inequality we need concern ourselves only with the difference between the estimated and expected terms,
\[
\left\|E\tbfV_k - \proj{\bfD\bfD^T}{E\tbfV_k} - \left(\tbfV_k - \proj{\HSig}{\tbfV_k}\right)\right\|_2.
\]
By the triangle inequality, this is bounded by
\begin{equation}\label{eqn:triangle_2}
\|\tbfV_k - E\tbfV_k\|_2 +\|\proj{\bfD\bfD^T}{E\tbfV_k - \tbfV_k}\|_2 + \|\proj{\bfD\bfD^T}{\tbfV_k} - \proj{\HSig}{\tbfV_k}\|_2.
\end{equation}
By Lemma \ref{lemma:oracle_emp}, we know that for eery $k$, $\|\bfV_k - E\bfV_k\|_2 \leq C\sqrt{\alpha KM\log M}/\sqrt{Ns}$. Moreover, by Lemma \ref{lemma:fro_l2_lemma}, 
\[
\|\proj{\bfD\bfD^T}{E\tbfV_k - \tbfV_k}\|_2 \leq \|E\tbfV_k - \tbfV_k\|_2 \leq \frac{C\sqrt{\alpha KM\log M}}{\sqrt{Ns}},
\]
so it remains only to bound the third term of \ref{eqn:triangle_2}. 

We expand:
\[
\|\proj{\bfD\bfD^T}{\tbfV_k} - \proj{\HSig}{\tbfV_k}\|_2 = \left\|\frac{\inner{\tbfV_k}{\bfD\bfD^T}_F}{\|\bfD\bfD^T\|_F^2} - \frac{\inner{\tbfV_k}{\HSig}_F}{\|\HSig\|_F^2}\right\|_2 \times \|\tbfV_k\|_2
\]
\begin{equation}\label{eqn:oracle_term_three}
=\left|\inner{\tbfV_k}{\Sigma_{\bfD} - \frac{\|\bfD\bfD^T\|_F^2}{\|\HSig\|_F^2}\HSig}_F\right|\times \frac{\|\tbfV_k\|_2}{\|\bfD\bfD^T\|^2_F}\leq \left\|\bfD\bfD^T - \frac{\|\bfD\bfD^T\|_F^2}{\|\HSig\|_F^2 }\HSig\right\|_2 \times \frac{\|\tbfV_k\|_F\|\tbfV_k\|_2}{\|\bfD\bfD^T\|_F^2}
\end{equation}
By Lemma \ref{lemma:oracle_emp}, as long as $N$ is large enough we can substitute $\|E\tbfV_k\|_2$ and $\|E\tbfV_k\|_F$ for $\|\tbfV_k\|_2$ and $\|\tbfV_k\|_F$ respectively up to a constant factor. From \ref{lemma:oracle_cov_exp}, we have
\[
\|E\tbfV_k\|_F \leq \|\bfd_k\bfd_k^T\|_F + \frac{Cs}{K}\|\bfD\bfD^T\|_F \leq \frac{Cs}{\sqrt{M}}
\]
and $\|E\tbfV_k\|_2 \leq C$ by a similar computation. Thus,
\[
\frac{\|\tbfV_k\|_F\|\tbfV_k\|_2}{\|\bfD\bfD^T\|_F^2} \leq C\sqrt{M} \times \frac{M}{K^2} \leq \frac{M^{3/2}}{K^2}
\]
In the proof of Lemma \ref{lemma:empCnvgc}, we showed that
\[
\left\|\bfD\bfD^T - \frac{\|\bfD\bfD^T\|_F^2}{\|\HSig\|_F^2 }\HSig\right\|_2 \leq \frac{CK\sqrt{\alpha\log M}}{\sqrt{M}\sqrt{N}}.
\]
with probability at least $1-2M^{-\alpha}$. Thus we can plug back into \ref{eqn:oracle_term_three} to yield
\[
\|\proj{\bfD\bfD^T}{\tbfV_k} - \proj{\HSig}{\tbfV_k}\|_2 \leq \frac{CK\sqrt{\alpha\log M}}{\sqrt{M}\sqrt{N}} \times \frac{CM^{3/2}}{K^2} \leq \frac{CM\sqrt{\alpha\log M}}{K\sqrt{N}}.
\]
Combining the pieces, we conclude that the desired bound holds with probability at least $1-K\exp(-Ns/(10K)-(K+2)M^{-\alpha} \geq 1-K\exp(-Ns/(10K)-2KM^{-\alpha}$. The bound on $\|\tbfd_k - \bfd_k\|_2$ follows immediately from Weyl's theorem and the Davis-Kahan theorem.
\end{proof}

\vskip 0.2in
\bibliography{_refs}

\begin{thebibliography}{32}
\providecommand{\natexlab}[1]{#1}
\providecommand{\url}[1]{\texttt{#1}}
\expandafter\ifx\csname urlstyle\endcsname\relax
  \providecommand{\doi}[1]{doi: #1}\else
  \providecommand{\doi}{doi: \begingroup \urlstyle{rm}\Url}\fi

\bibitem[Agarwal et~al.(2013)Agarwal, Anandkumar, and
  Netrapalli]{Agarwal2013ExactRO}
Alekh Agarwal, Anima Anandkumar, and Praneeth Netrapalli.
\newblock Exact recovery of sparsely used overcomplete dictionaries.
\newblock \emph{ArXiv}, abs/1309.1952, 2013.

\bibitem[Agarwal et~al.(2016)Agarwal, Anandkumar, Jain, and Netrapalli]{am1}
Alekh Agarwal, Animashree Anandkumar, Prateek Jain, and Praneeth Netrapalli.
\newblock Learning sparsely used overcomplete dictionaries via alternating
  minimization.
\newblock \emph{SIAM Journal on Optimization}, 26\penalty0 (4):\penalty0
  2775--2799, 2016.
\newblock \doi{10.1137/140979861}.
\newblock URL \url{https://doi.org/10.1137/140979861}.

\bibitem[Agarwal et~al.(2017)Agarwal, Anandkumar, and Netrapalli]{Agarwal13}
Alekh Agarwal, Animashree Anandkumar, and Praneeth Netrapalli.
\newblock A clustering approach to learning sparsely used overcomplete
  dictionaries.
\newblock \emph{IEEE Transactions on Information Theory}, 63\penalty0
  (1):\penalty0 575--592, 2017.
\newblock \doi{10.1109/TIT.2016.2614684}.

\bibitem[Argyriou et~al.(2006)Argyriou, Evgeniou, and Pontil]{ML_intro_1}
Andreas Argyriou, Theodoros Evgeniou, and Massimiliano Pontil.
\newblock Multi-task feature learning.
\newblock In B.~Sch\"{o}lkopf, J.~Platt, and T.~Hoffman, editors,
  \emph{Advances in Neural Information Processing Systems}, volume~19. MIT
  Press, 2006.
\newblock URL
  \url{https://proceedings.neurips.cc/paper/2006/file/0afa92fc0f8a9cf051bf2961b06ac56b-Paper.pdf}.

\bibitem[Arora et~al.(2013)Arora, Ge, and Moitra]{Arora13}
Sanjeev Arora, Rong Ge, and Ankur Moitra.
\newblock New algorithms for learning incoherent and overcomplete dictionaries.
\newblock \emph{Journal of Machine Learning Research}, 35, 08 2013.

\bibitem[Arora et~al.(2014)Arora, Bhaskara, Ge, and Ma]{Arora14}
Sanjeev Arora, Aditya Bhaskara, Rong Ge, and Tengyu Ma.
\newblock More algorithms for provable dictionary learning, 2014.
\newblock URL \url{https://arxiv.org/abs/1401.0579}.

\bibitem[Barak et~al.(2015)Barak, Kelner, and Steurer]{Barak}
Boaz Barak, Jonathan~A. Kelner, and David Steurer.
\newblock Dictionary learning and tensor decomposition via the sum-of-squares
  method.
\newblock In \emph{Proceedings of the Forty-Seventh Annual ACM Symposium on
  Theory of Computing}, STOC '15, page 143–151, New York, NY, USA, 2015.
  Association for Computing Machinery.
\newblock ISBN 9781450335362.
\newblock \doi{10.1145/2746539.2746605}.
\newblock URL \url{https://doi.org/10.1145/2746539.2746605}.

\bibitem[Candes and Tao(2005)]{RIP}
E.J. Candes and T.~Tao.
\newblock Decoding by linear programming.
\newblock \emph{IEEE Transactions on Information Theory}, 51\penalty0
  (12):\penalty0 4203--4215, 2005.
\newblock \doi{10.1109/TIT.2005.858979}.

\bibitem[Cand\'es et~al.(2006)Cand\'es, Romberg, and Tao]{CandesCS}
Emmanuel Cand\'es, Justin Romberg, and Terence Tao.
\newblock Stable signal recovery from incomplete and inaccurate measurements.
\newblock \emph{Communications on Pure and Applied Mathematics}, 59, 08 2006.
\newblock \doi{10.1002/cpa.20124}.

\bibitem[Chatterji and Bartlett(2017)]{CH_Alternating}
Niladri~S. Chatterji and Peter~L. Bartlett.
\newblock Alternating minimization for dictionary learning: Local convergence
  guarantees.
\newblock arXiv, 2017.
\newblock \doi{10.48550/ARXIV.1711.03634}.
\newblock URL \url{https://arxiv.org/abs/1711.03634}.

\bibitem[Chen et~al.(1998)Chen, Donoho, and Saunders]{ChenOvercomplete}
Scott~Shaobing Chen, David~L. Donoho, and Michael~A. Saunders.
\newblock Atomic decomposition by basis pursuit.
\newblock \emph{SIAM Journal on Scientific Computing}, 20\penalty0
  (1):\penalty0 33--61, 1998.
\newblock \doi{10.1137/S1064827596304010}.
\newblock URL \url{https://doi.org/10.1137/S1064827596304010}.

\bibitem[Davis and Kahan(1970)]{DavisKahan}
Chandler Davis and W.~M. Kahan.
\newblock The rotation of eigenvectors by a perturbation. iii.
\newblock \emph{SIAM Journal on Numerical Analysis}, 7\penalty0 (1):\penalty0
  1--46, 1970.
\newblock ISSN 00361429.
\newblock URL \url{http://www.jstor.org/stable/2949580}.

\bibitem[Donoho et~al.(2006)Donoho, Elad, and Temlyakov]{DonohoOvercomplete}
D.L. Donoho, M.~Elad, and V.N. Temlyakov.
\newblock Stable recovery of sparse overcomplete representations in the
  presence of noise.
\newblock \emph{IEEE Transactions on Information Theory}, 52\penalty0
  (1):\penalty0 6--18, 2006.
\newblock \doi{10.1109/TIT.2005.860430}.

\bibitem[Dubhashi and Ranjan(1996)]{Dubhashi_Ranjan_1996}
Devdatt~P. Dubhashi and Desh Ranjan.
\newblock Balls and bins: A study in negative dependence.
\newblock \emph{BRICS Report Series}, 3\penalty0 (25), Jan. 1996.
\newblock \doi{10.7146/brics.v3i25.20006}.
\newblock URL \url{https://tidsskrift.dk/brics/article/view/20006}.

\bibitem[Elad(2010)]{sparse_img_book}
Michael Elad.
\newblock \emph{Sparse and Redundant Representations: From Theory to
  Applications in Signal and Image Processing}.
\newblock Springer Publishing Company, Incorporated, 1st edition, 2010.
\newblock ISBN 144197010X.

\bibitem[Gross(2011)]{vectorBernstein}
David Gross.
\newblock Recovering low-rank matrices from few coefficients in any basis.
\newblock \emph{IEEE Transactions on Information Theory}, 57\penalty0
  (3):\penalty0 1548--1566, 2011.
\newblock \doi{10.1109/TIT.2011.2104999}.

\bibitem[Laurent and Massart(2000)]{LaurentMassart}
B.~Laurent and P.~Massart.
\newblock {Adaptive estimation of a quadratic functional by model selection}.
\newblock \emph{The Annals of Statistics}, 28\penalty0 (5):\penalty0 1302 --
  1338, 2000.
\newblock \doi{10.1214/aos/1015957395}.
\newblock URL \url{https://doi.org/10.1214/aos/1015957395}.

\bibitem[Ma et~al.(2016)Ma, Shi, and Steurer]{tensor_polynomial}
Tengyu Ma, Jonathan Shi, and David Steurer.
\newblock Polynomial-time tensor decompositions with sum-of-squares, 2016.
\newblock URL \url{https://arxiv.org/abs/1610.01980}.

\bibitem[Novikov and White(2023)]{White23}
Alexei Novikov and Stephen White.
\newblock Dictionary learning for the almost-linear sparsity regime.
\newblock In Shipra Agrawal and Francesco Orabona, editors, \emph{International
  Conference on Algorithmic Learning Theory, February 20-23, 2023, Singapore},
  volume 201 of \emph{Proceedings of Machine Learning Research}, pages
  1516--1554. {PMLR}, 2023.
\newblock URL \url{https://proceedings.mlr.press/v201/novikov23a.html}.

\bibitem[Olshausen and Field(1996{\natexlab{a}})]{neuro_intro_2}
B~A Olshausen and D~J Field.
\newblock Natural image statistics and efficient coding.
\newblock \emph{Network: Computation in Neural Systems}, 7\penalty0
  (2):\penalty0 333--339, 1996{\natexlab{a}}.
\newblock \doi{10.1088/0954-898X\_7\_2\_014}.
\newblock URL \url{https://doi.org/10.1088/0954-898X_7_2_014}.
\newblock PMID: 16754394.

\bibitem[Olshausen and Field(1996{\natexlab{b}})]{neuro_intro_1}
Bruno~A. Olshausen and David~J. Field.
\newblock Emergence of simple-cell receptive field properties by learning a
  sparse code for natural images.
\newblock \emph{Nature}, 381\penalty0 (6583):\penalty0 607--609,
  1996{\natexlab{b}}.
\newblock \doi{10.1038/381607a0}.
\newblock URL \url{https://doi.org/10.1038/381607a0}.

\bibitem[Olshausen and Field(1997)]{neuro_intro_3}
Bruno~A. Olshausen and David~J. Field.
\newblock Sparse coding with an overcomplete basis set: A strategy employed by
  v1?
\newblock \emph{Vision Research}, 37\penalty0 (23):\penalty0 3311--3325, 1997.
\newblock ISSN 0042-6989.
\newblock \doi{https://doi.org/10.1016/S0042-6989(97)00169-7}.
\newblock URL
  \url{https://www.sciencedirect.com/science/article/pii/S0042698997001697}.

\bibitem[Ranzato et~al.(2007)Ranzato, Boureau, and Cun]{ML_intro_2}
Marc\textquotesingle~aurelio Ranzato, Y-lan Boureau, and Yann Cun.
\newblock Sparse feature learning for deep belief networks.
\newblock In J.~Platt, D.~Koller, Y.~Singer, and S.~Roweis, editors,
  \emph{Advances in Neural Information Processing Systems}, volume~20. Curran
  Associates, Inc., 2007.
\newblock URL
  \url{https://proceedings.neurips.cc/paper/2007/file/c60d060b946d6dd6145dcbad5c4ccf6f-Paper.pdf}.

\bibitem[Spielman et~al.(2012)Spielman, Wang, and Wright]{Spielman}
Daniel Spielman, Huan Wang, and John Wright.
\newblock Exact recovery of sparsely-used dictionaries.
\newblock \emph{IJCAI International Joint Conference on Artificial
  Intelligence}, 23, 06 2012.

\bibitem[Sun et~al.(2017{\natexlab{a}})Sun, Qu, and Wright]{Sun1}
Ju~Sun, Qing Qu, and John Wright.
\newblock Complete dictionary recovery over the sphere i: Overview and the
  geometric picture.
\newblock \emph{IEEE Transactions on Information Theory}, 63\penalty0
  (2):\penalty0 853--884, 2017{\natexlab{a}}.
\newblock \doi{10.1109/TIT.2016.2632162}.

\bibitem[Sun et~al.(2017{\natexlab{b}})Sun, Qu, and Wright]{Sun2}
Ju~Sun, Qing Qu, and John Wright.
\newblock Complete dictionary recovery over the sphere {II}: Recovery by
  riemannian trust-region method.
\newblock \emph{{IEEE} Transactions on Information Theory}, 63\penalty0
  (2):\penalty0 885--914, feb 2017{\natexlab{b}}.
\newblock \doi{10.1109/tit.2016.2632149}.
\newblock URL \url{https://doi.org/10.1109%2Ftit.2016.2632149}.

\bibitem[Tibshirani(1996)]{lasso}
Robert Tibshirani.
\newblock Regression shrinkage and selection via the lasso.
\newblock \emph{Journal of the Royal Statistical Society. Series B
  (Methodological)}, 58\penalty0 (1):\penalty0 267--288, 1996.
\newblock ISSN 00359246.
\newblock URL \url{http://www.jstor.org/stable/2346178}.

\bibitem[Vershynin(2018)]{Vershynin}
Roman Vershynin.
\newblock \emph{High-Dimensional Probability: An Introduction with Applications
  in Data Science}.
\newblock Cambridge Series in Statistical and Probabilistic Mathematics.
  Cambridge University Press, 2018.
\newblock \doi{10.1017/9781108231596}.

\bibitem[Vetterli and Kova\v{c}evic(1995)]{wavelets}
Martin Vetterli and Jelena Kova\v{c}evic.
\newblock \emph{Wavelets and Subband Coding}.
\newblock Prentice-Hall, Inc., USA, 1995.
\newblock ISBN 0130970808.

\bibitem[Weyl(1912)]{Weyl}
Hermann Weyl.
\newblock Das asymptotische verteilungsgesetz der eigenwerte linearer
  partieller differentialgleichungen (mit einer anwendung auf die theorie der
  hohlraumstrahlung).
\newblock \emph{Mathematische Annalen}, 71\penalty0 (4):\penalty0 441--479,
  1912.
\newblock \doi{10.1007/BF01456804}.
\newblock URL \url{https://doi.org/10.1007/BF01456804}.

\bibitem[Zhai et~al.(2020{\natexlab{a}})Zhai, Mehta, Zhou, and
  Ma]{ZhaiUnderstanding}
Yuexiang Zhai, Hermish Mehta, Zhengyuan Zhou, and Yi~Ma.
\newblock Understanding l4-based dictionary learning: Interpretation,
  stability, and robustness.
\newblock In \emph{International Conference on Learning Representations},
  2020{\natexlab{a}}.
\newblock URL \url{https://openreview.net/forum?id=SJeY-1BKDS}.

\bibitem[Zhai et~al.(2020{\natexlab{b}})Zhai, Yang, Liao, Wright, and
  Ma]{ZhaiL4}
Yuexiang Zhai, Zitong Yang, Zhenyu Liao, John Wright, and Yi~Ma.
\newblock Complete dictionary learning via l4-norm maximization over the
  orthogonal group.
\newblock \emph{Journal of Machine Learning Research}, 21\penalty0
  (165):\penalty0 1--68, 2020{\natexlab{b}}.
\newblock URL \url{http://jmlr.org/papers/v21/19-755.html}.

\end{thebibliography}

\end{document}